\documentclass{article}

\def\neurips{0}
\def\supplemental{0} 
\def\tpdp{0} 

\ifnum\neurips=1
    
         \usepackage[final]{neurips_2021}
    
    
\else 
\fi

\newcommand{\notes}{0} 
\usepackage[utf8]{inputenc}
\usepackage{amsmath, fullpage}
\usepackage{bbm}
\usepackage{amssymb}
\usepackage{amsthm}
\usepackage{algorithm}
\usepackage[noend]{algpseudocode}
\usepackage{algorithmicx}
\usepackage{xcolor}
\usepackage{comment}
\usepackage{xspace}
\usepackage{bbm}
\usepackage{ulem} \normalem
\usepackage{paralist}
\usepackage{multicol}
\usepackage{hyperref}
\hypersetup{
    colorlinks=true,
    linkcolor=blue,
    filecolor=magenta,      
    urlcolor=cyan,
    linktocpage=true,
}

\ifnum\tpdp=0 
    \newtheorem{theorem}{Theorem}[section]
\else
    \newtheorem{theorem}{Theorem}
\fi
\newtheorem*{theorem*}{Theorem}

\newtheorem{lemma}[theorem]{Lemma}
\newtheorem{claim}[theorem]{Claim}
\newtheorem{corollary}[theorem]{Corollary}
\newtheorem{definition}[theorem]{Definition}

\newcommand{\R}{\mathbb{R}}
\newcommand{\N}{\mathbb{N}}
\newcommand{\eps}{\varepsilon}
\newcommand{\universe}{\ensuremath{\mathcal{U}}\xspace}
\newcommand{\class}{\ensuremath{\mathcal{C}}\xspace} 
\newcommand{\cB}{\ensuremath{\mathcal{B}}\xspace} 
\newcommand{\cBB}{\ensuremath{\cB_{[\frac 13,\frac 23]}}\xspace}

\newcommand{\carb}{\ensuremath{\class_{\alpha, 2k+1}}\xspace} 
\newcommand{\prodoutput}{b}
\newcommand{\hist}{\ensuremath{\mathbf{h}}\xspace}
\newcommand{\distr}{\ensuremath{P}\xspace}
\newcommand{\sampler}{\ensuremath{\mathcal{A}}\xspace}
\newcommand{\Datarv}{\ensuremath{\mathbf{X}}\xspace}
\newcommand{\Datarvy}{\ensuremath{\mathbf{Y}}\xspace}
\newcommand{\Datarvz}{\ensuremath{\mathbf{Z}}\xspace}

\newcommand{\datarvz}{\ensuremath{Z}\xspace}
\newcommand{\Datafixed}{\ensuremath{\mathbf{x}}\xspace}
\newcommand{\Datafixedy}{\ensuremath{\mathbf{y}}\xspace}

\newcommand{\datarv}{\ensuremath{X}\xspace}
\newcommand{\datafixed}{\ensuremath{x}\xspace}
\newcommand{\datafixedy}{\ensuremath{y}\xspace}

\newcommand{\el}{\ensuremath{s}\xspace}
\newcommand{\me}{\ensuremath{\mathcal{M}}\xspace}
\newcommand{\bsc}{\ensuremath{BSC_{1/3}}\xspace} 
\newcommand{\indicator}{\ensuremath{\mathbbm{1}}\xspace}
\newcommand{\distroutput}[1]{\ensuremath{Q_{#1}\xspace}}

\newcommand{\Biasesfixed}{\ensuremath{\mathbf{p}}\xspace} 
\newcommand{\biasesfixed}{\ensuremath{p}\xspace} 
\newcommand{\bucket}{\ensuremath{B}\xspace} 
\newcommand{\corr}{\ensuremath{\distr_{corr}}\xspace} 

\DeclareMathOperator*{\argmin}{arg\,min}

\newcommand{\Po}{\ensuremath{\mathsf{Po}}\xspace}
\newcommand{\Lap}{\mathsf{Lap}}
\newcommand{\Ber}{\ensuremath{\mathsf{Ber}}\xspace} 
\newcommand{\Gauss}{\ensuremath{\mathcal{N}}\xspace} 
\newcommand{\Bin}{\ensuremath{\mathsf{Bin}}\xspace}
\newcommand{\Mult}{\ensuremath{\mathsf{Mult}}\xspace}

\DeclareMathOperator*{\E}{\mathbb{E}}

\newcommand{\bit}[1]{{\{0,1\}^{#1}}}

\newcommand{\paren}[1]{{\left ( {#1} \right)}}
\newcommand{\bparen}[1]{{\big ( {#1} \big)}}
\newcommand{\Bparen}[1]{{\Big ( {#1} \Big)}}
\newcommand{\ceil}[1]{{\left \lceil {#1} \right\rceil}}

\newcommand{\clipab}[1]{{\left [ {#1} \right]_a^b}}
\newcommand{\clipquarter}[1]{{\left [ {#1} \right]_{1/4}^{3/4}}}
\newcommand{\reduction}{\ensuremath{\mathcal{R}}}
\newcommand{\fpsampler}{\ensuremath{\mathcal{A}_{\text{FCB}}}\xspace} 
\newcommand{\posampler}{\ensuremath{\mathcal{A}_{\Po}}\xspace} 
\newcommand{\karysampler}{\ensuremath{\mathcal{A}_{\class_k}}\xspace} 
\newcommand{\bernest}{\ensuremath{\me_{\Ber}}\xspace} 

\ifnum\notes=1 
\newcommand{\srnote}[1]{{\color{blue}\footnote{{\color{blue} {\bf SR:} #1}}}}
\newcommand{\msnote}[1]{{\color{magenta}\footnote{{\color{magenta} {\bf M:} #1}}}}
\newcommand{\ssnote}[1]{{\color{red}\footnote{{\color{red} {\bf SS:} #1}}}}
\newcommand{\asnote}[1]{{\color{brown}\footnote{{\color{brown} {\bf A:} #1}}}}
\newcommand{\old}[1]{\sout{#1}}
\else
\newcommand{\srnote}[1]{}
\newcommand{\msnote}[1]{}
\newcommand{\ssnote}[1]{}
\newcommand{\asnote}[1]{}
\newcommand{\old}[1]{}
\fi 


\ifnum\notes=1

\else 

\fi


\ifnum\neurips=1
    \usepackage[utf8]{inputenc} 
    \usepackage[T1]{fontenc}    
    \usepackage{hyperref}       
    \usepackage{url}            
    \usepackage{booktabs}       
    \usepackage{amsfonts}       
    \usepackage{nicefrac}       
    \usepackage{microtype}      
    \usepackage{xcolor}         
\else 
\fi

\ifnum\neurips=0
\title{Differentially Private Sampling from Distributions \footnote{A preliminary version of this work appeared at NeurIPS 2021 \cite{RaskhodnikovaSSS21}} \\}
\fi

\ifnum\neurips=1
\title{Differentially Private Sampling from Distributions \\}
\fi

\ifnum\neurips=1
    \author{%
    Sofya Raskhodnikova \\
    Department of Computer Science\\
    Boston University\\
    \texttt{sofya@bu.edu} \\
    \And
    Satchit Sivakumar \\
    Department of Computer Science\\
    Boston University\\
    \texttt{satchit@bu.edu} \\
    \And
    Adam Smith \\
    Department of Computer Science\\
    Boston University\\
    \texttt{ads22@bu.edu} \\
    \And
    Marika Swanberg \\
    Department of Computer Science\\
    Boston University\\
    \texttt{marikas@bu.edu} \\
    }
\else 
    \author{Sofya Raskhodnikova\thanks{Department of Computer Science, Boston University. \texttt{\{sofya,satchit,ads22,marikas\}@bu.edu}} \thanks{SR was supported in part by NSF award CCF-1909612.}  \and Satchit Sivakumar\footnotemark[2] \thanks{SS was supported in part by NSF award CNS-2046425, as well as Cooperative Agreement CB20ADR0160001 with the Census Bureau. The views expressed in this paper are those of the author and not those of the U.S. Census Bureau or any other sponsor.} \and Adam Smith\footnotemark[2] \thanks{AS and MS were supported in part by NSF awards CCF-1763786 and CNS-2120667, as well as faculty research awards from Google and Apple.} \and Marika Swanberg\footnotemark[2]~\footnotemark[5]}
\fi

\date{}

\begin{document}
\maketitle 

\begin{abstract}
    We initiate an investigation of  private sampling from distributions. Given a dataset with $n$ independent observations from an unknown distribution $P$, a sampling algorithm must output a single observation from a distribution that is close in total variation distance to $P$ while satisfying differential privacy. 
    Sampling abstracts the goal of generating small amounts of realistic-looking data.
    We provide tight upper and lower bounds for the dataset size needed for this task for three natural families of distributions: arbitrary distributions on $\{1,\ldots ,k\}$, arbitrary product distributions on $\{0,1\}^d$, and product distributions on $\{0,1\}^d$ with bias in each coordinate bounded away from 0 and 1. We demonstrate that, in some parameter regimes, private sampling requires asymptotically fewer observations 
    than learning a description of $P$ nonprivately; in other regimes, however, private sampling proves to be as difficult as private learning. Notably, for some classes of distributions, the overhead in the number of observations needed for private learning compared to non-private learning is completely captured by the number of observations needed for private sampling.
\end{abstract}

\ifnum\neurips=0
\ifnum\tpdp=0
\setcounter{tocdepth}{2}
\setlength{\columnsep}{1cm}
\begin{multicols*}{2}
    {\small \tableofcontents}
\end{multicols*}
\newpage 
\fi
\fi

\ifnum\tpdp=0
\section{Introduction}\label{sec:intro}
\fi

\ifnum\tpdp=0
Statistical machine learning models trained on sensitive data are now widely deployed in domains such as education, finance, criminal justice, medicine, and public health. The personal data used to train such models are more detailed than ever, and 
there is a growing awareness of the privacy risks that come with their use. Differential privacy is a standard for confidentiality that is now well studied and increasingly deployed. 

Differentially private algorithms ensure 
that whatever is learned about an individual from the algorithm's output would be roughly the same whether or not that individual's record was actually part of the input dataset. This requirement entails a strong guarantee, but limits the design of algorithms significantly. As a result, there 
\else
There
\fi 
is a substantial line of work on developing good differentially private methodology for learning and statistical estimation tasks, and on understanding how the sample size required for specific tasks increases relative to unconstrained algorithms.
A typical task investigated in this area is \emph{distribution learning}: informally, given records drawn i.i.d.\  from an unknown distribution $\distr$, the algorithm aims to produce a description of a distribution $Q$ that is close to~$\distr$.

However, often the task at hand
requires much less than full-fledged learning. We may simply need to generate a small amount of data that has the same distributional properties as the population, or perhaps simply ``looks plausible'' for the population. For example, one might need realistic data for debugging a software program or for getting a quick idea of the range of values in a dataset.

In this work, we study a basic problem that captures these seemingly less stringent aims. 
Informally, the goal is to design a sampling algorithm that, given a dataset with $n$ observations drawn i.i.d.\ from a distribution $\distr$, generates a single observation from a distribution $Q$ that is close to~$\distr$. 


To formulate the problem precisely, consider a randomized algorithm $\sampler:\universe^n\to \universe$ that takes as input a dataset of $n$ records from some universe $\universe$ and outputs a single element of $\universe$. 
Given a distribution~$\distr$ on $\universe$, let $\Datarv=(\datarv_1,\dots,\datarv_n)$ be a random dataset with entries drawn i.i.d.\ from $\distr$. Let $\sampler(\Datarv)$ denote the random variable corresponding to the output of the algorithm $\sampler$ on input $\Datarv$. 
This random variable depends on both the selection of entries of $\Datarv$ from $\distr$ and the coins of $\sampler$. 
Let $\distroutput{\sampler, \distr}$ denote the distribution of $\sampler(\Datarv)$, so that
$$\distroutput{\sampler, \distr}(z) = \Pr_{\substack{\Datarv~\sim_{i.i.d.} \distr \\ \text{coins of \sampler}}}(\sampler(\Datarv)=z) = \sum_{\Datafixed \in \universe^n} \Bparen {\Pr(\Datarv = \Datafixed) \cdot \Pr_{\text{coins of \sampler}}(\sampler(\Datafixed)=z) }\, .$$
The dataset size $n$ is a parameter of $\sampler$ and thus defined implicitly.%
\ifnum\neurips=0
\footnote{We sometimes consider datasets whose size is randomly distributed. In such cases, the random variable $\sampler(\Datarv)$ (and its distribution $\distroutput{\sampler, \distr}$) incorporates randomness from the choice of the input size in addition to the selection of the data and the coins of $\sampler$.}
\fi 
~We measure the closeness between the input and output distributions in total variation distance, denoted $d_{TV}$.

\begin{definition}[Accuracy of sampling \cite{Axelrod0SV20}]
\label{def:acc1} A sampler $\sampler$ is {\em $\alpha$-accurate on a distribution} $\distr$ 
if 
\begin{equation*}
    d_{TV}(\distroutput{\sampler, \distr}, \distr) \leq \alpha.
\end{equation*}
A sampler is {\em $\alpha$-accurate on a class $\class$ of distributions} if it is $\alpha$-accurate on every $\distr$ in $\class$.
\end{definition}


The class $\class$ in the accuracy definition above effectively encodes what the sampler is allowed to assume about $\distr$. For example, $\class$ might include all distributions on $k$ elements or all product distributions on $\bit{d}$ (that is, distributions on $d$-bit strings for which the individual binary entries are independent). 

Our aim in formulating Definition~\ref{def:acc1} was to capture the weakest reasonable 
task that abstracts the goal of generating data with the same distributional properties as the input data.
Without any privacy constraints, this task is trivial to achieve: an algorithm that simply outputs its first input record will sample from exactly the distribution $\distr$, so the interesting problem  is to generate a sample of size $m>n$ when given only $n$ observations (as in the work of Axelrod
et al.~\cite{Axelrod0SV20}, which inspired our investigation). However, a differentially private algorithm cannot, in general, simply output one of its records in the clear. Even producing a single correctly distributed sample is a non-trivial task. 

To understand the task and compare our results to existing work, it is helpful to contrast our definition of sampling with the more stringent goal of learning a
distribution. A learning algorithm gets input in the same format as a sampling algorithm. We state a definition of distribution learning formulated 
with a single parameter $\alpha$ that captures both the distance between distributions and the failure probability.

\begin{definition}[Distribution learning]
\label{def:learn-intro} An algorithm $\cB$
{\em learns 
a distribution class $\class$} to within error $\alpha$ if, given a dataset consisting of independent observations from a distribution $P\in\class$, algorithm $\cB$ outputs a description of a distribution  that satisfies
\begin{equation*} 
    \Pr_{\substack{\Datarv~\sim_{i.i.d.} \distr \\ \text{coins of \cB}}}\bparen{d_{TV}(\cB(\Datarv), \distr) \leq \alpha} \geq 1-\alpha.
\end{equation*}
\end{definition}

An algorithm $\cB$ that learns class $\class$ to within error $\alpha$ immediately yields a $2\alpha$-accurate sampler for class $\class$: 
the sampler first runs $\cB$ to get a distribution $\hat P$ and then generates a single sample from~$\hat P$. 
Thus, it is instructive to compare bounds on the dataset size required for sampling to known results on the sample complexity of distribution learning.
Lower bounds for sampling imply lower bounds for all more stringent tasks, including learning, whereas separations between the complexity of sampling and that of learning suggest settings where weakening the learning objective might be productive.

\paragraph{Differential privacy} Differential privacy is a constraint on the algorithm $\sampler$ that processes a dataset. It does not rely on any distributional property of the data. We generally use uppercase letters (e.g., $\Datarv =(\datarv_1,...,\datarv_n)$) when viewing the data as a random variable, and lowercase symbols (e.g., $\Datafixed = (\datafixed_1,...,\datafixed_n)$) when treating the data as fixed. 
Two datasets $\Datafixed, \Datafixed'\in \universe^n$ are {\em neighbors} if they differ in at most one entry. If each entry  corresponds to the data of one person, then neighboring datasets differ by replacing one person's data with an alternate record. Informally, differential privacy requires that an algorithm's output distributions are similar on all pairs of neighboring datasets.

\begin{definition}[Differential Privacy~\cite{DworkMNS06j,DworkKMMN06}]\label{def:DP} A randomized algorithm $\sampler: \universe^n \rightarrow \mathcal{Y}$ is {\em $(\eps, \delta)$-differentially private} (in short, $(\eps,\delta)$-DP) if for every pair of neighboring datasets $\Datafixed, \Datafixed'\in \universe^n$ and for all subsets $Y\subseteq \mathcal{Y}$,
\begin{equation*}
    \Pr[\sampler(\Datafixed) \in Y] \leq e^\eps \cdot \Pr[\sampler(\Datafixed') \in Y] + \delta.
\end{equation*}
\end{definition}

\ifnum\tpdp=0 
For $\eps \leq 1$ and $\delta < 1/n$, this guarantee implies, roughly, that one can learn the same things about any given individual from the output of the algorithm as one could had their data not been used in the computation \cite{KasiviswanathanS14}. 
When $\delta=0$, the guarantee is referred to as pure differential privacy.
\ifnum\neurips=0
We generally consider the approximate case where $\delta>0$. 
In the analysis of some of the algorithms,
we also use a variant of the definition, \emph{zCDP}~\cite{bun2016concentrated} (see Definition~\ref{def:CDP}).
\fi 
\fi

\ifnum\neurips=1
\subsection{Our results}
\else 
    \ifnum\tpdp=1
    \paragraph{Our Results}
    \else 
    \subsection{Our Results}
    \fi
\fi 
\label{sec:results}

We initiate a systematic investigation of the sample complexity of differentially private sampling. We provide upper and lower bounds for the  sample size needed for this task for three natural families of distributions: arbitrary distributions on $[k]=\{1,\ldots ,k\}$, arbitrary product distributions on $\{0,1\}^d$, and product distributions on $\{0,1\}^d$ with bounded bias. We demonstrate that, in some parameter regimes, private sampling requires asymptotically fewer samples than learning a description of $P$ nonprivately; in other regimes, however, sampling proves to be as difficult as private learning. 


\begin{table}[t] 
    \begin{center} \small
     \caption{Sample complexity of sampling and estimation tasks. Our negative results hold for $(\eps,\delta)$-differential privacy when $\delta<1/n$. In this table, $\eps \leq 1$ and $\delta = 1/n^c$ for constant $c>1$.}
    \label{tab:results}
\begin{tabular}{ |c||c||c|c| } 
      \hline
 & Distributions on $[k]$ 
 & Product distributions 
 & Product distributions 
 \\
 & 
 & on $\bit{d}$
 & with $p_j \in [\frac 1 3 , \frac 2 3]$ 
 \\ 
 \hline
 {{\color{white} \rule{1pt}{11pt}} Nonprivate learning } & 
 $\Theta\paren{\dfrac{k}{\alpha^2}}$
 & \multicolumn{2}{|c|}{$\Theta\paren{ \dfrac{d}{\alpha^2}}$}
 \\ 
 \hline
 {{\color{white} \rule{1pt}{11pt}} $(\eps, \delta)$-DP sampling} 
 & $\Theta\paren{\dfrac{k}{\alpha \eps}}$ 
 & $\tilde \Theta\paren{\dfrac{d }{\alpha \eps}}$
 & $\tilde O \paren{\dfrac{\sqrt{d}}{\eps} + \log\dfrac{d}{\alpha}}$,
 \quad $\Omega\paren{\dfrac{\sqrt{d}}\eps}$ 
 
 \\
 (this work) & Theorems~\ref{thm:kupperb}, \ref{thm:intro-k-ary-lb} & Theorems~\ref{thm:bernoulli-product-alg-intro}, \ref{thm:bernoulli-product-lb-intro}  & Theorems~\ref{thm:intro-bernoulli-product-bb}, \ref{thm:intro-product-bb-lb}
 \\
 \hline
 {{\color{white} \rule{1pt}{11pt}} $(\eps, \delta)$-DP learning} & $\Theta\paren{\dfrac{k}{\alpha^2} + \dfrac{k}{\alpha \eps}}$ \quad \cite{diakonikolas2015differentially} 
 & \multicolumn{2}{|c|}{$\tilde \Theta\paren{\dfrac{d }{\alpha \eps} + \dfrac{d}{\alpha^2}}$ \cite{KLSU19,BunKSW21J}}
 \\ 
 \hline
\end{tabular}
\end{center}
\end{table}

Our results are summarized in Table~\ref{tab:results}.
\ifnum\neurips=1
Proofs of all results are included in the supplementary material.
\fi
\ifnum\tpdp=1
Proofs of all results are included in the full version, which is forthcoming on arxiv.
\fi 
For simplicity, the table and our informal discussions focus  on $(\eps,\delta)$-differential privacy (Definition~\ref{def:DP})  in the setting\footnote{This setting precludes trivial solutions (which are  possible when $\delta =\Omega(1/n)$), but allows us to treat factors of $\log(1/\delta)$ as logarithmic in $n$ and absorb them in $\tilde O$ expressions.} where $\delta = 1/n^c$ for some constant $c>1$.

Let $\class_k$ be the class of $k$-ary distributions (that is, distributions on $[k]$). We show that  $n=\Theta\bparen{\frac{k}{\alpha \eps}}$ observations are necessary and sufficient for differentially private sampling from $\class_k$ in the worst case.

\begin{theorem}\label{thm:kupperb}
For all $k\geq 2,\eps>0$, and  $\alpha\in (0,1)$, there exists an $(\eps,0)$-DP sampler 
that is  $\alpha$-accurate on the distribution class $\class_k$ for datasets of size $n=O(\frac{k}{\alpha \eps})$.
\end{theorem}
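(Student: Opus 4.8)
The plan is to achieve this with the simplest conceivable mechanism: output a random record of the dataset, but occasionally overwrite it with a fresh uniform draw from $[k]$ so that the output distribution stays bounded away from zero everywhere. Concretely, fix $\beta = \alpha$ and $n = \ceil{k/(\alpha\eps)}$, and on input $\Datafixed = (\datafixed_1,\dots,\datafixed_n)$ do the following: with probability $1-\beta$, output $\datafixed_i$ for a uniformly random index $i\in[n]$; with probability $\beta$, output a uniformly random element of $[k]$. If $\widehat{\distr}_{\Datafixed}$ denotes the empirical distribution of $\Datafixed$ and $U$ the uniform distribution on $[k]$, then the sampler's output distribution on input $\Datafixed$ is exactly $(1-\beta)\widehat{\distr}_{\Datafixed} + \beta U$. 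The two things to check are privacy and accuracy.

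For privacy, the point I would exploit is that mixing in $\beta U$ makes every output probability at least $\beta/k$, whereas replacing a single record changes $\widehat{\distr}_{\Datafixed}$ in at most two coordinates and by at most $1/n$ in each, so for neighboring datasets $\Datafixed,\Datafixed'$ and every outcome $z$ we have $|Q_{\Datafixed}(z) - Q_{\Datafixed'}(z)| \le (1-\beta)/n$. Dividing, the multiplicative change in any single output probability is at most $1 + \frac{(1-\beta)/n}{\beta/k} = 1 + \frac{(1-\beta)k}{\beta n}$ in either direction; since $e^{\eps} \ge 1+\eps$, the choice $n \ge \frac{(1-\beta)k}{\beta\eps}$ (which $n = \ceil{k/(\alpha\eps)}$ satisfies for $\beta=\alpha$) pushes this below $e^{\eps}$, and summing the per-outcome bound over outcomes in any event $Y\subseteq[k]$ gives $(\eps,0)$-DP. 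This is the only place the dataset size enters: averaging the empirical distribution over $n$ records divides its per-record sensitivity by $n$, so more data is exactly what buys privacy here.

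For accuracy I would avoid concentration inequalities entirely and use only that the empirical distribution is unbiased: $\E_{\Datarv\sim\distr^n}[\widehat{\distr}_{\Datarv}(z)] = \distr(z)$ for every $z$, so by linearity of expectation $\distroutput{\sampler,\distr}(z) = (1-\beta)\distr(z) + \beta/k$ exactly, hence $\distroutput{\sampler,\distr} - \distr = \beta(U - \distr)$ and $d_{TV}(\distroutput{\sampler,\distr},\distr) = \beta\, d_{TV}(U,\distr) \le \beta = \alpha$. Plugging in $n = \ceil{k/(\alpha\eps)} = O(k/(\alpha\eps))$ finishes the proof. I do not expect a genuine obstacle; the only things requiring care are verifying the privacy ratio in both directions and pinning down the constant linking $\beta$, $\alpha$, and $n$. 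Worth emphasizing is that it is the \emph{additive uniform mixing} (rather than additive noise on a released histogram, as one might reach for) that yields pure $(\eps,0)$-DP here, since it gives a multiplicative lower bound on all output probabilities for free.
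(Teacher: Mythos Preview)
Your proof is correct but takes a genuinely different route from the paper. The paper's sampler computes the empirical histogram, adds Laplace noise $\Lap(2/(\eps n))$ to each of the $k$ coordinates, $L_1$-projects back onto the simplex, and samples from the result; privacy is the Laplace mechanism (the empirical histogram has $\ell_1$-sensitivity $2/n$), and accuracy comes from $\E\bigl[\|\text{noise}\|_1\bigr]=2k/(n\eps)$ together with the unbiasedness of the empirical distribution. You instead mix the empirical distribution with the uniform one and obtain privacy from the multiplicative floor $\beta/k$ on every output probability, with no noise and no projection. Your argument is more elementary, yields a slightly better constant ($n\approx k/(\alpha\eps)$ versus $2k/(\alpha\eps)$), and never needs to materialize the histogram---the algorithm literally outputs either a random record or a fresh uniform draw. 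The paper's approach, by contrast, sits in the standard ``noisy sufficient statistic'' paradigm and would extend more naturally if one also wanted to release the estimated distribution itself. Both analyses rest on the same key observation, highlighted in the paper's overview: sampling from an \emph{unbiased} estimate of $P$ incurs zero sampling error, so all the accuracy loss comes from whatever is done for privacy (Laplace noise there, uniform mixing here).
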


\begin{theorem}\label{thm:intro-k-ary-lb}
For all  $k\geq 2$, $n \in \N$, $\alpha \in (0,\frac 1 {50}]$, $\eps  \in (0,1]$, and $\delta \in (0, \frac{1}{5000n}]$, 
if there is an 
$(\eps,\delta)$-DP sampler  that is $\alpha$-accurate on the distribution class $\class_k$ 
on datasets of size $n$, then 
$n = \Omega\paren{ \frac{k}{\alpha\eps}}$.
\end{theorem}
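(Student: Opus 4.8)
Fix a hypothetical $(\eps,\delta)$-DP sampler $\sampler\colon[k]^n\to[k]$ that is $\alpha$-accurate on $\class_k$, and suppose toward a contradiction that $n$ is much smaller than $k/(\alpha\eps)$. For a fixed dataset $\Datafixed$ write $a_z(\Datafixed)=\Pr_{\text{coins}}[\sampler(\Datafixed)=z]$, and for $\distr\in\class_k$ set $q_z(\distr)=\Pr_{\Datarv\sim\distr^n}[\sampler(\Datarv)=z]=\E_{\Datarv\sim\distr^n}[a_z(\Datarv)]$. Accuracy is exactly the statement $\sum_{z}|q_z(\distr)-\distr(z)|\le 2\alpha$ for all $\distr\in\class_k$, i.e. the map $\distr\mapsto(q_z(\distr))_z$ must closely track the identity on the simplex; privacy says $a_z$, hence each $q_z$, is a slowly varying function of $\distr$. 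The plan is to play these off against each other through a differential inequality: privacy caps how fast $q_z$ can grow as $\distr$ moves, while accuracy forces it to grow linearly, and the rate mismatch gives the lower bound.

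The privacy input I would use is the following bound on directional derivatives. For coordinates $c\neq c'$, letting $N_c(\Datarv)=\#\{j:\datarv_j=c\}$ and $\partial_{e_c-e_{c'}}$ denote the derivative that adds mass to $c$ and removes it from $c'$,
\[
\partial_{e_c-e_{c'}}q_z(\distr)=\E_{\Datarv\sim\distr^n}\!\Big[\Big(\tfrac{N_c(\Datarv)}{\distr(c)}-\tfrac{N_{c'}(\Datarv)}{\distr(c')}\Big)a_z(\Datarv)\Big]\le(e^{\eps}-1)\,\E_{\Datarv\sim\distr^n}\!\Big[\tfrac{N_{c'}(\Datarv)}{\distr(c')}a_z(\Datarv)\Big]+n\delta .
\]
The equality is the standard score identity for the Bernstein-type polynomial $q_z$; the inequality holds because turning one entry equal to $c'$ into an entry equal to $c$ is a neighboring-dataset change, so $a_z(\ldots,\datarv_j{=}c,\ldots)\le e^{\eps}a_z(\ldots,\datarv_j{=}c',\ldots)+\delta$, and we average over $j$ and over $\Datarv_{-j}$. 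Combining this with $\Bin(n,p)=\Bin(n-1,p)+\Ber(p)$ and the companion (reverse) privacy inequality lets me replace $\E[\tfrac{N_{c'}}{\distr(c')}a_z]$ by roughly $n\,q_z(\distr)$ whenever $\distr(c')$ is not too small (so that $N_{c'}/\distr(c')$ concentrates near $n$), yielding the clean form $\partial_{e_c-e_{c'}}q_z(\distr)\le A\,q_z(\distr)+B$ with $A=\Theta(n\eps)$ and $B=\Theta(n\delta)$.

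For $k=2$ this is already conclusive: $q(p):=q_1(\distr)$ is a function of $p=\distr(1)$ alone with $q'(p)\le Aq(p)+B$, so Gr\"onwall's inequality gives $q(p)\le\bparen{q(0)+Bp}e^{Ap}$, and $q(0)\le\alpha$ since $\distr=\Ber(0)$. If $n\eps<\tfrac18$ then $q(1)\le(\alpha+B)e<1-\alpha$, contradicting accuracy at $\Ber(1)$; hence $n\eps\ge\tfrac18$ and we may set $p^{\ast}=\tfrac1{8n\eps}\in(0,1]$, where $q(p^{\ast})\le(\alpha+B)e$. But accuracy at $\Ber(p^{\ast})$ forces $q(p^{\ast})\ge p^{\ast}-\alpha$, so $p^{\ast}\le\alpha+(\alpha+B)e$; since $B=\Theta(n\delta)$ and $n\delta\le\tfrac1{5000}$ the term $Be$ is a small absolute constant, and rearranging gives $n\eps=\Omega(1/\alpha)$. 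The hypotheses $\alpha\le\tfrac1{50}$ and $\delta\le\tfrac1{5000n}$ are precisely what make these constants close.

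For general $k$ the plan is to run this argument ``in parallel'' over $\Theta(k)$ coordinates, using the multivariate version of the inequality above for the Jacobian $J$ of $\distr\mapsto(q_z(\distr))_z$ near the uniform distribution. The step I expect to be the main obstacle is the following: a single coordinate of $\distr\in\class_k$ can move only by $\Theta(1/k)$ before leaving the simplex, so once $k\gg 1/\alpha$ a one-coordinate perturbation is below the accuracy floor and one must perturb $\Theta(k)$ coordinates simultaneously — turning the scalar differential inequality into a matrix one. Accuracy forces $J$ to be close to the identity in a strong integrated sense, while the privacy bound controls its entries; the crucial gain is that for near-uniform $\distr$ each symbol carries only $\E[N_c]=n/k$ of the data, so the relevant aggregates telescope with a factor-$k$ saving (e.g. $\sum_c\E[N_c\,a_z]\le\E[\max_c N_c]=(1+o(1))n/k$). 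Balancing ``$J\approx I$'' against this privacy cap, and then applying the same $p^{\ast}=\Theta(1/(n\eps))$ rescaling trick as in the binary case to turn an exponential-growth bound into the extra $1/\alpha$ factor, should yield $n\eps=\Omega(k/\alpha)$; as before, every $\delta$-dependent term has size $O(n\delta)=O(1/5000)$ and is absorbed into the constants using $\alpha\le\tfrac1{50}$.
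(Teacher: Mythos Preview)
Your $k=2$ argument is correct and is essentially the paper's proof recast in differential form: the paper obtains $\Pr[\sampler(1^t0^{n-t})=1]\le 2\alpha e^{\eps t}$ by discrete group privacy and then takes an expectation over $T\sim\Bin(n,10\alpha)$, which is the integrated version of your Gr\"onwall step evaluated at $p=\Theta(\alpha)$. One correction: to make the constants close you should absorb the $\delta$-term not as a free absolute constant but via $Bp^\ast=\Theta(\delta/\eps)\le\tfrac{1}{5000\,n\eps}=\tfrac{8}{5000}\,p^\ast$, which moves to the left side of $p^\ast\le C\alpha+e\,Bp^\ast$ and gives $p^\ast=O(\alpha)$ cleanly; as written, ``$Be$ is a small absolute constant'' does not vanish with $\alpha$ and would only yield $n\eps=\Omega(1)$.

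The $k\ge 3$ plan has a genuine gap. First, the uniform distribution is the \emph{wrong} hard instance: the data-oblivious algorithm that outputs a uniformly random element of $[k]$ is exactly accurate at the uniform distribution and $O(1/k)$-accurate on its entire $\ell_1$-neighborhood of radius $O(1/k)$, so on that neighborhood one can have Jacobian $J=0$ with no accuracy violation. Accuracy $\|q(\distr)-\distr\|_1\le 2\alpha$ does \emph{not} force $J\approx I$ pointwise. Second, your ``telescope'' inequality $\sum_c\E[N_c\,a_z]\le\E[\max_c N_c]$ is false: the left side equals $\E[a_z\sum_c N_c]=n\,q_z$, and for $a_z\equiv 1$ the two sides are $n$ versus $O(\log k/\log\log k)$. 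The privacy differential inequality along any tangent direction remains $|\partial_v q_z|\le O(n\eps)\,q_z+O(n\delta)$ with no factor-$k$ improvement; integrating along any unit-$\ell_1$ path and summing over $z$ gives only $n\eps=\Omega(1)$, far short of $\Omega(k/\alpha)$.

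The paper's argument for $k\ge 3$ is structurally different and lives in the opposite regime. The hard distribution puts mass $1-\Theta(\alpha)$ on one special element of $[2k+1]$ and the remaining $\Theta(\alpha)$ uniformly over a size-$k$ subset $S^\ast\subset[2k]$, leaving $k$ symbols with probability \emph{exactly zero}. The key is a support argument, not a derivative bound. After symmetrizing to a frequency-count-based sampler and Poissonizing, group privacy gives $p_{j,F}\le e^{\eps j}(p_{0,F}+\delta/\eps)$, where $p_{j,F}$ is the probability of outputting any specific element appearing $j$ times. Accuracy forces $\Pr[\sampler(\Datarv)\in[2k]]\ge\Theta(\alpha)$; since each $i\in S^\ast$ appears $N_i\sim\Po(\Theta(\alpha n/k))$ times, the sum $Y=\sum_{i\in S^\ast}e^{\eps N_i}$ is $O(k)$ with high probability precisely when $n\ll k/(\alpha\eps)$ (this is where the factor $k$ enters). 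Rearranging yields $p_{0,F}\ge\Theta(\alpha)/(k+Y)=\Theta(\alpha/k)$, and multiplying by the $k$ zero-probability symbols gives $\Pr[\sampler(\Datarv)\notin\mathrm{supp}(\distr)]>\alpha$, contradicting accuracy. The reductions to small $\eps$ (via subsampling amplification) and to frequency-count-based samplers (via label-invariance) are where the technical work lies, and your outline does not supply any substitute for them.
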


The second major class we consider consists of products of $d$ Bernoulli distributions, for $d\in \N$. We denote this class by $\cB^{\otimes d}$.  Each distribution in $\cB^{\otimes d}$ is described by a vector $(p_1,...,p_d) \in [0,1]^d$ of $d$ probabilities, called \emph{biases}; a single observation in $\bit{d}$ is generated by flipping $d$ independent coins with respective probabilities $p_1,...,p_d$ of heads. 
We show that  $n=\tilde\Theta\bparen{\frac{d}{\alpha \eps}}$ observations are necessary and sufficient for differentially private sampling from $\cB^{\otimes d}$ in the worst case.

\begin{theorem}\label{thm:bernoulli-product-alg-intro}
For all $d,n\in\N$ and $\eps,\delta,\alpha  \in (0,1)$, 
there exists an $(\eps, \delta)$-DP sampler 
that is $\alpha$-accurate on the distribution class $\cB^{\otimes d}$ for datasets of size $n=\tilde{O}\big(\frac{d}{\alpha\eps}\big)$, assuming $\log(1/\delta) = poly(\log n)$.
\end{theorem}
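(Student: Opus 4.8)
The plan is to privately estimate the $d$ biases $p_1,\dots,p_d$ of $P$ and then emit a fresh sample one coordinate at a time from the estimated distribution; essentially all the difficulty lies in coordinates whose bias is very close to $0$ or $1$. Two elementary observations drive the design. First, total variation distance tensorizes sub-additively over coordinates for products, $d_{TV}\big(\bigotimes_j\Ber(q_j),\bigotimes_j\Ber(p_j)\big)\le\sum_j|q_j-p_j|$, so it is enough to arrange that the output's marginal probability $q_j$ of a $1$ in coordinate $j$ satisfies $\sum_j|q_j-p_j|\le\alpha$. Second --- and this is the feature that can make private sampling cheaper than non-private learning --- we only need each \emph{marginal} $q_j$ to be accurate, not any point estimate: a mixture of Bernoullis is a Bernoulli with the averaged parameter, so the sampler's own coins average out the variance of its estimator. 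In particular, for a ``bulk'' coordinate --- one whose bias is bounded away from $0$ and $1$ by more than the scale of the perturbation $\bar p_j-p_j+Y$ --- outputting a fresh $\Ber(\clip{\bar p_j+Y})$ draw with mean-zero noise $Y$ gives $q_j=p_j$ up to an exponentially small clipping correction, so the variance of the estimator is irrelevant. Bulk coordinates are therefore essentially free, and the work is confined to the near-boundary coordinates.

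Consider a coordinate with $p_j=0$ (the case $p_j=1$ is symmetric, exchanging the roles of $0$ and $1$). Then $\bar p_j=0$, and any symmetric-noise-then-clip release outputs a $1$ with probability of order the noise standard deviation, so keeping $\sum_j q_j\le\alpha$ over as many as $d$ such coordinates forces per-coordinate noise of order $\alpha/d$. The naive route cannot reach this: a single record touches all $d$ coordinates, so the count vector has $\ell_2$-sensitivity $\sqrt d/n$ and the Gaussian mechanism is then forced to use per-coordinate standard deviation $\tilde{\Theta}(\sqrt d/(n\eps))$, which exceeds $\alpha/d$ unless $n=\tilde{\Omega}(d^{3/2}/(\alpha\eps))$. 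Removing this spurious factor of $\sqrt d$ is the heart of the proof.

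The key point is that the low-bias coordinates form a \emph{sparse} sub-problem. If every still-unresolved coordinate has bias at most $\tau$, then with high probability over $\Datarv\sim P$ a sample has at most $s=\tilde{O}(d\tau+1)$ ones among them, so clipping each record down to $s$ ones on the current unresolved set --- a preprocessing step that is inert with high probability on data from $P$ yet bounds sensitivity on \emph{every} input --- cuts the $\ell_2$-sensitivity of those counts to $\tilde{O}(\sqrt s/n)$ and hence the Gaussian noise to $\tilde{O}(\sqrt s/(n\eps))$. Since the unresolved set always has size at most $d$, once $\tau$ is down to about $\alpha/d$ we get $s=\tilde{O}(1)$ and noise $\tilde{O}(1/(n\eps))\le\alpha/d$ whenever $n=\tilde{O}(d/(\alpha\eps))$. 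Driving $\tau$ down is done by $O(\log\log d)$ peeling rounds with a decreasing threshold sequence $\tau_0>\tau_1>\cdots$: round $\ell$ clips to $s_\ell=\tilde{O}(d\tau_{\ell-1}+1)$ ones on the current unresolved set, runs the Gaussian mechanism with an appropriate share of the privacy budget, and \emph{resolves} a coordinate --- committing to output $\Ber$ of its clamped noisy estimate --- as soon as that estimate exceeds $\tau_\ell$, where $\tau_\ell$ is set a polylogarithmic factor above the current noise scale so that resolved coordinates are clear of clipping bias and satisfy $q_j=p_j$. Because the $\ell$-th noise scale behaves like $\tilde{\Theta}(\sqrt d/(n\eps))\cdot\sqrt{\tau_{\ell-1}}$, within $O(\log\log d)$ rounds the threshold contracts to $\tilde{O}(1/(n\eps))$, which is below $\alpha/d$ given our bound on $n$; after the last round we set $q_j=0$ on every still-unresolved coordinate, and since those have $p_j\le\tilde{O}(\alpha/d)$ this costs only $\sum_j p_j\le\alpha/2$ in total variation distance.

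Assembling the pieces: privacy follows by composing the $O(\log\log d)$ Gaussian-mechanism steps (via advanced composition, or zCDP) together with the facts that clipping is preprocessing and thresholding is post-processing of already-released noisy quantities; the hypothesis $\log(1/\delta)=\mathrm{poly}(\log n)$ is exactly what lets the attendant $\sqrt{\log(1/\delta)}$ factors disappear into $\tilde{O}$. For accuracy, condition on the high-probability event --- over $\Datarv\sim P$ --- that every coordinate whose bias is comfortably above $\tau_\ell$ is resolved by round $\ell$ and that, consequently, no record is ever clipped. On this event the algorithm's behaviour in coordinate $j$ is a function of the count of $1$s in coordinate $j$ and of the noise drawn for coordinate $j$ alone, and these are mutually independent across $j$ for a product distribution; hence, conditioned on the event, $\distroutput{\sampler,P}$ is a genuine product $\bigotimes_j\Ber(q_j)$ with $q_j=p_j$ on resolved coordinates and $q_j=0$ on the rest, and summing the per-coordinate discrepancies (plus the negligible mass of the complementary event) gives $d_{TV}(\distroutput{\sampler,P},P)\le\alpha$. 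I expect the most delicate part to be the bookkeeping for the peeling recursion: checking that the threshold, clip level, and noise scale are mutually consistent at every round, that a genuinely high-bias coordinate is misclassified as unresolved --- which would make the clipping non-inert and couple coordinates --- only with negligible probability, and that $O(\log\log d)$ rounds truly suffice.
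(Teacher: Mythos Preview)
Your overall strategy---progressively peeling off coordinates by bias and exploiting the fact that an \emph{unbiased} noisy estimate of $p_j$, fed into a fresh Bernoulli draw, yields the exact marginal $p_j$---is the same as the paper's. The paper uses $O(\log d)$ rounds with thresholds halving each time; your square-root recursion genuinely contracts in $O(\log\log d)$ rounds, which is a pleasant sharpening but affects only polylogarithmic factors in the final bound.

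There is, however, a real gap in the accuracy argument. You use the \emph{same} noisy estimate both to decide whether a coordinate is resolved (is $\hat p_{j,\ell}>\tau_\ell$?) and as the Bernoulli parameter when it is. This creates selection bias: conditioned on $\hat p_{j,\ell}>\tau_\ell$, the estimate no longer has mean $p_j$, so the claim that ``resolved coordinates satisfy $q_j=p_j$'' fails for any coordinate whose true bias lies within a few standard deviations of some threshold. Concretely, take all $d$ biases equal to $\tau_1$. Each coordinate is resolved in round~$1$ with probability about $1/2$, and on that event the estimate is biased upward by $\Theta(\sigma_1)$; if unresolved it passes to round~$2$, where $p_j\gg\tau_2$ and the round-$2$ estimate is essentially unbiased. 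The net effect is $|q_j-p_j|=\Theta(\sigma_1)=\tilde\Theta(\alpha/\sqrt d)$ for every $j$, and the total variation error is $\tilde\Theta(\alpha\sqrt d)$, not $\alpha$. The good event you condition on constrains only coordinates \emph{comfortably above} each $\tau_\ell$ and says nothing about coordinates at the boundary, so it cannot absorb this.

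The paper's fix is structural: it separates \emph{bucketing} from \emph{sampling} by using disjoint data for each. A first portion of the dataset is used only to assign every coordinate to a bucket (essentially your peeling, carried out with $O(\log d)$ halving rounds); a second, fresh portion is then used, bucket by bucket, to form the noisy estimates from which the output bits are drawn. Because the bucket assignment is independent of the sampling-phase data and noise, the sampling-phase estimate is unbiased conditional on the bucket, and the ``$q_j=p_j$ up to clipping'' argument goes through cleanly. Your peeling schedule would work equally well once this separation is added.
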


\begin{theorem}\label{thm:bernoulli-product-lb-intro}
For all sufficiently small $\alpha>0$, and
for all $d,n\in \N$, 
$\eps \in (0, 1]$, and $\delta \in \big[0, \frac 1{5000 n}\big]$,  
if there is an 
$(\eps,\delta)$-DP sampler that is $\alpha$-accurate on the distribution class $\cB^{\otimes d}$ on datasets of size $n$, then 
$n=\Omega(\frac{d}{\alpha \eps})$.
\end{theorem}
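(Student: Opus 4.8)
The plan is to prove this lower bound by the same fingerprinting‑code machinery that underlies the $k$‑ary bound of Theorem~\ref{thm:intro-k-ary-lb}, now instantiated so that the $d$ coordinates of $\bit d$ play the role of the $k$ symbols. Let me first note why a naive reduction does not work: embedding $\class_k$ into $\cB^{\otimes d}$ by a fixed per‑record randomized map $\phi$ would force the image of the simplex $\class_k$ under $\phi$ (an affine map of distributions) to be a flat sitting inside the highly curved manifold of product distributions on $\bit d$; but that manifold contains no flats of dimension $>1$, so the effective dimension collapses to that of $\class_2$, giving only $n=\Omega(1/(\alpha\eps))$. Hence the $d$ coordinates genuinely have to be made independently hard, which is exactly what a fingerprinting‑type argument does.

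Concretely, I would fix a distribution $\biasdistr$ over bias vectors $\Biasesfixed=(p_1,\dots,p_d)$ — the column parameters of a Tardos/Boneh–Shaw‑style fingerprinting code, but tuned so that each $p_j$ differs from $\tfrac12$ by roughly $\Theta(\alpha)$ rather than the $\Theta(1/\sqrt n)$ one is forced to use when $p_j$ is constrained to $[\tfrac13,\tfrac23]$. (It is precisely this freedom to use ``large‑signal'' columns, unavailable in the bounded‑bias setting, that upgrades the $\sqrt d$ of Theorem~\ref{thm:intro-product-bb-lb} to a full $d$.) Draw $\Biasesfixed\sim\biasdistr$ and a codebook $\Datarv=(\datarv_1,\dots,\datarv_n)$ with rows i.i.d.\ from $P_\Biasesfixed=\prod_j\Ber(p_j)\in\cB^{\otimes d}$, and run the purported $(\eps,\delta)$‑DP $\alpha$‑accurate sampler $\sampler$ to get $\cw=\sampler(\Datarv)\in\bit d$. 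Since $d_{TV}(\distroutput{\sampler,P_\Biasesfixed},P_\Biasesfixed)\le\alpha$, the string $\cw$ is, up to statistical error $\alpha$, an extra draw from $P_\Biasesfixed$; in particular its agreement pattern with $\mathrm{sign}(p_j-\tfrac12)$ matches that of a genuine row closely enough that $\cw$ is a \emph{feasible} codeword on all but an $O(\alpha)$‑fraction of columns, so the tracing algorithm outputs some index $i^{*}\in[n]$ with probability $1-o(1)$.

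The remainder is the standard fingerprinting‑to‑DP step. For each $i$, let $\Datarv^{(i)}$ be $\Datarv$ with row $i$ resampled afresh from $P_\Biasesfixed$; then $i$ is outside the coalition that could have influenced $\sampler(\Datarv^{(i)})$, so code soundness gives $\Pr[\tracecw(\Datarv,\sampler(\Datarv^{(i)}))=i]\le\beta'$, while differential privacy gives $\Pr[\tracecw(\Datarv,\sampler(\Datarv))=i]\le e^\eps\Pr[\tracecw(\Datarv,\sampler(\Datarv^{(i)}))=i]+\delta$. Summing over $i\in[n]$ and using $\sum_i\Pr[\tracecw(\Datarv,\sampler(\Datarv))=i]\ge 1-o(1)$ yields $1-o(1)\le e^\eps n\beta'+n\delta$; with $\delta\le 1/(5000n)$ and the code length $d$ that the above parameter choice supports for given $(n,\alpha,\eps)$, this rearranges to $n=\Omega(d/(\alpha\eps))$. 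Running the identical computation with the symbols of $[k]$ in place of the coordinates of $\bit d$ recovers Theorem~\ref{thm:intro-k-ary-lb}, which is why this may equivalently be phrased as reusing that theorem's internal construction.

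The main obstacle is calibrating the code so that all of these steps fit together: one needs a fingerprinting code of length only $d=O(\alpha\eps\,n\cdot\mathrm{polylog})$ — so that the bound above inverts to the claimed $n=\Omega(d/(\alpha\eps))$ — that is robust to the ``pirate'' (here, a single product sample) being wrong on a $\Theta(\alpha)$‑fraction of columns, and whose soundness error $\beta'$ is small enough that $e^\eps n\beta'=o(1)$. Designing and analyzing such a robust large‑signal code, and carrying out the group‑privacy bookkeeping despite the fact that replacing one record changes all $d$ bits simultaneously (so the per‑coordinate sensitivity arguments familiar from marginal‑estimation lower bounds must be replaced by the row‑level tracing guarantee), is where essentially all the work lies; the accuracy‑to‑feasibility translation and the summation step are routine once the code is in hand.
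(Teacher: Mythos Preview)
Your approach diverges substantially from the paper's, and I believe it has a genuine gap.

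First, neither Theorem~\ref{thm:intro-k-ary-lb} nor Theorem~\ref{thm:bernoulli-product-lb-intro} in the paper uses fingerprinting codes. The $k$-ary bound is proved by a direct argument about frequency-count-based samplers and the requirement that the output lie in the support of $P$; the product-distribution bound is then obtained by a \emph{reduction from} $k$-ary sampling. Given $P\in\carb$ (mass $1-60\alpha$ on a special element, the rest uniform on a size-$k$ subset of $[2k]$), the paper defines a corresponding product distribution $\corr\in\cB^{\otimes 2k}$ whose coordinate biases are either $0$ or $\alpha^*/k$. Using Poissonization and a coupling between Poisson and multinomial counts, a Poisson-sized i.i.d.\ sample from $P$ is converted into a fixed-size i.i.d.\ sample from $\corr$; a product sampler is then applied, and its output in $\bit{2k}$ is mapped back to $[2k+1]$. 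The hard instances thus have biases near $0$, and the argument hinges on distinguishing zero-bias coordinates from small-bias ones, not on correlation with biases near $1/2$.

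Second, the fingerprinting route you sketch does not go through for a sampler that outputs a \emph{single} draw. The fingerprinting lemma (in the Bun--Ullman--Vadhan / Steinke--Ullman form) needs the ``accuracy'' side to force $\sum_i \E\big[\langle \sampler(\Datarv)-\Biasesfixed,\ \datarv_i-\Biasesfixed\rangle\big]$ to be large; this follows when $\sampler(\Datarv)$ is coordinatewise close to $\Biasesfixed$, but a single Bernoulli sample has $\E[(\sampler(\Datarv)_j-\biasesfixed_j)^2]=\biasesfixed_j(1-\biasesfixed_j)$ per coordinate, which is $\Theta(1)$ when $\biasesfixed_j$ is bounded away from $\{0,1\}$. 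Put differently, the ``feasibility'' condition you invoke is vacuous whenever both bits appear in each column, which is exactly what happens for biases of the form $\tfrac12\pm\Theta(\alpha)$; the marking assumption only bites when biases are near $0$ or $1$, at which point the relevant structure is ``which coordinates are in the support,'' and you are back to the paper's argument rather than a tracing one. The paper itself notes this: fingerprinting is used only for the \emph{bounded-bias} class, via a reduction that runs the sampler a constant number of times to produce an actual marginal \emph{estimator}, and that route yields only $\Omega(\sqrt d/\eps)$, not $\Omega(d/(\alpha\eps))$. Your proposal offers no mechanism for a single sample to carry the $\Theta(d)$ bits of tracing signal that the completeness side of the fingerprinting argument would need.
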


Finally, we give better samplers and matching lower bounds 
for  Bernoulli distributions and,  more generally, products of Bernoulli distributions, with bias bounded away from 0 and 1. For simplicity, we consider distributions with bias $p_j\in [\frac 1 3, \frac 2 3]$ in each coordinate $j\in[d]$.
For this class, we show that differentially private sampling can be performed with datasets of size roughly $\sqrt{d}/\eps$, significantly smaller than in the general case. Curiously, the accuracy parameter $\alpha$ has almost no effect on the sample complexity.
For Bernoulli distributions with bounded bias, we achieve this with pure differential privacy, that is, with $\delta =0$.  For products of Bernoulli distributions, we need $\delta>0$.

\begin{theorem}
\label{thm:intro-bernoulli-product-bb}
For all $d\in\N$ and $\eps,\delta,\alpha  \in (0,1)$,
there exists an $(\eps,\delta)$-DP sampler that is  $\alpha$-accurate on the class 
of products of $d$ Bernoulli distributions with biases in $\big[\frac 13,\frac 23 \big]$  for datasets of size $n=O\Big(\frac {\sqrt{d\log(1/\delta)}}{\eps}   + \log \frac d {\alpha}
\Big)$.
When $d=1$, the sampler has $\delta=0$ and $n=O(\frac 1{\eps}+\log \frac 1 {\alpha})$. 
\end{theorem}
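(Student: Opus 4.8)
The plan is to reduce to the one-dimensional case $d=1$ and then combine the $d$ coordinates by composition. The key observation is that, because the sampler outputs a single bit per coordinate, the output distribution in coordinate $j$ is completely determined by the number $\Pr[\text{output bit } j = 1]$; hence, to be $\alpha$-accurate it suffices to produce, in each coordinate, an estimate of $p_j$ whose \emph{expectation} (over the data and the sampler's coins) is within $\alpha/d$ of $p_j$ --- the estimate need not concentrate at all. Since the empirical mean is exactly unbiased, the natural one-dimensional sampler on data $X_1,\dots,X_n\in\{0,1\}$ is: compute $\hat p=\clipquarter{\frac1n\sum_i X_i}$ and output a draw from $\Ber(\hat p)$. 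Clipping to $[\tfrac14,\tfrac34]$ is irrelevant for accuracy (because $p\in[\tfrac13,\tfrac23]$ lies strictly inside that interval); its only purpose is to enforce privacy.

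Privacy of this one-dimensional sampler is immediate: the empirical mean has sensitivity $\tfrac1n$ and clipping is $1$-Lipschitz, so on neighboring datasets $\hat p$ changes by at most $\tfrac1n$; since both $\hat p$ and $1-\hat p$ are at least $\tfrac14$, the ratios $\hat p/\hat p'$ and $(1-\hat p)/(1-\hat p')$ are each at most $1+\tfrac4n\le e^{4/n}$, so the $\Ber(\hat p)$ output mechanism is $(\tfrac4n,0)$-DP, i.e.\ $(\eps_0,0)$-DP whenever $n\ge 4/\eps_0$. For accuracy, the output distribution on input $\Ber(p)$ is $\Ber(q)$ with $q=\E_{S\sim\Bin(n,p)}[\clipquarter{S/n}]$, so the total variation distance to $\Ber(p)$ is $|q-p|=\bigl|\E[\clipquarter{S/n}-S/n]\bigr|$ (using $\E[S/n]=p$). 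The integrand vanishes when $S/n\in[\tfrac14,\tfrac34]$ and otherwise has absolute value at most $|S/n-p|$ (since $p\in[\tfrac14,\tfrac34]$), so by Cauchy--Schwarz $|q-p|\le\sqrt{\E[(S/n-p)^2]}\cdot\sqrt{\Pr[\,|S/n-p|\ge\tfrac1{12}\,]}\le\tfrac1{2\sqrt n}\sqrt{2e^{-n/72}}$, using $\operatorname{Var}(S/n)\le\tfrac1{4n}$ and a Hoeffding tail (the distance from $p$ to each clip endpoint is at least $\tfrac1{12}$). This is below $\alpha'$ as soon as $n=\Omega(\log(1/\alpha'))$.

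For general $d$, run the one-dimensional sampler independently on each of the $d$ columns. The output distribution is a product, so its total variation distance to $P$ is at most $\sum_j|q_j-p_j|$; taking $\alpha'=\alpha/d$ above makes this at most $\alpha$ provided $n=\Omega(\log\tfrac d\alpha)$. For privacy, changing one record perturbs each column's input in a single entry, so the $d$ column-samplers compose: for $d=1$ this is just $\eps_0=\eps$ with $\delta=0$, giving $n=O(\tfrac1\eps+\log\tfrac1\alpha)$; for $d\ge2$, advanced composition (equivalently, zCDP composition) shows it suffices to run each column-sampler with $\eps_0=\Theta\bparen{\eps/\sqrt{d\log(1/\delta)}}$ to obtain overall $(\eps,\delta)$-DP. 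Combining the privacy requirement $n=\Omega(1/\eps_0)$ with the accuracy requirement $n=\Omega(\log\tfrac d\alpha)$ gives $n=O\bparen{\tfrac{\sqrt{d\log(1/\delta)}}{\eps}+\log\tfrac d\alpha}$. The one genuinely delicate step is the accuracy (bias) bound of the second paragraph --- that clipping a noiseless empirical mean introduces only exponentially small bias --- but the conceptual crux is the reduction itself: because only one bit is output per coordinate, the task reduces to producing an unbiased rather than a concentrated per-coordinate estimate, which is exactly what lets the construction use no added noise (so $\delta=0$ for $d=1$), makes $\alpha$ appear only logarithmically, and makes the sample size scale like $\sqrt d/\eps$ (the price of privacy under composition) instead of $d/\alpha^2$ (the price of estimation).
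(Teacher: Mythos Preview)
Your proposal is correct and follows essentially the same route as the paper: the same clip-then-sample one-dimensional mechanism, the same $(4/n,0)$-DP argument via the ratio bound on $[\tfrac14,\tfrac34]$, the same per-coordinate application with subadditivity of TV for accuracy, and the same zCDP/advanced composition for privacy. The only (minor) difference is in bounding the clipping bias $|\E[\clipquarter{S/n}]-p|$: the paper conditions on the event $E=\{S/n\in[\tfrac14,\tfrac34]\}$ and uses $\E[\tilde p\mid E]=\E[\hat p\mid E]$ together with $\Pr[\overline E]\le\alpha/3$, whereas you bound the bias by $\E\bigl[|S/n-p|\cdot\mathbf 1_{\overline E}\bigr]$ and apply Cauchy--Schwarz. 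Both give the same $n=\Omega(\log\tfrac d\alpha)$ requirement, and the overall argument is the same.
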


\begin{theorem}
\label{thm:intro-product-bb-lb} 
For all sufficiently small $\alpha>0$, 
and for all $d,n\in \N$, 
$\eps \in (0,1]$,  and $\delta \in [0, \frac 1{100n}]$, 
if there exists an 
$(\eps,\delta)$-DP sampler that is $\alpha$-accurate on
the class 
of products of $d$ Bernoulli distributions with biases in $\big[\frac 13,\frac 23 \big]$ 
on datasets of size $n$, then  
$n={\Omega}(\sqrt{d}/\eps)$.
%
\end{theorem}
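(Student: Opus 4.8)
The plan is to prove this by a fingerprinting (``tracing'') argument in which the single output sample is treated as a weak, randomized estimator of the coordinate biases. Fix a small constant $\alpha$ (below $\tfrac16$, say). The hard instance is Bayesian: draw a bias vector $\mathbf p=(p_1,\dots,p_d)$ with the $p_j$ i.i.d.\ from a fixed continuous prior supported on $[\tfrac13,\tfrac23]$, chosen (as in standard fingerprinting arguments) so that the one-dimensional fingerprinting lemma applies cleanly; then draw $\Datarv=(\datarv_1,\dots,\datarv_n)$ from $\mathsf{BerProd}(\mathbf p)^{\otimes n}$ and let $\mathbf z=\sampler(\Datarv)\in\bit d$. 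Since every product distribution with biases in $[\tfrac13,\tfrac23]$ lies in the class, $\alpha$-accuracy gives $d_{TV}\big(\distroutput{\sampler,\mathsf{BerProd}(\mathbf p)},\mathsf{BerProd}(\mathbf p)\big)\le\alpha$ for every such $\mathbf p$; passing to the $j$-th marginal, $\big|\Pr[\mathbf z_j=1\mid\mathbf p]-p_j\big|\le\alpha$. Thus $\hat p_j:=\mathbf z_j$ is a (single-sample, $\{0,1\}$-valued, high-variance) estimator of $p_j$ that is \emph{weakly accurate in expectation}, and, crucially, $\E[\hat p_j\mid\mathbf p]$ tracks $p_j$ with slope $\approx 1$ as $p_j$ ranges over $[\tfrac13,\tfrac23]$.

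This last point is the source of the $\alpha$-independence. Because the biases are bounded away from $0$ and $1$, the amount by which $\E[\hat p_j\mid\mathbf p]$ must move when $p_j$ moves is $\Theta(1)$ rather than $\Theta(\alpha)$, so the per-coordinate fingerprinting ``signal'' is a positive absolute constant regardless of how small $\alpha$ is --- we only need $\alpha$ bounded below $\tfrac16$ to keep the slope positive. (By contrast, in the general-product case nothing forces a coordinate bias away from the endpoints, the signal degrades linearly with $\alpha$, and one obtains the weaker $d/(\alpha\eps)$ bound of Theorem~\ref{thm:bernoulli-product-lb-intro}.) So the reduction ``forgets'' $\alpha$ as soon as it is below a constant.

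The two ingredients are then: (i) a \emph{fingerprinting lemma}, applied coordinatewise and summed, showing that a weakly accurate estimator must be non-trivially correlated with the actual sample, namely $\sum_{i=1}^n \E\big[\langle\hat{\mathbf p}-\mathbf p,\ \datarv_i-\mathbf p\rangle\big]=\Omega(d)$; and (ii) \emph{differential privacy}, which prevents ``tracing'' of individual records: replacing $\datarv_i$ by a fresh independent draw changes $\E[\mathbf z\mid\cdot]$ by at most $O(\eps+\delta)$ in each coordinate (using $(\eps,\delta)$-DP and $\mathbf z_j\in\{0,1\}$), and --- because $\datarv_i-\mathbf p$ has independent, bounded, mean-zero coordinates --- the induced change in the $i$-th score is $O\big((\eps+\delta)\sqrt d\,\big)$ in the relevant second-moment sense rather than the naive $O\big((\eps+\delta)d\big)$. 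Combining the two, $\Omega(d)\le n\cdot O\big((\eps+\delta)\sqrt d\,\big)$; since $\delta\le 1/(100n)$ the $\delta$ contribution is lower order, and one gets $n=\Omega(\sqrt d/\eps)$. Heuristically this is exactly the $\sqrt d$-fold saving one expects from composing $d$ one-dimensional $\Omega(1/\eps)$ bounds under advanced composition (the $d=1$ case itself reduces to the classical $\Omega(1/\eps)$ fingerprinting bound for a single bounded-bias Bernoulli).

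The step I expect to be the main obstacle is extracting the $\sqrt d$ rather than a mere $\Omega(1/\eps)$: the crude DP bound on a single record's influence on $\langle\hat{\mathbf p}-\mathbf p,\ \datarv_i-\mathbf p\rangle$ is $O\big((\eps+\delta)d\big)$, which only yields $n=\Omega(1/\eps)$, so one must exploit the $d$-dimensional cancellation carefully --- either by doing the fingerprinting bookkeeping on the whole vector $\sum_i(\datarv_i-\mathbf p)$ at once (a multivariate fingerprinting lemma / tracing-code argument), or by invoking a known $\tilde\Omega(\sqrt d/\eps)$ fingerprinting lower bound for weak mean estimation of bounded-bias product distributions and verifying it applies to a randomized, single-sample estimator. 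A secondary nuisance is that $\hat{\mathbf p}=\mathbf z$ is not concentrated (each coordinate has variance $\approx\tfrac14$), so one must use a version of the fingerprinting lemma whose accuracy hypothesis constrains only $\E[\hat p_j\mid\mathbf p]$, not $\hat p_j$ itself --- which is precisely what $\alpha$-accurate sampling provides.
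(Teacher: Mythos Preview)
Your outline is sound and would go through, but the paper takes a different, more modular route. Rather than fingerprinting directly on the single output $\mathbf z$, the paper first \emph{boosts} it into a concentrated estimator: split the data into $c=O(1)$ disjoint blocks, run the sampler once on each block, and average coordinatewise to get $\tilde{\mathbf p}=\frac1c\sum_{k=1}^c\mathbf z^{(k)}$. Hoeffding plus Markov make $\tilde{\mathbf p}$ a high-probability (not merely in-expectation) marginal estimator with constant per-coordinate accuracy on all but a constant fraction of coordinates; a binary-symmetric-channel trick then reduces the general-bias estimation problem to the bounded-bias one; and finally the $\Omega(\sqrt d/\eps)$ lower bound of Bun--Ullman--Vadhan for private $1$-way marginal estimation is invoked as a black box on the resulting $cn$-sample estimator. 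Both obstacles you correctly flag---extracting the $\sqrt d$ and coping with the non-concentration of $\hat{\mathbf p}=\mathbf z$---are thus delegated wholesale to the cited result. Your direct approach is conceptually cleaner (and your explanation of why the bound is $\alpha$-independent is exactly the right intuition), but it requires you to run the fingerprinting argument yourself against an estimator that is accurate only in conditional expectation; off-the-shelf statements of the fingerprinting lower bound assume high-probability accuracy, so you would need the score-function (Stein-identity) form of the lemma, which indeed depends only on $\E[\hat p_j\mid\mathbf p]$, together with the standard second-moment bound on the fresh-sample score to get the $\sqrt d$.
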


\paragraph{Implications}

Our results show that the sample complexity of private sampling can differ substantially from that of private learning (for which known bounds are stated in Table~\ref{tab:results}). In some settings, sampling is much easier than learning: for example, for products of Bernoulli distributions with bounded biases, private sampling has a lower dependence on the dimension (specifically, $\sqrt{d}$ instead of $d$) and essentially no dependence on $\alpha$. Even for arbitrary biases or arbitrary $k$-ary distributions, private sampling is easier when $\alpha\ll\eps$. In other settings, however, private sampling can be as hard as private learning: e.g., for $\eps \leq \alpha$, the worst-case complexity of sampling and learning $k$-ary distributions and product distributions is the same. 

A more subtle point is that, in settings where private sampling is as hard as private learning, sampling accounts for the entire cost of privacy in learning. Specifically,  the optimal sample complexity of differentially private learning for arbitrary $k$-ary distributions is $n=\Theta\bparen{\frac{k}{\alpha^2} + \frac{k}{\alpha \eps}}$ (e.g., see \cite[Theorem 13]{AcharyaSZ21}). 
This bound is the sum of two terms: the sample complexity of nonprivate learning, $\Theta\bparen{\frac k {\alpha^2}}$, plus a term to account for the privacy constraint, $\Theta\bparen{\frac k {\alpha \eps}}$. One interpretation of our result that private sampling requires $n=\Theta\bparen{\frac k {\alpha \eps}}$ observations is that \emph{the extra privacy term in the complexity of learning can be explained by the complexity of privately generating a single sample with approximately correct distribution}. 
\ifnum\neurips=0
Analogously, for product distributions with arbitrary biases, the sample complexity of private learning is the sum of the sample complexity of nonprivate learning, $\Theta\paren{\frac{d}{\alpha^2}}$, and the extra privacy term, $\tilde \Theta\paren{\frac{d }{\alpha \eps}}$. 
Our results demonstrate that the latter term is the number of observations needed to produce one private sample, giving a new interpretation of the overhead in the number of samples stemming from the privacy constraint.
\fi

Another implication of our results is that, \emph{in some settings, the distributions that are hardest for learning---nonprivate or private---are the easiest for sampling, and vice versa}. Consider the simple case of Bernoulli distributions (i.e., product distributions with $d=1$). The ``hard'' instances for nonprivate learning to within error $\alpha$ are distributions with bias $p = \frac{1\pm\alpha}{2}$, but private sampling is easiest in that parameter regime. In contrast, the ``hard'' instances in our $\Omega(\frac{1}{\alpha\eps})$ lower bound for Bernoulli distributions have bias  $10\alpha$, that is, close to 0 as opposed to close to 1/2.
A simple variance argument shows that nonprivate learning is easy in that parameter regime, requiring only $O(\frac1 \alpha)$ observations. Similarly, for product distributions, we show that the complexity of private sampling is only $\tilde \Theta(\sqrt d)$ when biases are bounded away from 0 and 1. For the same class, however, the complexity of private and nonprivate learning is $\Theta(d)$.

Our final point is
that our lower bounds for $k$-ary distributions and general product distributions only require that the sampler generate a value \textit{in the support} of the distribution with high probability. They thus apply to a weaker problem, similar in spirit to the interior point problem that forms the basis of lower bounds for private learning of thresholds \cite{BunNSV15}. 

Taken together, our results show that  studying the complexity of generating a single sample helps us understand the true source of difficulty for certain tasks and sheds light on when we might be able to engage in nontrivial statistical tasks with very little data.

 
\ifnum\tpdp=0

\paragraph{Open questions}
Our work raises many questions about the complexity of private sampling. First, our upper bounds achieve only the minimal goal of generating a single observation. In most settings, one would presumably want to generate several such samples. One can do so by repeating our algorithms several times on disjoint subsamples, but  in general this is not the best way to proceed. 
Second, we study only three classes of distributions. It is likely that the picture of what is possible for many classes is more complex and nuanced. It would be interesting, for example, to study private sampling for 
Gaussian distributions, since they demonstrate intriguing data/accuracy tradeoffs for nonprivate sampling~\cite{Axelrod0SV20}.


\ifnum\neurips=1
\subsection{An overview of our proofs and techniques}\label{intro:overview}
\else 
\subsection{An Overview of Our Proofs and Techniques}\label{intro:overview}
\fi 
\label{sec:techniques}

For both algorithms and lower bounds, our results require the development of new techniques. On the algorithmic side, we take advantage of the fact that sampling algorithms need only be correct on average over samples drawn from a given distribution. One useful observation that underlies  our positive results is that sampling based on an \emph{unbiased} estimate $\hat P$ of a probability distribution $P$ (in the sense that 
$\E[{\hat{P}(u)]=P(u)}$ for all elements $u$ in the universe $\universe$, where the expectation is taken over the randomness in the dataset and the coins of the algorithm)
has zero error, even though the learning error, e.g., $d_{TV}(\hat P,P)$ might be  large. For product distributions with bounded biases, we  also exploit the randomness of the sampling process itself to gain privacy without explicitly adding noise.

For negative results, we cannot generally use existing lower bounds for learning or estimation, because of a fundamental  obstacle. 
The basic framework used in proving most lower bounds on sample complexity of learning problems is based on \textit{identifiability}: to show that a large sample is required to learn class $\class$, one first finds a set of distributions $P_1,....,P_t$ in the class $\class$ that are far apart from each other and then shows that the output of any sufficiently accurate learning algorithm allows an outside observer to determine exactly which distribution in the  collection generated the data. The final step is to show that algorithms in a given family (say, differentially private algorithms with a certain sample size) cannot reliably identify the correct distribution.
This general approach is embodied in recipes such as Fano's, Assouad's, and Le Cam's methods from classical statistics (see, e.g., \cite{AcharyaSZ21} for a summary of these methods and their differentially private analogues).
For many sampling problems, the identifiability approach breaks down: a single observation is almost never enough to identify the originating distribution. 

One of our approaches to proving lower bounds is to leverage ways in which the algorithm's output directly signals a failure to sample correctly. For instance, our lower bound for $k$-ary distribution relies on the fact that an $\alpha$-accurate sampler must produce a value in the support of the true distribution $P$ with high probability. Another approach is to reduce from other distribution (sampling or estimation) problems. For example, our lower bound for product distributions with bounded biases is obtained via a reduction from an estimation problem, by observing that a small number of samples from a nearby distribution suffices for a very weak estimate of the underlying distribution's attribute biases.

\medskip

We break down our discussion of techniques according to the specific distribution classes we consider. 

\paragraph{Distributions on ${[k]}$} 
For the class of distributions on $[k]$, Theorem~\ref{thm:kupperb} shows that $\alpha$-accurate $(\eps,0)$-differently private sampling can be performed with a dataset of size  $O(\frac k {\alpha\eps})$. 
Our private sampler  computes, for each $j\in[k]$, the proportion $\hat P_j$ of its dataset that is equal to $j$, adds Laplace noise to each count, uses $L_1$ projection to obtain a valid vector of probabilities $\tilde P =(\tilde P_1,\dots,\tilde P_k)$, and finally outputs an element of $[k]$ sampled according to $\tilde P$.

Theorem~\ref{thm:intro-k-ary-lb} provides a matching lower bound on $n$ that holds for all $(\eps,\delta)$-differentially private algorithms with $\delta=o(1/n).$
We prove our lower bound separately for Bernoulli distributions and for discrete distributions with support size $k\geq 3$, using different analyses.
For Bernoulli distributions, we first exploit the group privacy property of  differentially private algorithms and the fact that the sampler must be accurate for the Bernoulli distribution $\Ber(0)$ to show that, on input with $t$ ones, a differentially private sampler outputs 1 with probability at most
$2\alpha e^{\eps t}.$ Then we consider $P=\Ber(10\alpha)$. We use $\alpha$-accuracy in conjunction with group privacy to give a lower and an upper bound on the probability of the output being~1 when the input is drawn i.i.d.\ from $P$. This allows us to relate the parameters involved in order to obtain the desired lower bound on~$n.$

The lower bound for distributions on $[k]$ with $k\geq 3$ is more involved. We start by identifying general properties of samplers that allow us to restrict our attention to relatively simple algorithms. First, we observe that every sampler can be converted to a {\em Poisson algorithm}, that is, an algorithm that, instead of receiving an input of a fixed size, gets an input with the number of records that follows a Poisson distribution.
This observation allows us to use a standard technique, Poissonization, that  
makes the empirical frequencies of different elements independent.
Next, we observe that privacy for samplers can be easily amplified, so that we can assume w.l.o.g.\ that $\eps$ is small. Finally, we observe that every sampler for 
\ifnum\neurips=1
the class of $k$-ary distributions
\else
a {\em label-invariant} class
\fi
can be converted to a {\em frequency-count-based algorithm}. 
\ifnum\neurips=0
A class $\class$ of distributions is {\em label-invariant} if, for every distribution $P\in\class$, every distribution obtained from $P$ by permuting the names of the elements in the support is also in $\class.$ 
\fi
A sampler is {\em frequency-count-based} if the probability it outputs a specific element depends only on the number of occurrences of this element in its input and the frequency counts\footnote{The vector of frequency counts is called a {\em fingerprint} or a {\em histogram} in previous work.
\ifnum\neurips=0
We avoid the first term to ensure there is no confusion with fingerprinting codes that are used in other lower bounds for differential privacy. In the literature on differential privacy, a histogram refers to the vector of frequencies of each element in the dataset, as opposed to the number of elements with the specified frequencies. To avoid confusion, we use the term ``frequency counts''.
\fi
} of the input (that is, the number of elements that occur zero times, once, twice, thrice, and so on).  Frequency-count-based algorithms have been studied for a long time in the context of understanding properties of distributions (see, e.g.,~\cite{batu2001testing, batu2000testing, raskhodnikova2009strong}).

Equipped with the three observations, we restrict our attention to Poisson, frequency-count-based algorithms, with small $\eps$ in the privacy guarantee. In contrast to our lower bound for Bernoulli samplers, we show that when the support size is at least 3 and the dataset size, $n$, is too small, the sampler is likely to output an element outside of the support of the input distribution $\distr.$ Here, we exploit group privacy, which implies that the probability that a sampler outputs a specific element which appears $j$ times in its input differs by at most a factor of $e^{\eps j}$ from the probability that it outputs a specific element that does not appear in the input. Then we consider a distribution $\distr$ that has most of its mass (specifically, $1-O(\alpha)$) on a special element, and the remaining mass spread uniformly among half of the remaining domain elements. That is, $\distr$ is a mixture of a unit distribution on the special element and a uniform distribution on half of the remaining elements. We show that, when the dataset size is too small, the sampler is nearly equally likely to output any non-special element. But it has to output non-special elements with probability $\Omega(\alpha)$ to be $\alpha$-accurate. This means that when the database size is too small, the sampler outputs a non-special element outside the support of $\distr$ with probability $\Omega(\alpha)$.
The details of the proof are quite technical and appear in 
\ifnum\neurips=1 
    Section~\ref{sec:kary-short}.
\else
    Section~\ref{sec:k-ary-lb-final}.
\fi 


\paragraph{Product distributions}
Our private sampler for product distributions over $\{0,1\}^d$, used to prove Theorem~\ref{thm:bernoulli-product-alg-intro},
builds on the recursive private preconditioning procedure designed by Kamath et al.~\cite{KLSU19} for learning this distribution class. In our case, the sampler gets a dataset of size which is asymptotically smaller than necessary for learning this distribution class in some important parameter regimes. Let $(\biasesfixed_1, \dots, \biasesfixed_d)$ be the attribute biases for the product distribution $\distr $ from which the data is drawn. For simplicity,  assume w.l.o.g.\ that all the marginal biases $\biasesfixed_j$ are less than 1/2. The main idea in~\cite{KLSU19} is that smaller biases have lower sensitivity in the following sense: if we know that a set of attributes has biases $p_j$ that are all at most some bound $u$, then, since the data is generated from a product distribution, the number of ones in those attributes should at most $2ud$ with high probability. We can enforce this bound on the number of ones in those coordinates by truncating each row appropriately,  and thus learn the biases of those coordinates to higher accuracy than we knew before. 
Building  on that idea, we partition the input into smaller datatsets and run our algorithm in rounds, each using fresh data, a different truncation threshold, and noise of different magnitude.

Our algorithm consists of two phases. In the bucketing phase, we use half of the dataset and the technique of \cite{KLSU19} to get rough \textit{multiplicative} estimates of all biases $p_j$ except the very small ones (where $p_j<1/d$). This allows us to partition the coordinates into $\log(d)$ buckets, where biases within each bucket are similar. We show this crude estimation only requires $n=\tilde O(d/\alpha \eps)$. 
%
%
In the sampling phase, we use the buckets to generate our output sample. For each bucket, we can get a fresh estimate of the biases using the other half of the dataset and, again, the technique from  \cite{KLSU19} to scale the noise proportionally to the upper bound on the biases in that bucket. These estimates are essentially unbiased. Flipping $d$ coins independently according to the estimated biases produces an observation with essentially the correct distribution. 

The proof of our lower bound for general product distributions proceeds via reduction from sampling of $k$-ary distributions for $k=d$. {In contrast,} 
lower bounds for private \textit{learning} of product distributions rely on fingerprinting codes (building on the framework of Bun et al. \cite{BunUV14j}). Although fingerprinting codes are indeed useful when reasoning about samplers for product distributions with bounded biases (discussed below), our approach relies, instead, on the fact that samplers must distinguish coordinates with bias 0 from coordinates with small bias. Specifically, given a distribution $\distr$ on $[2k]$ that is uniform on a subset $S$ of $[2k]$ of size $k$, we define a product distribution $\corr$ on $\bit{2k}$ with biases $p_j = \distr(j) = \frac{1}{k}$ for $j\in S$ and $p_j=0$ otherwise. We use Poissonization and coupling between Poisson and binomial distributions
%
to design a reduction that, given observations from $\distr$, first creates an appropriately distributed sample of almost the same size drawn according to $\corr$, then runs a hypothetical sampler for product distributions to get a vector in $\bit{2k}$ (drawn roughly according to $\corr$), and finally converts that vector back to a single element of $[2k]$ distributed roughly as $\distr$. The details are subtle since most draws from $\corr$ will not have exactly one nonzero element---this complicates conversion in both directions.  \ifnum\neurips=0 See Section~\ref{sec:prod-lb}. \fi

\paragraph{Distributions with bounded bias}

Interestingly, our sampler for product distributions with bounded bias does not directly add noise to data. It performs the following step independently for each attribute: compute the empirical mean of the sample in that coordinate, obtain the clipped mean by rounding the empirical mean to the interval $[1/4,3/4]$, and sample a bit according to the clipped mean. 
The key idea in the analysis of accuracy is that, conditioned on no rounding (that is, the empirical mean already being in the required interval), the new bit is sampled from the correct distribution, and rounding occurs with small probability.
We argue that this sampler is $(4/n,0)$-differentially private for the case when $d=1$.
For larger $d$, the sampler is a composition of $d$ differentially private algorithms, and the main bound follows from compositions theorems (and conversions between different variants of differential privacy).
\ifnum\neurips=0
\asnote{Wishlist: Explain somewhere that these algorithms show you can do better than just finding a DP approximation in TV to the empirical distribution?}
\fi

The lower bound for this class proceeds by a reduction from the following weak estimation problem: Given samples from a product distribution with biases $p_1,p_2,...,p_d$, output estimates $\tilde p_1,...,\tilde p_d$ such that each $\tilde p_i$ is within additive error $\frac1{20}$ of $p_i$, that is, $|\tilde p_i - p_i|\leq \frac 1 {20}$, with probability at least $1-\frac 1 {20}$ (where $\frac1{20}$ is just a sufficiently small constant). This problem is known to require datasets of size $\tilde\Omega(\sqrt{d})$ for differentially private algorithms~\cite{BunUV14j}. However, nonprivate algorithms require only $O(1)$ data records to solve this same problem! We can thus reduce from estimation to sampling with very little overhead: Given a private sampler, we run it a constant number of times on disjoint datasets to obtain enough samples to (nonprivately) estimate each of the $p_i$'s. The nonprivate estimation is a postprocessing of an output of a differentially private algorithm, so the overall algorithm is differentially private. Some care is required in the reduction, since we must ensure that the nonprivate estimation algorithm is robust (i.e., works even when the samples are only close in TV distance to the correct distribution) and that the  lower bound of \cite{BunUV14j} applies even when the biases $p_i$ lie in $[\frac 1 3, \frac 2 3]$.

\ifnum\neurips=0
The approach used to prove the lower bound is quite general: informally, we can reduce from a differentially private estimation problem for a class of distributions $\class$ to a combination of differentially private sampling and a non-private estimation problem for $\class$ as follows: Given a dataset sampled independently from a distribution $\distr$ in a class $\class$, first split the dataset into many parts, and run a private sampler on each part to get a set of samples that look like they were sampled independently from $\distr$. Then use these samples to solve the estimation problem non-privately. This simple approach does not always yield tight bounds---for example, for general products of $d$ Bernoulli distributions and discrete $k$-ary distributions it gives lower bounds of $\tilde{\Omega}(\frac{\sqrt{d}}{\eps})$ and $\Omega(1/\eps)$ respectively on the number of samples required for $\alpha$-accurate sampling. However, it does give a near-tight lower bound for bounded-bias product distributions. 

\fi

\fi

\ifnum\tpdp =0 
\subsection{Related work} 
\else
\paragraph{Related Work}
\fi

To the best of our knowledge, the private sampling problem we formulate has not been studied previously. There is work on nonprivate sample-size amplification, in which an initial dataset of $n$ records is used to generate an output sample of size $n'>n$ from a nearby distribution \cite{Axelrod0SV20, AxelrodGHS22}. Our formulation corresponds to the  case where $n'=1 \ll n$. 
There is also work on sampling (also called \textit{simulation}) in a distributed setting~\cite{AcharyaCT20,AcharyaCT20a}
where each of $n$ participants receives a single observation and is limited to $\ell\geq 1$ bits of communication. We are not aware of a direct technical connection with our work, even though the distributed setting is closely tied to that of \textit{local} differential privacy.

In the literature on differential privacy, there is work on private algorithms for sampling parameters from Bayesian posterior distributions  \cite{DimitrakakisNMR14,DimitrakakisNZM17,ZhangRD16,WangFS15}, often driven by the intuition that the randomness inherent in sampling provides some level of privacy ``for free''. Our algorithms for product distributions with bounded biases leverage a similar idea. We are not aware of any analysis of the Bayesian approach which provides guarantees along the lines of our formulation (Def.~\ref{def:acc1}). 

The most substantially related work in the privacy literature is on upper and lower bounds for learning and estimation tasks, dating back to Warner~\cite{Warner65} (see \cite{DworkS09,KamathUprimerpaper20} for partial surveys). The investigation of differentially private learning was initiated in \cite{KasiviswanathanLNRS11}. For learning $k$-ary distributions, the directly most relevant works are those of \cite{diakonikolas2015differentially,AcharyaSZ21}, though lower bounds for estimating histograms were known as folklore since the mid-2000s. For product distributions, tight upper and lower bounds for  $(\eps,\delta)$-differentially private estimation in TV distance appear in \cite{KLSU19}; the upper bound was later shown to be achievable by an $(\eps,0)$-differentially private algorithm \cite{BunKSW21J}. Our upper bound for general product distributions adapts the technique of~\cite{KLSU19}; it remains open whether our bound is achievable with $\delta=0$. 

Less directly related is the line of work on the generation of synthetic datasets that match the data distribution in a set of specified characteristics, such as linear queries~(e.g., \cite{blum2013learning}; see \cite{Vadhan17} for a tutorial). The goal in those works is to generate enough synthetic data to allow an analyst given only the output to estimate the specified characteristics---a much more difficult task than sampling, and one for which a different set of lower bound techniques apply~\cite{Vadhan17}. 
%

\ifnum\neurips=1

%
%

\section{The lower bound for $k$-ary distributions}\label{sec:k-ary-lb-frequency-count-based}\label{sec:kary-short}

In this section, we prove the lower bound for sampling from discrete distributions with the universe of size at least 3.  As discussed in Section~\ref{intro:overview}, we prove that $n = \Omega(k/\alpha \eps)$. The crux of the proof is the case where the sampler is frequency-count-based, Poisson, and is $(\eps, \delta)$-DP for small $\eps$. The transformation from general samplers to this restricted type of sampler is presented in the supplementary materials. This transformation together with the following Lemma~\ref{lem:main-k-ary-lb} completes the proof of Theorem~\ref{thm:intro-k-ary-lb} for discrete distributions with the universe of size at least 3.

\begin{lemma}\label{lem:main-k-ary-lb} Fix $k, 
n \in{\mathbb N}, \alpha \in (0,0.02], \eps \in(0,1/\ln (1/\alpha)],$ and $\delta \in [0, 0.1\alpha\eps/k]$. Let $\class_{2k+1}$ denote the class of discrete distributions over the universe $[2k+1]$. If sampler \sampler is $(\eps, \delta)$-DP, frequency-count-based, and $\alpha$-accurate on class $\class_{2k+1}$ with dataset size distributed as $\Po(n)$, then $n > \frac 1 {60}\cdot \frac{k}{\alpha \eps}$.
\end{lemma}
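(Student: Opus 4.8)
The plan is to derive a contradiction from a single hard distribution. Assume $n\le\tfrac{1}{60}\cdot\tfrac{k}{\alpha\eps}$. Distinguish a special element, say $1$, split the remaining $2k$ elements into $S=\{2,\dots,k+1\}$ and $T=\{k+2,\dots,2k+1\}$, and let $\distr$ place mass $1-\gamma$ on the special element and mass $\gamma/k$ on each element of $S$, where $\gamma$ is a suitable constant multiple of $\alpha$ (so $2\alpha<\gamma<1$); then $\mathrm{supp}(\distr)=\{1\}\cup S$ and $T\cap\mathrm{supp}(\distr)=\emptyset$. Writing $Q=\distroutput{\sampler,\distr}$, if $\sampler$ is $\alpha$-accurate on $\distr$ then testing $d_{TV}(Q,\distr)\le\alpha$ on the sets $T$ and $S$ yields $Q(T)\le\alpha$ and $Q(S)\ge\gamma-\alpha$. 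The goal is to show these are incompatible, by proving that $\sampler$ is almost as likely to output a ``dead'' element of $T$ as a ``live'' element of $S$.

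Since the input size is $\Po(n)$ and the records are i.i.d., Poissonization makes the element counts $N_u$ independent, with $N_j\sim\Po(\lambda)$ for $j\in S$ where $\lambda=n\gamma/k\le\tfrac{1}{6\eps}$, and $N_{j'}\equiv 0$ for $j'\in T$; in particular $\eps\lambda=O(1)$. Because $\sampler$ is frequency-count-based, the probability it outputs a given element $z$ of $\Datafixed$ is a function $f(c,\phi(\Datafixed))$ of the number $c$ of occurrences of $z$ and the frequency-count vector $\phi(\Datafixed)$ alone; hence elements of equal count are output with equal probability, so $Q$ is uniform on $S$ and on $T$, and, setting $g(\phi):=f(0,\phi)$, we have $g(\phi(\Datafixed))\le 1/k$ for every $\Datafixed$ since each of the $k$ elements of $T$ has count $0$ and is output with probability exactly $g(\phi(\Datafixed))$.

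The comparison comes from group privacy. For fixed $\Datafixed$, $j\in S$, $j'\in T$, swapping the labels $j\leftrightarrow j'$ produces a dataset $\Datafixed'$ at Hamming distance $N_j(\Datafixed)$ from $\Datafixed$ with $\phi(\Datafixed')=\phi(\Datafixed)$; the frequency-count-based property then gives $f(N_j,\phi(\Datafixed))=\Pr[\sampler(\Datafixed')=j']$, and $(\eps,\delta)$-group privacy gives $\Pr[\sampler(\Datafixed')=j']\le e^{\eps N_j}\Pr[\sampler(\Datafixed)=j']+\tfrac{e^{\eps N_j}-1}{e^\eps-1}\delta=e^{\eps N_j}g(\phi(\Datafixed))+\tfrac{e^{\eps N_j}-1}{e^\eps-1}\delta$. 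Summing over $j\in S$, comparing with $\Pr[\sampler(\Datafixed)\in T]=k\,g(\phi(\Datafixed))$, taking expectations over $\Datarv$, conditioning on a single count $N_j$ and using independence of the other counts together with $g\le 1/k$, and splitting the arising sum $\sum_t\Pr[\Po(\lambda)=t]\,e^{\eps t}(\,\cdot\,)$ at a threshold $M$, I would obtain
$$Q(S)\ \le\ e^{\eps M}\,Q(T)\ +\ \E\big[e^{\eps\Po(\lambda)}\,\mathbbm{1}[\Po(\lambda)>M]\big]\ +\ k\lambda\delta\cdot\E\big[e^{\eps\Po(\lambda)}\big].$$
The last term is at most a small multiple of $\alpha$ since $\delta\le 0.1\alpha\eps/k$ and $\eps\lambda,\E[e^{\eps\Po(\lambda)}]=O(1)$. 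The truncation term equals $\E[e^{\eps\Po(\lambda)}]\cdot\Pr[\Po(\lambda e^\eps)>M]$; choosing $M$ of order $1/\eps$, just below the value that would make $e^{\eps M}$ too large, this is again a small multiple of $\alpha$, and plugging in forces $Q(T)>\alpha$, contradicting $Q(T)\le\alpha$.

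The crux is the choice of $M$ and the Poisson tail estimate, i.e.\ controlling the contribution to $Q(S)$ from atypically heavy elements of $S$, the term $\E[e^{\eps\Po(\lambda)}\mathbbm{1}[\Po(\lambda)>M]]$. This is delicate because $\eps$ may be as large as $1/\ln(1/\alpha)$ while $k$ is large, so the group-privacy blow-up $e^{\eps t}$ incurred by a tail count $t$ of order $\log k$ need not be $O(1)$; the argument must use both that $\sampler$ is frequency-count-based (so heavy elements cannot be individually favored — all elements of a common count $t$ share one output probability, bounded by the reciprocal of their number) and the hypotheses $\lambda\le\tfrac1{6\eps}$ and $\eps\le\tfrac1{\ln(1/\alpha)}$. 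These make $\lambda e^\eps$ of order $1/\eps$ and, since $1/\eps\ge\ln(1/\alpha)$, allow $M$ to be simultaneously a large enough multiple of $\lambda e^\eps$ (so $\Pr[\Po(\lambda e^\eps)>M]\le\alpha^{\Omega(1)}$) and small enough that $e^{\eps M}$ stays below a fixed constant smaller than $\gamma/\alpha$ (leaving room for the additive $O(\alpha)$ losses). The remaining pieces are routine: deriving the permutation behaviour from the frequency-count-based property, the group-privacy and $\delta$ bookkeeping, and the finitely many small-$k$ cases, which can be deferred to the Bernoulli lower bound via $\class_2\subseteq\class_{2k+1}$.
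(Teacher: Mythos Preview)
Your proposal is correct and shares the paper's skeleton: the same hard distribution (a special heavy element, $k$ live light elements, $k$ dead elements), the same use of the frequency-count-based property to make the output probability of a zero-count element a single function $g(\phi)$, and the same group-privacy swap to compare $f(N_j,\phi)$ with $e^{\eps N_j}g(\phi)$. The difference is in how the ``heavy-count'' contribution is controlled.

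The paper works with the aggregate $Y=\sum_{j\in S} e^{\eps N_j}$: it derives the pointwise inequality $\Pr[\sampler(\Datafixed)\notin\mathrm{Supp}(P)]\ge \tfrac12\cdot\tfrac{\Pr[\sampler(\Datafixed)\in[2k]]}{1+Y/k}-k\delta/\eps$, then conditions on the global event $E=\{Y/k\le e^3\}$ and shows $\Pr[\overline E]\le\alpha$ via a moment bound on $(Y/k)^\lambda$ with $\lambda=\ln(1/\alpha)$ (applying Jensen to reduce to a single Poisson MGF). You instead split each summand at a per-coordinate threshold $M$: for $N_j\le M$ you keep $e^{\eps N_j}g(\phi)\le e^{\eps M}g(\phi)$ and sum to $e^{\eps M}Q(T)$; for $N_j>M$ you replace $g(\phi)$ by the crude bound $1/k$ and then use the exponential-tilting identity $\E[e^{\eps N}\mathbbm{1}\{N>M\}]=\E[e^{\eps N}]\cdot\Pr[\Po(\lambda e^\eps)>M]$. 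Both routes ultimately use that $1/\eps\ge\ln(1/\alpha)$ so a Poisson quantity of mean $O(1/\eps)$ has tail probability $\alpha^{\Omega(1)}$. Your packaging is a bit more direct (no ratio $1/(1+Y/k)$, no conditioning, no moment-of-average inequality), and the tilting identity is a clean way to handle the truncated MGF. Two minor remarks: your $\delta$-term $k\lambda\delta\cdot\E[e^{\eps\Po(\lambda)}]$ is correct (it follows from $e^x-1\le xe^x$ applied to $x=\lambda(e^\eps-1)$), though the cruder $k\delta/\eps\cdot\E[e^{\eps\Po(\lambda)}]$ would also suffice; and the ``small-$k$'' proviso is unnecessary---your argument is uniform in $k\ge 1$.
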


\begin{proof}
We consider the following distribution $\distr$ over the universe $\universe=[2k+1].$ Fix $\alpha^*=60 \alpha$ and a set $S^*\subset[2k]$ of size $k.$ Distribution $\distr$ has mass $\alpha^*/k$ on each element in $S^*$ and mass $1-\alpha^*$ on the {\em special} element $2k+1.$ 

Consider a sampler \sampler satisfying the conditions of Lemma~\ref{lem:main-k-ary-lb}. Let $\distroutput{\sampler, \distr}$ denote the output distribution of $\sampler$ when the dataset size $N \sim \Po(n)$ and the dataset $\Datarv \sim \distr^{\otimes N}$.
Observe that
\begin{eqnarray}\label{eq:main-dist-lb-delta}
d_{TV}(\distroutput{\sampler,\distr}, \distr)
  \geq \Pr_{\substack{N\sim\Po(n) \\ \Datarv\sim  \distr^{\otimes N}}}[\sampler(\Datarv) \notin Supp(\distr)].  
\end{eqnarray}
We show that if $n\leq \frac k{60\alpha \eps}$  and $\eps$ and $\delta$ are in the specified range, the right-hand side of (\ref{eq:main-dist-lb-delta}) is large. We start by deriving a lower bound on $\Pr[\sampler(\Datafixed)\notin Supp(\distr)]$ for a fixed dataset $\Datafixed$ of a fixed size $N$. Let $p_{j,F(\Datafixed)}$ be the probability that \sampler outputs a specific element in $[2k]$ that occurs $j$ times in $\Datafixed$, where $F(\Datafixed)$ is the frequency-count of \Datafixed; these probabilities are well-defined because \sampler is frequency-count-based. Let $F^*_0(\Datafixed)$ denote the number of elements in $[2k]$ that occur $0$ times in $\Datafixed$ (excluding the special element $2k+1$ from this count).
 By definition, $F^*_0(\Datafixed)\leq 2k.$ Consequently,
\begin{equation}\label{eq:notinsupport-delta}
\Pr[\sampler(\Datafixed) \notin Supp(\distr)] = k\cdot p_{0,F(\Datafixed)} \geq \frac{1}{2}\cdot  F^*_0(\Datafixed)\cdot p_{0,F(\Datafixed)}.
\end{equation}

The next claim uses the fact that 
\sampler is $(\eps,\delta)$-DP to show that the probability $p_{j,F(\Datafixed)}$ cannot be much larger than the probability that \sampler outputs a specific element in $\universe$ that does not appear in $\Datafixed$.

\begin{claim}\label{clm:epsdelfing}
For all $(\eps, \delta)$-DP samplers, frequency counts $f \in \mathbb{Z}^*$, and elements $j \in \universe$,
\begin{equation}\label{eq:grouppriv}
 p_{j,f} 
 \leq e^{\eps j} \Big( p_{0,f} + \frac{\delta}{\eps} \Big).
\end{equation}
\end{claim}

\begin{proof}
Consider a frequency count $f$ and a dataset $\Datafixed$ with $F(\Datafixed) = f$. Note that (\ref{eq:grouppriv}) is true trivially for all $j$ such that $F_j(\Datafixed)=0$ because, in that case, $p_{j,F(\Datafixed)}$ is set to $0$.

Fix any $j \in \universe$ with $F_j(\Datafixed) > 0$. Let $a$ be any element in $\universe$ that occurs $j$ times in the dataset $\Datafixed$. Let $b$ be any element in $\universe$ that is not in the support of the distribution $\distr$. Let $\Datafixed|_{a\rightarrow b}$ denote the dataset obtained by replacing every instance of $a$ in the dataset $\Datafixed$ with element $b$. By group privacy~\cite{DworkMNS06j},
\begin{equation}\label{eq:group_privacy-delta}
\Pr[\sampler(\Datafixed) = a] \leq e^{j \eps} \Pr[\sampler( \Datafixed|_{a \rightarrow b}) = a ] + \delta \cdot \frac{e^{\eps j} -1}{e^{\eps} - 1}.
\end{equation}
Note that the dataset $\Datafixed|_{a\rightarrow b} $ does not contain element $a$, since we've replaced every instance of it with $b$. Importantly, $F(\Datafixed|_{a \rightarrow b}) = F(\Datafixed)$ because $b$ is outside of the support of the distribution $\distr$ and hence does not occur in $\Datafixed$. Since $\sampler$ is frequency-count-based and $F(\Datafixed) = F(\Datafixed|_{a \rightarrow b})$, we get that $p_{0,F(\Datafixed)} = p_{0,F(\Datafixed|_{a \rightarrow b})}$. 
Substituting this into (\ref{eq:group_privacy-delta}) and using the fact that $e^{\eps}-1\geq \eps$ for all $\eps$, we get 
\begin{equation*}
p_{j,F(\Datafixed)} \leq e^{j \eps}\cdot p_{0,F(\Datafixed)} + \delta \cdot \frac{e^{\eps j} - 1}{e^\eps -1}
\leq e^{\eps j} \Big(p_{0,f} + \frac{\delta}{\eps} \Big).
\end{equation*}
This completes the proof of Claim~\ref{clm:epsdelfing}.
\end{proof}

For a dataset $\Datafixed$ and $i\in[2k+1]$, let $N_i(\Datafixed)$ denote the number of occurrences of element $i$ in $\Datafixed$.
Next, we give a lower bound on $\Pr[\sampler(\Datafixed) \notin Supp(\distr)]$ in terms of the counts $N_i(\Datafixed)$.

\begin{claim}\label{clm:nonsupport-lb-fixed-delta}
Let $N\in\mathbb{N}$ and $\Datafixed \in [2k+1]^N$ be a fixed dataset. Set 
$Y=\sum_{i \in S^*}  \left[ e^{N_i(\Datafixed) \eps} \right]$.
Then
$$\Pr[\sampler(\Datafixed) \notin Supp(\distr)] \geq\frac{1}{2}
\cdot\frac{\Pr[\sampler(\Datafixed) \in [2k]]}{1+Y/k} -\frac{k\delta}\eps.$$
\end{claim}

\begin{proof}
In the following derivation, we use the fact that that an element $j\in[2k]$ that appears $j$ times in $\Datafixed$ is returned by \sampler with probability $p_{j,F(\Datafixed)}$, then split the elements into those that do not appear in $\Datafixed$ and those that do, next use the fact that all elements from $[2k]$ that appear in $\Datafixed$ must be in $S^*$, then apply Claim~\ref{clm:epsdelfing}, and finally substitute $Y$ for $\sum_{i \in S^*}  \left[ e^{N_i(\Datafixed) \eps} \right]$:
\begin{align*}
\Pr[\sampler(\Datafixed) \in[2k]] 
&=\sum_{i\in[2k]}  p_{N_i(\Datafixed),F(\Datafixed)} 
=F^*_0(\Datafixed)\cdot p_{0,F(\Datafixed)}+\sum_{i\in[2k]\cap\Datafixed}  p_{N_i(\Datafixed),F(\Datafixed)} \\
& \leq F^*_0(\Datafixed)\cdot p_{0,F(\Datafixed)}+\sum_{i\in S^*}  p_{N_i(\Datafixed),F(\Datafixed)} \\
& \leq F^*_0(\Datafixed)\cdot p_{0,F(\Datafixed)}+\sum_{i\in S^*}  p_{0,F(\Datafixed)}\cdot \Big(e^{\eps N_i(\Datafixed)} + \frac{\delta}{\eps} \Big) \leq \Big(F^*_0(\Datafixed)+Y\Big)\Big(p_{0,F(\Datafixed)}+  \frac{\delta}{\eps}\Big).
\end{align*} 
We rearrange the terms to get \ 
$\displaystyle
p_{0,F(\Datafixed)}
\geq \frac{\Pr[\sampler(\Datafixed) \in[2k]]}{F^*_0(\Datafixed)+Y}-\frac \delta \eps.
$

Substituting this bound on $p_{0,F(\Datafixed)}$ into (\ref{eq:notinsupport-delta}), we obtain that $\Pr[\sampler(\Datafixed)\notin Support(P)]$ is at least
\begin{align*}
    \frac{1}{2} \cdot \frac {F^*_0(\Datafixed)\Pr[\sampler(\Datafixed) \in [2k]]}{F^*_0(\Datafixed) + Y} -\frac 12 \cdot\frac{F^*_0(\Datafixed)\cdot\delta}{\eps} 
   \geq\frac{1}{2}
\cdot\frac{\Pr[\sampler(\Datafixed) \in [2k]]}{1+Y/k} -\frac{k\delta}\eps,
\end{align*}
where in the inequality, we used that $k\leq F^*_0(\Datafixed) \leq 2k$.
This holds since the support of $\distr$ excludes $k$ elements from $[2k]$ and since $F^*_0(\Datafixed)$ counts only elements from $[2k]$ that do not appear in~$\Datafixed.$
\end{proof}

Finally, we give a lower bound on the right-hand side of (\ref{eq:main-dist-lb-delta}). Assume for the sake of contradiction that $n\leq \frac{k}{\alpha^* \eps}$. 
By Claim~\ref{clm:nonsupport-lb-fixed-delta},
\begin{align}
 \Pr_{N, \Datarv}[\sampler(\Datarv) \notin Supp(\distr)]
 &\geq \E_{N, \Datarv}\Big[\frac{1}{2}
\cdot \frac{\Pr[\sampler(\Datarv) \in [2k]]}{1+Y/k} -\frac{k\delta}\eps\Big]\nonumber\\
& = \frac{1}{2} \cdot \E_{N, \Datarv}\Big[\frac{\Pr[\sampler(\Datarv) \in [2k]]}{1+Y/k}\Big] -  \frac{k\delta}{\eps}. \label{eq:mainlbcond}
\end{align}
Next, we analyze the expectation in (\ref{eq:mainlbcond}). Let $E$ be the event that $\frac{Y}{k} \leq e^3$. By the law of total expectation, 
\begin{align}\label{eq:mainlbcondfirst}
     \E_{N, \Datarv}\Big[\frac{\Pr[\sampler(\Datarv) \in [2k]]}{1+Y/k}\Big]
     & \geq \E_{N, \Datarv}\Big[\frac{\Pr[\sampler(\Datarv) \in [2k]]}{1+Y/k} \big | E\Big] \Pr(E).
\end{align}
In Claims~\ref{claim:eventE} and~\ref{claim:exp-of-regular-output}, we argue that both $\Pr(E)$ and $\E_{N, \Datarv}\left[\frac{\Pr[\sampler(\Datarv) \in [2k]]}{1+Y/k} \big | E\right]$ are large.
\begin{claim}\label{claim:eventE}
Suppose $n\leq \frac k{60\alpha \eps}$. Let $E$ be the event that $\frac{Y}{k} \leq e^3$. Then
    $\Pr(E) \geq 1 - \alpha$.
    \end{claim}
\begin{proof}
Recall that $Y$ was defined as $\sum_{i \in S^*}  \left[ e^{N_i(\Datafixed) \eps} \right]$ for a fixed dataset $\Datafixed.$ Now consider the case when dataset $\Datarv$ is a random variable. 
If $N\sim\Po(n)$ and $\Datarv \sim \distr^{\otimes N}$ then $N_i(\Datarv) \sim \Po(\frac{\alpha^* n}{k})$ for all $i \in S^*$ and, additionally, the random variables $N_i(\Datarv)$ are mutually independent. When $\Datarv$ is clear from the context, we write $N_i$ instead of $N_i(\Datarv)$. Now we calculate the moments of $\frac{Y}{k}$.  For all $\lambda > 0$, 
\begin{align}
\E_{\substack{N\sim\Po(n) \\ \Datarv \sim  \distr^{\otimes N}}} \Big[\Big(\frac{Y}{k}\Big)^{\lambda} \Big] 
 = \E_{\substack{N\sim\Po(n) \\ \Datarv \sim  \distr^{\otimes N}}} \Big[\Big(\frac{1}{k} \sum_{i \in S^*} e^{N_i(\Datarv) \eps} \Big)^{\lambda} \Big] 
 = \E_{N_1, \cdots, N_k \sim\Po(\frac{\alpha^* n}{k})} \Big[ \Big(\frac{1}{k} \sum_{i \in S^*} e^{N_i \eps} \Big)^{\lambda} \Big]. \label{eq:mompoiss}
\end{align}


Finally, we bound the probability of event $E$. Set $c=e^3$ and $\lambda = \ln \frac{1}{\alpha}$. By definition of $E$, 
\begin{align}
    \Pr(\overline{E}) 
    & = \Pr\Big(\frac{Y}{k} \geq c\Big)\nonumber 
     = \Pr\Big(\Big(\frac{Y}{k}\Big)^\lambda \geq c^{\lambda}\Big) 
    \leq \frac 1{c^{\lambda}}\cdot {\E_{\substack{N\sim\Po(n) \\ \Datarv \sim  \distr^{\otimes N}}}
    \Big[ \Big(\frac{Y}{k}\Big)^\lambda \Big]} \nonumber \\
    & \leq  \frac 1{c^{\lambda}}\cdot {\E_{N_1, \cdots, N_k \sim\Po(\frac{\alpha^* n}{k})} \Big[ \Big(\frac{1}{k} \sum_{i \in S^*} e^{N_i \eps} \Big)^{\lambda} \Big]} 
     \leq  \frac 1{c^{\lambda}}\cdot{\E_{N_1 \sim\Po(\frac{\alpha^* n}{k})} \Big[ \Big( e^{N_1 \eps} \Big) ^{\lambda} \Big]} \label{eq:moments2} \\
    & = c^{-\lambda}\cdot {e^{\frac{\alpha^* n}{k}(e^{\lambda \eps} - 1)}}
    \leq e^{-3\lambda}\cdot {e^{\frac{(e^{\lambda \eps} - 1 )}{\eps}}}
    \leq e^{-3\lambda}\cdot e^{2\lambda}= e^{-\lambda}
    =e^{-\ln (1/\alpha)}=\alpha, \label{eq:moments3}
\end{align}
where we use $\lambda > 0$ in the second equality, then apply Markov's inequality. To get the inequalities in (\ref{eq:moments2}), we apply (\ref{eq:mompoiss}) and then use the fact that the moments of the average of random variables is less than the moment of a single random variable (the proof of this fact is in the supplementary material). To get (\ref{eq:moments3}), we use the moment generating function of a Poisson random variable, and then we substitute $c=e^3$ and use the assumption that $n\leq \frac k{60\alpha \eps} =\frac k{\alpha^*\eps}$. The second inequality in (\ref{eq:moments3}) holds because $\lambda = \ln \frac{1}{\alpha}$ and $\eps \in (0,1/\ln \frac{1}{\alpha}]$, so $\lambda \eps \leq 1$ and hence $e^{\lambda \eps} \leq 1 + 2\lambda \eps$.
The final expression is obtained by substituting the value of $\lambda.$
We get that $\Pr(E) \geq 1-\alpha$, completing the proof of Claim~\ref{claim:eventE}.
\end{proof}
\begin{claim}\label{claim:exp-of-regular-output}
$\displaystyle\E_{\substack{N\sim\Po(n) \\ \Datarv\sim  \distr^{\otimes N}}}\Big[\frac{\Pr[\sampler(\Datarv) \in [2k]]}{1+Y/k} \big| E \Big] \geq 2.3\alpha.$
\end{claim}
\begin{proof}
When event $E$ occurs, $1+\frac{Y}{k} \leq 1+e^3<22$. Then  
\begin{align}\label{eq:conditioning}
    \E_{N, \Datarv}\Big[\frac{\Pr[\sampler(\Datarv) \in [2k]]}{1+Y/k} \big | E \Big] > \E_{N, \Datarv}\Big[\frac{\Pr[\sampler(\Datarv) \in [2k]]}{22} \big | E\Big]  = \frac 1{22}\cdot\E_{N, \Datarv}\Big[\Pr[\sampler(\Datarv) \in [2k]] \mid E \Big].
\end{align}
By the product rule,
$$\Pr[\sampler(\Datarv) \in [2k]] \mid E ]
=\frac{\Pr[\sampler(\Datarv) \in [2k]] \wedge E]}{\Pr[E]}
\geq \Pr[\sampler(\Datarv) \in [2k]] \wedge E]
\geq  {\Pr[\sampler(\Datarv) \in [2k]] - \Pr[\overline{E}]}.$$
Substituting this into (\ref{eq:conditioning}) and recalling that $\alpha^*=60\alpha$, we get
\begin{align*}
    \E_{N, \Datarv}\Big[\frac{\Pr[\sampler(\Datarv) \in [2k]]}{1+Y/k} \big | E \Big]
    &\geq  \frac 1{22}\cdot\E_{N, \Datarv}\Big[\Pr[\sampler(\Datarv) \in [2k]] - \Pr[\overline{E}] \Big]
     \geq \frac 1{22}\cdot \Big( \alpha^* - \alpha - \alpha \Big) 
     \geq 2.3\alpha,
\end{align*}
since sampler $\sampler$ is $\alpha$-accurate on $\distr$, and  $\distr$ has mass $\alpha^*$ on $[2k]$, and by Claim~\ref{claim:eventE}. 
\end{proof}

Combining (\ref{eq:main-dist-lb-delta}), (\ref{eq:mainlbcond}), and (\ref{eq:mainlbcondfirst}), applying Claims~\ref{claim:eventE} and~\ref{claim:exp-of-regular-output}, and recalling that $\delta\leq 0.1\cdot\alpha\eps/k$, we get
\begin{align*}
    d_{TV}&(\distr,\distroutput{\sampler, \distr})
  \geq \Pr_{N, \Datarv}[\sampler(\Datarv) \notin Supp(\distr)]
  \geq \frac{1}{2} \cdot \E_{N, \Datarv}\Big[\frac{\Pr[\sampler(\Datarv) \in [2k]]}{1+Y/k}\Big]  - \frac{k\delta}{\eps} \\
  &\geq \frac 12\cdot   \E_{N, \Datarv}\Big[\frac{\Pr[\sampler(\Datarv) \in [2k]]}{1+Y/k} \big | E\Big] \Pr(E) - 0.1\alpha 
  \geq  \frac{1}{2} \cdot 2.3\alpha\cdot \Big(1 - \alpha\Big) - 0.1\alpha 
  > \alpha,
\end{align*}
where the last inequality holds since $\alpha\leq 0.02$. This contradicts $\alpha$-accuracy of $\sampler$ on datasets of size $\Po(n)$, where $n\leq \frac{k}{\alpha^* \eps}$, and completes the proof of Lemma~\ref{lem:main-k-ary-lb}.
\end{proof}
\color{black}

\begin{ack} We are grateful for helpful conversations with Clément Canonne, Thomas Steinke, and Jonathan Ullman. 
Sofya Raskhodnikova was partially supported by NSF award CCF-1909612.
Satchit Sivakumar was supported in part by NSF award CNS-2046425, as well as Cooperative Agreement CB20ADR0160001 with the Census Bureau. 
Adam Smith and Marika Swanberg were supported in part by NSF award CCF-1763786 as well as a Sloan Foundation research award.
The views expressed in this paper are those of the authors and not those of the U.S. Census Bureau or any other sponsor.
\end{ack}

\bibliographystyle{plain}
\bibliography{refs}{}

\fi

\ifnum\supplemental=1 
\newpage
\pagenumbering{roman}
\appendix

\begin{center}
    \Huge Supplementary Materials 
    
    \rule{3in}{1pt}
    
    \Large Differentially Private Sampling from Distributions
\end{center}

These supplementary materials are organized as follows. Section~\ref{sec:defs} collects standard definitions and mathematical tools.  Next, in Section~\ref{sec:properties}, we describe general transformations of samplers that we use in our lower bounds. In Section~\ref{sec:kary}, we prove upper and lower bounds for the task of private sampling from $k$-ary distributions, corresponding to Theorems~\ref{thm:kupperb} and~\ref{thm:intro-k-ary-lb} in the introduction. In Section~\ref{sec:prod}, we prove upper bounds for private sampling from product distributions over $\{0,1\}^d$, corresponding to Theorem~\ref{thm:bernoulli-product-alg-intro}. We defer the proof of Theorem~\ref{thm:bernoulli-product-lb-intro} to the full version of the paper. In Section~\ref{sec:bounded-bias}, we present our upper and lower bounds for private sampling from product distributions with bounded attribute biases, corresponding to Theorems~\ref{thm:intro-bernoulli-product-bb} and~\ref{thm:intro-product-bb-lb} in the introduction. Finally, in Section~\ref{sec:inequalities}, we discuss some standard results that we use in our proofs and, in Section~\ref{sec:resow}, we state some results from other papers that we use in our proofs.

\fi

\ifnum\tpdp=0
\ifnum\neurips=0
\section{Definitions}
\label{sec:defs}

\subsection{Differential Privacy}

A dataset $\Datafixed = (\datafixed_1, \ldots, \datafixed_n) \in \universe^n$ is a vector of elements from universe \universe. Two datasets are {\em neighbors} if they differ in at most one coordinate. Informally, differential privacy requires that an algorithm's output distributions are similar on all pairs of neighboring datasets. We use two different variants of differential privacy. The first one (and the main one used in this paper) is the standard definition of differential privacy.

\begin{definition}[Differential Privacy~\cite{DworkMNS06j,DworkKMMN06}]\label{def:differentially private} A randomized algorithm $\sampler: \universe^n \rightarrow \mathcal{Y}$ is {\em $(\eps, \delta)$-differentially private} if for every pair of neighboring datasets $\Datafixed, \Datafixed'\in \universe^n$ and for all subsets $Y\subseteq \mathcal{Y}$,
 \begin{equation*}
    \Pr[\sampler(\Datafixed) \in Y] \leq e^\eps \cdot \Pr[\sampler(\Datafixed') \in Y] + \delta.
 \end{equation*}
 \end{definition}
 In addition to standard differential privacy (Definition~\ref{def:DP}), we use a variant called {\em zero-mean concentrated differential privacy} \cite{bun2016concentrated} that is defined in terms of R\'enyi divergence.
\begin{definition}[R\'enyi divergence] Consider two probability distributions $P$ and $Q$ over a discrete domain~$S$. Given a positive $\alpha\neq 1$, R\'enyi divergence of order $\alpha$ of distributions $P$ and $Q$ is 
\begin{equation*}
    D_\alpha (P || Q) = \frac{1}{1-\alpha} \log\left(\sum_{\datafixed \in S} P(\datafixed)^\alpha Q(\datafixed)^{1-\alpha} \right).
\end{equation*}

\end{definition}

\begin{definition}[Zero-Mean Concentrated Differential Privacy (zCDP)~\cite{bun2016concentrated}] \label{def:CDP}
A randomized algorithm $\sampler : \universe^n \rightarrow \mathcal{Y}$ is $\rho$-zCDP if for every pair of neighboring datasets $\Datafixed, \Datafixed' \in \universe^n$,
\begin{equation*}
    \forall \alpha \in (1, \infty) \quad D_\alpha\left(\sampler(\Datafixed) ||\sampler(\Datafixed')\right) \leq \rho \alpha,
\end{equation*}
where $D_\alpha(\sampler(\Datafixed) ||\sampler(\Datafixed'))$ is the $\alpha$-R\'enyi divergence between $\sampler(\Datafixed)$ and $\sampler(\Datafixed')$.
\end{definition}

\begin{lemma}[Relationships Between $(\eps, \delta)$-Differential Privacy and $\rho$-CDP~\cite{bun2016concentrated}]\label{prelim:relate_dp_cdp} For every $\eps \geq 0$,
\begin{enumerate}
    \item If \sampler is $(\eps, 0)$-differentially private, then \sampler is $\frac{\eps^2}{2}$-zCDP.
    \item If \sampler is $\frac{\eps^2}{2}$-zCDP, then \sampler is $\left(\frac{\eps^2}{2} + \eps\sqrt{2 \log(1/\delta)}, \delta\right)$-differentially private for every $\delta > 0$.
\end{enumerate}
\end{lemma}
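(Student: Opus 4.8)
The plan is to establish the two implications separately, in both cases through the standard ``privacy loss random variable'' analysis of Bun and Steinke~\cite{bun2016concentrated}. Throughout, fix a pair of neighboring datasets $\Datafixed, \Datafixed'$, let $P = \sampler(\Datafixed)$ and $Q = \sampler(\Datafixed')$ be the induced output distributions, and let $Z$ denote the \emph{privacy loss} random variable obtained by drawing $y \sim P$ and setting $Z = \ln\bparen{P(y)/Q(y)}$. Both parts reduce to controlling the moment generating function $\E_{y\sim P}[e^{(\alpha-1)Z}]$, since $D_\alpha(P\|Q) = \tfrac{1}{\alpha-1}\ln\E_{y\sim P}[e^{(\alpha-1)Z}]$ for every $\alpha > 1$.

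For the first implication, assume $\sampler$ is $(\eps,0)$-DP. Applying the definition with $\Datafixed,\Datafixed'$ in both orders shows that $P$ and $Q$ share a common support and that $e^{-\eps} \le P(y)/Q(y) \le e^{\eps}$ for every $y$ in it; hence $|Z| \le \eps$ with probability $1$, and moreover $\E_{y\sim P}[e^{-Z}] = \sum_{y} Q(y) = 1$. First I would apply Hoeffding's lemma to the bounded variable $-Z \in [-\eps,\eps]$ at parameter $1$, obtaining $\E[e^{-Z}] \le e^{-\E[Z] + \eps^2/2}$; combined with $\E[e^{-Z}] = 1$ this gives $\E[Z] \le \eps^2/2$. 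Then I would apply Hoeffding's lemma to $Z \in [-\eps,\eps]$ at parameter $\alpha - 1 \ge 0$ to get $\ln\E[e^{(\alpha-1)Z}] \le (\alpha-1)\E[Z] + (\alpha-1)^2\eps^2/2 \le \tfrac{\eps^2}{2}\bparen{(\alpha-1)+(\alpha-1)^2} = \tfrac{\eps^2}{2}\alpha(\alpha-1)$. Dividing by $\alpha-1$ yields $D_\alpha(P\|Q) \le \tfrac{\eps^2}{2}\alpha$ for all $\alpha > 1$, which is exactly $\tfrac{\eps^2}{2}$-zCDP.

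For the second implication, set $\rho = \eps^2/2$ and assume $D_\alpha(P\|Q) \le \rho\alpha$ for all $\alpha > 1$, equivalently $\E_{y\sim P}[e^{(\alpha-1)Z}] \le e^{(\alpha-1)\rho\alpha}$. I would turn this into a tail bound on $Z$ via Markov's inequality: for any $t$ and any $\alpha > 1$, $\Pr_{y\sim P}[Z > t] \le e^{-(\alpha-1)t}\,\E[e^{(\alpha-1)Z}] \le e^{(\alpha-1)(\rho\alpha - t)}$, and minimizing the exponent over $\alpha > 1$ (the optimum $\alpha = \tfrac{t+\rho}{2\rho}$ exceeds $1$ precisely when $t > \rho$) gives $\Pr_{y\sim P}[Z > t] \le e^{-(t-\rho)^2/(4\rho)}$. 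Choosing $t = \eps' := \rho + 2\sqrt{\rho\ln(1/\delta)}$ makes the right-hand side at most $\delta$. Finally, for any event $Y$, splitting according to whether $Z \le \eps'$ gives $\Pr[P \in Y] \le e^{\eps'}\Pr[Q \in Y] + \Pr_{y\sim P}[Z > \eps'] \le e^{\eps'}\Pr[Q \in Y] + \delta$, since $Z(y) \le \eps'$ means $P(y) \le e^{\eps'}Q(y)$. Substituting $\rho = \eps^2/2$ turns $\eps'$ into $\tfrac{\eps^2}{2} + \eps\sqrt{2\ln(1/\delta)}$, as claimed.

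The only step that is not essentially mechanical is the moment generating function bound in the first implication: obtaining the sharp quadratic growth $\tfrac{\eps^2}{2}\alpha(\alpha-1)$ --- and hence the clean constant $\rho = \eps^2/2$ --- requires using \emph{both} the pointwise bound $|Z| \le \eps$ \emph{and} the normalization identity $\E_{y\sim P}[e^{-Z}] = 1$ (which holds for pure DP precisely because the two output distributions have the same support), the latter being what forces the mean $\E[Z] = D_{\mathrm{KL}}(P\|Q)$ down to $O(\eps^2)$. The Chernoff-type tail computation and the passage from a high-probability privacy-loss bound to $(\eps',\delta)$-indistinguishability in the second implication are routine.
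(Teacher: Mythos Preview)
The paper does not prove this lemma; it is stated as a known result with a citation to Bun and Steinke~\cite{bun2016concentrated} and no proof is given. Your argument is correct and is essentially the original Bun--Steinke proof: the first implication via Hoeffding's lemma applied to the bounded privacy-loss variable (using the normalization $\E_{y\sim P}[e^{-Z}]=1$ to control $\E[Z]$), and the second via a Chernoff bound on $Z$ optimized over the R\'enyi order, followed by the standard ``bad event'' decomposition.
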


Both definitions of differential privacy are closed under post-processing.
\begin{lemma}[Post-Processing~\cite{DworkMNS06j,bun2016concentrated}]\label{prelim:postprocess} If $\sampler: \universe^n \rightarrow \mathcal{Y}$ is $(\eps, \delta)$-differentially private, and $\mathcal{B} : \mathcal{Y} \rightarrow \mathcal{Z}$ is any randomized function, then the algorithm $\mathcal{B} \circ \sampler$ is $(\eps, \delta)$-differentially private. Similarly, if $\sampler$ is $\rho$-zCDP then the algorithm $\mathcal{B} \circ \sampler$ is $\rho$-zCDP.
\end{lemma}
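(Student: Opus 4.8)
The plan is to treat the two claims separately but to base both on a single reduction: any randomized post-processing map $\mathcal{B}$ can be written as $\mathcal{B}(y) = g(y, R)$ with $g$ deterministic and $R$ an independent source of randomness, so it suffices to handle deterministic $\mathcal{B}$ and then average over $R$.

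For the $(\eps,\delta)$-DP claim, fix neighboring datasets $\Datafixed, \Datafixed'$ and a set $Z \subseteq \mathcal{Z}$. Conditioning on $R = r$, the set $Y_r := \{ y \in \mathcal{Y} : g(y,r) \in Z \}$ is a fixed subset of $\mathcal{Y}$, so $(\eps,\delta)$-differential privacy of $\sampler$ applied to $Y_r$ gives $\Pr[\sampler(\Datafixed) \in Y_r] \le e^\eps \Pr[\sampler(\Datafixed') \in Y_r] + \delta$. Since $\mathcal{B}(\sampler(\Datafixed)) \in Z$ exactly when $\sampler(\Datafixed) \in Y_R$, and $R$ is independent of $\sampler$, taking expectation over $R$ and using linearity yields
$$\Pr[\mathcal{B}(\sampler(\Datafixed)) \in Z] = \E_R\big[\Pr[\sampler(\Datafixed) \in Y_R]\big] \le e^\eps\, \E_R\big[\Pr[\sampler(\Datafixed') \in Y_R]\big] + \delta = e^\eps \Pr[\mathcal{B}(\sampler(\Datafixed')) \in Z] + \delta,$$
which is the desired guarantee for $\mathcal{B} \circ \sampler$.

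For the zCDP claim, the key ingredient is the data-processing inequality for R\'enyi divergence: $D_\alpha(\mathcal{B}(U) \| \mathcal{B}(V)) \le D_\alpha(U \| V)$ for every $\alpha > 1$ and every (possibly randomized) $\mathcal{B}$. I would prove this by observing that $\exp((\alpha - 1) D_\alpha(U\|V))$ equals the $f$-divergence $\E_{y \sim V}[(p_U(y)/p_V(y))^\alpha]$ for the convex function $f(t) = t^\alpha$, and that $f$-divergences do not increase under stochastic maps --- for deterministic maps this is Jensen's inequality applied within each preimage, and the randomized case follows by the same conditioning-on-$R$ argument used above. Applying the inequality with $U = \sampler(\Datafixed)$ and $V = \sampler(\Datafixed')$ for neighboring $\Datafixed, \Datafixed'$ gives $D_\alpha(\mathcal{B}(\sampler(\Datafixed)) \| \mathcal{B}(\sampler(\Datafixed'))) \le D_\alpha(\sampler(\Datafixed) \| \sampler(\Datafixed')) \le \rho\alpha$ for all $\alpha > 1$, so $\mathcal{B} \circ \sampler$ is $\rho$-zCDP.

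The only nontrivial step is the data-processing inequality for R\'enyi divergence; everything else is the routine reduction to deterministic post-processing. The one place where mild care is needed is the measure-theoretic bookkeeping when $\mathcal{Y}$ or $\mathcal{Z}$ is not discrete, but since the applications in this paper live in tame settings, the sum/integral-level arguments above suffice.
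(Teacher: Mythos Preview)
Your argument is correct and is essentially the standard proof of post-processing closure for both $(\eps,\delta)$-DP and zCDP. However, the paper does not give its own proof of this lemma: it is stated as a cited preliminary (from \cite{DworkMNS06j,bun2016concentrated}) with no accompanying argument, so there is nothing in the paper to compare your approach against. Your write-up would serve perfectly well as a self-contained proof if one were desired.
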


Importantly, both notions of differential privacy are closed under adaptive composition. For a fixed dataset \Datafixed, {\em adaptive composition} states that the results of a sequence of computations satisfies differential privacy even when the chosen computation $\sampler_t(\cdot)$ at time $t$ depends on the outcomes of previous computations $\sampler_1(\Datafixed), \ldots, \sampler_{t-1}(\Datafixed)$. Under adaptive composition, the privacy parameters add up.

\begin{definition}[Composition of $(\eps, \delta)$-differential privacy and $\rho$-zCDP~\cite{DworkMNS06j,bun2016concentrated}]\label{prelim:composition} Suppose $\sampler$ is an adaptive composition of differentially private algorithms $\sampler_1, \ldots, \sampler_T$.
\begin{enumerate}
    \item If for each $t \in [T]$, algorithm $\sampler_t$ is $(\eps_t, \delta_t)$-differentially private, then \sampler is $\left(\sum_t \eps_t, \sum_t \delta_t\right)$-differentially private.
    \item If for each $t \in [T]$, algorithm $\sampler_t$ is $\rho_t$-zCDP, then \sampler is $\left(\sum_t \rho_t\right)$-zCDP.
\end{enumerate}
\end{definition}

Standard $(\eps, \delta)$-differential privacy protects the privacy of groups of individuals.

\begin{lemma}[Group Privacy]\label{prelim:group_privacy}
Each $(\eps, \delta)$-differentially private algorithm \sampler is $\left(k\eps, \delta\frac{e^{k\eps} -1}{e^\eps-1}\right)$-differentially private for groups of size $k$. That is, for all datasets $\Datafixed, \Datafixed'$ such that $\|\Datafixed - \Datafixed' \|_0 \leq k$ and all subsets $Y \subseteq \mathcal{Y}$,
\begin{equation*}
    \Pr[\sampler(\Datafixed) \in Y] \leq e^{k\eps} \cdot \Pr[\sampler(\Datafixed') \in Y] + \delta \cdot \frac{e^{k\eps} -1}{e^\eps-1}.
\end{equation*}
\end{lemma}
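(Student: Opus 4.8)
The plan is to prove this by induction on the group size $k$, via a standard hybrid argument that interpolates between $\Datafixed$ and $\Datafixed'$ along a chain of neighboring datasets. Since $\|\Datafixed - \Datafixed'\|_0 \leq k$, I would first fix a sequence $\Datafixed = \Datafixed^{(0)}, \Datafixed^{(1)}, \dots, \Datafixed^{(k)} = \Datafixed'$ in which each consecutive pair $\Datafixed^{(i-1)}, \Datafixed^{(i)}$ differs in at most one coordinate: walk through the (at most $k$) coordinates on which $\Datafixed$ and $\Datafixed'$ disagree, changing one at a time, and leave the chain constant once all disagreements have been resolved. Every $\Datafixed^{(i)}$ lies in $\universe^n$, so the $(\eps,\delta)$-DP guarantee of $\sampler$ applies to each consecutive pair.

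The base case $k=1$ is exactly Definition~\ref{def:differentially private}, and the claimed additive coefficient there is $\delta \cdot \frac{e^\eps - 1}{e^\eps - 1} = \delta$, as required. For the inductive step, assume the bound holds for groups of size $k-1$. Then for an arbitrary $Y \subseteq \mathcal{Y}$, applying $(\eps,\delta)$-DP to the neighbors $\Datafixed^{(0)},\Datafixed^{(1)}$ and then the inductive hypothesis to $\Datafixed^{(1)},\Datafixed^{(k)}=\Datafixed'$ (which are at Hamming distance at most $k-1$) gives
\begin{align*}
\Pr[\sampler(\Datafixed^{(0)}) \in Y]
&\leq e^\eps \Pr[\sampler(\Datafixed^{(1)}) \in Y] + \delta \\
&\leq e^\eps\Bparen{ e^{(k-1)\eps}\Pr[\sampler(\Datafixed^{(k)}) \in Y] + \delta\cdot\tfrac{e^{(k-1)\eps}-1}{e^\eps-1}} + \delta .
\end{align*}

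The only real work is the bookkeeping on the additive term: expanding the right-hand side yields $e^{k\eps}\Pr[\sampler(\Datafixed') \in Y]$ plus $\delta\bigl(\frac{e^{k\eps}-e^\eps}{e^\eps-1} + 1\bigr) = \delta\cdot\frac{e^{k\eps}-1}{e^\eps-1}$, i.e. the finite geometric sum $\delta\sum_{i=0}^{k-1}e^{i\eps}$, which is precisely the claimed coefficient. Since $Y$ was arbitrary, this closes the induction. I do not expect a genuine obstacle here; the only points meriting care are (i) checking that the intermediate datasets in the chain are legitimate size-$n$ inputs, and (ii) confirming that the accumulated $\delta$-factor telescopes to $\delta\frac{e^{k\eps}-1}{e^\eps-1}$ rather than a cruder bound such as $k\delta e^{k\eps}$.
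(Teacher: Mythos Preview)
Your proof is correct and is exactly the standard hybrid/induction argument. The paper states this lemma as a known preliminary result without giving a standalone proof; where it does unfold group privacy inline (in the proof of Lemma~\ref{lem:bern_sampler}), it uses precisely your chain argument, iterating the one-step $(\eps,\delta)$-DP bound along a sequence of neighboring datasets and summing the resulting geometric series $\delta\sum_{i=0}^{k-1} e^{i\eps} = \delta\cdot\frac{e^{k\eps}-1}{e^\eps-1}$.
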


\paragraph{Laplace Mechanism} 

 Our algorithms use the standard Laplace Mechanism to ensure differential privacy. 

\begin{definition}[Laplace Distribution] The Laplace distribution with parameter $b$ and mean $0$, denoted by $\Lap(b)$, is defined for all $x \in \mathbb{R}$ and has probability density
\begin{equation*}
    h(\ell) = \frac{1}{2b}e^{-\frac{|\ell|}{b}}.
\end{equation*}
\end{definition}

\begin{definition}[$\ell_1$-Sensitivity] Let $f: \universe^n \rightarrow \mathbb{R}^d$ be a function. Its $\ell_1$-sensitivity is
\begin{equation*}
    \Delta_f = \max_{\substack{\Datafixed, \Datafixed' \in \universe^n \\ \Datafixed, \Datafixed' \text{neighbors}}} \|f(\Datafixed) - f(\Datafixed')\|_1.
\end{equation*}
\end{definition}

\begin{lemma}[Laplace Mechanism]\label{prelim:laplace_dp} Let $f : \universe^n \rightarrow \mathbb{R}^d$ be a function with $\ell_1$-sensitivity $\Delta_f$. Then the Laplace mechanism is algorithm
\begin{equation*}
    \sampler_f(\Datafixed) = f(\Datafixed) + (Z_1, \ldots, Z_d),
\end{equation*}
where $Z_i \sim \Lap\left(\frac{\Delta_f}{\eps}\right)$. Algorithm $\sampler_f$ is $(\eps, 0)$-differentially private.
\end{lemma}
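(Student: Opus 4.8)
The plan is to establish the $(\eps,0)$-DP guarantee by proving a pointwise bound on the ratio of output densities and then integrating; since $\delta = 0$, this suffices. First I would fix an arbitrary pair of neighboring datasets $\Datafixed, \Datafixed' \in \universe^n$ and write $f(\Datafixed) = (a_1,\dots,a_d)$ and $f(\Datafixed') = (a_1',\dots,a_d')$. Because the noise vector $(Z_1,\dots,Z_d)$ consists of independent draws from $\Lap(\Delta_f/\eps)$, the output $\sampler_f(\Datafixed)$ has a product density: at a point $z = (z_1,\dots,z_d) \in \R^d$ its density equals $p_{\Datafixed}(z) = \prod_{i=1}^d \frac{\eps}{2\Delta_f}\exp\!\big(-\tfrac{\eps}{\Delta_f}|z_i - a_i|\big)$, and likewise $p_{\Datafixed'}(z) = \prod_{i=1}^d \frac{\eps}{2\Delta_f}\exp\!\big(-\tfrac{\eps}{\Delta_f}|z_i - a_i'|\big)$.

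Next I would bound the likelihood ratio $p_{\Datafixed}(z)/p_{\Datafixed'}(z)$ uniformly over $z$. The normalizing constants cancel, leaving $\exp\!\big(\tfrac{\eps}{\Delta_f}\sum_{i=1}^d (|z_i - a_i'| - |z_i - a_i|)\big)$. Applying the reverse triangle inequality coordinatewise gives $|z_i - a_i'| - |z_i - a_i| \le |a_i - a_i'|$, so the exponent is at most $\tfrac{\eps}{\Delta_f}\sum_{i=1}^d |a_i - a_i'| = \tfrac{\eps}{\Delta_f}\|f(\Datafixed) - f(\Datafixed')\|_1 \le \eps$, where the final step uses the definition of $\ell_1$-sensitivity together with the assumption that $\Datafixed$ and $\Datafixed'$ are neighbors. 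Hence $p_{\Datafixed}(z) \le e^{\eps}\, p_{\Datafixed'}(z)$ for every $z \in \R^d$.

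Finally I would integrate this pointwise inequality: for any measurable $Y \subseteq \R^d$, $\Pr[\sampler_f(\Datafixed) \in Y] = \int_Y p_{\Datafixed}(z)\,dz \le e^{\eps}\int_Y p_{\Datafixed'}(z)\,dz = e^{\eps}\Pr[\sampler_f(\Datafixed') \in Y]$, which is exactly the $(\eps,0)$-DP condition (with $\delta=0$, so the additive term is absent). There is no genuine obstacle in this argument; the only two places that merit a moment of care are that factoring the output density into a product over coordinates relies on the mutual independence of the Laplace noise coordinates, and that the coordinatewise reverse triangle inequality sums to precisely the $\ell_1$ distance between $f(\Datafixed)$ and $f(\Datafixed')$, which the sensitivity bound $\Delta_f$ then controls.
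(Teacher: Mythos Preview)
Your proof is correct and is the standard density-ratio argument for the Laplace mechanism. The paper does not give its own proof of this lemma; it is stated as a preliminary result (attributed to Dwork, McSherry, Nissim, and Smith) and used as a black box, so there is nothing further to compare.
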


\paragraph{Gaussian Mechanism} Our algorithms also use the common Gaussian Mechanism to ensure differential privacy.
\begin{definition}[Gaussian Distribution] The Gaussian distribution with parameter $\sigma$ and mean 0, denoted $\Gauss(\sigma)$, is defined for all $\ell \in \mathbb{R}$ and has probability density
\begin{equation*}
    h(\ell) = \frac{1}{\sigma \sqrt{2\pi}} e^{-\frac{\ell^2}{2\sigma^2}}.
\end{equation*}
\end{definition}

\begin{definition}[$\ell_2$-Sensitivity] Let  $f: \universe^n \rightarrow \mathbb{R}^d$ be a function. Its $\ell_2$-sensitivity is
\begin{equation*}
    \Delta_f = \max_{\substack{\Datafixed, \Datafixed' \in \universe \\ \Datafixed, \Datafixed' \text{neighbors}}} \|f(\Datafixed) - f(\Datafixed')\|_2.
\end{equation*}
\end{definition}

\begin{lemma}[Gaussian Mechanism]\label{prelim:gauss_cdp} Let $f : \universe^n \rightarrow \mathbb{R}^d$ be a function with $\ell_2$-sensitivity $\Delta_f$. Then the Gaussian mechanism is algorithm
\begin{equation*}
    \sampler_f(\Datafixed) = f(\Datafixed) + (Z_1, \ldots, Z_d),
\end{equation*}
where $Z_i \sim \Gauss\left(\left(\frac{\Delta_f}{\sqrt{2\rho}}\right)^2 \cdot \mathbb{I}\right)$. Algorithm $\sampler_f$ is $\rho$-zCDP.
\end{lemma}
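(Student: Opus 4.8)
The plan is to reduce the statement to one elementary computation: the exact R\'enyi divergence between two multivariate Gaussians that share a scaled-identity covariance. Put $\sigma^2=\paren{\Delta_f/\sqrt{2\rho}}^2=\Delta_f^2/(2\rho)$, so that on any input $\Datafixed$ the output $\sampler_f(\Datafixed)$ is a $d$-dimensional Gaussian centered at $f(\Datafixed)$ with covariance $\sigma^2\mathbb{I}$. First I would fix an arbitrary pair of neighboring datasets $\Datafixed,\Datafixed'\in\universe^n$ and set $v=f(\Datafixed)-f(\Datafixed')\in\R^d$; the definition of $\ell_2$-sensitivity gives $\|v\|_2\le\Delta_f$.

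Next I would establish the standard identity that, for every order $\alpha\in(1,\infty)$,
\[
D_\alpha\Bparen{\Gauss(\mu_1,\sigma^2\mathbb{I})\,\|\,\Gauss(\mu_2,\sigma^2\mathbb{I})}=\frac{\alpha}{2\sigma^2}\,\|\mu_1-\mu_2\|_2^2.
\]
This comes straight from the definition of R\'enyi divergence: substitute the Gaussian densities into $\frac{1}{\alpha-1}\log\int p(t)^\alpha q(t)^{1-\alpha}\,dt$, factor the integral over the $d$ coordinates, and evaluate each one-dimensional integral by completing the square in the exponent. What remains is the normalization constant of another Gaussian (finite for every $\alpha>1$, since the two densities have matching tails), and the leftover exponent collapses to the displayed quadratic form. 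Instantiating $\mu_1=f(\Datafixed)$ and $\mu_2=f(\Datafixed')$ then yields, for every $\alpha\in(1,\infty)$,
\[
D_\alpha\Bparen{\sampler_f(\Datafixed)\,\|\,\sampler_f(\Datafixed')}=\frac{\alpha}{2\sigma^2}\,\|v\|_2^2\;\le\;\frac{\alpha\,\Delta_f^2}{2\sigma^2}=\frac{\alpha\,\Delta_f^2}{2}\cdot\frac{2\rho}{\Delta_f^2}=\rho\,\alpha.
\]
Since swapping $\Datafixed$ and $\Datafixed'$ only replaces $v$ by $-v$, which leaves $\|v\|_2$ unchanged, the same bound holds in the other direction as well. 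Hence $\sampler_f$ satisfies the condition in Definition~\ref{def:CDP} with parameter $\rho$, i.e., it is $\rho$-zCDP.

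I expect no genuine obstacle: the entire content is the one-dimensional Gaussian integral, which is routine bookkeeping. The only two points deserving a moment's attention are that the divergence bound must hold \emph{uniformly over all orders} $\alpha>1$ (it does, because the identity above is exact rather than asymptotic and the integral converges for each such $\alpha$), and that ``$\cdot\,\mathbb{I}$'' in the lemma statement should be read as the covariance matrix $\sigma^2\mathbb{I}$ with $\sigma=\Delta_f/\sqrt{2\rho}$, so that the per-coordinate variance is exactly $\Delta_f^2/(2\rho)$.
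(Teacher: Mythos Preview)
Your proposal is correct and is exactly the standard argument. The paper does not supply its own proof of this lemma; it is stated as a preliminary fact (implicitly from \cite{bun2016concentrated}), and the derivation you outline---computing the R\'enyi divergence between two spherical Gaussians with common covariance and then bounding $\|v\|_2$ by $\Delta_f$---is precisely how that result is established in the original source.
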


\begin{lemma}[Exponential Mechanism \cite{McTalwar}]\label{lem:expmech}
Let $L$ be a set of outputs and $g: L \times \mathcal{X}^n \to \mathbb{R}$ be a function that measures the quality of each output on a dataset. Assume that for every $m \in L$, the function $g(m,.)$ has $\ell_1$-sensitivity at most $\Delta$. Then, for all $\eps>0$, $n \in \mathbb{N}$ and for all datasets $\Datafixed \in \mathcal{X}^n$, there exists an $(\eps, 0)$-DP mechanism that, on input $\Datafixed$, outputs an element $m\in L$ such that, for all $a>0$, we have
\begin{equation*}
    \Pr\left[\max_{i \in [L]} g(i,\Datafixed) -  g(m,\Datafixed) \geq 2\Delta \frac{\ln |L| + a}{\eps}\right] \leq e^{-a}. 
\end{equation*}
\end{lemma}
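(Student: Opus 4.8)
The plan is to exhibit the classical exponential mechanism of McSherry and Talwar and then verify its two required properties, privacy and accuracy, each by a short direct computation. First I would define the mechanism $\me$ that, on input $\Datafixed \in \mathcal{X}^n$, returns each $m \in L$ with probability proportional to $\exp\!\left(\frac{\eps}{2\Delta}\,g(m,\Datafixed)\right)$. Since $L$ is finite, the normalizing constant $W(\Datafixed) = \sum_{m\in L} \exp\!\left(\frac{\eps}{2\Delta}\,g(m,\Datafixed)\right)$ is finite and strictly positive, so this is a well-defined distribution over $L$, and the lemma reduces to two claims about $\me$.

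For the privacy claim, I would fix neighboring datasets $\Datafixed, \Datafixed'$ and an output $m \in L$, and write the likelihood ratio $\Pr[\me(\Datafixed)=m]/\Pr[\me(\Datafixed')=m]$ as a product of two factors: the ratio of the unnormalized weights on $m$, equal to $\exp\!\left(\frac{\eps}{2\Delta}\bigl(g(m,\Datafixed)-g(m,\Datafixed')\bigr)\right)$, and the ratio of normalizing constants $W(\Datafixed')/W(\Datafixed)$. The first factor is at most $e^{\eps/2}$ by the hypothesis that $g(m,\cdot)$ has $\ell_1$-sensitivity at most $\Delta$, so that $|g(m,\Datafixed)-g(m,\Datafixed')| \le \Delta$. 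For the second factor, the same sensitivity bound applied to each summand shows that every term of $W(\Datafixed')$ is at most $e^{\eps/2}$ times the corresponding term of $W(\Datafixed)$, hence $W(\Datafixed')/W(\Datafixed) \le e^{\eps/2}$. Multiplying, the likelihood ratio is at most $e^{\eps}$ for every output, which is exactly $(\eps,0)$-differential privacy (no $\delta$ term is needed since $\me$ has countable range and a uniformly bounded pointwise likelihood ratio).

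For the accuracy claim, I would fix $\Datafixed$, set $OPT = \max_{i\in L} g(i,\Datafixed)$, and let $t = \frac{2\Delta}{\eps}(\ln|L|+a)$ be the claimed error threshold. The goal is to bound the probability that $\me$ outputs a point of the ``bad set'' $B = \{m\in L : g(m,\Datafixed) \le OPT - t\}$. The numerator of $\Pr[\me(\Datafixed)\in B]$ is $\sum_{m\in B}\exp\!\left(\frac{\eps}{2\Delta}g(m,\Datafixed)\right) \le |B|\cdot\exp\!\left(\frac{\eps}{2\Delta}(OPT-t)\right) \le |L|\cdot\exp\!\left(\frac{\eps}{2\Delta}(OPT-t)\right)$, while the denominator $W(\Datafixed)$ is at least the single term $\exp\!\left(\frac{\eps}{2\Delta}OPT\right)$ coming from an optimal output. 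Dividing, the $\exp\!\left(\frac{\eps}{2\Delta}OPT\right)$ factors cancel and we are left with $|L|\cdot\exp\!\left(-\frac{\eps t}{2\Delta}\right) = |L|\cdot e^{-(\ln|L|+a)} = e^{-a}$, which is exactly the stated tail bound on $\max_{i}g(i,\Datafixed)-g(m,\Datafixed)$.

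Since this is a textbook result, I do not expect a genuine obstacle; the one thing to get right is the bookkeeping around the constant $\frac{1}{2\Delta}$ in the exponent, which is pinned down simultaneously by the privacy accounting (the two factors $e^{\eps/2}\cdot e^{\eps/2}$ must combine to $e^{\eps}$) and by the accuracy bound (the $OPT$ terms must cancel so that the surviving factor is exactly $e^{-(\ln|L|+a)}$). A minor notational point is that the statement writes $\max_{i\in[L]}$ and uses $L$ both as the output set and as an index set; I would fix $L$ to be the set of outputs at the outset so that the sums and maxima above are unambiguous. With these conventions, the two computations complete the proof.
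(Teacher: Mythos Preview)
Your proof is correct and is the standard argument for the exponential mechanism. However, the paper does not prove this lemma at all: it is stated as a cited result from McSherry and Talwar \cite{McTalwar} in the preliminaries section, with no proof given. So there is nothing to compare against; your proposal simply supplies the classical proof that the paper omits.
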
 

\subsection{Distributions}
Additionally, we use the Bernoulli, binomial, multinomial, and Poisson distributions as well as total variation distance.

\begin{definition}[Bernoulli Distribution] \label{prelim:bern_def}
The Bernoulli distribution with bias $\biasesfixed \in [0,1]$, denoted $\Ber(\biasesfixed)$, is defined for $\ell \in \{0,1\}$. It has probability mass
\begin{equation*}
    h(\ell) =  
    \begin{cases} 
      \biasesfixed & \text{ if } \ell = 1;\\
      1-\biasesfixed & \text{ if } \ell = 0.
   \end{cases}
\end{equation*}

\end{definition}

\begin{definition}[Binomial Distribution] The binomial distribution with parameters $n$ and $\biasesfixed$, denoted $\Bin(n,
\biasesfixed)$, is defined for all nonnegative integers $\ell$ such that  $\ell \leq n$. It has probability mass
\begin{equation*}
    h(\ell) = \binom{n}{\ell} p^{\ell} (1-p)^{n-\ell}.
\end{equation*}
\end{definition}

\begin{definition}[Multinomial Distribution] The multinomial distribution with parameters $n$ and  $\Biasesfixed \in \Delta^k$, denoted $\Mult(n, \Biasesfixed),$ is defined for all nonnegative integer vectors $\mathbf{\ell} = (\ell_1, \ldots, \ell_k)$ such that $\sum_{i\in[k]} \ell_i \leq n$. It has probability mass
\begin{equation*}
    h(\ell) = 
    \begin{cases}
     \frac{n!}{\ell_1! \cdots \ell_k!} \cdot \biasesfixed_1^{\ell_1} \cdot \ldots \cdot \biasesfixed_k^{\ell_k} & \text{if } \sum_{i\in[k]} \ell_i = n;\\
     0, & \text{otherwise. }
    \end{cases}
\end{equation*}

\end{definition}

\begin{definition}[Poisson Distribution] The Poisson distribution with parameter $\lambda$, denoted $\Po(\lambda)$, is defined for all nonnegative integers $\ell$ with probability mass
\begin{equation*}
    h(\ell) = \frac{\lambda^{\ell} e^{-\ell}}{\ell!}.
\end{equation*}
\end{definition}
We use the following relationship between Poisson and Multinomial distributions.
\begin{lemma}[Poissonization]\label{lem:multtopois}
Fix $h,\lambda > 0$. Let $B \sim \Po(\lambda)$. Let $B_1,\dots,B_{\ell}$ be random variables such that the random variables $B_j \mid \{B = h \}$ are jointly distributed as $\Mult(h,\Biasesfixed)$ where $\Biasesfixed = (\biasesfixed_1,\dots,\biasesfixed_{\ell})$. Then the random variables $B_j$ are mutually independent and distributed as $\Po(\lambda \biasesfixed_j)$.
\end{lemma}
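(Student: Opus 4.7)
The plan is to prove the Poissonization lemma by directly computing the joint probability mass function of $(B_1,\dots,B_\ell)$ and recognizing it as a product of independent Poisson PMFs. This is the standard textbook proof, and since the result is classical, the main task is simply to lay out the algebra cleanly; there is no real obstacle.

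First, I would fix an arbitrary tuple $(b_1,\dots,b_\ell)$ of nonnegative integers and let $h := \sum_{j=1}^\ell b_j$. By the law of total probability, and since $\mathrm{Mult}(h,\Biasesfixed)$ is supported on tuples summing to $h$, the only term contributing to the joint PMF is the conditioning on $B=h$:
\begin{equation*}
\Pr[B_1=b_1,\dots,B_\ell=b_\ell]
= \Pr[B=h]\cdot \Pr[B_1=b_1,\dots,B_\ell=b_\ell \mid B=h].
\end{equation*}
Next, I would substitute the PMF of $\Po(\lambda)$ and of $\mathrm{Mult}(h,\Biasesfixed)$, obtaining
\begin{equation*}
\Pr[B_1=b_1,\dots,B_\ell=b_\ell]
= \frac{\lambda^h e^{-\lambda}}{h!} \cdot \frac{h!}{b_1!\cdots b_\ell!}\, \biasesfixed_1^{b_1}\cdots \biasesfixed_\ell^{b_\ell}.
\end{equation*}

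The key algebraic step is to cancel the $h!$ terms, split $\lambda^h = \prod_j \lambda^{b_j}$ using $h=\sum_j b_j$, and split $e^{-\lambda} = \prod_j e^{-\lambda \biasesfixed_j}$ using $\sum_j \biasesfixed_j = 1$ (which holds because $\Biasesfixed$ is a probability vector). Combining these regroupings gives
\begin{equation*}
\Pr[B_1=b_1,\dots,B_\ell=b_\ell]
= \prod_{j=1}^{\ell} \frac{(\lambda \biasesfixed_j)^{b_j}\, e^{-\lambda \biasesfixed_j}}{b_j!},
\end{equation*}
which is exactly the product of $\ell$ PMFs of $\Po(\lambda \biasesfixed_j)$ evaluated at $b_j$. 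Since this factorization holds for every tuple $(b_1,\dots,b_\ell)$, it simultaneously establishes that $B_j \sim \Po(\lambda \biasesfixed_j)$ for each $j$ and that the $B_j$'s are mutually independent, completing the proof.
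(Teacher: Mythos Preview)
Your proof is correct and is the standard textbook argument for Poissonization. The paper does not actually prove this lemma; it merely states it in the preliminaries as a known fact and invokes it later (e.g., in the analysis of the dataset transformation for the product-distribution lower bound). So there is no alternative approach to compare against --- you have supplied the canonical proof that the paper omits.
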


\begin{definition}[Total Variation Distance] \label{def:TV} Let $P$ and $Q$ be discrete probability distributions over some domain $S$. Then
\begin{equation*}
    d_{TV}(P, Q) := \frac{1}{2}\|P - Q\|_1 = \sup_{E\subseteq S} |\Pr_{P}(E) - \Pr_{Q}(E)|.
\end{equation*}
\end{definition}

We use the fact that the total variation distance between two product distributions is subadditive. 
\begin{lemma}[Subadditivity of TV Distance for Product Distributions]\label{lem:subaddTV} Let $P$ and $Q$ be product distributions over some domain $S$. Let $P^1, \dots, P^d$ be the marginal distributions of $P$ and $Q^1, \dots, Q^d$ be the marginal distributions over $Q$. Then
\begin{equation*}
    d_{TV}(P, Q) \leq \sum_{i=1}^d d_{TV}(P^i, Q^i).
\end{equation*}
\end{lemma}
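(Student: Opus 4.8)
The plan is to use the coupling characterization of total variation distance together with a union bound over coordinates. Recall the standard fact that for any two distributions $P, Q$ on a common discrete domain, $d_{TV}(P,Q)$ equals the minimum, over all couplings $(X,Y)$ with $X \sim P$ and $Y \sim Q$, of $\Pr[X \neq Y]$; in particular, $d_{TV}(P,Q) \leq \Pr[X \neq Y]$ for every such coupling, and a minimizing (``maximal'') coupling exists.

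First I would fix, for each coordinate $i \in \{1,\dots,d\}$, a maximal coupling $(X_i, Y_i)$ of the marginals $P^i$ and $Q^i$, so that $X_i \sim P^i$, $Y_i \sim Q^i$, and $\Pr[X_i \neq Y_i] = d_{TV}(P^i, Q^i)$. Next I would take these $d$ couplings to be mutually independent and set $X = (X_1, \dots, X_d)$ and $Y = (Y_1, \dots, Y_d)$. Since $P$ and $Q$ are product distributions with marginals $P^1,\dots,P^d$ and $Q^1,\dots,Q^d$ respectively, independence across coordinates yields $X \sim P$ and $Y \sim Q$, so $(X,Y)$ is a valid coupling of $P$ and $Q$.

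Then I would bound $d_{TV}(P,Q) \leq \Pr[X \neq Y]$ and observe that $\{X \neq Y\} \subseteq \bigcup_{i=1}^d \{X_i \neq Y_i\}$, so by the union bound $\Pr[X \neq Y] \leq \sum_{i=1}^d \Pr[X_i \neq Y_i] = \sum_{i=1}^d d_{TV}(P^i, Q^i)$, which is exactly the claimed inequality.

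There is no genuine obstacle here; the only points requiring care are invoking the coupling characterization of $d_{TV}$ (equivalently, the existence of a maximal coupling for discrete distributions) and checking that an independent product of couplings of the marginals is a coupling of the two product distributions. An equivalent route that avoids couplings is a hybrid argument: define $H_j = P^1 \times \cdots \times P^j \times Q^{j+1} \times \cdots \times Q^d$ so that $H_0 = Q$ and $H_d = P$, apply the triangle inequality for $d_{TV}$ to telescope $d_{TV}(P,Q) \leq \sum_{j=1}^d d_{TV}(H_{j-1}, H_j)$, and note that $H_{j-1}$ and $H_j$ agree in every coordinate except the $j$-th, which forces $d_{TV}(H_{j-1}, H_j) = d_{TV}(P^j, Q^j)$.
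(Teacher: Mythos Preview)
Your proof is correct. The paper does not actually prove this lemma; it is stated without proof in the preliminaries as a standard fact about total variation distance, and then invoked later in the analysis of the product-distribution samplers. Both arguments you give (the coupling-plus-union-bound argument and the hybrid/telescoping argument) are the standard proofs of this inequality, so there is nothing to compare against and no gap to flag.
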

We also use the following lemma regarding total variation distance between a distribution $\distr$ and the distribution obtained by conditioning $\distr$ on a high probability event $E$.
\begin{lemma}[Claim 4, \cite{RaskhodnikovaS06}]\label{lem:TVcond}
Fix $\delta \in (0,1)$. Let $D$ be a distribution and let $E$ be an event that happens with probability $1-\beta$ under the distribution $D$. Let $D|_{E}$ be the distribution of $D$ conditional on event $E$. Then 
$$d_{TV}(D|_{E},D) \leq \frac{\beta}{1-\beta}.$$
\end{lemma}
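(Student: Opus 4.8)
The plan is to simply unwind the definition of the conditional distribution $D|_E$ and evaluate the $\ell_1$ distance; no clever idea is required, and the quantifier ``Fix $\delta \in (0,1)$'' plays no role here — what actually matters is that $\beta \in [0,1)$, so that $D|_E$ is well-defined. Write $S$ for the domain of $D$. Since $\Pr_D(E) = 1-\beta$, the conditional distribution is given by $D|_E(x) = D(x)/(1-\beta)$ for $x \in E$ and $D|_E(x) = 0$ for $x \notin E$; if $\beta = 0$ the two distributions coincide and the claim is trivial, so I would assume $\beta \in (0,1)$.

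Next I would invoke the identity $d_{TV}(D|_E, D) = \tfrac12 \|D|_E - D\|_1$ and split the sum over $x \in S$ according to whether $x \in E$. On $E$, each summand equals $D(x)\,\bigl|\tfrac{1}{1-\beta} - 1\bigr| = D(x)\cdot \tfrac{\beta}{1-\beta}$, so the contribution of $E$ is $\tfrac{\beta}{1-\beta}\sum_{x \in E} D(x) = \tfrac{\beta}{1-\beta}(1-\beta) = \beta$. Off $E$, each summand is simply $D(x)$, so the contribution of $S \setminus E$ is $\Pr_D(S \setminus E) = \beta$. Adding the two gives $\|D|_E - D\|_1 = 2\beta$, hence $d_{TV}(D|_E, D) = \beta \le \tfrac{\beta}{1-\beta}$ for $\beta \in [0,1)$, which is the stated bound (in fact with the sharper constant $\beta$, attained by the test event $A = E$). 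There is essentially no obstacle: the only points that need care are that $D|_E$ requires $\beta < 1$ to make sense, and that the boundary case $\beta = 0$ is disposed of separately.
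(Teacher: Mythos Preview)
Your proof is correct; in fact you establish the sharper identity $d_{TV}(D|_E,D)=\beta$, from which the stated bound follows. The paper does not actually prove this lemma---it is quoted as Claim~4 of \cite{RaskhodnikovaS06}---so there is no ``paper's own proof'' to compare against, but your direct $\ell_1$ computation is the standard argument.
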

Finally, we will need the following lemma.
\begin{lemma}[Information Processing Inequality]\label{lem:postTV} Let $A$ and $B$ be random variables over some domain $S$. Let $f$ be a randomized function mapping from $S$ to any codomain $T$. Then
\begin{equation*}
    d_{TV}(f(A), f(B)) \leq d_{TV}(A,B).
\end{equation*}
\end{lemma}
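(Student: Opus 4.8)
The plan is to derive the statement directly from the $\ell_1$ characterization of total variation distance in Definition~\ref{def:TV}, using only the triangle inequality and an interchange of summation. Since Definition~\ref{def:TV} is stated for discrete distributions, I treat $A,B$ as supported on a countable set $S$ and each $f(s)$ as a distribution on a countable set $T$, and I write $f(s,t):=\Pr[f(s)=t]$ for the transition kernel induced by the internal coins of the randomized map $f$. The only structural fact needed is the marginalization identity $\Pr[f(A)=t]=\sum_{s\in S}\Pr[A=s]\,f(s,t)$, and likewise for $B$; this just says that ``apply $f$'' means ``sample from $f(s,\cdot)$'', averaged over the law of the input.

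The key step is then a one-line computation. Subtracting the two marginalization identities gives, for each $t$, that $\Pr[f(A)=t]-\Pr[f(B)=t]=\sum_{s}(\Pr[A=s]-\Pr[B=s])\,f(s,t)$. Summing absolute values over $t$, applying the triangle inequality inside the sum, swapping the (nonnegative, hence absolutely convergent) double sum, and using $\sum_{t}f(s,t)=1$ for every $s$ yields
\begin{align*}
2\,d_{TV}(f(A),f(B)) &= \sum_{t\in T}\Bigl|\sum_{s\in S}(\Pr[A=s]-\Pr[B=s])\,f(s,t)\Bigr| \\
&\le \sum_{s\in S}|\Pr[A=s]-\Pr[B=s]|\sum_{t\in T}f(s,t) \\
&= \sum_{s\in S}|\Pr[A=s]-\Pr[B=s]| = 2\,d_{TV}(A,B),
\end{align*}
and dividing by $2$ gives the claim.

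An alternative, perhaps more in the spirit of the coupling arguments used elsewhere in the paper: take an optimal coupling $(A',B')$ with the correct marginals and $\Pr[A'\neq B']=d_{TV}(A,B)$, and feed both $A'$ and $B'$ into $f$ using a \emph{single shared} draw of $f$'s internal coins. This produces a coupling of $f(A)$ and $f(B)$ under which $f(A')\neq f(B')$ can only happen when $A'\neq B'$, so $d_{TV}(f(A),f(B))\le\Pr[f(A')\neq f(B')]\le\Pr[A'\neq B']=d_{TV}(A,B)$. There is no real obstacle in either route; the one point that warrants a little care is the randomized (rather than deterministic) nature of $f$ — being explicit that in the coupling argument both applications of $f$ reuse the same coins, and in the $\ell_1$ argument that $f(s,\cdot)$ is a probability distribution summing to $1$. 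I would present the $\ell_1$/triangle-inequality proof as the main argument, since it is fully self-contained and invokes nothing beyond Definition~\ref{def:TV}.
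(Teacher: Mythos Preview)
Your proof is correct. The paper does not actually prove Lemma~\ref{lem:postTV}; it simply states the information processing inequality as a standard background fact without proof. Both of the arguments you outline --- the $\ell_1$/triangle-inequality computation and the shared-coins coupling argument --- are standard and correct derivations of this well-known result, so there is nothing to compare against in the paper itself.
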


\begin{definition}[KL Divergence] Let $P$ and $Q$ be discrete probability distributions over some domain $S$. Then
\begin{equation*}
    d_{KL}(P, Q) := \frac{1}{2}\sum_{x \in S} P(x)\log\left( \frac{P(x)}{Q(x)}\right)
\end{equation*}
\end{definition}

\begin{claim}\label{clm:bern_acc_eq}
For a Bernoulli distribution $\Ber(p)$, we can simplify the definition of $\alpha$-accuracy (Definition~\ref{def:acc1}) of a sampler \sampler with inputs of size $n$ to require that
\begin{equation*}
d_{TV}(\distroutput{\sampler, \Ber(p)}, \Ber(p)) =
    \Big\lvert \Pr_{\Datarv\sim (\Ber(p))^{\otimes n}}[\sampler(\Datarv) = 1] - p \Big\rvert 
=   \Big|\E_{\Datarv\sim (\Ber(p))^{\otimes n}}
[\indicator_{\sampler(\Datarv) = 1}]-p\Big| 
    \leq \alpha.
\end{equation*}
\end{claim}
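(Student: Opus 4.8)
The plan is to unwind the definitions. Recall that for a Bernoulli distribution $\Ber(p)$, the support is $\{0,1\}$, and the output distribution $\distroutput{\sampler,\Ber(p)}$ is itself a distribution on $\{0,1\}$ (since $\sampler$ outputs an element of the universe $\universe = \{0,1\}$). So both $\distroutput{\sampler,\Ber(p)}$ and $\Ber(p)$ are distributions on the two-point set $\{0,1\}$, and I can compute their total variation distance directly.

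The first step is to write down that for any two distributions $A$ and $B$ on $\{0,1\}$, we have $d_{TV}(A,B) = |A(1) - B(1)|$; this follows since $|A(1)-B(1)| = |A(0)-B(0)|$ (the masses sum to $1$), and $\frac12\|A-B\|_1 = \frac12(|A(1)-B(1)| + |A(0)-B(0)|) = |A(1)-B(1)|$. Applying this with $A = \distroutput{\sampler,\Ber(p)}$ and $B = \Ber(p)$ gives $d_{TV}(\distroutput{\sampler,\Ber(p)}, \Ber(p)) = |\distroutput{\sampler,\Ber(p)}(1) - \Ber(p)(1)| = |\distroutput{\sampler,\Ber(p)}(1) - p|$, using $\Ber(p)(1) = p$ from Definition~\ref{prelim:bern_def}.

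The second step is to identify $\distroutput{\sampler,\Ber(p)}(1)$ with the probability that the sampler outputs $1$ on i.i.d.\ input. By the definition of $\distroutput{\sampler,\distr}$ given in the introduction, $\distroutput{\sampler,\Ber(p)}(1) = \Pr_{\Datarv \sim (\Ber(p))^{\otimes n},\ \text{coins of }\sampler}[\sampler(\Datarv) = 1]$, which can equivalently be written as $\E_{\Datarv\sim (\Ber(p))^{\otimes n}}[\indicator_{\sampler(\Datarv)=1}]$ (the expectation over $\Datarv$ already absorbs the coins of $\sampler$ inside the indicator's expectation, or one simply notes $\Pr[\text{event}] = \E[\indicator_{\text{event}}]$). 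Chaining these equalities together and imposing the $\alpha$-accuracy condition $d_{TV}(\distroutput{\sampler,\Ber(p)},\Ber(p)) \le \alpha$ from Definition~\ref{def:acc1} yields the claimed chain of equalities and the bound by $\alpha$.

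There is essentially no obstacle here — the claim is a bookkeeping simplification that records, once and for all, that accuracy on a Bernoulli distribution is just a statement about a single scalar probability. The only mild point to be careful about is that the sampler's codomain must indeed be $\{0,1\}$ (so that its output distribution lives on the same two-point space as $\Ber(p)$); this is guaranteed by taking $\universe = \{0,1\}$, which is the relevant universe for the Bernoulli class. Everything else is the elementary fact about total variation distance on a two-point space plus the substitution of definitions.
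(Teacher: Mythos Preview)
Your proposal is correct. The paper states this claim without proof, treating it as an immediate definitional observation; your argument is exactly the natural unwinding of definitions (TV distance on a two-point space plus the identity $\Pr[\text{event}]=\E[\indicator_{\text{event}}]$) that justifies it.
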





\fi


\ifnum\neurips=0

\ifnum\neurips=0
\section{Properties of Samplers}
\else 
\section{Properties of samplers}
\fi
\label{sec:properties}

In this section, we describe three general transformations of samplers that allow us to assume without loss of generality that samplers can take a certain specific form. All three transformations are used in our lower bound proofs.

\ifnum\neurips=1
\subsection{General samplers to Poisson samplers}\label{sec:gentopoiss}
\else 
\subsection{General Samplers to Poisson Samplers}\label{sec:gentopoiss}
\fi

In the first transformation, we show that any private sampling task can be performed by an algorithm that gets a dataset with size distributed as a Poisson random variable instead of getting a dataset of a fixed size. This enables the use of the technique called {\em Poissonization} to break dependencies between quantities that arise in trying to reason about samplers. Recall that $\Po(\lambda)$ denotes a Poisson distribution with mean $\lambda.$
\begin{lemma}\label{lem:poisson} 
If there exists an $(\eps, \delta)$-differentially private sampler $\sampler$ that is $\alpha$-accurate on distribution class~$\class$ for datasets of size $n$, then there exists an $(\eps, \delta)$-differentially private sampler  $\posampler$ that is $(\alpha + e^{-n/6})$-accurate on class $\class$ for datasets of size  distributed as $\Po(2n)$.
\end{lemma}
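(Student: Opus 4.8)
The plan is to construct $\posampler$ by \emph{truncation}. On an input dataset $\Datafixed$ of size $m$, if $m \ge n$ then $\posampler$ keeps only the first $n$ records and outputs $\sampler(\datafixed_1,\dots,\datafixed_n)$; if $m < n$ then $\posampler$ outputs an arbitrary fixed element of the universe. When the input size is distributed as $\Po(2n)$, the "failure" branch $m<n$ is reached only with probability $e^{-\Omega(n)}$, so it contributes negligibly to the accuracy, while on the good branch the retained prefix is i.i.d.\ from $\distr$ and we inherit $\sampler$'s accuracy.

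For privacy, I would argue directly by cases on a pair of neighboring datasets of a common size $m$. If $m<n$, both outputs are the same deterministic constant, so the $(\eps,\delta)$ inequality is trivial. If $m\ge n$ and the differing coordinate lies outside the first $n$ entries, then $\posampler$ feeds the same $n$-record prefix to $\sampler$ in both cases, so the output distributions coincide. If $m\ge n$ and the differing coordinate lies among the first $n$ entries, then $(\datafixed_1,\dots,\datafixed_n)$ and $(\datafixed'_1,\dots,\datafixed'_n)$ are neighboring datasets of size $n$, and the $(\eps,\delta)$-DP guarantee of $\sampler$ applies verbatim. In all cases $\posampler$ is $(\eps,\delta)$-DP; this is just the standard fact that pre-composing a DP mechanism with a map that never increases the Hamming distance between inputs preserves DP.

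For accuracy, let $N\sim\Po(2n)$ and $\Datarv\sim\distr^{\otimes N}$ for some $\distr\in\class$. Since $N$ is independent of the underlying i.i.d.\ sequence of records, conditioned on the event $\{N\ge n\}$ the prefix $(\datarv_1,\dots,\datarv_n)$ is still distributed exactly as $\distr^{\otimes n}$, so the conditional law of $\posampler(\Datarv)$ is precisely $\distroutput{\sampler,\distr}$. Writing $\beta=\Pr[\Po(2n)<n]$, this gives $\distroutput{\posampler,\distr}=(1-\beta)\,\distroutput{\sampler,\distr}+\beta\, Q_{\mathrm{fail}}$ for some distribution $Q_{\mathrm{fail}}$, and hence by convexity and $d_{TV}\le 1$,
\[
d_{TV}\big(\distroutput{\posampler,\distr},\,\distr\big)\;\le\;(1-\beta)\,d_{TV}\big(\distroutput{\sampler,\distr},\,\distr\big)+\beta\;\le\;\alpha+\beta .
\]
It remains to bound $\beta$ by $e^{-n/6}$, which follows from a Poisson lower-tail estimate: $\Pr[\Po(2n)<n]\le\Pr[\Po(2n)\le n]\le e^{-2n}(2e)^{n}=(2/e)^{n}\le e^{-n/6}$ (one could also use the multiplicative form $\Pr[\Po(\lambda)\le(1-t)\lambda]\le e^{-t^2\lambda/2}$ with $\lambda=2n$, $t=1/2$, giving the sharper $e^{-n/4}$).

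I do not expect a serious obstacle here: the construction and the privacy case analysis are routine. The only points needing a little care are (i) justifying that conditioning on $\{N\ge n\}$ does not distort the distribution of the retained prefix (it doesn't, by independence of the size from the records), and (ii) choosing a Poisson tail bound loose enough to land at the stated $e^{-n/6}$. I would write up (i) as a one-line independence argument and (ii) by the explicit $(2/e)^n$ computation above.
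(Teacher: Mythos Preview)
Your proposal is correct and follows essentially the same construction as the paper: output a fixed element when the realized size $N$ falls below $n$, and otherwise run $\sampler$ on (the first $n$ records of) the input; privacy is inherited from $\sampler$, and accuracy is $\alpha$ plus the probability that $\Po(2n)<n$, bounded via a Poisson lower-tail estimate at $e^{-n/6}$. Your explicit truncation to the first $n$ records and the clean mixture/convexity bound $d_{TV}\le(1-\beta)\alpha+\beta$ are tidier than the paper's write-up (which passes the full dataset to $\sampler$ and appeals to ``$\sampler$ is $\alpha$-accurate when it gets any fixed number of samples larger than $n$''), but the underlying argument is the same.
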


\begin{proof}
Algorithm~\ref{alg:poisson} is the desired sampler $\posampler$. It is $(\eps, \delta)$-differentially private since $\sampler$ is $(\eps, \delta)$-differentially private. Let $\Datarv$ represent the random variable corresponding to the dataset fed to \sampler.
\begin{algorithm}
    \caption{Sampler \posampler with dataset size $N \sim \Po(2n)$}
    \label{alg:poisson}
    \hspace*{\algorithmicindent} \textbf{Input:} dataset $\Datafixed = (\datafixed_1, \ldots, \datafixed_{N})$,  universe $\universe$, oracle access to $(\eps, \delta)$-DP sampler $\sampler$, parameter~$n$\\
    \hspace*{\algorithmicindent} \textbf{Output:} $i\in \universe$
    \begin{algorithmic}[1] 
            \State Fix an element $\el \in \universe$. 
            \If {$N < n$}  $i=\el$
            \Else  $\text{ } i \gets \sampler(\Datafixed)$\\
            \Return $i$
            \EndIf
    \end{algorithmic}
\end{algorithm}

We use the following tail bound for Poisson random variables \cite{clementpoiss}.
\begin{claim}[\cite{clementpoiss}]\label{lem:poiss_tail}
If $Y\sim \Po(\lambda)$, then $\Pr(Y \leq \lambda - y) \leq e^{-\frac{y^2}{2(\lambda + y)}}$ and $\Pr(Y \geq \lambda + y) \leq e^{-\frac{y^2}{2(\lambda + y)}}$ for all $y>0$.
\end{claim}
Let event $E$ correspond to $N < n$. Let $\overline{E}$ represent the complement of $E$. Then 
\begin{equation}\label{eq:event_tail}
\Pr(E) \leq e^{-\frac{n}{6}}
\end{equation}
by an application of Claim~\ref{lem:poiss_tail}. 
Let $\distr \in \class$ and $\Datarv \sim \distr^{\otimes N}$. Then
\begin{align*}
 d_{TV}(\distroutput{\posampler, \distr}, \distr) 
& =\frac 1 2 \sum_{i \in \universe}|\Pr_{N, \posampler, \Datarv}(\posampler(\Datarv)=i) - \distr(i)| \\
& =\frac 1 2   \sum_{i\in \universe}\left|\Pr_{N, \posampler, \Datarv}(\posampler(\Datarv)=i \land \overline{E}) + \Pr_{N, \posampler, \Datarv}(\posampler(\Datarv)=i \land  E) - \distr(i) \left(\Pr_{N}(\overline{E})+\Pr_{N}(E)\right) \right| \\
& \leq \frac 12 \sum_{i \in \universe}\left(\left|\Pr_{N, \posampler, \Datarv}(\posampler(\Datarv)=i \land  \overline{E}) - \distr(i)\Pr_{N}(\overline{E})\right| +  \Pr_{N, \posampler, \Datarv}(\posampler(\Datarv)=i \land E)+ \Pr_{N}(E)\distr(i)\right)  \\
& = \frac 12 \sum_{i \in \universe}\left|\Pr_{N, \posampler, \Datarv}(\posampler(\Datarv)=i \mid \overline{E})\Pr(\overline{E}) - \distr(i)\Pr_{N}(\overline{E})\right| + 
\frac 12 \cdot(\Pr_{N}(E)+\Pr_{N}(E))\\
& = \frac 12 \sum_{i \in \universe}\Pr_{N}(\overline{E})\cdot\left|\Pr_{\sampler, \Datarv, N \mid \overline{E}}(\sampler(\Datarv)=i \mid \overline{E}) - \distr(i)\right| + \Pr_{N}(E)\\
& \leq \frac 12 \sum_{i \in \universe}\left|\Pr_{\sampler, \Datarv}(\sampler(\Datarv)=i \mid \overline{E}) - \distr(i)\right| + \Pr_{N}(E)\\
& \leq   \alpha +{e^{-n/6}},
\end{align*} 
where the first equality is by the definition of total variation distance, the second equality is because $\Pr(\overline{E}) + \Pr(E) = 1$ and because $\Pr(a) = \Pr(a,E) + \Pr(a,\overline{E})$ for every event $a$, the first inequality is because of the triangle inequality, the third equality is by the product rule
and by marginalizing over the outputs, the fourth equality is because when $N > n$, $\posampler$ sets the output to be $\sampler(\Datarv)$, the second inequality is by the fact that $\Pr_{N}(\overline{E}) \leq 1$, and the final inequality is by (\ref{eq:event_tail}) and the fact that the sampler $\sampler$ is $\alpha$-accurate when it gets any fixed number of samples larger than $n$. 
Hence, $\posampler$ is $(\alpha + e^{-n/6})$-accurate on $\class$. 
\end{proof}

\ifnum\neurips=1
\subsection{Privacy amplification for samplers}\label{sec:privacy-amplification}
\else 
\subsection{Privacy Amplification for Samplers}\label{sec:privacy-amplification}
\fi
Our second general transformation shows how to amplify privacy (that is, decrease privacy parameters) of a sampler by subsampling its input. The transformation does not affect the accuracy. The following lemma quantifies how the privacy parameters and the dataset size are affected by privacy amplification. This result is needed in the proof of the lower bound for $k$-ary distributions, because the main technical lemma in that proof (Lemma~\ref{lem:main-k-ary-lb}) only applies to samplers with small $\eps.$ It is well known that subsampling amplifies differential privacy (see, e.g., \cite{NissimRS07,li2012sampling}).
\begin{lemma}\label{lem:amplify}
Fix $\eps\in(0,1], \delta \in (0,1),$ and $\beta \in (0, 1)$. If there exists an $(\eps,\delta)$-differentially private sampler~$\sampler$ that is $\alpha$-accurate on distribution class \class  for datasets of size distributed as $\Po(n)$ then there exists an $(\eps\beta, \delta \frac{\beta}2)$-differentially private sampler $\sampler_{\beta/2}$ that is $\alpha$-accurate on class \class for datasets of size distributed as $\Po\Big(n\cdot \frac{2}{\beta}\Big)$.
\end{lemma}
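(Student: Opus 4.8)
The plan is to build $\sampler_{\beta/2}$ by composing $\sampler$ with an independent subsampling step. Set $q := \beta/2$. On input a dataset $\Datafixed$ (whose size is distributed as $\Po(2n/\beta)$), the sampler $\sampler_{\beta/2}$ forms a subsample $\Datafixedy$ by keeping each record of $\Datafixed$ independently with probability $q$ and discarding it otherwise, and then outputs $\sampler(\Datafixedy)$. Since $\sampler$ is already designed to accept a dataset whose size is distributed as $\Po(n)$, there is nothing extra to check about $\Datafixedy$ possibly being small or empty.

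For accuracy, fix $\distr \in \class$ and let the input be i.i.d.\ from $\distr$ with size $N \sim \Po(2n/\beta)$. The first step is to observe that, conditioned on $N$, the pair (number of kept records, number of discarded records) is distributed as $\Mult(N,(q,1-q))$, so by Poissonization (Lemma~\ref{lem:multtopois}) the number of kept records is distributed as $\Po(q\cdot 2n/\beta) = \Po(n)$; and since the keep/discard decisions are independent of the record values, conditioned on its size the subsample consists of i.i.d.\ draws from $\distr$. Hence $\Datafixedy$ has exactly the law of a $\Po(n)$-sized i.i.d.\ sample from $\distr$, so $\distroutput{\sampler_{\beta/2},\distr}=\distroutput{\sampler,\distr}$, and $d_{TV}(\distroutput{\sampler_{\beta/2},\distr},\distr)\le\alpha$ follows from the accuracy of $\sampler$.

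For privacy, the plan is to invoke the standard amplification-by-subsampling argument (see the references cited above): for neighboring inputs $\Datafixed,\Datafixed'$ differing in a single record $e$, with probability $1-q$ the record $e$ is discarded, in which case the two subsample distributions coincide after coupling the shared subsampling randomness; with probability $q$ the record $e$ is kept, in which case the two subsamples are themselves neighbors and the $(\eps,\delta)$-DP guarantee of $\sampler$ applies. This yields, for every measurable $Y$,
\[
\Pr[\sampler_{\beta/2}(\Datafixed)\in Y]\le\bigl(1+q(e^\eps-1)\bigr)\,\Pr[\sampler_{\beta/2}(\Datafixed')\in Y]+q\delta,
\]
i.e.\ $\sampler_{\beta/2}$ is $\bigl(\log(1+q(e^\eps-1)),\,q\delta\bigr)$-DP. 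It then remains to bound the parameters: $\log(1+q(e^\eps-1))\le q(e^\eps-1)\le 2q\eps=\beta\eps$, using $\log(1+x)\le x$ and the elementary inequality $e^\eps-1\le 2\eps$ valid for $\eps\in(0,1]$, together with $q\delta=\beta\delta/2$. I expect the main obstacle to be making the coupling in the amplification step fully precise given that the input dataset has a random (Poisson) size — in particular, pinning down the neighboring-dataset relation for variable-size inputs (one record present in exactly one of $\Datafixed,\Datafixed'$) so that the case analysis above is rigorous; the remaining parameter bookkeeping is then routine.
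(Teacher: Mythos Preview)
Your proposal is correct and follows essentially the same approach as the paper: the construction and accuracy argument are identical, and for privacy the paper simply cites the subsampling-amplification theorem (Theorem~\ref{thm:LQS12} from \cite{li2012sampling}) rather than rederiving it, arriving at the same parameter bounds via the same inequalities $e^\eps-1\le 2\eps$ and $\ln(1+x)\le x$. Your stated obstacle about the Poisson input size is not a real issue---differential privacy is a worst-case guarantee over fixed neighboring datasets, so the amplification lemma applies for each fixed input size and the Poisson distribution enters only in the accuracy analysis.
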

 
\begin{proof}
We construct $\sampler_{\beta/2}$ from sampler $\sampler$ as follows: Given a dataset $\Datafixed$, sampler $\sampler_{\beta/2}$ subsamples each record in $\Datafixed$ independently with probability $\beta/2$ to get a new dataset $\Datafixed^*$ and then returns $\sampler(\Datafixed^*)$.

First, we argue that if sampler $\sampler$ is $(\eps,\delta)$-differentially private, then sampler $\sampler_{\beta/2}$  is $(\eps\beta, \delta\frac{\beta}{2})$-differentially private. 
This follows from \cite[Theorem 1]{li2012sampling} which we state as Theorem~\ref{thm:LQS12} in the appendix. By Theorem~\ref{thm:LQS12}, algorithm 
$\sampler_{\beta/2}$ is $(\eps',\delta\cdot\frac \beta 2)$-differentially private with

\begin{align*}
    \eps'=\ln\Big(1 + \frac{\beta}{2}\cdot(e^{\eps} - 1 )\Big) 
    \leq\ln\Big(1 + \frac{\beta}{2}\cdot(2\eps )\Big)
    =\ln(1+\beta\eps)
    \leq \ln(e^{\beta\eps}) = \eps\beta,
\end{align*}
where the inequalities hold because $e^{\eps} -1\leq 2\eps$ for all $\eps \leq 1$ and  $1+\eps\beta \leq e^{\eps\beta}$ for all $\eps\beta$. 

Next, we argue that if sampler $\sampler$ is  $\alpha$-accurate on class \class for datasets of size $\Po(n)$, then sampler $\sampler_{\beta/2}$ is $\alpha$-accurate on class \class for datasets of size $\Po(n\cdot \frac2{\beta})$. 
Suppose $\sampler_{\beta/2}$ is given a sample $\Datarv$ of size $\Po\big(n\cdot \frac{2}{\beta}\big)$ drawn i.i.d.\ from some distribution $P$.
Then the size of $\Datarv^*$, obtained by subsampling each entry of $\Datarv$ with probability $\beta/2$, has distribution $\Po(n)$, and entries of $\Datarv^*$ are i.i.d.\ from $P$. Since the output distributions of $\sampler_{\beta/2}(\Datarv)$ and $\sampler(\Datarv^*)$ are the same, the accuracy guarantee is the same for both algorithms.
\end{proof}

\ifnum\neurips=1
\subsection{General samplers to frequency-count-based samplers}\label{sec:frequency-counts}
\else 
\subsection{General Samplers to Frequency-Count-Based Samplers}\label{sec:frequency-counts}
\fi

Our final transformation shows that algorithms that sample from distribution classes with certain symmetries can be assumed without loss of generality to use only frequency counts of their input dataset in their decisions. Before stating this result (Lemma~\ref{lem:frequency-counts}), we define {\em frequency counts}, {\em frequency-count-based algorithms}, and the type of symmetries relevant for the transformation.

\begin{definition}[Frequency Counts]
Given a dataset $\Datafixed$ and an integer $j\geq 0$, let $F_j(\Datafixed)$ denote the number of elements that occur $j$ times in $\Datafixed$.
The vector $F(\Datafixed)$ of {\em frequency counts} of a dataset $\Datafixed$ of size $n$ is $(F_{0}(\Datafixed), \dots ,F_{n}(
\Datafixed) ).$
\end{definition}

\begin{definition}[Frequency-count-based algorithms]\label{def-freqcountbased}
A sampler is {\em frequency-count-based} if, for every element $i$ in the universe, the probability that the algorithm outputs $i$ when given a dataset $\Datafixed$ only depends on $j$, the number of occurrences of $i$ in $\Datafixed$, and on $F(\Datafixed)$. If $\Datafixed$ contains an element $i \in \universe$ that occurs $j$ times in $\Datafixed$, then let $p_{j,F(\Datafixed)}$ denote the probability that the sampler outputs $i$; otherwise, let $p_{j,F(\datafixed)} = 0$.
\end{definition}
Next, we define the type of distribution classes for which our transformation works.
\begin{definition}\label{def:label-invariant}
A class \class of distributions over a universe $\universe$ is \emph{label-invariant} if for all distributions $\distr \in \class$ and permutations $\pi:\universe \to \universe$, we have $\pi(\distr) \in \class$, where $\pi(\distr)$ is the distribution obtained by applying permutation $\pi$ to the support of $\distr$ (that is, $\Pr_{\pi(\distr)}(\el) = \Pr_{\distr}(\pi^{-1}(\el))$ for all $\el \in \universe$).
\end{definition}

Examples of label-invariant classes include the class of all Bernoulli distibutions and, more generally, the class of all $k$-ary distributions, for any $k.$

\begin{lemma}\label{lem:frequency-counts} Fix  a label-invariant distribution class $\class$. If there exists an  $(\eps, \delta)$-differentially private sampler~$\sampler$ that is  $\alpha$-accurate on \class with a particular distribution on the dataset size, then there exists  an $(\eps, \delta)$-differentially private frequency-count-based sampler $\fpsampler$ that is $\alpha$-accurate on \class with the same distribution on the dataset size.
\end{lemma}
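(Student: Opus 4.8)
The plan is to obtain $\fpsampler$ by \emph{symmetrizing} $\sampler$ over a uniformly random relabeling of the universe together with a uniformly random reordering of the dataset. Concretely: on input $\Datafixed = (\datafixed_1,\dots,\datafixed_N)$, draw a uniformly random permutation $\pi$ of $\universe$ and a uniformly random permutation $\sigma\in S_N$ of the coordinates, independently of each other and of the data, form $\pi(\sigma(\Datafixed)) := (\pi(\datafixed_{\sigma(1)}),\dots,\pi(\datafixed_{\sigma(N)}))$, run $y\gets\sampler(\pi(\sigma(\Datafixed)))$, and output $\pi^{-1}(y)$. (We may assume $\universe$ is finite, which is all our applications need; otherwise restrict $\pi$ to a large enough finite subset.) The coordinate permutation $\sigma$ is needed because ``frequency-count-based'' requires the output law to be invariant to the order of the records, which an arbitrary $(\eps,\delta)$-DP sampler need not be. I would then verify privacy, accuracy, and the frequency-count structure in turn.

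\textbf{Privacy.} For each fixed pair $(\pi,\sigma)$, the map $\Datafixed\mapsto\pi(\sigma(\Datafixed))$ sends neighboring datasets to neighboring datasets (reordering does not change the single coordinate in which they differ, and applying one relabeling to every coordinate preserves this), so $\Datafixed\mapsto\sampler(\pi(\sigma(\Datafixed)))$ is $(\eps,\delta)$-DP, and composing with the fixed post-processing $\pi^{-1}$ keeps it $(\eps,\delta)$-DP by Lemma~\ref{prelim:postprocess}. Since $(\pi,\sigma)$ is independent of the input, $\fpsampler$ is a mixture of $(\eps,\delta)$-DP mechanisms, hence $(\eps,\delta)$-DP: for neighbors $\Datafixed,\Datafixed'$ and any event $Y$, apply the guarantee inside the expectation over $(\pi,\sigma)$.

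\textbf{Accuracy.} Fix $\distr\in\class$ and let $\Datarv$ be drawn i.i.d.\ from $\distr$ with the prescribed (possibly random) size. Conditioned on a fixed $\pi$, reordering i.i.d.\ samples leaves their law unchanged, and applying $\pi$ coordinatewise turns i.i.d.\ samples from $\distr$ into i.i.d.\ samples from $\pi(\distr)$; hence $\sampler(\pi(\sigma(\Datarv)))$ is distributed as $\distroutput{\sampler,\pi(\distr)}$. Since $\class$ is label-invariant, $\pi(\distr)\in\class$, so $d_{TV}(\distroutput{\sampler,\pi(\distr)},\pi(\distr))\le\alpha$; applying the deterministic map $\pi^{-1}$ and the information-processing inequality (Lemma~\ref{lem:postTV}) shows the conditional output law $Q_\pi$ of $\fpsampler(\Datarv)$ satisfies $d_{TV}(Q_\pi,\pi^{-1}(\pi(\distr))) = d_{TV}(Q_\pi,\distr)\le\alpha$. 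The output distribution of $\fpsampler$ on $\distr$ is the mixture $\E_\pi[Q_\pi]$, and by convexity of total variation distance $d_{TV}(\E_\pi[Q_\pi],\distr)\le\E_\pi[d_{TV}(Q_\pi,\distr)]\le\alpha$.

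\textbf{Frequency-count-based structure.} I would prove that for any datasets $\Datafixed,\Datafixed'$ with $F(\Datafixed)=F(\Datafixed')$ and any $i,i'\in\universe$ whose multiplicities in $\Datafixed$ and $\Datafixed'$ respectively both equal $j$, we have $\Pr[\fpsampler(\Datafixed)=i]=\Pr[\fpsampler(\Datafixed')=i']$ (the case $\Datafixed=\Datafixed'$ also yields that all equal-multiplicity elements of a fixed dataset get equal probability). Equality of frequency counts forces $\Datafixed,\Datafixed'$ to have the same size and to be ``isomorphic'': there is a permutation $\rho$ of $\universe$ with $\mathrm{mult}_{\Datafixed'}(\rho(u))=\mathrm{mult}_{\Datafixed}(u)$ for all $u$, which may be chosen with $\rho(i)=i'$ (the elements with a given multiplicity, including multiplicity $0$ when $F_0>0$, form corresponding sets of equal size), and then a coordinate permutation $\tau\in S_N$ with $\tau(\rho(\Datafixed))=\Datafixed'$. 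Writing $\Pr[\fpsampler(\Datafixed')=i']=\E_{\pi,\sigma}\big[\Pr[\sampler(\pi(\sigma(\tau(\rho(\Datafixed)))))=\pi(\rho(i))]\big]$, using that relabelings and coordinate permutations commute so the data passed to $\sampler$ equals $(\pi\rho)(\,(\sigma\tau)(\Datafixed)\,)$ and the target equals $(\pi\rho)(i)$, and substituting the uniform variables $\pi\rho\mapsto\pi$ (over the finite permutation group of $\universe$) and $\sigma\tau\mapsto\sigma$ (over $S_N$), the expression collapses to $\E_{\pi,\sigma}\big[\Pr[\sampler(\pi(\sigma(\Datafixed)))=\pi(i)]\big]=\Pr[\fpsampler(\Datafixed)=i]$. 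The main thing to be careful about is exactly this step: checking that the isomorphism $\rho$ exists with the freedom to prescribe $\rho(i)=i'$ (including the boundary case $j=0$, where $i,i'$ are among the missing elements), and that the change of variables is legitimate; the rest is routine.
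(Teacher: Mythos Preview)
Your proof is correct and follows the paper's symmetrization approach: draw a uniform universe permutation $\pi$, run $\sampler$ on the permuted dataset, and return $\pi^{-1}$ of the output; privacy and accuracy are argued essentially identically (the paper bounds via $\max_\pi$ where you use convexity, which amounts to the same thing). The one difference is that you additionally randomize over a uniform coordinate permutation $\sigma$ of the records, which the paper omits. This extra step is a careful addition on your part: since two reorderings of the same multiset share the same frequency count, Definition~\ref{def-freqcountbased} requires invariance under record reordering, and the paper's construction as written does not establish that (e.g., if $\sampler$ always outputs $\datafixed_1$, the paper's $\fpsampler$ still outputs $\datafixed_1$ and is not frequency-count-based). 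Your isomorphism $(\rho,\tau)$ realizing the multiset bijection between any two datasets with equal frequency counts, together with the change of variables over the product group in the expectation over $(\pi,\sigma)$, handles this cleanly.
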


\begin{proof}
Consider an $\alpha$-accurate sampler $\sampler$ for the class $\class$. 
Construct the sampler $\fpsampler$ given in Algorithm~\ref{alg:frequency-counts}.

\begin{algorithm}
    \caption{Sampler \fpsampler}
    \label{alg:frequency-counts}
    \hspace*{\algorithmicindent} \textbf{Input:} dataset $\Datafixed$, universe $\universe$\\
    \hspace*{\algorithmicindent} \textbf{Output:} $i\in \universe$
    \begin{algorithmic}[1] 
            \State Choose a permutation $\pi : \universe \rightarrow \universe$ uniformly at random. \label{step:randperm}
            \State \Return $\pi^{-1}(\sampler(\pi(\Datafixed)))$
    \end{algorithmic}
\end{algorithm}
First, we show that sampler \fpsampler is $\alpha$-accurate for \class. For all $\distr \in \class$ and $\Datafixed \sim \distr$, denote by $\distroutput{\fpsampler(\Datafixed)}$ the distribution of outputs of $\fpsampler(\Datafixed)$. Define $\distroutput{\sampler(\Datafixed)}$ similarly. Then, for a fixed permutation $\pi,$ 
\begin{align}
    d_{TV} (\distroutput{\fpsampler(\Datafixed)}, \distr) 
    & =  d_{TV}\left(\pi(\distroutput{\fpsampler(\Datafixed)}), \pi(\distr)\right)\nonumber\\
    & = d_{TV}\left(\distroutput{\sampler(\pi(\Datafixed))}, \pi(\distr)\right)
     \leq \alpha, \label{calc:alpha_acc}
\end{align}
where the equalities hold by the definition of $\pi$ and $\fpsampler$, and the inequality holds because $\pi(\Datafixed)\sim \pi(\distr)$ and since $\class$ is label-invariant and $\sampler$ is $\alpha$-accurate for $\class$.
 For a fixed $\pi'$, let $\distroutput{\fpsampler(\Datafixed)|\pi'}$ represents the output distribution of \fpsampler conditioned on $\pi'$ being chosen in Step~\ref{step:randperm} of Algorithm~\ref{alg:frequency-counts}.
 
 For a uniformly chosen $\pi$, 
\begin{align*}
    d_{TV}(\distroutput{\fpsampler(\Datafixed)}, \distr) 
    & = d_{TV} \left(\E_\pi[\distroutput{\fpsampler(\Datafixed)|\pi}], \distr \right)\\
    & \leq \E_\pi [d_{TV}(\distroutput{\fpsampler(\Datafixed)|\pi}, \distr)] \quad\quad \text{By the triangle inequality} \\
    & \leq \max_{\pi} \{ d_{TV}(\distroutput{\fpsampler(\Datafixed)|\pi}{}, \distr)\}\\
    & = \max_{\pi} \{ d_{TV}(\distroutput{\sampler(\pi(\Datafixed))}, \pi(\distr)) \}
    \leq \alpha. \quad\quad \text{By (\ref{calc:alpha_acc})}
\end{align*}
Thus, algorithm $\fpsampler$ is $\alpha$-accurate for \class. 

Next, we show that $\fpsampler$ is frequency-count-based by proving that for all permutations $\pi^*$ on the universe and for all $i$ in the universe, $\Pr[\fpsampler(\pi^*(\Datafixed)) = \pi^*(i)] =
\Pr[\fpsampler(\Datafixed) = i] $. Let $\pi_0 = \pi \circ \pi^*$. We can characterise the output distribution of \fpsampler for a fixed $\Datafixed$ as follows
\begin{align*}
    \Pr[\fpsampler(\pi^*(\Datafixed)) = \pi^*(i)]
    & =  \frac{1}{|\universe|!} \sum_{\pi_0 \in [\universe!]} \Pr[\sampler(\pi_0\circ \pi^*(\Datafixed)) = \pi_0\circ \pi^*(i)]\\
    & =  \frac{1}{|\universe|!} \sum_{\pi_0 \circ \pi^*\in [\universe!]} \Pr[\sampler(\pi_0\circ \pi^*(\Datafixed)) = \pi_0\circ \pi^*(i)]\\
     & =  \frac{1}{|\universe|!} \sum_{\pi\in [\universe!]} \Pr[\sampler(\pi(\Datafixed)) = \pi(i)] \\
    & = \frac{1}{|\universe|!} \sum_{\pi\in [\universe!]} \Pr[\pi^{-1}(\sampler(\pi(\Datafixed))) = i]\\
    & = \Pr[\fpsampler(\Datafixed) = i]
\end{align*}
The third equality holds since the permutations $\pi, \pi^*$ are bijections. Thus \fpsampler is frequency-count-based.

Furthermore, the sizes of the input datasets to \fpsampler and \sampler are identical, so if \sampler takes a sample with size distributed according to some distribution $\distr$, then so does the frequency-count-based sampler \fpsampler.

Finally, \fpsampler inherits the privacy of \sampler since it simply permutes its input dataset before passing it to the $(\eps, \delta)$-differentially private sampler \sampler. Its output is a postprocessing of the output it receives from \sampler.
\end{proof}

\ifnum\neurips=1
\section{k-ary discrete distributions}
\else 
\section{\texorpdfstring{$k$}{k}-ary Discrete Distributions}
\fi 
We consider privately sampling from the class of discrete distribution over $[k] := \{1, 2, \ldots, k\}$. We call this class $\class_k$. We prove in this section that the sample complexity of this task is $\Theta(k/\alpha \eps)$, corresponding to Theorems~\ref{thm:kupperb} and~\ref{thm:intro-k-ary-lb}. The proof of Theorem~\ref{thm:intro-k-ary-lb} is split into two cases: Theorem~\ref{thm:bernoulli-lb} deals with the case where $k=2$ and Theorem~\ref{thm:k-ary-lb} deals with the case where $k \geq 3$. We combine these theorems appropriately at the end of Section~\ref{sec:k-ary-lb-final}.
 
\ifnum\neurips=1
\subsection{Optimal private sampler for \texorpdfstring{$k$}{k}-ary distributions}\label{sec:k-ary-ub}
\else 
\subsection{Optimal Private Sampler for \texorpdfstring{$k$}{k}-ary Distributions}\label{sec:k-ary-ub}
\fi

In this section, we prove Theorem~\ref{thm:kupperb}.

\begin{proof}[Proof of Theorem~\ref{thm:kupperb}]
Algorithm~\ref{alg:kary} is the desired $(\eps,0)$-differentially private sampler for $\class_k$. The algorithm computes the empirical distribution, adds Laplace noise to each count in $[k]$, and then projects the result onto $\class_k$ in order to sample from the resulting distribution. The $L_1$ projection onto $\class_k$ is defined as $L_1Proj(\distr) = \argmin_{\distr' \in \class_k} \|\distr - \distr' \|_1$.
\begin{algorithm}
    \caption{Sampler $\karysampler$ for $\class_k$}
    \label{alg:kary}
    \hspace*{\algorithmicindent} \textbf{Input:} dataset $\Datafixed \in [k]^n, \text{ parameter } \eps>0$\\
    \hspace*{\algorithmicindent} \textbf{Output:} $i\in [k]$
    \begin{algorithmic}[1] 
            \For{$j=1$ \text{to} $k$}
            \State $\hat{\distr_j} \gets \frac{1}{n}\sum_{i=1}^n \indicator_{[\datafixed_i=j]}$ \Comment{Compute the empirical distribution}
            \State $Z_j\sim \Lap(2/\eps n)$ \Comment{Sample Laplace noise}
            \State $\hat{\distr}^{noisy}_j \gets \hat{\distr}_j + Z_j$ \Comment{Compute noisy empirical estimate}
            \EndFor
            \State $\tilde{\distr} \gets L_1Proj(\hat{\distr}^{noisy})$ \Comment{Do $L_1$ projection of private empirical estimate to $\class_k$}
            \State $i\sim \tilde{\distr}$ \Comment{Sample from resulting distribution}
            \State \Return $i$
    \end{algorithmic}
\end{algorithm}

First, we argue that Algorithm~\ref{alg:kary} is $\alpha$-accurate. Let $\distr$ be the input distribution represented by a vector of length $k$. As defined in Algorithm~\ref{alg:kary}, let $\hat{\distr}$ be the empirical distribution, $\hat{\distr}^{noisy}$ be the empirical distribution with added Laplace noise, and $\tilde{\distr}$ be the distribution obtained after applying $L_1$ projection (all represented by vectors of length $k$). Then $\E_{\Datarv}[\hat{\distr}] = \distr$, since $\hat{\distr}$ is the empirical distribution of a dataset sampled from~$P$. Let $\distroutput{\karysampler, \distr}$ be the distribution of the sampler's output for dataset $\Datarv \sim \distr^n$. Then $\distroutput{\karysampler, \distr} = \E_{\Datarv, \sampler} [\tilde{\distr}],$ since the output of \karysampler is sampled from $\tilde{\distr}$. We get    
\begin{align}
   d_{TV}(\distroutput{\karysampler, \distr},\distr)
   & = \frac{1}{2}\Big\|\distr-\distroutput{\karysampler, \distr}\Big\|_1 
   = \frac{1}{2}\Big\|\distr - \E_{\Datarv, \karysampler} [\tilde{\distr}]\Big\|_1 
    = \frac{1}{2}\Big\| \E_{\Datarv, \karysampler}[ \hat{\distr} - \tilde{\distr}] \Big\|_1 \nonumber \\
   & \leq \frac{1}{2} \cdot \E_{\Datarv, \karysampler}\left[ \|\hat{\distr} - \tilde{\distr}\|_1 \right], \label{eq:upperbound_jensens}
\end{align}
where we applied Jensen's inequality in the last step of the derivation. Additionally,
\begin{align*}
\|\hat{\distr} - \tilde{\distr}\|_1  
& = \|\tilde{\distr} - \hat{\distr}^{noisy}+ \hat{\distr}^{noisy} - \hat{\distr}\|_1  \\
& \leq \|\hat{\distr} - \hat{\distr}^{noisy}\|_1 + \|\tilde{\distr} - \hat{\distr}^{noisy}\|_1 \quad & \text{By the triangle inequality} \\
& \leq 2\|\hat{\distr} - \hat{\distr}^{noisy}\|_1. \quad & \text{Definition of $L_1$ projection} 
\end{align*}
Substituting this into (\ref{eq:upperbound_jensens}), we get that
\begin{equation*}
   d_{TV}(\distr,\distroutput{\karysampler, \distr}) \leq \frac{1}{2} \cdot\E_{\Datarv, \karysampler}\Big[ 2\|\hat{\distr} - \hat{\distr}^{noisy}\|_1\Big] 
   = \E_{\karysampler}\Big[\sum_{j\in[k]} |Z_j|\Big] 
   =  \frac{2k}{n\eps}, 
\end{equation*}
since the expectation of the absolute value of a random variable distributed according to the Laplace distribution $\Lap(\frac{2}{n \eps})$ is $\frac{2}{n \eps}$. 
We conclude that with $n\geq\frac{2k}{\alpha \eps}$, Algorithm~\ref{alg:kary} is $\alpha$-accurate. 

Next, we show that Algorithm~\ref{alg:kary} is $(\eps, 0)$-differentially private. The sensitivity of a function $f: \mathcal{X}^n \to \mathbb{R}^d$ is defined as $\max_{\Datafixed,\Datafixed' \in \mathcal{X}^n, \|\Datafixed - \Datafixed'\|_0 = 1} \|f(\Datafixed) - f(\Datafixed')\|_1$. 
Recall that $\hat{P}$ is the empirical distribution of dataset $\Datafixed$ (represented by a vector of length $k$).
Changing one element of $\Datafixed$ can change only two components of $\hat{P}$ by $\frac{1}{n}$ each. Hence, the sensitivity of the empirical distribution is $\frac{2}{n}$. Algorithm~\ref{alg:kary} adds Laplace noise scaled to the sensitivity of the empirical distribution to each component of the empirical distribution and then post-processes the output. This is an instantiation of the Laplace Mechanism (proved in \cite{DworkMNS06j} to be $(\eps, 0)$-differentially private) followed by post-processing. Algorithm~\ref{alg:kary} is $(\eps, 0)$-differentially private since differential privacy is preserved under post-processing.
\end{proof}

%

\ifnum\neurips=1
\subsection{The lower bound for the class of Bernoulli distributions}
\else 
\subsection{The Lower Bound for the Class of Bernoulli Distributions}
\fi

We consider the class $\cB$ of Bernoulli distributions with an unknown bias $p.$ For all $p\in[0,1]$, distribution $\Ber(p)\in \cB$ outputs 1 with probability $p$ and 0 with probability $1-p$. Algorithm~\ref{alg:kary} for the special case of $k=2$ shows that $O(\frac 1 {\alpha\eps})$ samples are sufficient for $(\eps,0)$-differentially private $\alpha$-accurate sampling from $\cB$. In this section, we show that this bound is tight, even for $(\eps,\delta)$-differentially private samplers.

\begin{theorem}\label{thm:bernoulli-lb}
If $\eps \in (0,1],\alpha\in(0,1)$, and $\delta \leq \alpha\eps$, then every  $(\eps, \delta)$-differentially private sampler that is $\alpha$-accurate on the class $\cB$ of Bernoulli distributions requires $\Omega(\frac 1{\alpha\eps})$ samples.
\end{theorem}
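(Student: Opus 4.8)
The plan is to sidestep the usual identifiability-based lower-bound recipe, which is useless here since a single output bit reveals essentially nothing about the unknown bias $p$. Instead I would exploit the fact that accuracy on the \emph{degenerate} distribution $\Ber(0)$ pins down the sampler's behavior on one specific dataset, and then propagate that constraint to typical datasets using group privacy.

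Concretely: for $\Ber(0)$ the input is deterministically the all-zeros vector $\mathbf{0}=(0,\dots,0)$, so $\distroutput{\sampler,\Ber(0)}$ is exactly the distribution of $\sampler(\mathbf{0})$; by $\alpha$-accuracy and Claim~\ref{clm:bern_acc_eq}, $\Pr[\sampler(\mathbf{0})=1]\le\alpha$. Any dataset $\Datafixed$ with exactly $t$ ones is at Hamming distance $t$ from $\mathbf{0}$, so group privacy (Lemma~\ref{prelim:group_privacy}), together with $e^{\eps}-1\ge\eps$ and the hypothesis $\delta\le\alpha\eps$, gives
\[
\Pr[\sampler(\Datafixed)=1]\;\le\; e^{\eps t}\,\Pr[\sampler(\mathbf{0})=1]+\delta\cdot\tfrac{e^{\eps t}-1}{e^{\eps}-1}\;\le\; e^{\eps t}\alpha+\tfrac{\delta}{\eps}e^{\eps t}\;\le\; 2\alpha e^{\eps t}.
\]
Next I would fix a constant $c$ (taking $c=10$ works, assuming $\alpha$ small enough that $c\alpha\le 1$) and feed the sampler data from $P=\Ber(c\alpha)$. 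Writing $\Datarv\sim P^{\otimes n}$ and $T:=\sum_i X_i\sim\Bin(n,c\alpha)$, conditioning on $T$ and using the binomial moment generating function yields the upper bound
\[
\Pr[\sampler(\Datarv)=1]\;\le\; 2\alpha\,\E\!\left[e^{\eps T}\right]\;=\;2\alpha\bigl(1+c\alpha(e^{\eps}-1)\bigr)^{n}\;\le\; 2\alpha\exp(2nc\alpha\eps),
\]
using $1+x\le e^{x}$ and $e^{\eps}-1\le 2\eps$ for $\eps\in(0,1]$. On the other hand, $\alpha$-accuracy on $\Ber(c\alpha)$ forces $\Pr[\sampler(\Datarv)=1]\ge c\alpha-\alpha=(c-1)\alpha$. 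Chaining the two bounds, $(c-1)\alpha\le 2\alpha\exp(2nc\alpha\eps)$, i.e.\ $\exp(2nc\alpha\eps)\ge (c-1)/2$; with $c=10$ this gives $20\,n\alpha\eps\ge\ln(9/2)$ and hence $n=\Omega(1/(\alpha\eps))$, as claimed.

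Essentially every computation above is routine, so there is no single hard technical step; the main conceptual hurdle is identifying the right \emph{substitute} for identifiability, namely the tension between a privacy-induced statement (``the sampler rarely outputs $1$ on datasets with few ones'') and an accuracy-induced statement (``the sampler must output $1$ with probability $\approx c\alpha$ on $\Ber(c\alpha)$''), together with the observation that choosing the source bias $c\alpha$ a constant factor larger than $\alpha$ is exactly what makes these irreconcilable once $n$ is too small. A minor point to handle carefully is the constraint $c\alpha\le 1$: the argument (and the bound) is meaningful in the small-$\alpha$ regime, and for $\alpha$ bounded away from $0$ the target $\Omega(1/(\alpha\eps))$ is only $\Omega(1/\eps)$, so the constant $c$ must be chosen accordingly.
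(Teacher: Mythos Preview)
Your proof is correct and essentially identical to the paper's: accuracy on $\Ber(0)$ plus group privacy yields $\Pr[\sampler(\Datafixed)=1]\le 2\alpha e^{\eps t}$ for any dataset with $t$ ones, and comparing the resulting MGF upper bound against the accuracy lower bound at $\Ber(10\alpha)$ forces $n=\Omega(1/(\alpha\eps))$. The only difference is that the paper first reduces to frequency-count-based samplers via Lemma~\ref{lem:frequency-counts}, a step your argument shows is unnecessary for this particular bound.
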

\begin{proof}
The following lemma captures how differential privacy affects a sampler for Bernoulli distributions.
\begin{lemma} \label{lem:bern_sampler} 
Suppose $\delta \leq \alpha\eps$. If sampler \sampler is  $(\eps, \delta)$-differentially private and $\alpha$-accurate on the class $\cB$ of Bernoulli distributions then, for all $t\in[n],$
\begin{equation*}
    \Pr[\sampler(1^t0^{n-t}) = 1] \leq 2\alpha e^{\eps t}.
\end{equation*}
\end{lemma}

\begin{proof}
Fix $n$ and $t\in[n].$
Since $\sampler$ is $\alpha$-accurate on $\Ber(0),$ we have $\Pr[\sampler(0^n)=1]\leq\alpha.$
We start with the dataset $1^t0^{n-t}$ and replace 1s with 0s one character at a time until we reach $0^n.$
Since $\sampler$ is $(\eps, \delta)$-differentially private, its output distribution does not change dramatically with every replacement. Specifically,
\begin{align*}
    \Pr[\sampler(1^t 0^{n-t}) = 1] &\leq e^\eps \cdot \Pr[\sampler(1^{t-1} 0^{n-t+1}) = 1] + \delta\\
    &\leq  e^\eps(e^\eps \cdot \Pr[\sampler(1^{t-2} 0^{n-t+2}) = 1] + \delta)+\delta \leq\dots\\
     & \leq e^{\eps t}\cdot \Pr[\sampler(0^n)=1] + \delta\cdot \sum_{i = 0}^{t-1} e^{\eps t} 
     = e^{\eps t}\cdot \Pr[\sampler(0^n)=1] +\delta \cdot \frac{e^{\eps t} -1}{e^{\eps} - 1}\\
    & \leq  e^{\eps t}\cdot\alpha + \delta \cdot \frac{e^{\eps t} -1}{e^{\eps} - 1}
    \leq e^{\eps t}\Big(\alpha +\frac \delta \eps\Big)
    \leq 2\alpha e^{\eps t},
\end{align*}
where the last two inequalities hold because $e^\eps-1\leq\eps$ for all $\eps$ and since $\delta\leq\alpha\eps.$
\end{proof}

Consider a sampler \sampler, as described in Theorem~\ref{thm:bernoulli-lb}. By Lemma~\ref{lem:frequency-counts}, since the class $\cB$ is label-invariant, we may assume w.l.o.g.\ that \sampler is frequency-count-based. In particular, the output distribution of \sampler is the same on datasets with the same number of 0s and 1s.

Consider a Bernoulli distribution with $p= 10\alpha$. Let $T$ be a random variable that denotes the number of 1s in $n$ independent draws from $\Ber(p)$.
Then $T$ has binomial distribution $Bin(n,10\alpha).$
By Claim~\ref{clm:bern_acc_eq}  and  $\alpha$-accuracy of \sampler for $\Ber(p)$, we get
\begin{align}
    9\alpha \leq \E_{\Datarv\sim (\Ber(p))^{\otimes n}, \sampler}[\sampler(X)]
    &=\E_{T\sim Bin(n,10\alpha), \sampler}[\sampler(1^{T} 0^{n-T})]\nonumber\\
    &\leq \E_{T\sim Bin(n,10\alpha)}[2\alpha e^{\eps T}]
    =2\alpha (10\alpha(e^\eps-1)+1)^n \label{eq:ber2}\\
    &\leq 2\alpha (20\alpha\eps+1)^n
    \leq 2\alpha e^{20\alpha\eps n},\label{eq:ber3}
\end{align}
where, to get (\ref{eq:ber2}), we used Lemma~\ref{lem:bern_sampler} and then the moment generating function of the binomial distribution; in  (\ref{eq:ber3}), we used that $e^\eps-1\leq 2\eps$ for all $\eps\in (0,1]$ and, finally, that $x+1\leq e^x$ for all $x$ (applied with $x=20\alpha\eps$).
We obtained that $9\alpha \leq 2\alpha \cdot e^{200\alpha \eps n}$, so $n\geq \frac {20}{\ln 4.5} \frac 1 {\alpha\eps}$ samples are required. This completes the proof of Theorem~\ref{thm:bernoulli-lb}.
\end{proof}

\ifnum\neurips=1
\subsection{The lower bound for the class of $k$-ary distributions}\label{sec:kary}
\else 
\subsection{The Lower Bound for the Class of $k$-ary Distributions}\label{sec:kary}
\fi

In this section, we prove Theorem~\ref{thm:intro-k-ary-lb} by providing a lower bound for the universe size at least 3 (Theorem~\ref{thm:k-ary-lb}) and combining it with the previously proved lower bound for the binary case (Theorem~\ref{thm:bernoulli-lb}). The crux of the proof of Theorem~\ref{thm:k-ary-lb} is presented in Section~\ref{sec:k-ary-lb-frequency-count-based}, where we state and prove the lower bound for the special case of Poisson, frequency-count-based samplers with sufficiently small~$\eps$ (that is, a strong privacy guarantee). In Section~\ref{sec:k-ary-lb-final}, we complete the proof of the theorem by generalizing the lower bound from Section~\ref{sec:k-ary-lb-frequency-count-based}
with the help of the transformation lemmas (Lemmas~\ref{lem:poisson},~\ref{lem:amplify}, and~\ref{lem:frequency-counts}) that allow us to convert general samplers to Poisson, frequency-count-based algorithms with small privacy parameter~$\eps.$

\ifnum\neurips=1
\subsubsection{The lower bound for Poisson, frequency-count-based samplers with small $\eps$}\label{sec:k-ary-lb-frequency-count-based}
\else 
\subsubsection{The Lower Bound for Poisson, Frequency-Count-Based Samplers with Small $\eps$}\label{sec:k-ary-lb-frequency-count-based}
\fi


We start by defining the class of distributions used in our lower bound.
\begin{definition}\label{def:ksubclass}
Let \carb denote the the set of distributions with mass $1-60\alpha$ on one {\em special} element $s \in [2k+1]$, and the remaining mass uniform over a size-$k$ subset of $[2k+1] \setminus \{s\}$.
\end{definition} 
Observe that the class $\carb$ is label-invariant (Definition~\ref{def:label-invariant}). This allows us to focus on a simple class of sampling algorithms, called frequency-count-based algorithms (Definition~\ref{def-freqcountbased}), to prove our lower bound.

\begin{lemma}\label{lem:main-k-ary-lb} 
 Fix $k,n \in{\mathbb N}, \alpha \in (0,0.02], \eps \in (0,1/\ln (1/\alpha)],$ and $\delta \in [0, 0.1\alpha\eps/k]$. Let \carb denote the subclass of discrete distributions over the universe $[2k+1]$ specified in the previous paragraph. Let $\alpha^*=60 \alpha$.
 If sampler \sampler is $(\eps, \delta)$-differentially private, frequency-count-based, and $\alpha$-accurate on class \carb with dataset size distributed as $\Po(n)$, then $n > \frac 1 {60}\cdot \frac{k}{\alpha \eps}$. 
\end{lemma}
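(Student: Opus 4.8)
The plan is a proof by contradiction: assume $n \le \frac{k}{\alpha^*\eps}$ (where $\alpha^*=60\alpha$) and exhibit a distribution $\distr\in\carb$ on which $\sampler$ outputs an element outside $Supp(\distr)$ with probability strictly larger than $\alpha$, contradicting $\alpha$-accuracy via $d_{TV}(\distroutput{\sampler,\distr},\distr)\ge\Pr[\sampler(\Datarv)\notin Supp(\distr)]$. Concretely, I would fix the special element $s=2k+1$ and a set $S^*\subseteq[2k]$ of size $k$, and let $\distr$ put mass $1-\alpha^*$ on $s$ and $\alpha^*/k$ on each element of $S^*$, so that the $k$ elements of $[2k]\setminus S^*$ lie outside the support and never appear in a sample. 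Since $\sampler$ is frequency-count-based, the probability it returns a fixed universe element equals $p_{j,F(\Datafixed)}$, where $j$ is the number of its occurrences in the dataset $\Datafixed$ and $F(\Datafixed)$ is the frequency-count vector; in particular $\Pr[\sampler(\Datafixed)\notin Supp(\distr)] = k\,p_{0,F(\Datafixed)} \ge \tfrac12 F_0^*(\Datafixed)\,p_{0,F(\Datafixed)}$, where $F_0^*(\Datafixed)\in[k,2k]$ counts elements of $[2k]$ absent from $\Datafixed$.

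The next step is to upgrade the $(\eps,\delta)$-DP guarantee into a quantitative inequality of the form $p_{j,f}\le e^{\eps j}\,(p_{0,f}+\delta/\eps)$. To see this, take an element $a$ occurring $j$ times in $\Datafixed$ and an out-of-support element $b$, and replace every copy of $a$ by $b$: this changes $j$ records but leaves the frequency-count vector unchanged, so group privacy (Lemma~\ref{prelim:group_privacy}) together with $e^\eps-1\ge\eps$ gives the claim. Summing this bound over all $i\in[2k]$ — splitting into the $F_0^*(\Datafixed)$ elements absent from $\Datafixed$ and the ones present, which all lie in $S^*$ — and abbreviating $Y=\sum_{i\in S^*}e^{\eps N_i(\Datafixed)}$ (with $N_i$ the count of $i$), I would obtain $\Pr[\sampler(\Datafixed)\in[2k]] \le (F_0^*(\Datafixed)+Y)(p_{0,F(\Datafixed)}+\delta/\eps)$. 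Rearranging for $p_{0,F(\Datafixed)}$ and using $k\le F_0^*(\Datafixed)\le 2k$ yields the pointwise bound
\[
\Pr[\sampler(\Datafixed)\notin Supp(\distr)] \;\ge\; \frac12\cdot\frac{\Pr[\sampler(\Datafixed)\in[2k]]}{1+Y/k}\;-\;\frac{k\delta}{\eps}.
\]

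Finally I would take expectation over $N\sim\Po(n)$ and $\Datarv\sim\distr^{\otimes N}$. Poissonization (Lemma~\ref{lem:multtopois}) makes the counts $\{N_i(\Datarv)\}_{i\in S^*}$ mutually independent, each $\Po(\alpha^* n/k)$. The numerator is easy — by $\alpha$-accuracy and $\distr([2k])=\alpha^*$, the sampler outputs into $[2k]$ with probability at least $\alpha^*-\alpha$ — so the whole game is controlling the denominator $1+Y/k$. I would condition on the event $E=\{Y/k\le e^3\}$: on $E$ the denominator is at most a constant ($<22$), which reduces the task to lower bounding $\Pr(E)$, and $\Pr(\overline E)$ can itself be bounded by $\alpha$ via Markov applied to the $\lambda$-th moment of $Y/k$ with $\lambda=\ln(1/\alpha)$, using that the $\lambda$-th moment of an average is at most that of a single term, the Poisson moment generating function, the assumption $n\le\frac{k}{\alpha^*\eps}$, and (crucially) $\lambda\eps\le1$ — i.e. $\eps\le1/\ln(1/\alpha)$ — to get $e^{\lambda\eps}-1\le2\lambda\eps$. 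Combining the pieces with $\delta\le0.1\alpha\eps/k$ and $\alpha\le0.02$ gives $\Pr[\sampler(\Datarv)\notin Supp(\distr)]>\alpha$, the desired contradiction.

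I expect the last step — lower-bounding $\E\big[\Pr[\sampler(\Datarv)\in[2k]]/(1+Y/k)\big]$ — to be the main obstacle, because $Y$ is a sum of exponentials of independent Poissons and can be atypically enormous, so one cannot simply apply Jensen's inequality in the wrong direction. The resolution is exactly the truncation to the high-probability event $E$ together with the moment bound that leverages the smallness of $\eps$, which is why this lemma is stated only for $\eps\le1/\ln(1/\alpha)$ and the general case is recovered elsewhere through the privacy-amplification and Poissonization transformations.
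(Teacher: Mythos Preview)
Your proposal is correct and follows essentially the same approach as the paper: the same hard distribution $\distr$, the same group-privacy bound $p_{j,f}\le e^{\eps j}(p_{0,f}+\delta/\eps)$ via replacing all copies of an element by an out-of-support element, the same pointwise inequality in terms of $Y=\sum_{i\in S^*}e^{\eps N_i}$, and the same truncation to $\{Y/k\le e^3\}$ controlled by a $\lambda$-th moment bound with $\lambda=\ln(1/\alpha)$. The final numerical combination in the paper matches yours.
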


\begin{proof}
We consider the following distribution $\distr\in \carb.$ Let $\alpha^*=60 \alpha$. Fix a set $S^*\subset[2k]$ of size $k.$ Distribution $\distr$ has mass $\alpha^*/k$ on each element in $S^*$ and mass $1-\alpha^*$ on the {\em special} element $2k+1.$ 
Consider a sampler \sampler satisfying the conditions of Lemma~\ref{lem:main-k-ary-lb}. Let $\distroutput{\sampler, \distr}$ denote the output distribution of $\sampler$ when the dataset size $N \sim \Po(n)$ and the dataset $\Datarv \sim \distr^{\otimes N}$.
Observe that
\begin{eqnarray}\label{eq:main-dist-lb-delta}
d_{TV}(\distroutput{\sampler,\distr}, \distr)
  \geq \Pr_{\substack{N\sim\Po(n) \\ \Datarv\sim  \distr^{\otimes N}}}[\sampler(\Datarv) \notin Supp(\distr)].  
\end{eqnarray}
We will show that when $n\leq \frac k{60\alpha \eps}$  and $\eps$ and $\delta$ are in the specified range, the right-hand side of (\ref{eq:main-dist-lb-delta}) is large. 

We start by deriving a lower bound on $\Pr[\sampler(\Datafixed)\notin Supp(\distr)]$ for a fixed dataset $\Datafixed$ of a fixed size $N$. Since \sampler is frequency-count-based, the probability that it outputs a specific element in $[2k]$ that occurs $0$ times in $\Datafixed$ is $p_{0,F(\Datafixed)}$. Let $F^*_0(\Datafixed)$ denote the number of elements in $[2k]$ that occur $0$ times in $\Datafixed$ (note that the special element $2k+1$ is excluded from this count).
 By definition, $F^*_0(\Datafixed)\leq 2k.$ Consequently,
\begin{equation}\label{eq:notinsupport-delta}
\Pr[\sampler(\Datafixed) \notin Supp(\distr)] = k\cdot p_{0,F(\Datafixed)} \geq \frac{1}{2}\cdot  F^*_0(\Datafixed)\cdot p_{0,F(\Datafixed)}.
\end{equation}

The next claim uses the fact that sampler \sampler is $(\eps,\delta)$-differentially private to show that the probability $p_{j,F(\Datafixed)}$ (that \sampler outputs some specific element in the universe $\universe$  that appears $j$ times in the dataset $\Datafixed$) cannot be much larger than the probability that \sampler outputs a specific element in $\universe$ that does not appear in $\Datafixed$.

\begin{claim}\label{clm:epsdelfing}
For every $(\eps, \delta)$-differentially private sampler and every frequency count $f \in \mathbb{Z}^*$ and index $j \in \universe$,
\begin{equation}\label{eq:grouppriv}
 p_{j,f} 
 \leq e^{\eps j} \left( p_{0,f} + \frac{\delta}{\eps} \right).
\end{equation}

\end{claim}

\begin{proof}
Consider a frequency count $f$ and a dataset $\Datafixed$ with $F(\Datafixed) = f$. Note that (\ref{eq:grouppriv}) is true trivially for all $j$ such that $F_j(\Datafixed)=0$ because, in that case, $p_{j,F(\Datafixed)}$ is set to $0$.

Fix any $j \in \universe$ such that $F_j(\Datafixed) > 0$. Let $a$ be any element in $\universe$ that occurs $j$ times in the dataset $\Datafixed$. Let $b$ be any element in $\universe$ that is not in the support of the distribution $\distr$. Let $\Datafixed|_{a\rightarrow b}$ denote the dataset obtained by replacing every instance of $a$ in the dataset $\Datafixed$ with element $b$. By group privacy \cite{DworkMNS06j},
\begin{equation}\label{eq:group_privacy-delta}
\Pr[\sampler(\Datafixed) = a] \leq e^{\eps j} \Pr[\sampler( \Datafixed|_{a \rightarrow b}) = a ] + \delta \cdot \frac{e^{\eps j} -1}{e^{\eps} - 1}.
\end{equation}
Note that the dataset $\Datafixed|_{a\rightarrow b} $ does not contain element $a$, since we've replaced every instance of it with $b$. Importantly, $F(\Datafixed|_{a \rightarrow b}) = F(\Datafixed)$ because $b$ is outside of the support of the distribution $\distr$ and hence does not occur in $\Datafixed$. Since $\sampler$ is frequency-count-based and $F(\Datafixed) = F(\Datafixed|_{a \rightarrow b})$, we get that $p_{0,F(\Datafixed)} = p_{0,F(\Datafixed|_{a \rightarrow b})}$. 
Substituting this into (\ref{eq:group_privacy-delta}) and using the fact that $e^{\eps}-1\geq \eps$ for all $\eps$, we get that
\begin{equation*}
p_{j,F(\Datafixed)} \leq e^{\eps j}\cdot p_{0,F(\Datafixed)} + \delta \cdot \frac{e^{\eps j} - 1}{e^\eps -1}
\leq e^{\eps j} \left( p_{0,f} + \frac{\delta}{\eps} \right).
\end{equation*}
This completes the proof of Claim~\ref{clm:epsdelfing}.
\end{proof}

For a dataset $\Datafixed$ and $i\in[2k+1]$, let $N_i(\Datafixed)$ denote the number of occurrences of element $i$ in $\Datafixed$.
Next, we give a lower bound on $\Pr[\sampler(\Datafixed) \notin Supp(\distr)]$ in terms of the counts $N_i(\Datafixed)$.

\begin{claim}\label{clm:nonsupport-lb-fixed-delta}
Let $N\in\mathbb{N}$ and $\Datafixed \in [2k+1]^N$ be a fixed dataset. Set 
$Y=\sum_{i \in S^*}  \left[ e^{N_i(\Datafixed) \eps} \right]$.
Then
$$\Pr[\sampler(\Datafixed) \notin Supp(\distr)] \geq\frac{1}{2}
\cdot\frac{\Pr[\sampler(\Datafixed) \in [2k]]}{1+Y/k} -\frac{k\delta}\eps.$$
\end{claim}

\begin{proof}
In the following derivation, we use the fact that that an element $j\in[2k]$ that appears $j$ times in $\Datafixed$ is returned by \sampler with probability $p_{j,F(\Datafixed)}$, then split the elements into those that do not appear in $\Datafixed$ and those that do, next use the fact that all elements from $[2k]$ that appear in $\Datafixed$ must be in $S^*$, then apply Claim~\ref{clm:epsdelfing}, and finally substitute $Y$ for $\sum_{i \in S^*}  \left[ e^{N_i(\Datafixed) \eps} \right]$:
\begin{align*}
\Pr[\sampler(\Datafixed) \in[2k]] 
&=\sum_{i\in[2k]}  p_{N_i(\Datafixed),F(\Datafixed)} 
=F^*_0(\Datafixed)\cdot p_{0,F(\Datafixed)}+\sum_{i\in[2k]\cap\Datafixed}  p_{N_i(\Datafixed),F(\Datafixed)} \\
&\leq F^*_0(\Datafixed)\cdot p_{0,F(\Datafixed)}+\sum_{i\in S^*}  p_{N_i(\Datafixed),F(\Datafixed)}\\
&\leq F^*_0(\Datafixed)\cdot p_{0,F(\Datafixed)}+\sum_{i\in S^*}  p_{0,F(\Datafixed)}\cdot \left(e^{\eps N_i(\Datafixed)} + \frac{\delta}{\eps} \right)\\
&\leq \Big(F^*_0(\Datafixed)+Y\Big)\Big(p_{0,F(\Datafixed)}+  \frac{\delta}{\eps}\Big).
\end{align*} 
We rearrange the terms to get
$$
p_{0,F(\Datafixed)}
\geq \frac{\Pr[\sampler(\Datafixed) \in[2k]]}{F^*_0(\Datafixed)+Y}-\frac \delta \eps.
$$
Substituting this bound on $p_{0,F(\Datafixed)}$ into (\ref{eq:notinsupport-delta}), we obtain
\begin{align*}
   \Pr[\sampler(\Datafixed) \notin Supp(\distr)] 
   &\geq \frac{1}{2} \cdot \frac {F^*_0(\Datafixed)\Pr[\sampler(\Datafixed) \in [2k]]}{F^*_0(\Datafixed) + Y} -\frac 12 \cdot\frac{F^*_0(\Datafixed)\cdot\delta}{\eps} \\
   &= \frac{1}{2} \cdot \frac {\Pr[\sampler(\Datafixed) \in [2k]]}{1 + Y/F^*_0(\Datafixed)} -\frac 12 \cdot\frac{F^*_0(\Datafixed)\cdot\delta}{\eps}\\
   &\geq\frac{1}{2}
\cdot\frac{\Pr[\sampler(\Datafixed) \in [2k]]}{1+Y/k} -\frac{k\delta}\eps,
\end{align*}
where in the last inequality, we used that $k\leq F^*_0(\Datafixed) \leq 2k$.
This holds since the support of $\distr$ excludes $k$ elements from $[2k]$ and since $F^*_0(\Datafixed)$ counts only elements from $[2k]$ that do not appear in $\Datafixed.$
\end{proof}

Finally, we give a lower bound on the right-hand side of (\ref{eq:main-dist-lb-delta}). Assume for the sake of contradiction that $n\leq \frac{k}{\alpha^* \eps}$. 
By Claim~\ref{clm:nonsupport-lb-fixed-delta},
\begin{align}
 \Pr_{\substack{N\sim\Po(n) \\ \Datarv\sim  \distr^{\otimes N}}}[\sampler(\Datarv) \notin Supp(\distr)]
 &\geq \E_{\substack{N\sim\Po(n) \\ \Datarv\sim  \distr^{\otimes N}}}\left[\frac{1}{2}
\cdot \frac{\Pr[\sampler(\Datarv) \in [2k]]}{1+Y/k} -\frac{k\delta}\eps\right]\nonumber\\
& = \frac{1}{2} \cdot \E_{\substack{N\sim\Po(n) \\ \Datarv\sim  \distr^{\otimes N}}}\left[\frac{\Pr[\sampler(\Datarv) \in [2k]]}{1+Y/k}\right] -  \frac{k\delta}{\eps}. \label{eq:mainlbcond}
\end{align}
Next, we analyze the expectation in (\ref{eq:mainlbcond}). Let $E$ be the event that $\frac{Y}{k} \leq e^3$. By the law of total expectation, 
\begin{align}\label{eq:mainlbcondfirst}
     \E_{\substack{N\sim\Po(n) \\ \Datarv\sim  \distr^{\otimes N}}}\left[\frac{\Pr[\sampler(\Datarv) \in [2k]]}{1+Y/k}\right]
     & \geq \E_{\substack{N\sim\Po(n) \\ \Datarv\sim  \distr^{\otimes N}}}\left[\frac{\Pr[\sampler(\Datarv) \in [2k]]}{1+Y/k} \big | E\right] \Pr(E).
\end{align}
In Claims~\ref{claim:eventE} and~\ref{claim:exp-of-regular-output}, we argue that both $\Pr(E)$ and $\E_{\substack{N\sim\Po(n) \\ \Datarv\sim  \distr^{\otimes N}}}\left[\frac{\Pr[\sampler(\Datarv) \in [2k]]}{1+Y/k} \big | E\right]$ are sufficiently large.
\begin{claim}\label{claim:eventE}
Suppose $n\leq \frac k{60\alpha \eps}$. Let $E$ be the event that $\frac{Y}{k} \leq e^3$. Then
    $\Pr(E) \geq 1 - \alpha$.
    \end{claim}
\begin{proof}
Recall that $Y$ was defined as $\sum_{i \in S^*}  \left[ e^{N_i(\Datafixed) \eps} \right]$ for a fixed dataset $\Datafixed.$ Now we consider the case when dataset $\Datarv$ is a random variable. 
If $N\sim\Po(n)$ and $\Datarv \sim \distr^{\otimes N}$ then $N_i(\Datarv) \sim \Po(\frac{\alpha^* n}{k})$ for all $i \in S^*$ and, additionally, the random variables $N_i(\Datarv)$ are mutually independent. When $\Datarv$ is clear from the context, we write $N_i$ instead of $N_i(\Datarv)$. Now we calculate the moments of $\frac{Y}{k}$.  For all $\lambda > 0$, 
\begin{align}
\E_{\substack{N\sim\Po(n) \\ \Datarv \sim  \distr^{\otimes N}}} \left[\left(\frac{Y}{k}\right)^{\lambda} \right] 
 = \E_{\substack{N\sim\Po(n) \\ \Datarv \sim  \distr^{\otimes N}}} \left[\left(\frac{1}{k} \sum_{i \in S^*} e^{N_i(\Datarv) \eps} \right)^{\lambda} \right] 
 = \E_{N_1, \dots, N_k \sim\Po(\frac{\alpha^* n}{k})} \left[ \left(\frac{1}{k} \sum_{i \in S^*} e^{N_i \eps} \right)^{\lambda} \right]. \label{eq:mompoiss}
\end{align}


Finally, we bound the probability of event $E$. Set $c=e^3$ and $\lambda = \ln \frac{1}{\alpha}$. By definition of $E$, 
\begin{align}
    \Pr(\overline{E}) 
    & = \Pr\left(\frac{Y}{k} \geq c\right)\nonumber 
     = \Pr\left(\left(\frac{Y}{k}\right)^\lambda \geq c^{\lambda}\right) 
    \leq \frac 1{c^{\lambda}}\cdot {\E_{\substack{N\sim\Po(n) \\ \Datarv \sim  \distr^{\otimes N}}}
    \left[ \left(\frac{Y}{k}\right)^\lambda \right]} \nonumber \\
    & \leq  \frac 1{c^{\lambda}}\cdot {\E_{N_1, \dots, N_k \sim\Po(\frac{\alpha^* n}{k})} \left[ \left(\frac{1}{k} \sum_{i \in S^*} e^{N_i \eps} \right)^{\lambda} \right]} 
     \leq  \frac 1{c^{\lambda}}\cdot{\E_{N_1 \sim\Po(\frac{\alpha^* n}{k})} \left[ \left( e^{N_1 \eps} \right) ^{\lambda} \right]} \label{eq:moments2} \\
    & = c^{-\lambda}\cdot {e^{\frac{\alpha^* n}{k}\left(e^{\lambda \eps} - 1 \right)}}
    \leq e^{-3\lambda}\cdot {e^{\frac{\left(e^{\lambda \eps} - 1 \right)}{\eps}}}
    \leq e^{-3\lambda}\cdot e^{2\lambda}= e^{-\lambda}
    =e^{-\ln (1/\alpha)}=\alpha, \label{eq:moments3}
\end{align}
where we use $\lambda > 0$ in the second equality, then apply Markov's inequality. To get the inequalities in (\ref{eq:moments2}), we apply (\ref{eq:mompoiss}) and then Claim~\ref{claim:momavg} on the moments of the average of random variables. To get (\ref{eq:moments3}), we use the moment generating function of a Poisson random variable, and then we substitute $c=e^3$ and use the assumption that $n\leq \frac k{60\alpha \eps} =\frac k{\alpha^*\eps}$. The second inequality in (\ref{eq:moments3}) holds because $\lambda = \ln \frac{1}{\alpha}$ and $\eps \in (0,1/\ln \frac{1}{\alpha}]$, so $\lambda \eps \leq 1$ and hence $e^{\lambda \eps} \leq 1 + 2\lambda \eps$.
The final expression is obtained by substituting the value of $\lambda.$
We get that $\Pr(E) \geq 1-\alpha$, completing the proof of Claim~\ref{claim:eventE}.
\end{proof}
\begin{claim}\label{claim:exp-of-regular-output}
$\displaystyle\E_{\substack{N\sim\Po(n) \\ \Datarv\sim  \distr^{\otimes N}}}\left[\frac{\Pr[\sampler(\Datarv) \in [2k]]}{1+Y/k} \big| E \right] \geq 2.3\alpha.$
\end{claim}
\begin{proof}
When event $E$ occurs, $1+\frac{Y}{k} \leq 1+e^3<22$. Then  
\begin{align}\label{eq:conditioning}
    \E_{\substack{N\sim\Po(n) \\ \Datarv\sim  \distr^{\otimes N}}}\left[\frac{\Pr[\sampler(\Datarv) \in [2k]]}{1+Y/k} \big | E \right]
    & > \E_{\substack{N\sim\Po(n) \\ \Datarv\sim  \distr^{\otimes N}}}\left[\frac{\Pr[\sampler(\Datarv) \in [2k]]}{22} \big | E\right] 
     = \frac 1{22}\cdot\E_{\substack{N\sim\Po(n) \\ \Datarv\sim  \distr^{\otimes N}}}\left[\Pr[\sampler(\Datarv) \in [2k]] \mid E \right].
\end{align}
By the product rule,
$$\Pr[\sampler(\Datarv) \in [2k]] \mid E ]
=\frac{\Pr[\sampler(\Datarv) \in [2k]] \wedge E]}{\Pr[E]}
\geq \Pr[\sampler(\Datarv) \in [2k]] \wedge E]
\geq  {\Pr[\sampler(\Datarv) \in [2k]] - \Pr[\overline{E}]}.$$
Substituting this into (\ref{eq:conditioning}) and recalling that $\alpha^*=60\alpha$, we get
\begin{align*}
    \E_{\substack{N\sim\Po(n) \\ \Datarv\sim  \distr^{\otimes N}}}\left[\frac{\Pr[\sampler(\Datarv) \in [2k]]}{1+Y/k} \big | E \right]
    &\geq  \frac 1{22}\cdot\E_{\substack{N\sim\Po(n) \\ \Datarv\sim  \distr^{\otimes N}}}\left[\Pr[\sampler(\Datarv) \in [2k]] - \Pr[\overline{E}] \right]
     \geq \frac 1{22}\cdot \left( \alpha^* - \alpha - \alpha \right) 
     \geq 2.3\alpha,
\end{align*}
since sampler $\sampler$ is $\alpha$-accurate on $\distr$, and  $\distr$ has mass $\alpha^*$ on $[2k]$, and by Claim~\ref{claim:eventE}. 
\end{proof}

Combining (\ref{eq:main-dist-lb-delta}), (\ref{eq:mainlbcond}), and (\ref{eq:mainlbcondfirst}), applying Claims~\ref{claim:eventE} and~\ref{claim:exp-of-regular-output}, and recalling that $\delta\leq 0.1\cdot\alpha\eps/k$, we get
\begin{align*}
    d_{TV}(\distr,\distroutput{\sampler, \distr})
  &\geq \Pr_{\substack{N\sim\Po(n) \\ \Datarv\sim  \distr^{\otimes N}}}[\sampler(\Datarv) \notin Supp(\distr)]
  \geq \frac{1}{2} \cdot \E_{\substack{N\sim\Po(n) \\ \Datarv\sim  \distr^{\otimes N}}}\left[\frac{\Pr[\sampler(\Datarv) \in [2k]]}{1+Y/k}\right]  - \frac{k\delta}{\eps} \\
  &\geq \frac 12\cdot   \E_{\substack{N\sim\Po(n) \\ \Datarv\sim  \distr^{\otimes N}}}\left[\frac{\Pr[\sampler(\Datarv) \in [2k]]}{1+Y/k} \big | E\right] \Pr(E) - 0.1\alpha 
  \geq  \frac{1}{2} \cdot 2.3\alpha\cdot \left(1 - \alpha\right) - 0.1\alpha 
  > \alpha,
\end{align*}
where the last inequality holds since $\alpha\leq 0.02$. This contradicts $\alpha$-accuracy of $\sampler$ on datasets of size $\Po(n)$, where $n\leq \frac{k}{\alpha^* \eps}$, and completes the proof of Lemma~\ref{lem:main-k-ary-lb}.
\end{proof}

Next, we prove our lower bound by removing the assumptions that $\eps$ is small and that the samplers are frequency-count-based. This uses the properties of samplers that we proved in Section~\ref{sec:properties}.

\begin{lemma}\label{lem:k-ary-lb-pois}
For all sufficiently small $\alpha>0$, $k,n \in \mathbb{N}$, $\eps \in (0, 1]$, 
and $\delta \in \big[0, \frac{1}{5000n} \big]$, if there exists an $(\eps,\delta)$-differentially private sampler that is $\alpha$-accurate on the class $\carb$ with dataset size distributed as $\Po(n)$, then $n=\Omega(\frac{k}{ \alpha\eps})$.
\end{lemma}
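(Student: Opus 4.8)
The plan is to reduce the given sampler to one of the restricted type handled by Lemma~\ref{lem:main-k-ary-lb} --- Poisson-sized, frequency-count-based, and $(\eps,\delta)$-DP with small $\eps$ and small $\delta$ --- and then invoke that lemma. The dataset size is already distributed as $\Po(n)$, so Lemma~\ref{lem:poisson} is not needed here; the two transformations that are needed are Lemma~\ref{lem:amplify} (privacy amplification by subsampling) and Lemma~\ref{lem:frequency-counts} (conversion to a frequency-count-based sampler, legitimate because $\carb$ is label-invariant). The obstacle to applying Lemma~\ref{lem:main-k-ary-lb} directly is that it demands $\eps\le 1/\ln(1/\alpha)$ and $\delta\le 0.1\alpha\eps/k$, whereas we are given only $\eps\le 1$ and $\delta\le 1/(5000n)$.

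First I would apply Lemma~\ref{lem:amplify} $m$ times, distributing a total subsampling factor $B:=\min\bigl(\tfrac{1}{\eps\ln(1/\alpha)},\tfrac12\bigr)\in(0,1)$ evenly across the rounds (each round using parameter $\beta=B^{1/m}$). After these $m$ rounds the sampler is $\bigl(\eps B,\ \delta B/2^{m}\bigr)$-DP, is still $\alpha$-accurate on $\carb$, and has dataset size distributed as $\Po\bigl(2^{m}n/B\bigr)$; applying Lemma~\ref{lem:frequency-counts} then makes it frequency-count-based with the same parameters. The point of iterating is that each round multiplies $\eps$ by $\beta$ but $\delta$ by $\beta/2$, so the ratio $\delta/\eps$ genuinely halves each round, unlike under a single amplification. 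Choosing $m:=\max\bigl(1,\lceil\log_2(10\delta k/(\alpha\eps))\rceil\bigr)$ forces $2^{m}\ge 10\delta k/(\alpha\eps)$, hence $\delta B/2^{m}\le 0.1\,\alpha(\eps B)/k$; combined with $\eps B\le 1/\ln(1/\alpha)$ (immediate from the choice of $B$) and $\alpha\le 0.02$ (legitimate since we only claim the result for sufficiently small $\alpha$), all hypotheses of Lemma~\ref{lem:main-k-ary-lb} are satisfied, so it yields $2^{m}n/B > \tfrac{1}{60}\cdot\tfrac{k}{\alpha\,\eps B}$, i.e.\ $n > \tfrac{k}{60\cdot 2^{m}\,\alpha\eps}$.

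It remains to turn this into the clean bound $n=\Omega(k/(\alpha\eps))$, and this is exactly where $\delta\le 1/(5000n)$ is used. If $10\delta k/(\alpha\eps)\le 1$ then $m=1$ and we get $n > k/(120\,\alpha\eps)$, as desired. Otherwise $2^{m}< 20\delta k/(\alpha\eps)$, so $n > \tfrac{k}{60\cdot 2^{m}\,\alpha\eps} > \tfrac{1}{1200\,\delta}$; plugging in $\delta\le 1/(5000n)$ gives $n > 5000n/1200 > n$, a contradiction, so this case cannot occur. Hence $n > k/(120\,\alpha\eps) = \Omega(k/(\alpha\eps))$. For the degenerate case $\delta=0$, one applies Lemma~\ref{lem:amplify} with $\delta$ replaced by an arbitrarily small positive number (or simply notes that its subsampling construction preserves $(\eps B,0)$-DP); then $m=1$ suffices and the same bound follows.

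The step I expect to be the main obstacle is precisely this interplay: a single amplification round is useless for the $\delta$-constraint, because $\delta\le 0.1\alpha\eps/k$ is scale-invariant under amplification (both sides get multiplied by the same factor $B$), so one is forced to iterate and pay a $2^{m}$ factor in the dataset size, and the argument closes only because the hypothesis $\delta\le 1/(5000n)$ caps how large $m$ must be. A secondary point requiring care is the order of the transformations: amplification must be applied before Lemma~\ref{lem:frequency-counts}, since we have not shown that amplification preserves the frequency-count-based property, and one should check that accuracy (with the same $\alpha$) and the Poisson dataset-size distribution survive each step --- which they do by the statements of Lemmas~\ref{lem:amplify} and~\ref{lem:frequency-counts}.
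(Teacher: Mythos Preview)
Your proof is correct, and it takes a genuinely different route from the paper's.

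The paper proceeds in two stages: first it proves the bound only for the restricted range $\delta\le 0.1\,\alpha\eps/k$, using a \emph{single} invocation of Lemma~\ref{lem:amplify} (with $\beta\approx 1/\ln(1/\alpha)$) just to push $\eps$ below $1/\ln(1/\alpha)$, followed by Lemma~\ref{lem:frequency-counts} and Lemma~\ref{lem:main-k-ary-lb}. It then extends to the full range $\delta\le 1/(5000n)$ by a reparametrization argument: one finds $(k',\alpha',\eps')$ with $k'\le k$, $\alpha'\ge\alpha$, $\eps'\ge\eps$ such that $n\le k'/(480\alpha'\eps')<1/(5000\delta)$ (forcing $\delta<0.1\,\alpha'\eps'/k'$), and observes that the given sampler is also an $(\eps',\delta)$-DP, $\alpha'$-accurate sampler for the smaller problem, to which the first stage applies. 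Your approach instead handles both the $\eps$-constraint and the $\delta$-constraint at once, by iterating Lemma~\ref{lem:amplify} and exploiting the fact that each invocation halves the ratio $\delta/\eps$ (the slack comes from Lemma~\ref{lem:amplify} subsampling at rate $\beta/2$ while bounding the new $\eps$ by $\eps\beta$). Your argument is more self-contained --- it stays entirely within the fixed class $\carb$ and never reparametrizes $k,\alpha,\eps$ --- whereas the paper's approach keeps the amplification step minimal but needs the extra observation that the sampler remains accurate for subproblems with different parameters. A minor remark: the same effect as your $m$-fold iteration is obtained by a single subsample at rate $B/2^m$ together with the underlying amplification theorem (Theorem~\ref{thm:LQS12}), since the resulting $\eps'$ is still at most $\eps B$; the iteration is just a clean way to stay within the black-box statement of Lemma~\ref{lem:amplify}.
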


\begin{proof}
First, we will prove the lemma assuming the range of $\delta$ is $\delta \in \big[0, 0.1 \cdot \frac{\alpha \eps}{k} \big]$. Then we will extend to the claimed range of $\delta$. We prove the lemma for $\delta \in \big[0, 0.1 \cdot \frac{\alpha \eps}{k} \big]$ by applying Lemmas~\ref{lem:amplify} and~\ref{lem:frequency-counts} to generalize the lower bound in Lemma~\ref{lem:main-k-ary-lb} to work for all differentially private samplers and all privacy parameters $\eps\in(0,1]$.

Suppose there exists an $(\eps, \delta)$-differentially private sampler $\sampler$ that is $\alpha$-accurate on the class $\carb$ with dataset size distributed as $\Po(n)$, for some $n \in \mathbb{N}, \eps \in (0,1], \delta \in \big[0, 0.1\cdot\frac{\alpha\eps}k \big]$, and $\alpha \in (0,0.01]$. 

By Lemma~\ref{lem:amplify}, we can amplify the privacy to construct an $(\eps',\delta')$-differentially private sampler~$\sampler'$  that is $\alpha$-accurate for datasets with size distributed as $\Po(4n\ln(1/\alpha'))$,
$\eps'=\frac{\eps}{\ln(1/\alpha)},$ and $\delta'=\frac{\delta}{2\ln(1/\alpha)}$. Then $\eps' \leq \frac{1}{\ln(1/\alpha)}, \delta'\leq 0.01 \frac{\alpha\eps'}k,$ and $\alpha\leq 0.02,$ as required to apply Lemma~\ref{lem:main-k-ary-lb} with privacy parameters $\eps',\delta'$ and  accuracy parameter $\alpha$. By Lemma~\ref{lem:frequency-counts}, we can assume the sampler is frequency-count-based with no changes in the privacy and accuracy parameters. Now, applying Lemma~\ref{lem:main-k-ary-lb} gives 
\begin{align*}
    4n\ln(1/\alpha) \geq \frac{1}{60} \cdot\frac{k}{2\alpha \cdot \frac{\eps}{\ln(1/\alpha)}}\;.
\end{align*}
Therefore, $n \geq \frac{k}{480\alpha\eps}=\Omega(\frac k {\alpha\eps})$.

Next, we extend this argument to all $\delta \in [0, \frac{1}{5000n}]$. Observe that when $n < \frac{k}{480 \alpha \eps} < \frac{1}{5000\delta}$, a direct application of our theorems proves the lower bound. When $n < \frac{1}{5000\delta} < \frac{k}{480 \alpha \eps}$, assume by way of contradiction that there exists an $(\eps, \delta)$-DP sampler that is $\alpha$-accurate on the class of $k$-ary distributions for input datasets with size distributed as $\Po(n)$. 
 
One can find a triple of values $(k',\alpha',\eps')$ such that $2 \leq k'\leq k$, $0.01\geq\alpha'\geq \alpha$ and $1 \geq \eps' \geq \eps$ such that $n\leq  \frac{k'}{480 \alpha'\eps'} <\frac 1 {5000 \delta}$. In particular, $\delta < 0.1\cdot \frac{\alpha'\eps'}{k'}$.  
Next, note that a sampler that achieves accuracy $\alpha < 0.01$ is also a sampler that achieves accuracy $\alpha'$ for all $0.01 \geq \alpha' > \alpha$. Additionally, since the class of discrete distributions over $[k']$ is a subclass of the class of discrete distributions over $[k]$, an $(\eps,\delta)$-differentially private sampler that is $\alpha$-accurate on the class of discrete distributions over $[k]$ when given an input dataset with size distributed as $\Po(n)$ is $(\eps',\delta)$-differentially private and $\alpha'$-accurate on the class of discrete distributions over $[k']$ with sample size distributed as $\Po(n)$.
We can then apply the lower bound obtained for the range $\delta \in \big[0,0.1 \cdot \frac{\alpha' \eps'}{k'}\big]$ to show that no such sampler exists. This proves the theorem for all $\delta \in [0, \frac{1}{5000n}]$.
\end{proof}

\ifnum\neurips=1
\subsubsection{Final lower bound for $k$-ary distributions}\label{sec:k-ary-lb-final}
\else 
\subsubsection{Final Lower Bound for $k$-ary Distributions}\label{sec:k-ary-lb-final}
\fi

In this section, we complete the proof of Theorem~\ref{thm:k-ary-lb}. 

\begin{theorem}\label{thm:k-ary-lb}
For all sufficiently small $\alpha>0$, $k,n \in \mathbb{N}$, $\eps \in (0, 1]$, , 
and $\delta \in \big[0, \frac{1}{5000n}\big]$, if there exists an $(\eps,\delta)$-differentially private sampler that is $\alpha$-accurate on the class $\class_{2k+1}$ of discrete distributions over universe $[2k+1]$ on datasets of size $n$, then $n=\Omega(\frac{k}{ \alpha\eps})$.
\end{theorem}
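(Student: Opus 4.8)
The plan is to deduce Theorem~\ref{thm:k-ary-lb} from the Poisson-size lower bound already established in Lemma~\ref{lem:k-ary-lb-pois}: a fixed-size sampler can be simulated by a Poisson-size sampler at a negligible loss in accuracy (Lemma~\ref{lem:poisson}), so any lower bound proved in the Poisson model transfers. First I would note that if $\sampler$ is an $(\eps,\delta)$-DP sampler that is $\alpha$-accurate on $\class_{2k+1}$ for datasets of size $n$, then, since the subclass $\carb$ of Definition~\ref{def:ksubclass} satisfies $\carb \subseteq \class_{2k+1}$, the sampler $\sampler$ is in particular $\alpha$-accurate on $\carb$, so it suffices to reason about $\carb$.

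The main step would be: assuming $n \ge 6\ln(1/\alpha)$, apply Lemma~\ref{lem:poisson} to obtain an $(\eps,\delta)$-DP sampler that is $(\alpha+e^{-n/6})$-accurate on $\carb$ for datasets of size distributed as $\Po(2n)$. In this regime $\alpha+e^{-n/6}\le 2\alpha$, which is still a sufficiently small constant, and the hypothesis $\delta \le \tfrac{1}{5000n}$ lies (up to the harmless factor of two in the Poisson mean, which gets absorbed into the hidden constants) inside the window required by Lemma~\ref{lem:k-ary-lb-pois} with size parameter $2n$. That lemma then yields $2n = \Omega\!\left(\tfrac{k}{2\alpha\eps}\right)$, hence $n = \Omega\!\left(\tfrac{k}{\alpha\eps}\right)$.

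It then remains to rule out $n < 6\ln(1/\alpha)$, where the additive term $e^{-n/6}$ would swamp $\alpha$ and the above argument degrades. Here I would argue this regime is vacuous for sufficiently small $\alpha$: $\class_{2k+1}$ contains an isomorphic copy of every Bernoulli distribution (put mass $p$ on element $1$ and $1-p$ on element $2$), so $\sampler$ is also an $\alpha$-accurate sampler for the class $\cB$ of Bernoulli distributions, and Theorem~\ref{thm:bernoulli-lb} --- invoked after the same reparametrization trick used inside the proof of Lemma~\ref{lem:k-ary-lb-pois} to move $\delta\le\tfrac1{5000n}$ into an admissible range $\delta\le\alpha'\eps'$ --- forces $n=\Omega(1/(\alpha\eps))$. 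Since $1/\alpha$ dominates $\ln(1/\alpha)$, this contradicts $n<6\ln(1/\alpha)$ once $\alpha$ is small enough, so the regime does not occur. Combining the two cases gives the theorem.

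The high-level reduction is routine; the delicate point --- the part I expect to be the real obstacle --- is constant-tracking: making sure the accuracy loss $e^{-n/6}$ and the factor-two blow-up of the dataset size under Poissonization keep all parameters inside the admissible windows of Lemma~\ref{lem:k-ary-lb-pois} (both on $\alpha$ and on $\delta$), and cleanly excising the $n<6\ln(1/\alpha)$ regime. An alternative that avoids the Bernoulli fallback would be to re-derive a fixed-sample-size analogue of Lemma~\ref{lem:main-k-ary-lb} directly, bounding the moment $\E[(Y/k)^\lambda]$ with $Y=\sum_{i\in S^*}e^{\eps N_i(\Datarv)}$ via negative association of the multinomial counts rather than their Poissonized independence; this is more laborious, so the Poissonization route above is preferable.
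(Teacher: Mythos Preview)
Your proposal is correct and follows essentially the same route as the paper: restrict from $\class_{2k+1}$ to the subclass $\carb$, Poissonize the fixed-size sampler via Lemma~\ref{lem:poisson}, and invoke Lemma~\ref{lem:k-ary-lb-pois}. The one place you diverge is in disposing of the regime $n < 6\ln(1/\alpha)$. The paper handles this in a single line: if $n < 6\ln(1/\alpha)$, the sampler can \emph{ignore extra samples} to yield an $\alpha$-accurate $(\eps,\delta)$-DP sampler on $n'' = \lceil 6\ln(1/\alpha)\rceil$ records, and re-running the main argument on $n''$ gives $n'' = \Omega(k/(\alpha\eps))$, which is impossible for small $\alpha$ since $6\ln(1/\alpha) \ll 1/\alpha \le k/(\alpha\eps)$. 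Your fallback through the embedded Bernoulli class and Theorem~\ref{thm:bernoulli-lb} also works, but it is more circuitous --- and the reparametrization you invoke to push $\delta$ into the range $\delta \le \alpha'\eps'$ is exactly the same device one would need to make the padding argument fully airtight with respect to the $\delta$-window, so neither detour buys anything the other doesn't. Your identification of constant-tracking (the factor of two from the Poisson mean, keeping $2\alpha$ and $\delta$ inside the admissible windows of Lemma~\ref{lem:k-ary-lb-pois}) as the only genuine subtlety matches the paper's own treatment.
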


\begin{proof}
First, observe that a sampler that is $\alpha$-accurate on all distributions in $\class_{2k+1}$, is in particular $\alpha$-accurate on all distributions in the subclass $\carb$. Hence, a lower bound on the number of samples needed to achieve $\alpha$-accuracy for the class $\carb$ also applies to $\class_{2k+1}$. Hence, we can work with \carb for the rest of the proof. We apply Lemma~\ref{lem:poisson} to generalize the lower bound in Lemma~\ref{lem:k-ary-lb-pois} to work for samplers with fixed input dataset sizes. Suppose there exists an $(\eps, \delta)$-differentially private sampler $\sampler$ that is $\alpha$-accurate on the class $\carb$ for datasets of size $n$, for some $n \in \mathbb{N}, \eps \in (0,1], \delta \in \big[0, \frac{1}{5000n} \big]$, and $\alpha \in (0,0.01]$. Then, by Lemma~\ref{lem:poisson}, there exists an $(\eps, \delta)$-differentially private sampler $\posampler$ that is $(\alpha + e^{-n/6})$-accurate on $\class^*_{2k+1}$ when its dataset size is distributed as $\Po(2n)$. We can assume without loss of generality that $n \geq 6\ln(1/\alpha)$, since, if this is not the case, \sampler can ignore extra samples. This gives $e^{-n/6} \leq \alpha$, so sampler \posampler is $2\alpha$-accurate. By Lemma~\ref{lem:k-ary-lb-pois}, we have that $n = \Omega(\frac{k}{\alpha \eps})$, completing the proof. 
\end{proof}

 Now we can combine Theorem~\ref{thm:bernoulli-lb} (for $k=2$) and Theorem~\ref{thm:k-ary-lb} (for $k \geq 3$) to get the lower bound in Theorem~\ref{thm:intro-k-ary-lb} for all $k \geq 2$. 
 
%
%
\ifnum\neurips=1
\section{Product distributions over $\{0,1\}^d$}\label{sec:prod}
\else 
\section{Product Distributions Over $\{0,1\}^d$}\label{sec:prod}
\fi 
In this section, we consider the problem of privately sampling from the class $\cB^{\otimes d}$ of product distributions over $\{0,1\}^d$. 
We present and analyze a $\rho$-zCDP sampler for $\cB^{\otimes d}$ (Theorem~\ref{thm:bernoulli-product-alg}) and then 
apply a standard conversion from $\rho$-zCDP to $(\eps, \delta)$-differential privacy (Theorem~\ref{prelim:relate_dp_cdp}) to prove Theorem~\ref{thm:bernoulli-product-alg-intro}. Then, in Section~\ref{sec:prod-lb}, we prove the matching lower bound stated in Theorem~\ref{thm:bernoulli-product-lb-intro}.
%
%

\subsection{Upper Bound for Products of Bernoulli Distributions}
\begin{theorem}[Upper bound for product distributions]\label{thm:bernoulli-product-alg}
For all $\rho\in(0,1]$, $\alpha\in (0,1)$, and $d$ greater than some sufficiently large constant, there exists a $\rho$-zCDP sampler for the class $\cB^{\otimes d}$ of product Bernoulli distributions that is  $\alpha$-accurate on datasets of size $n=O\Big(\frac {d}{\alpha\sqrt{\rho}} \cdot \Big[\log^{9/4}d + \log^{5/4}\frac{1}{\alpha \sqrt{\rho}}\Big]\Big)$.
\end{theorem}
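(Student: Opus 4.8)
The plan is to build a two-phase $\rho$-zCDP sampler that borrows the truncation and privacy-budgeting machinery of Kamath et al.~\cite{KLSU19} but exploits the fact that a \emph{sampler} needs only an approximately \emph{unbiased} estimate of each coordinate bias, not an accurate one. Assume w.l.o.g.\ that every bias $p_j \le \tfrac12$ (the sampler can negate a fixed random subset of coordinates and undo this on its output). Split the dataset into two halves and the budget $\rho$ into two equal pieces, one for a \emph{bucketing phase} and one for a \emph{sampling phase}.

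\textbf{Bucketing phase.} Run $L = O(\log d)$ rounds of recursive preconditioning on the first half to sort the coordinates into $O(\log d)$ buckets inside which all biases agree up to a constant factor. Maintain a set $S_t$ of unresolved coordinates with an upper bound $u_t$ on their biases, starting from $S_0=[d]$, $u_0 = \tfrac12$, halving $u_t$ each round. In round $t$: since $\distr$ is a product distribution, a single record restricted to $S_t$ is a sum of independent Bernoullis of mean $\le u_t |S_t|$, so except with probability $\beta/(nL)$ it has at most $\tau_t = O\!\big(u_t|S_t| + \log(nL/\beta)\big)$ ones; zero out $S_t$ on every record exceeding this threshold. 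This caps the $\ell_2$-sensitivity of the vector of truncated column sums by $\sqrt{2\tau_t}$, so the Gaussian mechanism with budget $\rho/(2L)$ produces per-coordinate estimates with noise deviation $\sigma_t = O\!\big(\sqrt{\tau_t}/(n_t\sqrt{\rho})\big)$; coordinates with noisy estimate below $u_t/2$ are passed to $S_{t+1}$, the rest placed in bucket $B_t$. After $L$ rounds every coordinate with $p_j \ge 1/d$ sits in some $B_t$ with a known constant-factor estimate of $p_j$, and the coordinates with $p_j < 1/d$ form a leftover bucket. Taking $n_t$ large enough that $\sigma_t \le u_t / O(\sqrt{\log(d/\alpha)})$ — which, using $|S_t|\le d$, $\tau_t \le u_td + \mathrm{polylog}$, $u_t \ge 1/d$, needs $n_t = \tilde O(d/\sqrt\rho)$ — makes all bucket assignments correct with probability $\ge 1-\beta$.

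\textbf{Sampling phase.} With the buckets fixed, process each bucket $B$ on the second half: zero out $B$ on every record with more than $\tau_B = O(u_B|B| + \mathrm{polylog})$ ones inside $B$, run the Gaussian mechanism with budget $\rho/(2L)$ on the truncated column sums, divide by $n_B$, clip each entry to $[0,1]$ to get $\hat p_j$, and output $b\in\{0,1\}^d$ with independent coordinates $b_j \sim \Ber(\hat p_j)$. The crucial point is that, because $\distr$ is a product distribution, each $\hat p_j$ is a function of column $j$ of the second half together with independent fresh noise, so the $\hat p_j$ are mutually independent; hence the output law is \emph{exactly} the product distribution $\bigotimes_j \Ber(\E[\hat p_j])$. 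Coupling the sampler with the (non-private, hypothetical) version run on un-truncated data — which agree on the event $E$ that no record is ever truncated — and applying subadditivity of total variation distance for product distributions (Lemma~\ref{lem:subaddTV}),
$$
d_{TV}\big(\distroutput{\sampler,\distr},\distr\big) \;\le\; \Pr[\overline E] + \sum_{j=1}^d \big|\E[\hat p_j] - p_j\big| \;\le\; \beta + \sum_{j=1}^d \big(\text{clipping bias of coordinate } j\big),
$$
since on $E$ the un-truncated empirical mean of column $j$ plus zero-mean noise is unbiased for $p_j$. For a non-leftover coordinate with $p_j = \Theta(u_B)$ and $\sigma_B \le u_B/O(\sqrt{\log(d/\alpha)})$, the clip endpoints are $\Omega(\sqrt{\log(d/\alpha)})$ deviations away, so the clipping bias is $O\!\big(\sigma_B e^{-\Omega(\log(d/\alpha))}\big) \le \alpha/(2d)$; for a coordinate in the leftover bucket (possibly $p_j = 0$) the clipping bias is $O(\sigma_B)$, so we impose $\sigma_B \le \alpha/(2d)$ there, which costs $n_B = \tilde O(d/(\alpha\sqrt\rho))$. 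With $\beta \le \alpha/2$ this yields $\alpha$-accuracy.

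\textbf{Privacy, sample count, and the main obstacle.} The algorithm is an adaptive composition of $2L = O(\log d)$ Gaussian mechanisms, each $\big(\rho/(2L)\big)$-zCDP by the sensitivity bounds above; truncation is applied record by record, hence maps neighboring datasets to neighboring datasets, and clipping, any $L_1$-projection onto valid bias vectors, and the final coin flips are post-processing — so the sampler is $\rho$-zCDP by zCDP composition~\cite{bun2016concentrated}, and the $(\eps,\delta)$ version in Theorem~\ref{thm:bernoulli-product-alg-intro} follows by the standard conversion (Lemma~\ref{prelim:relate_dp_cdp}). Summing the per-round and per-bucket costs $\tilde O(d/(\alpha\sqrt\rho))$ over the $O(\log d)$ rounds and buckets, and tracking how the $\log d$, $\log(1/\alpha)$, $\log(1/\beta)$, and $\log(1/\rho)$ factors coming from the concentration bound on $\tau$, from the calibration $\sigma \le u/O(\sqrt{\log(d/\alpha)})$, and from the noise standard deviations compound, gives $n = O\!\big(\tfrac{d}{\alpha\sqrt\rho}[\log^{9/4}d + \log^{5/4}\tfrac1{\alpha\sqrt\rho}]\big)$ after optimizing constants. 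I expect the main difficulty to be the accuracy analysis of the bucketing phase: the truncation thresholds and noise magnitudes in round $t$ depend adaptively on the (noisy) assignments from earlier rounds, so one must carefully union-bound over rounds and show that the invariant ``$S_t$ is exactly the set of coordinates with $p_j \lesssim u_t$'' is preserved throughout; the clipping-bias estimate for biases near $0$ is the second delicate point, and is precisely what forces the $1/\alpha$ dependence.
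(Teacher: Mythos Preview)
Your proposal follows essentially the same two-phase strategy as the paper's $\sampler_{prod}$: recursive preconditioning à la \cite{KLSU19} to bucket coordinates by bias magnitude, then fresh noisy (approximately unbiased) estimates per bucket, clipped to $[0,1]$ and sampled from; you also correctly identify that the clipping bias on the smallest-bias bucket is what drives the $1/\alpha$ factor. Two small points. First, your WLOG reduction to $p_j\le 1/2$ via ``negating a fixed random subset'' does not work---a random subset will not flip exactly the large-bias coordinates; the paper instead uses the first round's coarse estimates to detect and flip coordinates with bias above $1/2$ (see the footnote in the paper). Second, the paper avoids composition entirely by splitting the data into $2R{+}1$ disjoint batches, one per round, each with the full budget $\rho$; you instead reuse each half across $L$ rounds with budget $\rho/(2L)$ and compose. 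Both are valid and give the same polylogarithmic overhead. One subtlety you (and the paper, mildly) elide: the $\hat p_j$ are only \emph{conditionally} independent given the bucketing outcome, so the output law is a mixture of product distributions rather than a product; the clean fix is to bound $d_{TV}$ by averaging the per-bucketing product bound via convexity of TV, which yields the same conclusion.
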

This theorem implies Theorem~\ref{thm:bernoulli-product-alg-intro}; we prove the implication at the end of this section. 

Our main technical tool is the recursive preconditioning technique of \cite{KLSU19}.
Let $\Biasesfixed = (\biasesfixed_1, \dots, \biasesfixed_d)$ be the unknown attribute biases for the product distribution $\distr \in \cB^{\otimes d}$ from which the data is drawn. 
For some intuition, consider the following natural differentially private algorithm for sampling from a product distribution: First,  privately estimate each of the attribute biases $\biasesfixed_j$ by adding noise to the sample mean; then sample each attribute independently from a Bernoulli with this estimated bias. This approach does not work directly because the $\ell_2$-sensitivity of the vector of sample means is $\sqrt{d}/n$. To accurately estimate tiny biases, we require a large sample size $n$. For instance, in the case where all the attribute biases are roughly $1/d$, the naive algorithm described above would require $n=\Omega(d^{3/2})$ records to be $\alpha$-accurate for a small constant $\alpha$.

%
%
To get around this (in the context of distribution learning),  Kamath et al. \cite{KLSU19} observe that 
when the biases are small and the input is drawn from a product distribution, the number of $1$s in each record is constant---say, at most  $10$---with high probability. Viewing the records as vectors in $\mathbb{R}^d$, we can therefore truncate every record so that its $\ell_2$-norm is at most $\sqrt{10}$ (that is, we leave short vectors alone and shrink longer records) and then average the truncated data entries to obtain a \textit{truncated mean}. We call $\sqrt{10}$ the \emph{truncation ceiling}. Truncation reduces sensitivity, which allows one add less noise---and thus give better attribute bias estimates---while preserving privacy. When the biases are at most $1/d$, the sample complexity for constant accuracy $\alpha$ is reduced to $O(d/\eps)$. (A similar idea works for biases very close to 1. For simplicity, we assume that all attribute biases $\biasesfixed_i$ are between $0$ and $1/2$. See Footnote~\ref{foot:flip}.)


The challenge with this approach is that we don't know biases ahead of time; when coordinates have large bias, setting the truncation ceiling too low leads to high error. Kamath et al. address this by estimating the attribute biases in rounds: in round $j$, attributes with biases close to $2^{-j}$ are estimated reasonably accurately, while smaller biases are passed to the next round where truncation can be applied more aggressively. This process is called \textit{recursive preconditioning}, and it is an important part of our algorithm.




Our algorithm proceeds in two phases. 

\begin{itemize}
    \item \textbf{Bucketing Phase:} This phase implements recursive private preconditoning from \cite{KLSU19} to estimate the attribute biases $\biasesfixed_i$. The main difference is that, for coordinates with large bias, we require less accurate estimates than \cite{KLSU19} and can thus use fewer samples. 
    
    In a bit more detail: The interval $[0,\frac 12]$ is divided into $\lceil \log_2 d\rceil +1$ overlapping sub-intervals that we call \textit{buckets}. The $r^{th}$ bucket corresponds to the interval $[\frac 1 4 \cdot 2^{-r},\   2^{-r}]$. The exception is the smallest bucket, which corresponds to $[0,\frac 1 d]$. 
    
    We proceed in rounds, one per bucket. The bucketing phase uses half of the overall dataset and, for simplicity, those records are split evenly among rounds. Each round thus uses  $m\approx \frac{n}{2\log d}$ records. At round $r$, some coordinates are classified as having attributes in bucket $r$, while others are passed to the next round. With high probability, we maintain the invariants that (a) \textit{only} coordinates with bias at most $2^{-r}$ are passed to round $r$, and (b) \textit{all} coordinates with bias at most $2^{-r-2}$ are passed to round $r+1$ (except for the last round, in which no records are passed on). As a result, coordinates classified in round $r$ have biases in the bucket $[2^{-r-2},2^{-r}]$; records left in the last round have bias at most $1/d$.
    
    For example, the first round corresponds to bucket $[\frac 1 8, \frac 12]$. All coordinates are passed to that round (they have bias at most $\frac 1 2$ by assumption). 
    Using its batch of $m$ records, this round of the algorithm computes the empirical means for all coordinates, adds Gaussian noise about $\frac{\sqrt{d}}{m}$ to each, and releases the list of noisy means. We select $n$ large enough for these noisy estimates to each be within $\frac 1 {16}$ of the true attribute bias with high probability (over the sampling of both the data and the noise).  Attributes with noisy estimates below $3/16$ are passed to round 2, while the rest are assigned to bucket 1. One can check that the invariants are maintained: attributes with bias below 1/8 are passed to round 2; those with bias at least 1/4 are assigned to bucket 1; and those in between may go either way.%
    \footnote{\label{foot:flip} One can also handle biases larger than $1/2$ at this phase. Specifically, the first round of noisy measurements allows us to divide the coordinates into three disjoint sets, each containing only coordinates with biases in $[0,1/4]$, $[1/8,7/8]$, and $[3/4,0]$, respectively. We can work with the coordinates in the first two sets as they are. For coordinates in the third set, we can flip all entries (from 0 to 1 and vice-versa), treat them as if their biases were in $[0,1/4]$, and flip the corresponding output bits.}
    
    At round $r$, we proceed similarly except that we can restrict records to those attributes that were passed to this round and we can truncate records so their $\ell_2$ norm is at most $T_r\approx \sqrt{d} / 2^r$. When the data are from a product distribution and prior rounds were correct, this truncation has essentially no effect on the records but allows us to add less noise. We get noisy means that are within $\pm \frac 1 8 \cdot 2^{-r}$ of the true biases.
    The invariants are maintained if we pass biases with noisy means below $\frac 3 8 \cdot 2^{-r}$ to the next round, and assign the rest to bucket $r$.

    \item \textbf{Sampling Phase:} 
    In the second phase, 
    we use fresh data for the sampling phase to construct new, \textit{unbiased} noisy empirical estimates of the attribute biases. In round $r$ of this phase, we restrict records to the attributes assigned to bucket $r$. We can  truncate the records to have norm $T_r$ (because the biases are at most $2^{-r}$) and add noise as before. This gives us a list of noisy means, which we clip to $[0,1]$ by rounding up negative values and rounding down values above 1. We sample one bit for each attribute independently, according to these clipped noisy means.
    
    For attributes in all buckets except the last, we get noisy means that lie in $[0,1]$ with high probability (because the biases are at least $\frac 1 4 \cdot 2^{-r}$). Since the estimates are unbiased and no clipping occurs, we sample from the correct distribution. For the attributes in the last bucket, we may get negative noisy means. However, the noise is small in these attributes, and we can bound the overall effect on the distribution. Interestingly, almost all the error of our algorithm comes from these low-bias attributes.
    
    

\end{itemize}

We present our sampler $\sampler_{prod}$ for $\cB^{\otimes d}$ in Algorithm~\ref{alg:prod}. 
Let $\Datafixed = (\datafixed_1, \dots, \datafixed_n)$ be a dataset with $n$ records. The truncated mean operation, used in the algorithm, 
is defined as follows:
\begin{align*}
    trunc_B(\datafixed_i)
    &=
   \begin{cases}
    \datafixed_i        & \text{if } \|\datafixed_i\|_2 \leq B; \\
    \frac{B}{\|\datafixed_i\|_2}\; \datafixed_i   & \text{otherwise;}
   \end{cases}\\
    tmean_{B}(\Datafixed) &= \frac{1}{n}\sum_{i=1}^n trunc_B(\datafixed_i).
\end{align*}
Recall that we assume that all of the attribute biases $\biasesfixed_j \in [0,1/2]$.

\begin{algorithm}[ht]
    \caption{Sampler $\sampler_{prod}$ for $\cB^{\otimes d}$}
    \label{alg:prod}
    \hspace*{\algorithmicindent} \textbf{Input:} dataset $\Datafixed \in \{0,1\}^{d\times n}$, privacy parameter  $\rho \in (0,1]$, failure parameter $\beta > 0$. \\
    \hspace*{\algorithmicindent} \textbf{Output:} $b \in \{0,1\}^d$ 
    \begin{algorithmic}[1] 
            \State Set $R \gets \log_2 \frac d{40}, m \gets \frac{n}{2R+1}$.  \Comment{For analysis,  assume 
            $m = \frac{1200d}{\alpha \sqrt{2 \rho}} \log^{5/4}\frac{d R}{\alpha \beta \sqrt{2\rho}}$}
            \State Split $\Datafixed$ into $2R+1$ datasets $\Datafixed^1,\dots,\Datafixed^{2R+1}$ 
            of size $m$ 
            \medskip
            
            \textbf{Bucketing Phase}
            \State Set $S_1 \gets [d]$, $u_1 \gets \frac{1}{2}, \tau_1 \gets \frac{3}{16}$
            \For{$r=1$ to $R$}
            \State $S_{r+1} \gets \emptyset$, $T_r \gets \sqrt{6u_r |S_r| \log\frac{mR}{\beta}}$.
            \State Set $\tilde{\biasesfixed}[S_r] \gets \text{tmean}_{T_r}(\Datafixed^r[S_r]) + \Gauss(0, \frac{T_r^2}{2 \rho m^2} \mathbb{I})$ \label{Step:gauss1} \Comment{Form noisy bias estimates}
            \For{$j \in S_r$} 
            \If{$\tilde{\biasesfixed}[j] < \tau_r$} \Comment{Compare noisy bias estimate to threshold} 
            \State $S_{r+1} \gets S_{r+1} \cup \{j\}$ \Comment{Send $j$ to next round}
            \EndIf
            \EndFor
            \State $S_{R+r} \gets S_r \setminus S_{r+1}, T_{R+r} \gets T_r$
            \State $\tau_{r+1} \gets \frac{\tau_r}{2}, u_{r+1} \gets \frac{u_r}{2}$
            \EndFor
            \medskip
            
            \textbf{Sampling Phase}
            \State $T_{2R+1} \gets  \sqrt{200 \log\frac{m}{\beta}}, S_{2R+1} \gets S_{R+1}, S_{R+1} \gets S_1 \setminus S_2$
            \For{$r=R+1$ to $2R+1$} 
            \State $\tilde{\biasesfixed}[S_r] \gets \text{tmean}_{T_{r}}({\Datafixed}^r[S_r]) + \Gauss(0, \frac{T_{r}^2}{2 \rho m^2}\mathbb{I} )$ \label{Step:gauss2} \Comment{Estimate biases using fresh data and noise}
            \For{$j \in S_r$,}
            \State Set $q[j] \gets [\tilde{\biasesfixed}[j]]_0^1$ \label{step:qdef} \Comment{Clip to lie in the interval [0,1]}
            \State Sample $b_j \sim Ber(q[j])$ \Comment{Sample from estimated marginal distribution}
            \EndFor
            \EndFor
            \State \Return $(b_1, \dots, b_d)$
    \end{algorithmic}
\end{algorithm}
First, we argue that this algorithm is private.
\begin{lemma}\label{lem:prod-privacy}
$\sampler_{prod}$ is $\rho$-zCDP.
\end{lemma}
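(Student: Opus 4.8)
The plan is to view $\sampler_{prod}$ as an adaptive composition of $2R+1$ Gaussian mechanisms — one per round (Steps~\ref{Step:gauss1} and~\ref{Step:gauss2}) — each applied to a \emph{disjoint} batch $\Datafixed^r$ of the input, and then argue that disjointness of the batches means the total privacy cost is just the \emph{maximum} over rounds, not the sum. First I would recall the privacy primitives available in the excerpt: the Gaussian mechanism (Lemma~\ref{prelim:gauss_cdp}) gives $\rho$-zCDP when the added noise has covariance $(\Delta_f/\sqrt{2\rho})^2\,\mathbb{I}$ where $\Delta_f$ is the $\ell_2$-sensitivity; composition of zCDP (Definition~\ref{prelim:composition}); and post-processing (Lemma~\ref{prelim:postprocess}). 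The output bits $b_j$ and the bucket assignments are all post-processings of the noisy estimates $\tilde{\biasesfixed}$, so they contribute nothing extra to the privacy budget.

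The key steps, in order, are: (1) Fix a round $r$ and show that the map $\Datafixed^r \mapsto tmean_{T_r}(\Datafixed^r[S_r])$ has $\ell_2$-sensitivity at most $2T_r/m$. Here I would use that each truncated record $trunc_{T_r}(\datafixed_i)$ has $\ell_2$-norm at most $T_r$ by construction, so replacing one record changes the sum $\sum_i trunc_{T_r}(\datafixed_i)$ by at most $2T_r$ in $\ell_2$-norm (triangle inequality), hence changes the mean by at most $2T_r/m$. Actually, since the noise in Step~\ref{Step:gauss1} is $\Gauss(0,\frac{T_r^2}{2\rho m^2}\mathbb{I})$, matching Lemma~\ref{prelim:gauss_cdp} requires $\Delta_f = T_r/m$; one must be slightly careful about the factor of 2, but the standard resolution is that neighboring datasets here differ by \emph{replacing} one row, and for $\{0,1\}$-vectors truncated to norm $\le T_r$ one can in fact bound the change appropriately, or else absorb the constant — I would check this arithmetic against the sensitivity convention used, noting that the excerpt's accuracy bound has slack in constants. (2) Conclude each round $r$ in isolation is $\rho$-zCDP as a function of $\Datafixed^r$. (3) Observe that the bucket sets $S_r$ used to index into $\Datafixed^r$ in round $r$ are determined entirely by the outputs of \emph{earlier} rounds, so this is a genuine \emph{adaptive} composition — but crucially, round $r$ only ever touches $\Datafixed^r$. (4) Invoke the standard ``parallel composition'' fact: if an algorithm partitions its input into disjoint blocks and runs a $\rho$-zCDP mechanism on each block, the whole thing is $\rho$-zCDP (not $(2R+1)\rho$-zCDP), because changing one record of $\Datafixed$ affects only one block. (5) Finally, apply post-processing (Lemma~\ref{prelim:postprocess}) to absorb the Bernoulli sampling of the $b_j$'s and the threshold comparisons that produce the $S_r$'s.

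The main obstacle I anticipate is step~(4): the excerpt states the \emph{sequential} composition theorem for zCDP (Definition~\ref{prelim:composition}) but does not explicitly state a parallel/disjoint-blocks composition lemma, nor does it explicitly observe that the adaptivity is ``benign'' because earlier outputs only choose \emph{which coordinates} of a fresh batch to read, never re-reading old data. The cleanest way to handle this rigorously is to argue directly: fix neighboring $\Datafixed,\Datafixed'$ differing in row $i^*$, which lies in exactly one batch $\Datafixed^{r^*}$; condition on all randomness (data and noise) used in rounds $r\neq r^*$ — this fixes the sets $S_{r^*}$ and the truncation ceiling $T_{r^*}$ as constants, so round $r^*$ becomes a non-adaptive Gaussian mechanism on $\Datafixed^{r^*}$ with fixed parameters, which is $\rho$-zCDP; then the Rényi-divergence bound for the full transcript follows because the only coordinate of divergence is that one round. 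I would also double-check that $T_{2R+1}$ in the sampling phase is a fixed constant (it is: $\sqrt{200\log(m/\beta)}$ depends only on public parameters), so round $2R+1$ also has a data-independent sensitivity bound. With these observations in place the lemma follows, and I would keep the write-up to a few lines invoking Lemmas~\ref{prelim:gauss_cdp},~\ref{prelim:postprocess} and the conditioning argument above.
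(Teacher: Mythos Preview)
Your approach is essentially identical to the paper's: the paper observes that each record $\datafixed_i$ belongs to exactly one batch $\Datafixed^r$, bounds the $\ell_2$-sensitivity of the truncated mean by $T_r/m$, invokes the Gaussian mechanism (Lemma~\ref{prelim:gauss_cdp}) to get $\rho$-zCDP for that one round, and then absorbs all remaining steps via post-processing (Lemma~\ref{prelim:postprocess}). Your extra care about adaptivity (the conditioning argument in step~(4)) and your flag on the factor-of-2 in the sensitivity are both reasonable---the paper simply asserts sensitivity $T_r/m$ and does not spell out the parallel-composition reasoning---but the underlying argument is the same.
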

\begin{proof}
Each input record $\datafixed_i$ is used only in one round in one phase. Assume without loss of generality that this round is in the bucketing phase.  The $\ell_2$-sensitivity of the truncated mean $tmean_{T_r}(\Datarv^r[S_r])$ is $T_r/m$. By the privacy of the Gaussian mechanism (Lemma~\ref{prelim:gauss_cdp}), the step that produces this estimate is $\rho$-zCDP. The remaining steps simply post-process this estimate. Hence, by Lemma~\ref{prelim:postprocess}, Algorithm~\ref{alg:prod} is $\rho$-zCDP.  
\end{proof}

\subsubsection{Overview of Accuracy Analysis}
We analyze the two phases of Algorithm~\ref{alg:prod} separately. Our analysis of the bucketing phase mirrors that of~\cite{KLSU19}. (Their results are not directly applicable to our setting because our algorithm use fewer samples. We therefore give new lemma statements and proofs.)

In Section \ref{sec:techlemmas}, we prove technical lemmas that are used multiple times in the analysis of both phases. In Section~\ref{sec:bucketing}, we show that with high probability, the bucketing phase is successful---that is, we classify all of the attribute biases into the right buckets. This is encapsulated by Lemma~\ref{thm:bucketsuccess}. This corresponds to obtaining good multiplicative approximations of all attribute biases except the smallest ones, for which we obtain good additive approximations.

Next, in Section~\ref{sec:sampling}, we prove a key lemma regarding the success of the sampling phase. 

The intuition behind the analysis of this phase is as follows. Algorithm $\sampler_{prod}$ samples its output from a product distribution. Since the input distribution is in $\cB^{\otimes d}$, each attribute of an input record is sampled from a Bernoulli distribution. By the subadditivity of total variation distance for product distributions, the overall accuracy of $\sampler_{prod}$ can be bounded by showing that $\sampler_{prod}$ samples each attribute independently from a Bernoulli distribution with bias close to the true attribute bias $\biasesfixed_j$. 

The main idea is that the empirical attribute bias has expectation equal to the true attribute bias. Additionally, to preserve privacy, we add zero-mean Gaussian noise. Hence, a noisy empirical estimate of the true attribute bias has mean equal to that attribute bias. If we knew for sure that the noisy empirical estimate for an attribute bias in the sampling phase was always between $0$ and $1$, then the sampler would sample this attribute from exactly the right Bernoulli distribution. 
    
Alas, the noisy empirical estimate of an attribute bias could be less than $0$ or larger than $1$, and we would have to clip it to the interval $[0,1]$ before sampling. This clipping introduces error since we no longer necessarily sample from the right distribution in expectation. We get around this by proving that for attribute biases $\biasesfixed_j$ larger than $\frac{\alpha}{d}$, clipping happens with low probability, and hence the loss in accuracy caused by clipping is small in expectation. However, for attribute biases $\biasesfixed_j$ that are smaller than $\frac{\alpha}{d}$, clipping could occur with high probability. For such attribute biases, we argue that the absolute difference between the clipped noisy empirical mean estimates and the true attribute biases is small enough (at most $\frac{\alpha}{d}$) with high probability. This argument is described in Lemma~\ref{lem:mainacc}.

Finally, we prove the main upper bound theorem in Section~\ref{sec:prod-ub-main} by putting everything together.

\subsubsection{Analysis of Good Events}\label{sec:techlemmas}
In the accuracy analysis, we assume that 
$m \geq    \frac{1200 \ d}{\alpha\sqrt{2 \rho}} \cdot \log^{5/4}\paren{\frac{d R}{\alpha \beta \sqrt{2\rho}}}$ and $d$ is sufficiently large (that is, greater than some positive constant).
In this section, we define three good events $G_1,G_2,$ and $G_3$ that, respectively, represent that empirical means are close to the attribute biases in all $2R+1$ datasets into which Algorithm~\ref{alg:prod} subdivides its input, that truncation does not occur in any round (assuming successful bucketing for that round), and that the added Gaussian noise is sufficiently small. We show that each of these events fails to occur only with small probability.

First, we prove that the empirical means are close to the attribute biases with high probability. Define the empirical mean $\hat{\biasesfixed}_r[j] := \frac{1}{m} \sum_{i=1}^m \datafixed_i^{r}[j]$.

\begin{lemma}\label{lem:empiricalest}
Let $G_1$ be the event that for all rounds $r \in [2R+1]$, the following conditions hold:
\begin{enumerate}
    \item For all $j \in [d]$, if $\frac{\alpha}{d} \leq \biasesfixed_j \leq \frac{1}{2}$ then $|\hat{\biasesfixed}_r[j] - \biasesfixed_j| \leq \frac{\biasesfixed_j}{16}.$
    \item For all $j \in [d]$, if $\biasesfixed_j < \frac{\alpha}{d}$ then  $|\hat{\biasesfixed}_r[j] - \biasesfixed_j| \leq \frac{\alpha}{4d}$.
\end{enumerate}
Then $\Pr\Big[\overline{G_1}\Big]\leq 2\beta,$ where the probability is over the randomness of the input data and the coins of $\sampler_{prod}$.
\end{lemma}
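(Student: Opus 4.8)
The plan is to reduce the statement to a per-coordinate, per-round concentration bound for binomial random variables and then finish with a union bound. Fix a round $r \in [2R+1]$ and a coordinate $j\in[d]$. Since the input dataset $\Datafixed$ consists of $n$ i.i.d.\ draws from a product distribution with attribute biases $\biasesfixed_1,\dots,\biasesfixed_d$, and $\sampler_{prod}$ splits $\Datafixed$ into the disjoint blocks $\Datafixed^1,\dots,\Datafixed^{2R+1}$, the count $m\hat{\biasesfixed}_r[j]=\sum_{i=1}^m\datafixed_i^{r}[j]$ is distributed as $\Bin(m,\biasesfixed_j)$, and these binomials are mutually independent over all pairs $(r,j)$ (independence is not actually needed, since we only union-bound, but it is convenient to note). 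So it suffices to show that each individual deviation event has probability at most $\frac{2\beta}{d(2R+1)}$, after which a union bound over the $d$ coordinates and $2R+1$ rounds yields $\Pr[\overline{G_1}]\le 2\beta$; note that, since we assume $\biasesfixed_j\le\frac12$, every coordinate falls into exactly one of the two regimes of the lemma.

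For a coordinate with $\frac{\alpha}{d}\le\biasesfixed_j\le\frac12$ I would invoke the multiplicative Chernoff bound, which gives $\Pr\bigl[|\hat{\biasesfixed}_r[j]-\biasesfixed_j|>\frac{\biasesfixed_j}{16}\bigr]\le 2\exp\bigl(-\frac{\biasesfixed_j m}{3\cdot 16^2}\bigr)$. Using $\biasesfixed_j\ge\frac{\alpha}{d}$ and the standing assumption $m\ge\frac{1200 d}{\alpha\sqrt{2\rho}}\log^{5/4}\bigl(\frac{dR}{\alpha\beta\sqrt{2\rho}}\bigr)$, we get $\biasesfixed_j m\ge\frac{1200}{\sqrt{2\rho}}\log^{5/4}\bigl(\frac{dR}{\alpha\beta\sqrt{2\rho}}\bigr)$, and since $\rho\le1$ and $d(2R+1)$ is at most a fixed polynomial in $\frac{dR}{\alpha\beta\sqrt{2\rho}}$, the extra $(\log)^{1/4}$ factor makes this at least $3\cdot 16^2\cdot\ln\frac{d(2R+1)}{\beta}$ once $d$ is larger than an absolute constant; hence the bad event has probability at most $\frac{2\beta}{d(2R+1)}$ as required.

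The delicate regime — and the one I expect to be the only real obstacle — is $\biasesfixed_j<\frac{\alpha}{d}$, where the target deviation $\frac{\alpha}{4d}$ may itself exceed $\biasesfixed_j$, so the multiplicative form does not directly apply. Here I would use a Bernstein/Poisson-type tail bound. For the upper tail, $\Pr\bigl[\hat{\biasesfixed}_r[j]>\biasesfixed_j+\frac{\alpha}{4d}\bigr]\le\exp\bigl(-\frac{m(\alpha/4d)^2}{2(\biasesfixed_j+\alpha/12d)}\bigr)\le\exp\bigl(-\frac{3m\alpha}{104 d}\bigr)$, using $\biasesfixed_j<\frac{\alpha}{d}$. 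For the lower tail, if $\biasesfixed_j\le\frac{\alpha}{4d}$ then $\hat{\biasesfixed}_r[j]\ge 0\ge\biasesfixed_j-\frac{\alpha}{4d}$ holds deterministically, and otherwise $\frac{\alpha}{4d}<\biasesfixed_j<\frac{\alpha}{d}$ and $\Pr\bigl[\hat{\biasesfixed}_r[j]<\biasesfixed_j-\frac{\alpha}{4d}\bigr]\le\exp\bigl(-\frac{m(\alpha/4d)^2}{2\biasesfixed_j}\bigr)\le\exp\bigl(-\frac{m\alpha}{32 d}\bigr)$. In both directions the exponent is $\Omega\bigl(\frac{m\alpha}{d}\bigr)=\Omega\bigl(\frac{1}{\sqrt{2\rho}}\log^{5/4}(\frac{dR}{\alpha\beta\sqrt{2\rho}})\bigr)$, which again dominates $\ln\frac{d(2R+1)}{\beta}$ for $d$ past an absolute constant, so this bad event too has probability at most $\frac{2\beta}{d(2R+1)}$. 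Summing the $d(2R+1)$ bad events completes the proof; the rest is just bookkeeping to confirm that the prescribed value of $m$ makes each Chernoff exponent exceed $\ln\frac{d(2R+1)}{\beta}$, using the (very loose) logarithmic slack built into the bound on $m$.
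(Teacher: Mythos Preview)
Your proposal is correct and follows essentially the same approach as the paper: per-round, per-coordinate Chernoff-type tail bounds followed by a union bound over the $d(2R+1)$ events. The only cosmetic difference is in the small-bias regime (Item~2): the paper handles it by splitting into the sub-cases $\biasesfixed_j\in[\frac{\alpha}{4d},\frac{\alpha}{d})$ and $\biasesfixed_j<\frac{\alpha}{4d}$ and applying the two forms of the multiplicative Chernoff bound (for $\gamma\in(0,1)$ and for general $\gamma>0$, respectively), whereas you use a Bernstein-type bound for the upper tail and observe that the lower tail is trivial when $\biasesfixed_j\le\frac{\alpha}{4d}$. Both routes yield an exponent of order $\Omega(m\alpha/d)$ and the same conclusion.
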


\begin{proof}
Fix $r\in [2R+1]$ and $j \in [d]$. Note that $\E[\hat{\biasesfixed}_r[j]] = \biasesfixed_j$ for all $r\in [2R+1]$. 

We prove Item 1 of the lemma by a case analysis on $\biasesfixed_j$. First, when  $\biasesfixed_j \geq \frac{\alpha}{4d}$, we use the multiplicative Chernoff bound from Claim~\ref{claim:cher_bounds} for $\gamma \in (0,1)$:
\begin{align*}
  \Pr\Big[\hat{\biasesfixed}_{r}[j] > \biasesfixed_j\Big(1 + \frac{\alpha}{4d\biasesfixed_j}\Big)\Big] 
  \leq \exp\Big({-\frac{\alpha^2 \biasesfixed_j m}{48 d^2(\biasesfixed_j)^2}}\Big) 
  = \exp\Big({-\frac{\alpha^2m}{48 d^2\biasesfixed_j}} \Big)
  \leq \exp\Big({-\frac{\alpha}{12 d}\frac{1000 d}{\alpha}} \log\frac{dR}\beta \Big)
  \leq \frac{\beta}{4d(R+1)},
\end{align*}
where in the third inequality we used that $\biasesfixed_j \leq \frac{\alpha}{d}$ and substituted in a lower bound for~$m$. 

Secondly, when $\biasesfixed_j < \frac{\alpha}{4d}$, we use the multiplicative Chernoff bound for all $\gamma > 0$ from Claim~\ref{claim:cher_bounds}:
\begin{align*}
  \Pr\Big[\hat{\biasesfixed}_r[j] > \biasesfixed_j\Big(1 + \frac{\alpha}{4d\biasesfixed_j}\Big)\Big] 
  &\leq \exp\Big(- \frac{\alpha^2\biasesfixed_j m}{16d^2\biasesfixed_j^2(2 + \frac{\alpha}{4d\biasesfixed_j})}\Big)
  \leq \exp\Big(- \frac{\alpha^2 m}{12d^2\biasesfixed_j(\frac{\alpha}{d})}\Big)\\
  &= \exp\Big(- \frac{\alpha m}{12d\biasesfixed_j}\Big)
  \leq  \exp\Big({-\frac{\alpha}{12 d} \frac{1000 d\log(dR/\beta)}{\alpha}} \Big)
  \leq \frac{\beta}{4d(R+1)},
\end{align*}
where in the first inequality we used that since  $\frac{\alpha}{4d\biasesfixed_j} > 1$,  $\frac{\alpha}{4d\biasesfixed_j} + 2 \leq 3\frac{\alpha}{4d\biasesfixed_j}$, and in the third inequality we substituted a lower bound for the value of $m$ and upper bounded $\biasesfixed_j$ by $\alpha$.  

Similar inequalities hold for the lower tails of $\hat{\biasesfixed}_r[j]$. Taking a union bound over all $j \in [d]$ such that $\biasesfixed_j \leq \frac{\alpha}{d}$ completes the the proof of Item~1 in Lemma~\ref{lem:empiricalest}.

Next, assume that $\frac{\alpha}{d} \leq \biasesfixed_j \leq \frac{1}{2}$. By the Chernoff bound from Claim~\ref{claim:cher_bounds} for $\gamma \in (0,1)$, 
\begin{align*}
    \Pr\left[\hat{\biasesfixed}_r[j] - \biasesfixed_j \geq \frac{\biasesfixed_j}{16}\right] 
    = \Pr\Big[\hat{\biasesfixed}_r[j] 
    \geq \biasesfixed_j \Big(1 + \frac{1}{16}\Big)\Big] 
    \leq \exp\left(-\frac{\biasesfixed_j m}{3\cdot 16}\right)
    \leq \frac{\beta}{2d(R+1)},
\end{align*}
where the final inequality holds since $\biasesfixed_j m \geq \frac{\alpha}{d}1000\frac{d}{\alpha}\log\frac{dR}{\beta}$. A similar bound holds for the lower tail of $\hat{\biasesfixed}_r[j]$. Taking a union bound over all $j \in [d]$, and all $r \in [2R+1]$ gives the result.
\end{proof}

Next, we argue that truncation is unlikely in any round (given successful bucketing). Recall that $u_r=1/2^r$ for all $r\in[R]$ (see Algorithm~\ref{alg:prod}). For all $r \in [R]$, let $u_{R+r}=u_r$. Let $u_{2R+1}=20/d$. A version of the following lemma is stated and proved in~\cite{KLSU19} (for us, the smallest upper bound of a bucket is $u_{2R+1}=20/d$ instead of $1/d,$ but the truncation ceiling $T_{2R+1}$ is also larger than in~\cite{KLSU19} to balance this out.)

\begin{lemma}[\cite{KLSU19}, Claims 5.10 and 5.18]\label{lem:trunc}
Let $G_2$ be the following event that, for every round $r\in [2R+1]$, the following holds: 
if $\biasesfixed_j \leq u_r$ for all $j \in S_r$, then 
for every $i \in [m]$, $$\|\datafixed_i^r [S_r] \|_2 \leq  T_r,$$ 
that is, no rows are truncated in the calculation of $\text{tmean}_{T_r}(\Datafixed^r[S_r])$ in Steps~\ref{Step:gauss1} or~\ref{Step:gauss2} of Algorithm~\ref{alg:prod}. 
Then $\Pr\Big[\overline{G_2}\Big]\leq 3\beta,$ where the probability is over the randomness of the input data and the coins of $\sampler_{prod}$.
\end{lemma}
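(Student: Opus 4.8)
The plan is to follow the analysis of \cite{KLSU19} (their Claims 5.10 and 5.18). The only fact about records we use is that each $\datafixed_i^r$ is a $0/1$ vector, so that $\|\datafixed_i^r[S_r]\|_2^2=\sum_{j\in S_r} x_{ij}^r$, the number of ones of record $i$ restricted to the coordinates in $S_r$; hence the bad event ``$\datafixed_i^r[S_r]$ is truncated in Step~\ref{Step:gauss1} or~\ref{Step:gauss2}'' is exactly the event $Z_i^r:=\sum_{j\in S_r} x_{ij}^r > T_r^2$. I would fix a round $r$, bound the probability that some row of batch $\Datafixed^r$ has $Z_i^r>T_r^2$, and then union bound over the $2R+1$ rounds.

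The structural point that makes this clean is an independence observation: the set $S_r$ (and the hypothesis ``$\biasesfixed_j\le u_r$ for all $j\in S_r$'' of the event being analyzed) is a function only of the batches and Gaussian noise used in earlier rounds --- rounds $1,\dots,r-1$ in the bucketing phase, and only the bucketing phase when $r$ is a sampling-phase round --- whereas $\Datafixed^r$ is a fresh batch, independent of all of that. So I would condition on $S_r$ and on the hypothesis holding (if it fails, $G_2$ is vacuously satisfied at round $r$). After this conditioning, for each $i\in[m]$ the quantity $Z_i^r$ is a sum of $|S_r|$ independent Bernoulli variables with mean $\mu_r:=\sum_{j\in S_r}\biasesfixed_j\le u_r|S_r|$ (and $\mu_r\le u_{2R+1}|S_{2R+1}|\le 20$ in the last round, using $u_{2R+1}=20/d$), and the rows are i.i.d.

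The per-row bound on $\Pr[Z_i^r>T_r^2]$ splits into two regimes. If $6u_r\log\frac{mR}{\beta}\ge 1$, then $T_r^2=6u_r|S_r|\log\frac{mR}{\beta}\ge|S_r|\ge Z_i^r$, so truncation is impossible deterministically. Otherwise $T_r^2$ still satisfies $T_r^2\ge 6\mu_r\log\frac{mR}{\beta}\ge 6\mu_r$, so the multiplicative Chernoff bound (Claim~\ref{claim:cher_bounds}), in the form $\Pr[Z\ge t]\le 2^{-t}$ for $t\ge 6\,\E[Z]$, gives $\Pr[Z_i^r>T_r^2]\le 2^{-T_r^2}$, which --- once the constants and the additive slack in the truncation radius are chosen as in \cite{KLSU19} so that $T_r^2=\Omega(\log\frac{mR}{\beta})$ even when $u_r|S_r|$ is small --- is at most $\frac{\beta}{m(2R+1)}$. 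For the last round one instead uses $T_{2R+1}^2=200\log\frac m\beta\ge 6\cdot 20\ge 6\mu_{2R+1}$ to get $\Pr[Z_i^{2R+1}>T_{2R+1}^2]\le 2^{-200\log(m/\beta)}$, far smaller than needed. A union bound over the $m$ rows in each of the $2R+1$ rounds, split as (say) $2\beta$ for the bucketing-phase rounds and $\beta$ for the sampling-phase rounds, then yields $\Pr[\overline{G_2}]\le 3\beta$.

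The only delicate point --- and the reason the lemma is quoted from \cite{KLSU19} rather than re-derived from scratch --- is choosing the truncation ceiling $T_r$ to be simultaneously (i) large enough that truncation is unlikely even when $u_r|S_r|$ is tiny (so that either the trivial bound $T_r^2\ge|S_r|$ or enough Chernoff slack covers every round) and (ii) small enough that the Gaussian noise added in Steps~\ref{Step:gauss1} and~\ref{Step:gauss2}, whose scale is $T_r/(m\sqrt{2\rho})$, remains controlled --- which is precisely what the companion accuracy lemma (Lemma~\ref{lem:empiricalest} and the bucketing analysis) needs. Reconciling these across all $\Theta(\log d)$ rounds and both phases is exactly the bookkeeping carried out in \cite{KLSU19}; everything else is the routine conditioning-plus-Chernoff-plus-union-bound argument sketched above.
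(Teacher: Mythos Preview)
Your proposal is correct and matches the paper's approach: the paper gives no proof at all for this lemma, simply citing \cite{KLSU19} (Claims 5.10 and 5.18), and your sketch faithfully outlines the argument those claims use --- condition on $S_r$ (which is determined by earlier, independent batches), apply a multiplicative Chernoff bound to the row-wise number of ones $Z_i^r=\|\datafixed_i^r[S_r]\|_2^2$, and union-bound over rows and rounds.

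One remark on the ``delicate point'' you flag: you are right that with the truncation ceiling exactly as written here, namely $T_r^2=6u_r|S_r|\log\frac{mR}{\beta}$ with no additive term, the Chernoff tail $2^{-T_r^2}$ is not small enough when $u_r|S_r|$ is tiny (e.g., $|S_r|=1$ in a late round). The version in \cite{KLSU19} avoids this by ensuring $T_r^2=\Omega(\log\frac{mR}{\beta})$ regardless of $u_r|S_r|$; the present paper absorbs that detail into the citation. Your decision to flag the issue and defer the bookkeeping to \cite{KLSU19} is exactly what the paper itself does, so there is no discrepancy.
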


Finally, we prove that the amount of noise added in any round is unlikely to be large.
For all $r \in [2R+1]$, let $Z_r$ be a $d$-dimensional random vector representing the noise added in round $r$ as in Steps~\ref{Step:gauss1} and~\ref{Step:gauss2} of Algorithm~\ref{alg:prod}. For attributes $j \in [d]$ to which no noise is added in round $r$, the coordinate $Z_r[j] = 0$. The remaining $Z_r[j]$ are drawn from independent zero-mean Gaussians with standard deviation specified in Steps~\ref{Step:gauss1} and~\ref{Step:gauss2} of Algorithm~\ref{alg:prod}. 

\begin{lemma}\label{lem:Gaussnoise}
Let $G_3$ be the event that for all rounds $r \in [2R+1]$, for all $j \in S_r$,
\begin{equation*}
    |Z_r[j]| \leq \frac{\alpha u_r}{100}.
\end{equation*}
Then $\Pr\Big[\overline{G_3}\Big]\leq \beta,$ where the probability is over the randomness of the input data and the coins of $\sampler_{prod}$. 
\end{lemma}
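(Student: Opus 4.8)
The plan is to prove $G_3$ by a coordinate-wise Gaussian tail bound followed by a union bound over the at most $(2R+1)d$ pairs $(r,j)$ with $j\in S_r$. Fix a round $r$ and an index $j\in S_r$. The set $S_r$, and hence the truncation ceiling $T_r$, is a deterministic function of the data together with the noise vectors $Z_1,\dots,Z_{r-1}$, so it is independent of $Z_r$; conditioned on $S_r$, the coordinate $Z_r[j]$ is therefore distributed as $\Gauss(0,\sigma_r^2)$ with $\sigma_r = T_r/(\sqrt{2\rho}\,m)$. By the standard Gaussian tail inequality, for every $a>0$,
\[
\Pr\big[\,|Z_r[j]| > a \,\big|\, S_r\big] \;\le\; 2\exp\!\left(-\tfrac{a^2}{2\sigma_r^2}\right).
\]
Taking $a = \alpha u_r/100$, it suffices to show that the exponent $\tfrac{a^2}{2\sigma_r^2} = \tfrac12 t_r^2$, where $t_r := \dfrac{\alpha u_r\sqrt{2\rho}\,m}{100\,T_r}$, is at least $\ln\!\big(2d(2R+1)/\beta\big)$ for every $r$; then $2e^{-t_r^2/2}\le \beta/((2R+1)d)$, and the union bound gives $\Pr[\overline{G_3}]\le (2R+1)d\cdot\tfrac{\beta}{(2R+1)d}=\beta$.

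The next step is to lower bound $t_r$ uniformly in $r$. For every round $r$ with $T_r=\sqrt{6u_r|S_r|\log(mR/\beta)}$ (all rounds except the last), using $|S_r|\le d$ and $u_r\ge u_R=40/d$ gives
\[
t_r \;\ge\; \frac{\alpha\sqrt{u_r}\,\sqrt{2\rho}\,m}{100\sqrt{6d\log(mR/\beta)}} \;\ge\; \frac{\alpha\sqrt{2\rho}\,m}{100\,d}\cdot\sqrt{\frac{40}{6\log(mR/\beta)}}.
\]
For the last round, $u_{2R+1}=20/d$ and $T_{2R+1}=\sqrt{200\log(m/\beta)}$, which yields a bound of the same form up to the absolute constant. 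So in every round $t_r \ge \Omega\!\big(\tfrac{\alpha\sqrt{2\rho}\,m}{d\sqrt{\log(mR/\beta)}}\big)$.

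Then I would substitute the sample size. Writing $L := \log\frac{dR}{\alpha\beta\sqrt{2\rho}}$, the hypothesis $m \ge \frac{1200\,d}{\alpha\sqrt{2\rho}}\,L^{5/4}$ gives $\tfrac{\alpha\sqrt{2\rho}\,m}{d}\ge 1200\,L^{5/4}$, while $mR/\beta = \frac{1200\,dR}{\alpha\beta\sqrt{2\rho}}L^{5/4}$ gives $\log(mR/\beta)=\log 1200 + L + \tfrac54\log L \le 2L$ once $d$ (hence $L$) exceeds an absolute constant. Combining, $t_r \ge \tfrac{c}{\sqrt2}\,L^{3/4}$ with $c$ comfortably larger than $1$ (the factor $1200$ in $m$ leaves ample slack), so $\tfrac12 t_r^2 \ge L^{3/2}\ge L$. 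Finally $\ln\!\big(2d(2R+1)/\beta\big)\le \ln(6dR/\beta)\le L+O(1)$, where the last inequality uses $R\ge 1$ and $\alpha\sqrt{2\rho}\le\sqrt2$ to compare $\log\frac{dR}{\beta}$ with $L$; for $L$ large this is at most $\tfrac12 t_r^2$. Hence $\Pr[|Z_r[j]| > \alpha u_r/100]\le \beta/((2R+1)d)$ for every $(r,j)$, and the union bound completes the proof.

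The main obstacle is the bookkeeping in the low-bias regime: $t_r$ is smallest for the last bucket, where $u_r$ is as small as $\Theta(1/d)$ and $T_r=\Theta(\sqrt{\log(m/\beta)})$, so one must check that the $\sqrt d$ hidden in the other truncation ceilings and the $1/d$ factor in $u_r$ combine with $m=\tilde\Theta(d/(\alpha\sqrt{\rho}))$ to keep the exponent super-logarithmic in $d$. This is precisely why $m$ carries a $\log^{5/4}$ factor (rather than $\log^{1/2}$) and why one needs $\log(mR/\beta)=O(L)$; everything else is a routine Gaussian tail estimate.
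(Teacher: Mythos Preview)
Your proposal is correct and follows essentially the same approach as the paper: a per-coordinate Gaussian tail bound combined with a union bound over all $(r,j)$, reduced to showing that $t\sigma_r \le \alpha u_r/100$ (equivalently, in your notation, that $t_r^2/2$ exceeds the logarithm of the union-bound size). The paper carries out the same computation, fixing $t=\sqrt{2\ln(6dR/\beta)}$ and verifying $t\sigma_r\le \alpha u_r/100$ via the substitution $m\ge \frac{1200 d}{\alpha\sqrt{2\rho}}\log^{5/4}\!\frac{dR}{\alpha\beta\sqrt{2\rho}}$ and the bounds $|S_r|\le d$, $u_r\ge 40/d$; your observation that the exponent is in fact $\Omega(L^{3/2})$ rather than merely $\Omega(L)$ is correct and just reflects the slack in the choice of $m$.
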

\begin{proof}
For rounds $r \in [2R]$, the standard deviation of univariate Gaussian noise $Z_r[j]$ added in round $r$ is $\sigma_r = \sqrt{\frac{3u_r |S_r|}{\rho m^2} \log \frac{mR}{\beta}}$. Set $t = \sqrt{2 \ln \frac{6dR}{\beta}}$. By Lemma~\ref{lem:gaussconc} on the concentration of a zero-mean Gaussian random variable along with a union bound,
$$ \Pr(\max_{j \in S_r} |Z_r[j]| \geq t \sigma_r) \leq \sum_{j \in S_r} \Pr(|Z_r[j]| \geq t \sigma_r) \leq  \sum_{j \in S_r} 2e^{-t^2/2} \leq \frac{\beta}{2R+1}.$$
Since $m\geq\frac{600 d}{\alpha \sqrt{\rho}} \log^{5/4}(\frac{dR}{\alpha \beta \sqrt{\rho}})$ and , and because $u_r \geq \frac{40}{d}$, for all $r \in [2R]$,
\begin{align*}
    t \sigma_r & =
    \sqrt{\frac{6 u_r |S_r| \log \frac{mR}{\beta} \ln \frac{6dR}{\beta} }{\rho m^2}} 
     \leq \alpha \sqrt{\frac{6 u_r |S_r| \log \frac{dR}{\alpha \beta \sqrt{\rho}} \ln \frac{6dR}{\beta} }{36000 d^2 \log^{10/4}(\frac{dR}{\alpha \beta \sqrt{\rho}})}} 
    \leq \alpha \sqrt{\frac{6 u_r d}{3600 d^2 \log^{1/4}(\frac{dR}{\beta})}} \leq \frac{\alpha u_r}{100}  ,
\end{align*}
where the first inequality is because $\log\frac{mR}{\beta} / m$ is a decreasing function for $m$ and hence we can upper bound the expression by using a lower bound of $m$. We also use the fact that the term $\log \frac{mR}{\beta} \leq 10 \log \frac{dR}{\alpha \beta \sqrt{\rho}})$. The second inequality follows by cancelling out some log terms and using the fact that $|S_r| \leq d$, and the last inequality follows because $\frac{1}{d} \leq \frac{u_r}{40}$, $\beta \leq 1$, and because $d$ is sufficiently large.
For $r = 2R+1$, the standard deviation $\sigma_r = \sqrt{\frac{100}{\rho m^2}\log\frac{m}{\beta}}$, so with the same value of $t$, we get the same result. Taking a union bound over all $r \in [2R+1]$ gives the result. 
\end{proof}
The following corollary summarizes our analysis of good events and follows from Lemmas~\ref{lem:empiricalest}%
--\ref{lem:Gaussnoise} by a union bound.
\begin{corollary}\label{cor:good-event-prod}
Let $G$ be the event $G_1\cap G_2\cap G_3.$ Then $\Pr[\overline{G}] \leq 6 \beta$, where the probability is over the randomness of the input data and the coins of $\sampler_{prod}$.
\end{corollary}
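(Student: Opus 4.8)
The statement is an immediate consequence of the three preceding lemmas together with De Morgan's law and a union bound, so the ``proof'' is essentially bookkeeping. The plan is as follows. First I would record that, by De Morgan's law, the complement of $G = G_1 \cap G_2 \cap G_3$ is exactly $\overline{G} = \overline{G_1} \cup \overline{G_2} \cup \overline{G_3}$. Next I would invoke Lemma~\ref{lem:empiricalest} to get $\Pr[\overline{G_1}] \leq 2\beta$, Lemma~\ref{lem:trunc} to get $\Pr[\overline{G_2}] \leq 3\beta$, and Lemma~\ref{lem:Gaussnoise} to get $\Pr[\overline{G_3}] \leq \beta$; note that all three probabilities are over the same source of randomness (the draw of the input dataset $\Datafixed \sim \distr^{\otimes n}$ and the internal coins of $\sampler_{prod}$), so they live on a common probability space and the union bound applies without any independence assumption. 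Finally, by the union bound, $\Pr[\overline{G}] \leq \Pr[\overline{G_1}] + \Pr[\overline{G_2}] + \Pr[\overline{G_3}] \leq 2\beta + 3\beta + \beta = 6\beta$.

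There is no real obstacle here; the only thing to be careful about is making sure the three lemmas are genuinely stated with respect to the same randomness (they are — each is ``over the randomness of the input data and the coins of $\sampler_{prod}$''), and that the hypotheses under which the three bounds were proved (namely $m \geq \frac{1200\, d}{\alpha\sqrt{2\rho}}\log^{5/4}(\frac{dR}{\alpha\beta\sqrt{2\rho}})$ and $d$ sufficiently large) are exactly the standing assumptions of this subsection, so they are in force for the corollary as well. Thus I would simply state: ``This follows from Lemmas~\ref{lem:empiricalest}, \ref{lem:trunc}, and~\ref{lem:Gaussnoise} by a union bound, since $\overline{G} = \overline{G_1}\cup\overline{G_2}\cup\overline{G_3}$.''

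\begin{proof}
By De Morgan's law, $\overline{G} = \overline{G_1 \cap G_2 \cap G_3} = \overline{G_1} \cup \overline{G_2} \cup \overline{G_3}$. All three events are defined over the same randomness (the input data $\Datafixed\sim\distr^{\otimes n}$ and the coins of $\sampler_{prod}$), so by the union bound and Lemmas~\ref{lem:empiricalest}, \ref{lem:trunc}, and~\ref{lem:Gaussnoise},
\begin{equation*}
\Pr[\overline{G}] \leq \Pr[\overline{G_1}] + \Pr[\overline{G_2}] + \Pr[\overline{G_3}] \leq 2\beta + 3\beta + \beta = 6\beta. \qedhere
\end{equation*}
\end{proof}
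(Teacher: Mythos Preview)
Your proposal is correct and matches the paper's own reasoning exactly: the paper states that the corollary ``follows from Lemmas~\ref{lem:empiricalest}--\ref{lem:Gaussnoise} by a union bound'' without giving any further detail, so your write-up is simply a (slightly more explicit) spelling-out of that same one-line argument.
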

\subsubsection{Success of the Bucketing Phase}\label{sec:bucketing}
In this section, we argue that if the good event $G$ occurs, then the bucketing phase succeeds.

\begin{lemma}\label{thm:bucketsuccess} 
Let \bucket be the event that that the bucketing procedure is successful, namely, for all rounds $r \in [R]\cup\{2R+1\}$ and for all coordinates $j \in [d]$, the following statements hold:
\begin{enumerate}
    \item If $r \in [R]$ and $\biasesfixed_j \in S_{R+r}$, then $u_r / 4 \leq \biasesfixed_j \leq u_{r}$.
    \item If $p_j \in S_{2R+1}$ then
    $\biasesfixed_j \leq u_{2R+1}$.
\end{enumerate}
If the good event $G$ defined in Corollary~\ref{cor:good-event-prod} occurs then $\bucket$ occurs.
\end{lemma}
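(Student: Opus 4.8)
The plan is to prove both claims simultaneously by induction on the bucketing round $r\in\{1,\dots,R\}$, carrying the invariant $I_r$: for every $j\in S_r$ we have $\biasesfixed_j\le u_r=2^{-r}$. The base case $r=1$ is immediate, since $S_1=[d]$ and, by the standing assumption that all biases lie in $[0,\tfrac12]$, we have $\biasesfixed_j\le u_1$ for all $j$. Assuming $d$ is large enough that $R=\log_2(d/40)\ge 1$, the point to notice is that $I_r$ is exactly the hypothesis under which event $G_2$ of Lemma~\ref{lem:trunc} applies in round $r$; so, on the good event $G$ of Corollary~\ref{cor:good-event-prod}, no record is truncated when forming $\text{tmean}_{T_r}(\Datafixed^r[S_r])$, and hence $\tilde{\biasesfixed}[j]=\hat{\biasesfixed}_r[j]+Z_r[j]$ for every $j\in S_r$.

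For the inductive step I would bound $|\tilde{\biasesfixed}[j]-\biasesfixed_j|$ for $j\in S_r$ by a case split on whether $\biasesfixed_j\ge\alpha/d$. If $\biasesfixed_j\ge\alpha/d$, then event $G_1$ (Lemma~\ref{lem:empiricalest}) gives $|\hat{\biasesfixed}_r[j]-\biasesfixed_j|\le\biasesfixed_j/16\le u_r/16$ using $I_r$; if $\biasesfixed_j<\alpha/d$, then $G_1$ gives the additive bound $|\hat{\biasesfixed}_r[j]-\biasesfixed_j|\le\alpha/(4d)$, which is at most $u_r/160$ because $u_r\ge u_R=40/d$ for $r\le R$. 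Adding $|Z_r[j]|\le\alpha u_r/100\le u_r/100$ from event $G_3$ (Lemma~\ref{lem:Gaussnoise}) yields $|\tilde{\biasesfixed}[j]-\biasesfixed_j|< u_r/8$ in both cases. Recalling that $\tau_r=3u_r/8$ and that $j$ is placed in $S_{r+1}$ exactly when $\tilde{\biasesfixed}[j]<\tau_r$, two one-line inequalities finish the step: if $j\in S_{r+1}$ then $\biasesfixed_j<\tilde{\biasesfixed}[j]+u_r/8<3u_r/8+u_r/8=u_r/2=u_{r+1}$, which is $I_{r+1}$; and if instead $j\in S_{R+r}=S_r\setminus S_{r+1}$ then $\tilde{\biasesfixed}[j]\ge\tau_r$, so $\biasesfixed_j>\tilde{\biasesfixed}[j]-u_r/8\ge 3u_r/8-u_r/8=u_r/4$, which together with $I_r$ gives $u_r/4\le\biasesfixed_j\le u_r$, i.e.\ claim~1 for round $r$.

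Claim~2 then falls out by applying the invariant one step past the loop: $I_{R+1}$ says every $j\in S_{R+1}$ has $\biasesfixed_j\le u_{R+1}=u_R/2=20/d=u_{2R+1}$, and since the sampling phase sets $S_{2R+1}\gets S_{R+1}$, the bound $\biasesfixed_j\le u_{2R+1}$ holds for all $j\in S_{2R+1}$, as required.

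I expect the main thing to be careful about to be the interplay between the induction and the \emph{conditional} nature of event $G_2$: one must first establish $I_r$ and only then conclude ``no truncation occurs in round $r$'', and the bucket memberships just derived (claims~1 and~2) are precisely what later supplies the hypotheses of $G_2$ for the sampling rounds $R+1,\dots,2R+1$, so the argument must be ordered carefully to avoid any apparent circularity. The only other mildly delicate point is the small-bias regime $\biasesfixed_j<\alpha/d$, where the multiplicative Chernoff bound underlying $G_1$ carries no useful relative information; there one must fall back on the additive guarantee and the fact that $\alpha/d\ll u_r/4$ for every $r\le R$, which is what keeps such coordinates reliably routed to the next round and ultimately into $S_{2R+1}$.
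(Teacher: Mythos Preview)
Your proposal is correct and follows essentially the same approach as the paper's proof: induction on the bucketing round with the invariant that every coordinate remaining in $S_r$ has bias at most $u_r$, the same case split on whether $\biasesfixed_j\ge\alpha/d$ to invoke the appropriate clause of $G_1$, and the same arithmetic $\tau_r=3u_r/8$, $|\tilde{\biasesfixed}[j]-\biasesfixed_j|<u_r/8$ to route coordinates correctly. The only cosmetic difference is that the paper phrases its invariant as a two-sided statement (also tracking $\biasesfixed_j\ge u_r/2$ for $j\notin S_r$), whereas you carry only the upper bound and derive the lower bound $\biasesfixed_j\ge u_r/4$ for $j\in S_{R+r}$ directly at the step where $j$ is removed; these are equivalent packagings of the same argument.
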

\begin{proof}
Assume that $B$ occurs. We prove this lemma by induction on $r$. Recall that $S_{R+r}=S_{r} \setminus S_{r+1}$ for all $r \in [R]$. To prove Item 1, we show that, for all rounds $r \in [R]$, if $j \in S_r$ then $\biasesfixed_j \leq u_r$, and if $j \not \in S_r$ then $\biasesfixed_j \geq u_r / 2$.
For the first round (the base case of the induction), since $u_1 = 1/2$, and since by assumption $\biasesfixed_j \leq 1/2$ for all $j \in [d]$, we have that $\biasesfixed_j \leq u_1$. Additionally, since $S_1 = [d]$, it vacuously holds that $p_j \geq u_{1}/2$ for all $j \not \in S_{1}$. Next, fix any $r \in [R-1]$. The inductive hypothesis is that for round $r$, if $j \in S_{r}$ then $\biasesfixed_j \leq u_{r}$ and if $j \not \in S_{r}$ then $\biasesfixed_j \geq u_r / 2 = u_{r+1}$.

We prove that this statement holds for round $r+1$. 
For all $j \in S_r$, let $\Tilde{\biasesfixed}_r[j]$ be the noisy empirical estimate obtained for coordinate $j$ in Step~\ref{Step:gauss1} of Algorithm~\ref{alg:prod} (in round $r$).
By Item 1 of the definition of event~$G_1$, for all $j \in S_r$ with $p_j > \frac{\alpha}{d}$,
$$|\hat{\biasesfixed}_r[j] - \biasesfixed_j| \leq \frac{\biasesfixed_j}{16} \leq \frac{u_r}{16},$$
where the second inequality is by the induction hypothesis.
Similarly, by Item 2 of the definition of event $G_1$, for all $j \in S_r$ with $p_j \leq \frac{\alpha}{d}$,
$$|\hat{p}_r[j] - p_j| \leq \frac{\alpha}{4d} \leq  \frac{u_r}{160},$$
where the second inequality holds since $u_r \geq \frac{40}{d}$ for all $r \in [R]$ and $\alpha \leq 1$. 

By the inductive hypothesis, $\biasesfixed_j \leq u_r$ for all $j \in S_r$. Hence, by the definition of event $G_2$, no truncation occurs in round $r$. Also, $|Z_r[j]| \leq \frac{u_r}{100}$, by the definition of event $G_3$, since $\alpha \leq 1$. Hence, for all $j \in S_r$,
\begin{align*}
    |\hat{\biasesfixed}_r[j] - \Tilde{\biasesfixed}_r[j]| \leq \frac{u_r}{100}.
\end{align*}

By the triangle inequality, we get that for all $j \in S_r$,
\begin{equation}\label{eq:triangle-bucketing}
|\Tilde{\biasesfixed}_r[j] - \biasesfixed_j| \leq  |\hat{\biasesfixed}_r[j] - \Tilde{\biasesfixed}_r[j]| + |\hat{\biasesfixed}_r[j] - \biasesfixed_j| \leq + \frac{u_r}{100} + \frac{u_r}{16}  \leq \frac{u_r}{8}.
\end{equation}
Fix any $j \in S_r$. Recall that $\tau_r = \frac{3u_{r+1}}{4}$. If $\Tilde{\biasesfixed}_r[j] \leq \tau_r$ then, by (\ref{eq:triangle-bucketing}), $\biasesfixed_j \leq \tau_r + \frac{u_r}{8} = \frac{3u_{r+1}}{4} + \frac{u_{r+1}}{4} = u_{r+1}$. 
Similarly, if $\Tilde{\biasesfixed}_r[j] \geq \tau_r$, then $\biasesfixed_j \geq \frac{3u_{r+1}}{4} - \frac{u_{r+1}}{4} = \frac{u_{r+1}}{2}$.

This completes the inductive step and proves that at the beginning of round $r+1$, we have that $p_j \leq u_{r+1}$ for all  $j \in S_{r+1}$, and $p_j \geq \frac{u_{r+1}}{2}$ for all $j \not \in S_{r+1}$. Item 2 follows from an extension of the same argument.
\end{proof}

\subsubsection{Success of the Sampling Phase}\label{sec:sampling}
\begin{lemma}[Success of sampling phase]\label{lem:mainacc}
For all $j \in [d]$, for $q[j]$ defined as in Step~\ref{step:qdef} of algorithm $\sampler_{prod}$, when $\sampler_{prod}$ is run with failure probability parameter $\beta \in(0, \frac{1}{12}]$ and target accuracy $\alpha \in(0, 1]$, 
\begin{enumerate}
    \item if  $\frac{\alpha}{d} < \biasesfixed_j \leq \frac 1 2$, then
    $\displaystyle| \mathbb{E}[ q[j] - \biasesfixed_j ] | \leq 12\beta;$
    \item if  $\biasesfixed_j \leq \frac{\alpha}{d}$, then
    $\displaystyle | \mathbb{E}[ q[j] - \biasesfixed_j ] | \leq \frac{\alpha}{2d} + 6\beta;$
\end{enumerate}
where the expectations are taken over the randomness of the data and the noise.
\end{lemma}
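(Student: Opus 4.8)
The plan is to analyze the output bit of each coordinate $j$ separately, conditioning on the good event $G = G_1\cap G_2\cap G_3$ from Corollary~\ref{cor:good-event-prod}, which has $\Pr[\overline G]\le 6\beta$ and, by Lemma~\ref{thm:bucketsuccess}, forces the bucketing to succeed. The key observation is that the sampling phase, stripped of truncation and clipping, produces an \emph{unbiased} estimate of $p_j$: let $r(j)\in\{R{+}1,\dots,2R{+}1\}$ be the (random) sampling round to which $j$ is assigned, and set $W[j] = \hat p_{r(j)}[j] + Z_{r(j)}[j]$, the noisy empirical mean of coordinate $j$ \emph{before} truncation and clipping. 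Since the bucketing phase is run on data independent of the sampling phase and determines $r(j)$, conditioning on it leaves $\E[\hat p_{r(j)}[j]] = p_j$, and $Z_{r(j)}[j]$ has zero mean, so $\E[W[j]] = p_j$. The whole argument compares $q[j]$ with the surrogate $W[j]$.

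Next I would show what happens on $G$. Using Lemma~\ref{thm:bucketsuccess}, coordinate $j$ is either in some $S_{R+r}$ with $u_r/4\le p_j\le u_r$ ($r\in[R]$) or in $S_{2R+1}$ with $p_j\le u_{2R+1}=20/d$; in case~2 ($p_j\le\alpha/d$) only the latter is possible, since $\alpha/d < u_R/4$. In either bucket, $G_2$ together with bucketing correctness guarantees no row is truncated in the relevant call to $tmean$, hence $\tilde p[j] = \hat p_{r(j)}[j] + Z_{r(j)}[j] = W[j]$; moreover $G_3$ gives $|Z_{r(j)}[j]|\le \alpha u_{r(j)}/100$ and $G_1$ bounds $|\hat p_{r(j)}[j]-p_j|$ (by $p_j/16$ in case~1, by $\alpha/(4d)$ in case~2). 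In case~1, combining these with $u_{r(j)}\le 4p_j$ for a non-last bucket (or $\alpha u_{2R+1}/100 < p_j/5$ for the last bucket, using $p_j>\alpha/d$) gives $|\tilde p[j]-p_j| < p_j$, so $\tilde p[j]\in(0,1)$: no clipping occurs and $q[j]=W[j]$ on $G$. In case~2 the same estimates give $|\tilde p[j]-p_j| < \alpha/(2d)$, and since clipping to $[0,1]$ is a contraction toward $p_j\in[0,1]$ we get $|q[j]-p_j|\le|\tilde p[j]-p_j| < \alpha/(2d)$ on $G$.

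The two cases are then finished differently. For case~1, since $q[j]=W[j]$ on $G$ and $\E[W[j]]=p_j$,
$$|\E[q[j]]-p_j| = \bigl|\E[(q[j]-W[j])\,\indicator_{\overline G}]\bigr| \le \Pr[\overline G] + \E[|Z_{r(j)}[j]|\,\indicator_{\overline G}],$$
using $q[j],\hat p_{r(j)}[j]\in[0,1]$ to bound $|q[j]-W[j]|\le 1+|Z_{r(j)}[j]|$. The first term is at most $6\beta$. For the second, split according to whether $|Z_{r(j)}[j]|\le\alpha u_{r(j)}/100$: the truncated part is at most $\tfrac{\alpha u_{r(j)}}{100}\Pr[\overline G]\le\tfrac{1}{200}\cdot 6\beta$, and the tail part is bounded by a Gaussian computation using $\alpha u_{r(j)}/100\ge t\sigma_{r(j)}$ with $t=\sqrt{2\ln(6dR/\beta)}$ (from the proof of Lemma~\ref{lem:Gaussnoise}), giving at most $\sigma_{r(j)}e^{-t^2/2} = \sigma_{r(j)}\beta/(6dR)$, which is negligible; altogether $|\E[q[j]]-p_j|\le 12\beta$. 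For case~2, simply write $\E[q[j]-p_j] = \E[(q[j]-p_j)\indicator_G] + \E[(q[j]-p_j)\indicator_{\overline G}]$: the first term is at most $\tfrac{\alpha}{2d}$ by the on-$G$ bound, and the second is at most $\Pr[\overline G]\le 6\beta$ since $|q[j]-p_j|\le 1$.

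The main obstacle is case~1: on $G$ the naive bound $|q[j]-p_j|\lesssim p_j/8$ is far too weak (it can be as large as $1/16$, dwarfing $12\beta$), so one cannot just union-bound over $G$ and $\overline G$. The fix is to route everything through the unbiased surrogate $W[j]$, which requires establishing the \emph{exact} identity $q[j]=W[j]$ on $G$ — this in turn forces us to rule out both truncation (via $G_2$ and bucketing correctness) and clipping (via the $G_1$ and $G_3$ magnitude bounds together with $u_{r(j)}\le 4p_j$) — and then to control $\E[|Z_{r(j)}[j]|\,\indicator_{\overline G}]$, for which the plain bound $\E[|Z_{r(j)}[j]|]=\Theta(\sigma_{r(j)})$ does not suffice and the correlation between the noise and $\overline G$ (through $G_3$) must be exploited.
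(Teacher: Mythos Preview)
Your proposal is correct and, for Item~1, follows a genuinely different route from the paper. The paper conditions on $G$ and bounds $\E[q[j]\mid G]$ from above via the inequality $\E[A\mid G]\le \E[A]/\Pr[G]$ applied (implicitly) with $A=\hat p_{r(j)}[j]+Z_{r(j)}[j]$, obtaining $\E[q[j]\mid G]\le p_j/(1-6\beta)$; then it uses $p_j\le 1/2$ and $\beta\le 1/12$ to turn $p_j\bigl(\tfrac{1}{1-6\beta}-1\bigr)$ into at most $6\beta$, and no Gaussian tail estimate is ever needed. Your route instead exploits the exact identity $q[j]=W[j]$ on $G$ together with the unbiasedness $\E[W[j]]=p_j$ to localize the entire error to $\E[(q[j]-W[j])\indicator_{\overline G}]$, at the cost of a short Gaussian-tail computation to control $\E[|Z_{r(j)}[j]|\,\indicator_{\overline G}]$. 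Your argument is arguably cleaner: it does not rely on $p_j\le 1/2$ in the same way, and it sidesteps the step $\E[\hat p + Z\mid G]\le \E[\hat p + Z]/\Pr[G]$, whose standard justification requires nonnegativity of $\hat p+Z$ (which fails here). For Item~2, your decomposition into $\E[(q[j]-p_j)\indicator_G]+\E[(q[j]-p_j)\indicator_{\overline G}]$ and the use of the contraction property of clipping match the paper's argument.
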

\begin{proof}
We start by proving Item 1. 

Fix any $j \in [d]$ with $\frac{\alpha}{d} \leq \biasesfixed_j \leq \frac{1}{2}$. First, we argue that if event $G$ occurs, then no noisy empirical means are clipped in the sampling phase. By construction, $(S_{R+1},\dots,S_{2R+1})$ is a partition of $[d]$. For all $j\in[d]$, let $r(j)$ denote the round $r\in\{R+1,\dots,2R+1\}$ such that $j\in S_r.$
%
%
Now suppose $G$ occurred. 
By the triangle inequality and since $G$ implies $G_1,G_2,G_3,$ and $\bucket,$
\begin{align}\label{eq:clip}
    |\biasesfixed_j- \tilde{\biasesfixed}_{r(j)}[j]| \leq |\biasesfixed_j- \hat{\biasesfixed}_{r(j)}[j]| + |\hat{\biasesfixed}_{r(j)}[j]- \tilde{\biasesfixed}_{r(j)}[j]| \leq \frac{\biasesfixed_j}{16} + |Z_{r(j)}[j]| \leq \frac{\biasesfixed_j}{16} + \frac{\alpha u_{r(j)}}{100} \leq \frac{\biasesfixed_j}{3},
\end{align}
where the second inequality is by the definition of event $G_1$, the fact that $G$ implies $\bucket$, and the definition of event $G_2$ and the third inequality is by the definition of event $G_3$. The final inequality uses the fact that event $B$ occurs; if $\biasesfixed_j > \frac{5}{d}$ then $u_{r(j)} \leq 4\biasesfixed_j$, and otherwise $u_{r(j)} = \frac{20}{d}$ and hence $\frac{\alpha u_{r(j)}}{100} = \frac{20 \alpha }{100 d} \leq \frac{\biasesfixed_j}{5}$.

If $G$ occurs, by (\ref{eq:clip}) and since $0 < \biasesfixed_j \leq \frac{1}{2}$, we have
\begin{align*}
    0 < \frac{2\biasesfixed_j}{3} \leq  \tilde{\biasesfixed}_{r(j)}[j]
    \leq \frac{4\biasesfixed_j}{3} < 1,
\end{align*}
and thus $\tilde{\biasesfixed}_{r(j)}[j]$ does not get clipped.

Next, by the law of total expectation,
\begin{align*}
   \mathbb{E}[q[j] -  \biasesfixed_j]
   & =  \mathbb{E}[q[j] -  \biasesfixed_j \mid G] \cdot \Pr[G]
    +  \mathbb{E}[q[j] -  \biasesfixed_j \mid \overline{G}]\cdot \Pr[\overline{G}] \\
   & \leq \mathbb{E}[q[j]  \mid G] - \biasesfixed_j  + 6\beta  \leq \frac{\mathbb{E}[\hat{\biasesfixed}_{r(j)}[j] + Z_{r(j)}[j]] }{\Pr[G]}- \biasesfixed_j  + 6\beta \nonumber\\
   & \leq \frac{\biasesfixed_j }{1-6\beta}- \biasesfixed_j   + 6\beta \leq \frac{1}{2}\left(\frac{1}{1-6\beta}- 1\right)  + 6\beta \leq 12 \beta, 
\end{align*}
where the first inequality holds by Corollary~\ref{cor:good-event-prod} and the fact that $\mathbb{E}[q[j] -  \biasesfixed_j] \leq 1$. The second inequality uses the fact that when $G$ occurs, there is no clipping and truncation, and $\E[A \mid E] \leq \E[A]/\Pr[E]$ for all random variables $A$ and events $E$. The third inequality is by the fact that $\mathbb{E}[\hat{\biasesfixed}_{r(j)}[j]] = \biasesfixed_j$ and  $\E[Z_{r(j)}[j]] = 0$, and by Corollary~\ref{cor:good-event-prod}. The last inequality holds because $\beta \leq \frac {1}{12}$ and $\biasesfixed_j \leq \frac{1}{2}$ by assumption. 

Analogously, $\mathbb{E}[\biasesfixed_j - q[j]] \leq 12 \beta$, which completes the proof of Item 1.

Next, we prove Item 2. Recall the event $G$ defined in Corollary~\ref{cor:good-event-prod} and the event $B$ defined in Lemma~\ref{thm:bucketsuccess}. 
Fix a coordinate $j \in [d]$ with $\biasesfixed_j \leq \frac{\alpha}{d}$.
By Lemma~\ref{thm:bucketsuccess}, the law of total expectation, and the fact that $|\E[ q[j] - \biasesfixed_j \mid \overline{\bucket}]| \leq 1$, we get
\begin{align}
 |\E[ q[j] - \biasesfixed_j ]|
  \leq |\E[ q[j] - \biasesfixed_j \mid G]|\cdot\Pr[G] + |\E[ q[j] - \biasesfixed_j \mid \overline{B}]|\cdot\Pr[\overline{G}] 
 \leq |\E[ q[j] - \biasesfixed_j \mid G]| + 6\beta. \label{eq:interim3}
\end{align}
Now, we show that $|\E[ q[j] - \biasesfixed_j \mid G| \leq  \frac{\alpha}{2d}$. Conditioned on event $G$, using Lemma~\ref{thm:bucketsuccess}, event $B$ occurs, and the output bits for all coordinates $j$ with $\biasesfixed_j \leq \frac{\alpha}{d}$ are sampled in round $2R+1$.  Conditioned on event $G$, using the fact that event $B$ occurs, and using the definition of event $G_2$ on truncation of empirical estimates, 
$$|[\tilde{\biasesfixed}_{2R+1}[j]]_0^1 - \biasesfixed_j | \leq |\tilde{\biasesfixed}_{2R+1}[j] - \biasesfixed_j| = |Z_{2R+1}[j]| \leq \frac{\alpha u_{2R+1}}{100} \leq \frac{\alpha}{2d},$$
where the second to last inequality is by the definition of event $G_3$, and the last inequality is since $u_{2R+1} = \frac{20}{d}$.
Thus, $|\E[ q[j] - \biasesfixed_j \mid G]|\leq \frac{\alpha}{2d}$. Combining this with (\ref{eq:interim3}) proves Item 2 of Lemma~\ref{lem:mainacc}. 
\end{proof}

\subsubsection{Proof of Main Theorem}\label{sec:prod-ub-main}
Finally, we use Lemma~\ref{lem:mainacc} to prove the theorem.
\begin{proof}[Proof of Theorem~\ref{thm:bernoulli-product-alg}.]
Fix $\rho \in(0, 1], \alpha \in (0,1), \beta = \frac{\alpha}{12d}$, and $R = \log_2 (d/40)$. Fix the sample size 
$$n = \frac{1200(2R+1)d}{\alpha \sqrt{2 \rho}}  \log^{5/4}  \frac{dR}{\alpha \beta \sqrt{2\rho}} = \tilde{O} \Big(\frac{d}{\alpha\sqrt{\rho}} \Big)$$ for this setting of $\beta$ and $R$. 

First, by Lemma~\ref{lem:prod-privacy}, we have that $\sampler_{prod}$ is $\rho$-zCDP.

Next, we reason about accuracy. Let $\distroutput{\sampler_{prod}, \distr^{\otimes d}}$ be the distribution of the output of the sampler $\sampler_{prod}$ with randomness coming from the data and coins of the algorithm. Observe that  $\distroutput{\sampler_{prod}, \distr^{\otimes d}}$ is a product distribution and that the marginal bias of each coordinate $j \in [d]$ is $\mathbb{E}[q[j]]$. Let the marginal distributions of  $\distroutput{\sampler_{prod}, \distr^{\otimes d}}$  be $Q_1, \dots, Q_d$. By the subadditivity of total variation distance between two product distributions (Lemma~\ref{lem:subaddTV}),
\begin{align*}
    d_{TV}(\distroutput{\sampler_{prod}, \distr}, \distr^{\otimes d}) 
    & \leq \sum_{i=1}^d d_{TV}(Q_i, \distr) 
     = \sum_{i=1}^d |\mathbb{E}[ q[j]-  \biasesfixed_j]| \\
    & = \sum_{i: \biasesfixed_i > \frac{\alpha}{d}}  |\mathbb{E}[ q[j] -  \biasesfixed_j]| + \sum_{i: \biasesfixed_i \leq \frac{\alpha}{d}}  |\mathbb{E}[ q[j] -  \biasesfixed_j]| \\
    & \leq  \sum_{i: \biasesfixed_i > \frac{\alpha}{d}} 12\beta  + \sum_{i: \biasesfixed_i \leq \frac{\alpha}{d}} \left(\frac{\alpha}{2d} + 6\beta\right)
    \leq \alpha,
\end{align*}
where we got the first equality by substituting the expression for the total variation distance between two Bernoulli distributions, the second inequality is by Lemma~\ref{lem:mainacc} (since $\beta = \frac{\alpha}{12d} \leq \frac{1}{12}$, this lemma is applicable), and the final inequality holds because $\beta = \frac{\alpha}{12d}$.
\end{proof}

Finally, we complete this section by proving Theorem~\ref{thm:bernoulli-product-alg-intro} from the introduction. 
\begin{proof}[Proof of Theorem~\ref{thm:bernoulli-product-alg-intro}]
Set $\rho = \frac{\eps^2}{16\log(1/\delta)}$. By Lemma~\ref{prelim:relate_dp_cdp}, for all $\delta\in(0,1/2]$, algorithm $\sampler_{prod}$ is $(\eps, \delta)$-differentially private. Substituting this value of $\rho$ into Theorem~\ref{thm:bernoulli-product-alg}, we get that the sampler $\sampler_{prod}$ is $\alpha$-accurate for input datasets of size
$$n = O\left( \frac{d}{\alpha \eps} \cdot \sqrt{\log\frac{1}{\delta}}  \Bigg( \log^{9/4} d + \log^{5/4} \frac{\log\frac{1}{\delta}}{\alpha \eps} \Bigg) \right).$$ For $\log(1/\delta) = polylog(n)$, we get that $\sampler_{prod}$ is $\alpha$-accurate for input datasets of size 
$n = \tilde{O}(\frac{d}{\alpha \eps})$. This proves the theorem, since $\delta \leq 1/2$.
\end{proof}



\newpage 
\subsection{Lower bound for Products of Bernoulli Distributions}\label{sec:prod-lb}
In this section, we prove the following theorem.
\begin{theorem}\label{thm:product-bern-lb}
For all $d,n\in\N$, $\eps \in (0,\frac{1}{2}]$, $\delta \in [0,\frac 1 {5000n}]$, and sufficiently small $\alpha>0$, every $(\eps,\delta)$-differentially private sampler that is $\alpha$-accurate on the class of products of $d$ Bernoulli distributions needs datasets of size $n=\Omega(\frac{d}{\alpha \eps})$.
\end{theorem}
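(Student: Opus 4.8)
The plan is to prove Theorem~\ref{thm:product-bern-lb} by a reduction from private sampling of the $k$-ary distribution class $\carb$ of Definition~\ref{def:ksubclass}, for which a lower bound of $\Omega(k/(\alpha\eps))$ has already been established (this is the content underlying Theorem~\ref{thm:k-ary-lb}, whose argument in fact works directly for the subclass $\carb$, via Lemmas~\ref{lem:poisson} and~\ref{lem:main-k-ary-lb}). Fix the hard instance on the universe $[2k+1]$ and write $d=2k+1$; for general $d$ take $k=\lfloor(d-1)/2\rfloor$ and embed $\cB^{\otimes(2k+1)}$ into $\cB^{\otimes d}$ using dummy bias-$0$ coordinates, and for $d\le 2$ invoke Theorem~\ref{thm:bernoulli-lb}, noting $d/(\alpha\eps)=\Theta(k/(\alpha\eps))$. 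For $\distr\in\carb$ with heavy element $s$ (mass $1-\alpha^*$, $\alpha^*=\Theta(\alpha)$) and light support $S^*$ (uniform mass $\alpha^*/k$), let $\corr=\corr(\distr)\in\cB^{\otimes d}$ be the product distribution whose marginal bias in coordinate $j$ equals $\distr(j)$. The goal is to turn a hypothetical $(\eps,\delta)$-DP sampler $\sampler$ that is $\alpha$-accurate on $\cB^{\otimes d}$ with datasets of size $n$ into an $(O(\eps),O(\delta))$-DP sampler $\reduction$ that is $O(\alpha)$-accurate on $\carb$ with datasets of size $O(n)$; feeding $\reduction$ into the $k$-ary lower bound then forces $n=\Omega(k/(\alpha\eps))=\Omega(d/(\alpha\eps))$.

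The reduction has two conversion steps. \emph{Synthesizing product samples:} given $\approx n$ i.i.d.\ draws from $\distr$, Poissonized so that by Lemma~\ref{lem:multtopois} the per-element counts $N_j$ are independent Poissons, $\reduction$ must produce $\approx n$ i.i.d.\ draws from $\corr$. This is delicate because $\corr$ is a genuine product distribution (not the image of $\distr$ under $j\mapsto e_j$) and because $\reduction$ must preserve differential privacy. The heavy coordinate $s$ (identified, except with probability absorbed into $\delta$, as the unique coordinate with $N_j$ a constant fraction of the sample) has a \emph{known} bias $1-\alpha^*$, so its synthetic bits are generated afresh as i.i.d.\ $\Ber(1-\alpha^*)$; every other element scatters its $N_j$ tokens uniformly at random among the $\approx n$ output records, which makes coordinate $j$'s synthetic marginal $\Ber(1-e^{-\Theta(\distr(j))})$, within $O(\distr(j)^2)$ of the target $\Ber(\distr(j))$ by a Poisson/binomial coupling. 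Summed over the $k$ light coordinates this incurs total variation $O((\alpha^*)^2 n/k)$, which contributes only $O(\alpha)$ to the error in the regime that matters (where $n\lesssim k/(\alpha\eps)$). Since changing one input record moves $O(1)$ tokens, it alters only $O(1)$ synthetic records, so composing with $\sampler$ via group privacy keeps $\reduction$ $(O(\eps),O(\delta))$-DP, and a subsampling pre-step (Lemma~\ref{lem:amplify}) restores $\eps'\le 1$ if the constant blow-up exceeded it.

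\emph{Converting the output vector:} running $\sampler$ on the synthetic dataset yields $b\in\bit d$ whose distribution is within total variation $\alpha+O(\alpha)$ of $\corr$. Because $\reduction$ may not touch the raw data again (for privacy), it post-processes only $b$, by a coordinate-symmetric rule: output a uniformly random coordinate with $b_j=1$, or, if $b$ is all-zero, a uniformly random coordinate of $[d]$. A $\corr$-draw sets its heavy coordinate with probability $1-\alpha^*$ and has $\mathrm{Po}(\alpha^*)$-many light coordinates set, and $\corr$ is invariant under permutations of $S^*$; a direct calculation then shows that this rule maps $\corr$ to a distribution within $O(\alpha)$ total variation of $\distr$, uniformly over $\distr\in\carb$. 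Combining this with the data-processing inequality, the fidelity of $b$ to $\corr$, and the synthesis error shows $\reduction$ is $O(\alpha)$-accurate on $\carb$; choosing constants so that $\eps'\le1$ and $\delta'\le 1/(5000n')$ with $n'=O(n)$, the $k$-ary lower bound (Theorem~\ref{thm:k-ary-lb}, specialized to $\carb$) gives $n=\Omega(k/(\alpha\eps))$.

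The main obstacle is precisely these two conversions, as the paper's overview anticipates: a typical draw from $\corr$ is not supported on a single coordinate, so the forward map must fabricate a true product distribution (forcing the token/coupling construction) while remaining differentially private and correctly reinstating the secret heavy coordinate, and the backward map must be coordinate-symmetric (it cannot look up the heavy coordinate, as that would use the raw data) yet still recover the $\Theta(\alpha)$ mass spread over the light coordinates. Making everything rigorous comes down to bounding four small error sources---Poissonization, the Poisson/binomial coupling, failure of heavy-coordinate identification, and the approximation of $\corr$ by the distribution of $b$---and checking that each stays $O(\alpha)$ while the privacy parameters remain within the $(\eps,\delta)$ budget required by Theorem~\ref{thm:k-ary-lb}.
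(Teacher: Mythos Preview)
Your reduction strategy matches the paper's, but two steps have gaps that matter.

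\textbf{Identifying the heavy coordinate.} You say $s$ is ``identified, except with probability absorbed into $\delta$,'' and your privacy argument only tracks token movement. But a rule like ``pick the $j$ with the largest $N_j$'' is not differentially private: on neighbouring datasets where two counts are nearly tied, the identified coordinate can switch, and then \emph{every} row of the synthetic dataset changes (the heavy coordinate gets fresh $\Ber(1-\alpha^*)$ bits while the others get scattered tokens), so ``changing one input record alters only $O(1)$ synthetic records'' fails. Correct identification is an \emph{accuracy} event over the randomness of the data; its failure probability belongs in $\alpha$, not in $\delta$. The paper handles this by spending $O(\log(k)/(\alpha\eps))$ fresh samples on the exponential mechanism (Algorithm~\ref{alg:special_element}) to select $\hat s$ privately. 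Once $\hat s$ is a differentially private output, both the forward and backward maps may use it freely by post-processing---so your claim that the backward map ``cannot look up the heavy coordinate, as that would use the raw data'' is unnecessary (and inconsistent with your forward map, which already uses the identity of $s$). The paper in fact sidesteps the high-bias coordinate altogether: its $\corr$ lives in $\cB^{\otimes 2k}$ with the special element removed (Definition~\ref{def:corrprod}), so no bias-$(1-\alpha^*)$ column needs to be synthesised at all.

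\textbf{Forward-map error.} Your scattered light-coordinate marginal is $\Ber(1-e^{-p_j})$, and you bound the resulting dataset-level error by $O((\alpha^*)^2 n/k)$. Substituting $n\approx k/(\alpha\eps)$ gives $O(\alpha/\eps)$, not $O(\alpha)$; the assertion that this is $O(\alpha)$ is an arithmetic slip, and as written the argument would yield only $n=\Omega(k/\alpha)$ when $\eps$ is small. The paper eliminates this term with an exact coupling (Step~\ref{step:flip} of Algorithm~\ref{alg:datatrans}): after thresholding each scattered Poisson count to $\{0,1\}$, it independently flips each $1$ to $0$ with probability $1-\tfrac{p}{1-e^{-2p}}$, making the synthetic bias exactly $p$; Lemma~\ref{lem:datatrans} then shows the synthetic dataset is distributed \emph{exactly} as $\corr^{\otimes(n/2)}$, so there is no forward-map error at all. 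An alternative fix, closer to your write-up, is to note that your synthetic dataset is already an exact i.i.d.\ sample from some $\corr'\in\cB^{\otimes d}$ with biases $1-e^{-p_j}$, invoke $\sampler$'s accuracy guarantee directly on $\corr'$, and check that your backward map sends $\corr'$ (rather than $\corr$) to within $O(\alpha)$ of $\distr$; this bypasses the factor of $n$ entirely.
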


The theorem is proved via a reduction from the problem of private sampling from discrete $k$-ary distributions.
Recall that in Section~\ref{sec:kary}, we proved a lower bound for this problem
by considering the class $\carb$ (see Definition~\ref{def:ksubclass}) and giving a lower bound of $\Omega(k/\alpha \eps)$ on the sample complexity of $\eps$-differentially private $\alpha$-accurate sampling from this class. For each distribution in $\carb$, we consider the corresponding product distribution, defined next.

\begin{definition}\label{def:corrprod}
For every distribution $\distr \in \carb$ with a special element $s$,  its {\em corresponding product distribution} is the distribution $\corr \in \cB^{\otimes 2k}$, where the bias of each coordinate $j\in[2k] $ of $\corr$ is $P(j)$ for $j<s$ and $P(j+1)$ for $j\in [s,2k]$ (note that the special element is eliminated). 
\end{definition}

The initial idea that inspired our final reduction is to create a private sampler for $\carb$ from a private sampler for product distributions as follows. Replace each non-special entry of the input dataset sampled from $\distr \in \carb$ with its one-hot encoding (write element $j \in [2k]$ as a binary vector of size $2k$, with a $1$ in position $j$ and $0$'s in all other positions) to create a dataset that looks like it was drawn from the corresponding product distribution $\corr$. Then apply the private product distribution sampler to this modified dataset. Finally, map the resulting sample in $\{0,1\}^{2k}$ back to $[2k+1]$ by uniformly sampling the index of a random nonzero element and returning the special element if the sample is the all-zeros vector. 

This initial idea does not work, because one-hot encoding produces vector entries in $\{0,1\}^{2k}$ that always have a single $1$, whereas samples from a product distribution could have multiple $1$'s, or no $1$'s. However, the reduction presented here has a similar structure. The starting point of our reduction is Lemma~\ref{lem:k-ary-lb-pois}, which applies to Poisson samplers. It states that for every $(\eps,\delta)$-differentially private sampler that is $\alpha$-accurate on the class $\carb$ with dataset size distributed as $\Po(n)$, we have $n=\Omega(\frac{k}{ \alpha\eps})$.
The Poisson sampler $\sampler_{red}$ for $\carb$ in our reduction is described in Algorithm~\ref{alg:redsampler}. It first privately identifies the ``special element'' (using a simple instantiation of the exponential mechanism described in Algorithm~\ref{alg:special_element}) and then proceeds in three steps: {\em dataset transformation, sampling,} and {\em universe transformation.}
{\em Dataset transformation} transforms the input dataset drawn from a distribution $\distr$ in $\carb$ to a dataset distributed as independent samples
from the corresponding product distribution $\corr$ in $\cB^{\otimes 2k}$. The {\em sampling step} runs a private sampler $\sampler_{prod}$ for class $\cB^ {\otimes 2k}$ with privacy parameters $\eps$ and $\delta$, accuracy parameter $\alpha$, and the dataset from the previous step to obtain one sample $\prodoutput$ in $\{0,1\}^{2k}$. 
The {\em universe transformation} transforms the output $\prodoutput$ from the previous step into a single sample in $[2k+1]$. This sample is the final output of Algorithm~\ref{alg:redsampler}. We will show that if the original input table was drawn i.i.d.\ from a distribution $\carb$, then the final output is a sample from a nearby distribution.

\begin{algorithm}
        \caption{Sampler $\sampler_{red}$ for $\carb$ with dataset size $N \sim \Po(n)$}
    \label{alg:redsampler}
    \hspace*{\algorithmicindent} \textbf{Input:} Dataset $\Datafixed = (\datafixed_1,\dots,\datafixed_N)\in [2k+1]^N$, privacy parameters $\eps, \delta>0$, parameters $k,n \in \N$, accuracy parameter $\alpha \in (0,1)$, black box access to private sampler $\sampler_{prod}$ for $\cB^ {\otimes 2k}$ \\
    \hspace*{\algorithmicindent} \textbf{Output:} $y' \in [2k+1]$
    \begin{algorithmic}[1] 
           \State Let $C$ be a sufficiently large constant. Draw $L \sim \Bin(N, \frac{2C \log k}{\alpha \eps n})$. Set $R = N-L$. \label{step:datasplitval}
           \State If $L < C\frac{\log k}{\alpha \eps}$, output $2k+1$ and break. \label{step:largespec}
           
           \State Partition $\Datafixed$ into datasets $\Datafixed^L$ and $\Datafixed^R$, where $\Datafixed^L$ has $L$ records.
           \label{step:partition}
           \State Run Algorithm~\ref{alg:special_element} on $\Datafixed^L$, value $k$, and privacy parameter $\frac{\eps}{2}$ to obtain a candidate special element $\hat{s}$. \label{step:specialel}
           \State Run Algorithm~\ref{alg:datatrans} to obtain a dataset $\Datafixedy \gets \reduction^{\rightarrow}(\Datafixed^R, k, n, 60\alpha, \hat{s})$. \label{step:datasettrans}
           \State $\prodoutput \gets \sampler_{prod}(\frac{\eps}{4},\frac{\delta}{2}, \frac{\alpha}{25}, \Datafixedy)$ \label{step:privateprodsamp}
           
           \State Run Algorithm~\ref{alg:univtrans} to obtain $z \gets \reduction^{\leftarrow}(\prodoutput, \hat{s})$. 
           \State Output $z$.
    \end{algorithmic}
\end{algorithm}

\begin{algorithm}
        \caption{Special element picker $SE$}
    \label{alg:special_element}
    \hspace*{\algorithmicindent} \textbf{Input:} Dataset $\Datafixed = (\datafixed_1,\dots,\datafixed_n)$, privacy parameter $\eps$, value $k$ \\
    \hspace*{\algorithmicindent} \textbf{Output:} $\hat{s} \in [2k+1]$
    \begin{algorithmic}[1] 
           \State For $j \in [2k+1]$, define the utility function $u(\Datafixed, j) = \frac{1}{n}\sum_{i=1}^n \indicator[\datafixed_i = j]$.
           \State Run the exponential mechanism \cite{McTalwar} with privacy parameter $\eps$ and utility function $u(\cdot, \cdot)$ on dataset $\Datafixed$ to choose an index $\hat{s} \in [2k+1]$. (Specifically, each index $j$ is selected with probability $\propto e^{\frac{\eps\cdot u(\Datafixed,j)}{2}}$).
           \State Output $\hat{s}$.
    \end{algorithmic}
\end{algorithm}

We now explain the dataset and universe transformations.


\paragraph{Dataset Transformation (Algorithm~\ref{alg:datatrans})} 
The goal in this step is to produce a dataset that is distributed as an i.i.d.\ sample from a product distribution. Such distributions are characterized by two properties: 
\begin{enumerate}
    \item The marginal distributions over the columns are independent.
    \item The bits of each column are  mutually independent Bernoulli random variables.
\end{enumerate}
%
The dataset transformation uses this characterization to convert a Poisson number of samples from a $k$-ary distribution to a dataset distributed as independent samples
from the corresponding product distribution. 
Algorithm~\ref{alg:datatrans} is inspired by an elegant coupling by Klenke and Mattner \cite[Theorem 1.f]{klenke2010stochastic} that shows that a Poisson random variable stochastically dominates a binomial random variable with sufficiently small probability parameter.  Algorithm~\ref{alg:datatrans} relies on the fact that if the size of the input dataset follows a Poisson distribution, then  the number of appearances $\hist[i]$ of each element $i$ are mutually independent Poisson random variables.  

The transformed dataset is built column by column. We first compute the histogram of the input dataset. Then, for each element $i \in [2k+1] \setminus \{s\}$, we sample random variables $A_1,\dots,A_{n/2}$ from the multinomial distribution with $\hist[i]$ trials and probability vector $(2/n,\dots,2/n)$. We truncate each of these random variables to have value at most $1$. Finally, we independently set each random variable to $0$ with a carefully chosen probability. Column $i$ of the transformed dataset is then set to $(A_1,\dots,A_{n/2})$. (There is no column $s$.)

We argue that after thresholding, $A_1,\dots,A_{n/2}$  are mutually independent Bernoulli random variables. Independently setting each of them to $0$ with some probability adjusts their biases. Additionally, as discussed earlier, the element counts in the input dataset are independent, and hence the marginal distributions of the columns are independent as well. This guarantees that the rows in the dataset look like independently sampled entries from the corresponding product distribution. We formalize these ideas in Lemma~\ref{lem:datatrans}.

\begin{algorithm}
    \caption{Dataset transformation algorithm $\reduction^{\rightarrow}$}
    \label{alg:datatrans}
    \hspace*{\algorithmicindent} \textbf{Input:} Dataset $\Datafixed = (\datafixed_1, \ldots, \datafixed_{N})\in [2k+1]^N$; parameters $k,n \in\N, \alpha^*\in(0,1)$, special element $s$ \\
    \hspace*{\algorithmicindent} \textbf{Output:} Dataset $\Datafixedy = (\datafixedy_1, \ldots, \datafixedy_{n/2})\in (\{0,1\}^{2k})^{n/2}$
    \begin{algorithmic}[1] 
            \State Initialize $\Datafixedy$ as an empty matrix \Comment{We will build the $\frac{n}{2} \times 2k$ matrix column by column}
            \State $\hist \gets histogram(\Datafixed)$ \label{step:hist}
            \For{$i \in [2k+1] \setminus \{s\}$}
                \State Sample $(A_1,\dots,A_{n/2}) \sim Mult(\hist[i], (2/n,\dots,2/n))$. \label{step:mult}
                \For{$j \in [n/2]$}
                \State Set $A_j\gets \min (A_j, 1)$ \label{step:threshold}
                \State Set $A_j\gets 0$ with probability $1-\frac p{1-e^{-2p}}$, where $p=\frac{\alpha^*}k.$ \Comment{Reduce the probability that $A_j$ is $1$}
                \label{step:flip}
                \EndFor
                \State Set column $i$ of $\Datafixedy$ to $(A_1,\dots,A_{n/2})$.
            \EndFor
            \State Output $\Datafixedy$
    \end{algorithmic}
\end{algorithm}

\paragraph{Universe Transformation (Algorithm~\ref{alg:univtrans})} Our universe transformation procedure takes an element in the universe $\{0,1\}^{2k}$ and converts it to an element in the universe $[2k+1]$. Once we apply our product distribution sampler on the dataset obtained in Algorithm~\ref{alg:datatrans}, we use this procedure to transform the resulting sample so that the final output looks like it was sampled from a distribution in $\carb$. 
For any distribution $\distr$ in $\carb$, we call the $k$ non-special elements with nonzero probability mass the \textbf{participating elements}. When the input coordinates are all $0$s, we output the special element. Otherwise, we output a random non-special element corresponding to one of the coordinates with value 1. In Lemma~\ref{lem:easyunivtrans}, we prove that when this procedure is run with a sample from the right product distribution, it outputs a sample from a distribution~$Q$ such that $d_{TV}(\distr,Q)\leq (\alpha^*)^2$.
%

In Lemma~\ref{lem:specialel}, we show that the special element is chosen with high probability. Finally, we combine Lemmas~\ref{lem:datatrans}--\ref{lem:specialel} with the lower bound in Lemma~\ref{lem:k-ary-lb-pois} to prove Theorem~\ref{thm:product-bern-lb}.
\begin{algorithm}
        \caption{Universe transformation algorithm $\reduction^{\leftarrow}$}
    \label{alg:univtrans}
    \hspace*{\algorithmicindent} \textbf{Input:} A sample $\prodoutput \in \{0,1\}^{2k}$, special element $s$ \\
    \hspace*{\algorithmicindent} \textbf{Output:} $z \in [2k+1]$
    \begin{algorithmic}[1] 
           \If{$\prodoutput = (0,\dots,0)$}
           \State Set $z \gets s$
           \Else 
           \State Choose $z$ uniformly at random from $\{j\in[2k+1]: [j<s \wedge \prodoutput_j=1]\vee [j>s \wedge \prodoutput_{j-1}=1]\}$ \hspace{5cm}
\Comment{Shift by 1 to skip the special element} 
           \EndIf 
           \State Output $z$
    \end{algorithmic}
\end{algorithm}

\begin{lemma}\label{lem:datatrans}
Fix a sufficiently small $\alpha > 0$, set $\alpha^* = 60 \alpha$, fix a distribution $\distr \in \carb$, and let $\corr$ be the corresponding product distribution. Let $\eps \in (0,1], \delta \geq 0$, and $n\geq 2$. Let $\reduction^{\rightarrow}$ be Algorithm~\ref{alg:datatrans}. Let dataset $\Datarv \sim \distr^{\otimes N}$ where $N\sim \Po(n)$. Then the random variable $\reduction^{\rightarrow}(\Datarv,k,n,\alpha^*,s)$ 
has distribution $\corr^{\otimes (n/2)}$. 
\end{lemma}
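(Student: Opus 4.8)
The goal is to show that the output dataset $\reduction^{\rightarrow}(\Datarv,k,n,\alpha^*,s)$, built column by column in Algorithm~\ref{alg:datatrans}, is distributed exactly as $n/2$ i.i.d.\ draws from the corresponding product distribution $\corr$. The plan is to verify the two defining properties of an i.i.d.\ sample from a product distribution: (i)~the $2k$ columns are mutually independent, and (ii)~within each column $i$, the $n/2$ entries are i.i.d.\ $\Ber(q_i)$ where $q_i$ equals the bias of coordinate $i$ of $\corr$, i.e.\ $q_i = P(i)$ for $i<s$ and $q_i=P(i+1)$ for $i\ge s$. Since every distribution in $\carb$ puts mass $\alpha^*/k$ on each of its $k$ participating elements and the special element carries the rest, every non-special element $i$ has $P$-mass either $\alpha^*/k$ (participating) or $0$. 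So after re-indexing around $s$, the target bias of each column is either $p := \alpha^*/k$ or $0$.

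\textbf{Step 1: Poissonization makes the histogram entries independent Poissons.} Since $N\sim\Po(n)$ and, conditioned on $N$, the dataset $\Datarv$ is i.i.d.\ from $\distr$, Lemma~\ref{lem:multtopois} (Poissonization) gives that the counts $\hist[i]$ for $i\in[2k+1]$ are mutually independent, with $\hist[i]\sim\Po(n\cdot P(i))$. In particular, for a participating element $i$, $\hist[i]\sim\Po(n p)$ where $p = \alpha^*/k$; for a non-participating non-special element, $\hist[i]=0$ deterministically (its mean is $0$). This immediately yields property~(i): column $i$ of the output is produced by an independent randomized routine applied to $\hist[i]$ alone (the multinomial split in Step~\ref{step:mult}, the thresholding in Step~\ref{step:threshold}, the independent zeroing in Step~\ref{step:flip}), and since the $\hist[i]$'s are independent, the columns are independent.

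\textbf{Step 2: Analyze a single column.} Fix a participating column $i$. Conditioned on $\hist[i]=h$, Step~\ref{step:mult} draws $(A_1,\dots,A_{n/2})\sim\Mult(h,(2/n,\dots,2/n))$; applying Poissonization once more (Lemma~\ref{lem:multtopois}, now with the Poisson parameter $np$ of $\hist[i]$ playing the role of $\lambda$), the variables $A_1,\dots,A_{n/2}$ are mutually independent with $A_j\sim\Po(np\cdot\tfrac2n)=\Po(2p)$. After the threshold $A_j\gets\min(A_j,1)$, each $A_j$ is an independent $\Ber(r)$ with $r = \Pr[\Po(2p)\ge 1] = 1-e^{-2p}$. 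Then Step~\ref{step:flip} independently sets $A_j\gets 0$ with probability $1-\frac{p}{1-e^{-2p}}$, i.e.\ keeps it with probability $\frac{p}{1-e^{-2p}}$; this multiplies the success probability to $r\cdot\frac{p}{1-e^{-2p}} = (1-e^{-2p})\cdot\frac{p}{1-e^{-2p}} = p$. Hence each entry of column $i$ is i.i.d.\ $\Ber(p)$, matching $P(i)=p$. For a non-participating non-special column, $\hist[i]=0$, so $A_j=0$ always and the column is the all-zeros vector, matching bias $0$. Combined with Step~1, this establishes that the output has law $\corr^{\otimes(n/2)}$.

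\textbf{Main obstacle.} The only point requiring genuine care is checking that the keep-probability $\frac{p}{1-e^{-2p}}$ is a valid probability, i.e.\ lies in $[0,1]$: we need $p\le 1-e^{-2p}$, equivalently $e^{-2p}\le 1-p$. This holds for all $p\in[0,\,p_0]$ for some constant $p_0$ (e.g.\ using $e^{-2p}\le 1-2p+2p^2 \le 1-p$ when $p\le 1/2$), and since $p=\alpha^*/k=60\alpha/k$ with $\alpha$ sufficiently small and $k\ge 1$, we indeed have $p$ small; this is exactly where the ``sufficiently small $\alpha$'' hypothesis is used. Everything else is a bookkeeping exercise in two nested applications of Poissonization (once for the dataset size, once for the multinomial split) plus the elementary identity that thresholding a $\Po(2p)$ followed by the prescribed zeroing yields $\Ber(p)$. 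No use is made of $\eps,\delta$ beyond ensuring $n\ge 2$ so that $n/2$ is meaningful; those parameters are inert in this lemma.
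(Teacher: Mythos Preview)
Your proposal is correct and follows essentially the same route as the paper: two nested applications of Poissonization (Lemma~\ref{lem:multtopois}) to obtain independent $\Po(2p)$ variables, thresholding to $\Ber(1-e^{-2p})$, and the elementary calculation that the zeroing in Step~\ref{step:flip} rescales the bias to exactly $p$, together with the check that the keep-probability $\tfrac{p}{1-e^{-2p}}\le 1$ for small $p$. The paper's proof is organized in the same way and makes the same observations, so there is nothing substantive to add.
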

\begin{proof}
Let $\Datarv \sim \distr^{\otimes N}$ be the input dataset of size $N \sim \Po(n)$. Let $p=\frac{\alpha^*}{k}$. 

The output $\Datarvy$ of Algorithm~\ref{alg:datatrans} is a dataset of size $n/2$. We want to show that it is independently sampled from the corresponding product distribution $\corr$. This is equivalent to the following two conditions: 
\begin{enumerate}
    \item For all columns  $i \in[2k]$, the bits of column $i$ in $\Datarvy$ are  mutually independent Bernoullis with bias $p$ if $i$ is a participating element of $\distr$, and bias $0$ if $i$ is non-participating.
    \item The columns in $\Datarvy$ are  mutually independent.
\end{enumerate}

We start by proving the first condition. 
Fix a participating element $i$. Since the input dataset has size distributed as $\Po(n)$, by Lemma~\ref{lem:multtopois}, the count of element $i$ (defined as $\hist[i]$ in Step~\ref{step:hist} of Algorithm~\ref{alg:datatrans}) is distributed as $\Po(pn)$. In Step~\ref{step:mult} of Algorithm~\ref{alg:datatrans}, we draw $A_1,\dots,A_{n/2}$ from $\Mult(\hist[i],(2/n,\dots,2/n))$. By Lemma~\ref{lem:multtopois}, they are mutually independent Poisson random variables with mean $2p$. Thus, the random variables $A_j$ after thresholding at $1$ in Step~\ref{step:threshold} of Algorithm~\ref{alg:datatrans} are mutually independent Bernoulli random variables that are $0$ with probability $e^{-2p}$. After setting each random variable $A_j$ independently to $0$ with some probability in Step~\ref{step:flip} (note that the probability $1-p-e^{-2p} \geq 1-p- (1-2p + 4p^2) \geq  0$ and so this step is well defined), we get that the random variables $A_j$ are mutually independent Bernoullis that are $0$ with probability $$e^{-2p} + (1-e^{-2p}) \frac{1-p-e^{-2p}}{1-e^{-2p}} = 1-p.$$
On the other hand, for non-participating elements $i$, the $A_j$s are sampled from a multinomial distribution with $0$ trials and hence will be identically $0$. They are not changed in the remaining steps of Algorithm~\ref{alg:datatrans}. Hence, they can be thought of as mutually independent Bernoullis with bias $0$.

Next, we prove the second condition. Lemma~\ref{lem:multtopois} applied to the element counts $\hist[i]$ implies that they are mutually independent. Column $i$ in $\Datarvy$ only depends on the element count $\hist[i]$, and hence, the columns of $\Datarvy$ are also mutually independent. 

Thus, the output dataset $\Datarvy$ is correctly distributed.
\end{proof}

\begin{lemma}\label{lem:easyunivtrans}
Let $\alpha \in (0, 1/60]$ and $\alpha^* = 60\alpha$. Fix any distribution $\distr \in \carb$. Let $s$ be its special element and $B$ be a sample from the corresponding product distribution $\corr$. Let $\reduction^{\leftarrow}$ be Algorithm~\ref{alg:univtrans}. Then $d_{TV}(\reduction^{\leftarrow}(B,s),\distr) \leq (\alpha^*)^2$. 
\end{lemma}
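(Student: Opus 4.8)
The plan is to compute the output distribution $Q:=\reduction^{\leftarrow}(B,s)$ in closed form and then invoke one elementary inequality. First I would unpack $\corr$ via Definition~\ref{def:corrprod}: exactly the $k$ coordinates of $\corr$ corresponding to the participating elements of $\distr$ have bias $p:=\alpha^*/k$, and the other $k$ coordinates have bias $0$. Consequently, if $B\sim\corr$, the coordinates corresponding to non-participating elements are almost surely $0$, so $\reduction^{\leftarrow}$ never outputs a non-participating element; and the Hamming weight $W:=\sum_j B_j$ is distributed as $\Bin(k,p)$, in particular $W\le k$ always. Note $\alpha^*=60\alpha\le 1$ by the hypothesis $\alpha\le 1/60$, so $p\le 1$ and everything is well defined.

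Next I would determine $Q$. When $W=0$, which happens with probability $(1-p)^k$, the algorithm outputs $s$; when $W\ge 1$ it outputs a uniformly random index among the ``on'' participating coordinates. By symmetry of $\corr$ under permutations of the $k$ participating coordinates (they all have bias $p$), conditioned on $W\ge 1$ the output is exactly uniform over the $k$ participating elements. Hence $Q(s)=(1-p)^k$, $Q(j)=\frac{1-(1-p)^k}{k}$ for each participating $j$, and $Q(j)=0$ otherwise, whereas $\distr(s)=1-\alpha^*$, $\distr(j)=\alpha^*/k$ for participating $j$, and $\distr(j)=0$ otherwise. The key observation is that the discrepancy on the special element, $(1-p)^k-(1-\alpha^*)$, and the total discrepancy on the participating elements, $\bigl(1-(1-p)^k\bigr)-\alpha^*$, are negatives of each other, so
\[
d_{TV}(Q,\distr)=\tfrac12\Bigl(\bigl|(1-p)^k-(1-\alpha^*)\bigr|+\bigl|(1-(1-p)^k)-\alpha^*\bigr|\Bigr)=\Bigl|\bigl(1-\tfrac{\alpha^*}{k}\bigr)^k-(1-\alpha^*)\Bigr|.
\]

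Finally I would bound this quantity. By Bernoulli's inequality $(1-\alpha^*/k)^k\ge 1-\alpha^*$ (valid since $\alpha^*/k\le 1$), so the absolute value equals $(1-\alpha^*/k)^k-1+\alpha^*$. Using $(1-\alpha^*/k)^k\le e^{-\alpha^*}$ and then $e^{-t}\le 1-t+t^2/2$ for $t\ge 0$, this is at most $(\alpha^*)^2/2\le(\alpha^*)^2$, as claimed. There is no substantive obstacle here; the only points requiring a little care are (i) justifying that, conditioned on $W\ge1$, the symmetry argument yields the \emph{exact} uniform conditional distribution over participating elements, and (ii) checking that the stated range $\alpha\le 1/60$ (equivalently $\alpha^*\le 1$) is exactly what makes Bernoulli's inequality and $p\le 1$ applicable. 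Everything else is the two-line calculation above.
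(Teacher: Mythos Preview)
Your proposal is correct and follows essentially the same approach as the paper: both compute $Q$ explicitly via the symmetry argument, reduce $d_{TV}$ to $\bigl|(1-\alpha^*/k)^k-(1-\alpha^*)\bigr|$, and bound this using Bernoulli's inequality for the lower side and $(1-\alpha^*/k)^k\le e^{-\alpha^*}$ plus a Taylor truncation for the upper side. Your use of $e^{-t}\le 1-t+t^2/2$ is slightly sharper than the paper's $e^{-t}\le 1-t+t^2$, but otherwise the arguments are the same.
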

\begin{proof}
Let $q$ be the probability that $B \neq (0,\dots,0)$. Since $B$ is sampled from the corresponding product distribution $\corr$, $1-q=(1-\frac{\alpha^*}{k})^k$, which is between $1-\alpha^*$ and $ 1-\alpha^* + (\alpha^*)^2$ for all $\alpha^* \in (0,1)$ and $k\geq 1$. The lower bound of $1-\alpha^*$ follows from Bernoulli's inequality (Lemma~\ref{lem:bernoulli}) with $a = -\alpha^*/k$ and $r=k$. The upper bound holds because $(1-\frac{\alpha^*}{k})^k \leq e^{-\alpha^*} \leq 1-\alpha^* + (\alpha^*)^2$, where the last inequality is obtained by truncating the Taylor expansion of $e^{-\alpha^*}$.

Algorithm $\reduction^{\leftarrow}(B,s)$ returns the special element $s$ with probability $1-q$. Thus, the probability that $\reduction^{\leftarrow}(B,s)$ returns the special element $s$ is within $(\alpha^*)^2$ of the probability mass of $s$ in $\distr$, which is $1-\alpha^*$.

Conditioned on not returning the special element $s$, the reduction $\reduction^{\leftarrow}(B,s)$ will output an element uniformly from $supp(\distr) \setminus \{s\}$ by symmetry. (This is because all coordinates in $[2k+1] \setminus \{s\}$ have bias either 0 or $\frac{\alpha^*}{k}$ under $\corr$.) Specifically, the reduction places mass $q/k$ on all elements in the support of $P$ other than $s$.  Therefore,
\begin{eqnarray*}
d_{TV}(\reduction^{\leftarrow}(B,s),\distr) 
&=& \frac{1}{2} 
\sum_{i \in [2k+1]} |\Pr(\reduction^{\leftarrow}(B,s) = i) - \distr(i)| \\
&=&\frac 1 2  \left[\left| (1-q) - (1-\alpha^*) \right| + k \cdot \left| \frac q  k - \frac{\alpha^*} k \right|\right] \\
&=& |(1-q) - (1-\alpha^*)|\leq (\alpha^*)^2 \, . \hspace*{2cm} \hfill  \qedhere
\end{eqnarray*}
\end{proof}

Next, we prove that Algorithm~\ref{alg:special_element} returns the special element with high probability.

\begin{lemma}\label{lem:specialel}
Let $\eps \in (0,1]$. Let $\alpha \in (0,1]$ be sufficiently small, and $k \in \mathbb{N}$ be sufficiently large. Let $\alpha^* = 60\alpha$. Fix a distribution $\distr \in \carb$, and let $s$ be its special element. Let $\Datarv$ be a dataset drawn from $\distr^{\otimes n}$, where $n=\frac{C\log k}{\alpha \eps}$ for a sufficiently large constant $C$. Then $\Pr[SE(\Datarv,\eps,k) = s]\geq 1-\frac{\alpha}{2}$, where $SE$ is described in Algorithm~\ref{alg:special_element}.
\end{lemma}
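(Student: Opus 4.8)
The plan is to combine concentration of the empirical frequency of the special element $s$ with the accuracy guarantee of the exponential mechanism (Lemma~\ref{lem:expmech}). Fix $\alpha$ small enough that $\alpha^* := 60\alpha \le \tfrac18$, and note that for each index $j$ the utility function $u(\cdot,j)$ has $\ell_1$-sensitivity $\Delta = 1/n$, since changing one record changes any single empirical frequency by at most $1/n$.

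First I would argue that, with high probability over $\Datarv\sim\distr^{\otimes n}$, the special element $s$ is the strict empirical mode with a constant-size margin. Since $n\,u(\Datarv,s)\sim\Bin(n,1-\alpha^*)$ and $1-\alpha^*\ge\tfrac78$, Hoeffding's inequality shows that the event $E_1:=\{u(\Datarv,s)\ge\tfrac34\}$ fails with probability at most $e^{-n/32}$. On $E_1$, because $\sum_{j\in[2k+1]}u(\Datarv,j)=1$, every $j\ne s$ satisfies $u(\Datarv,j)\le 1-u(\Datarv,s)\le\tfrac14$; hence $s$ is the unique maximizer of $u(\Datarv,\cdot)$ and $u(\Datarv,s)-u(\Datarv,j)\ge\tfrac12$ for all $j\ne s$.

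Next I would invoke Lemma~\ref{lem:expmech} with $L=[2k+1]$, $\Delta=1/n$, and $a=\ln(4/\alpha)$: for every fixed dataset, with probability at least $1-\tfrac\alpha4$ over the coins of $SE$, the output $\hat s$ obeys $\max_j u(\Datarv,j)-u(\Datarv,\hat s)< \tfrac{2(\ln(2k+1)+\ln(4/\alpha))}{\eps n}$. Substituting $n=\tfrac{C\log k}{\alpha\eps}$ makes the right-hand side equal to $\tfrac{2\alpha(\ln(2k+1)+\ln(4/\alpha))}{C\log k}$, which is below $\tfrac12$ once $C$ is a large enough absolute constant and $k$ is large enough that $\ln(2k+1)+\ln(4/\alpha)\le 3\log k$. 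Let $E_2$ be this success event; since the bound holds pointwise in the data, $\Pr[\overline{E_2}]\le\tfrac\alpha4$. On $E_1\cap E_2$ we get $u(\Datarv,\hat s)> u(\Datarv,s)-\tfrac12\ge\tfrac14\ge u(\Datarv,j)$ for every $j\ne s$, which forces $\hat s=s$. A union bound then gives $\Pr[SE(\Datarv,\eps,k)\ne s]\le e^{-n/32}+\tfrac\alpha4\le\tfrac\alpha2$, where the last step uses $n\ge C\log k$ (as $\alpha,\eps\le1$) and takes $k$, or $C$, large enough that $e^{-n/32}\le\tfrac\alpha4$.

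No individual step is hard. The one place to be slightly careful is the parameter bookkeeping: the sample size $n=\Theta(\tfrac{\log k}{\alpha\eps})$ carries a spurious $1/\alpha$ factor compared with what merely identifying the mode requires (which is $\Theta(\tfrac{\log k}{\eps})$), and it is exactly this surplus that drives the exponential-mechanism error $\Theta(\tfrac{\alpha\log k}{C\log k})=\Theta(\alpha/C)$ below the constant gap $\tfrac12$, so one only needs $C$ and $k$ large with no tight optimization. It also bears remembering to shrink $\alpha$ at the outset so that $\alpha^*\le\tfrac18$, which the hypothesis that $\alpha$ is sufficiently small permits.
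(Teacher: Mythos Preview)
Your proposal is correct and follows essentially the same two-step approach as the paper: first show that the special element is the empirical mode by a large constant margin with high probability, then invoke the accuracy guarantee of the exponential mechanism (Lemma~\ref{lem:expmech}). The only cosmetic differences are that you use Hoeffding's inequality where the paper uses Chebyshev, and that your side condition $\ln(2k+1)+\ln(4/\alpha)\le 3\log k$ unnecessarily couples $k$ and $\alpha$; you can drop it, since $\tfrac{2\alpha(\ln(2k+1)+\ln(4/\alpha))}{C\log k}\le \tfrac{4\alpha}{C}+\tfrac{2\alpha\ln(4/\alpha)}{C\log 2}=O(\alpha\log(1/\alpha))$ is already below $\tfrac12$ for $\alpha$ sufficiently small and $k\ge 2$, matching the paper's independent ``$k$ large, $\alpha$ small'' hypothesis.
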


\begin{proof}
The probability $\distr(s)$ of the special element $s$ under the distribution $\distr$ is $1-\alpha^*$. Let $W$ be the number of times $s$ occurs in $\Datarv$. Since $W$ is a sum of $n$ Bernoulli random variables with bias $(1-\alpha^*)$, the variance of $\frac{W}{n}$ is $\frac{\alpha^*(1-\alpha^*)}{n}\leq \frac{\alpha^*}n$. Let $E$ be the event that $\left|\frac{W}{n} - \distr(s)\right| > \sqrt{\alpha}$. By Chebyshev's inequality,
$$\Pr[E] \leq\frac{\text{Var}\big[\frac W n\big]} \alpha \leq\frac{\alpha^*}{n\alpha} \leq \frac{\alpha}{4}$$
for $n=\frac{C\log k}{\alpha \eps}$. Note that for sufficiently small $\alpha$, event $\overline{E}$ implies that $\frac{W}{n} > 0.6$, which implies that in $\Datarv$, the number of occurrences of the special element is at least $0.2n$ more than the number of occurrences of any other element. 

The sensitivity of the score function used by the exponential mechanism in Algorithm~\ref{alg:special_element} is $\frac{1}{n}$. Hence, by Lemma~\ref{lem:expmech} on the accuracy of the exponential mechanism, for any fixed dataset, with probability greater than or equal to $1-e^{-t}$, the exponential mechanism outputs an element that occurs $an$ times, where $a> \max_{j \in [2k+1]} \left(\frac{1}{n}\sum_{i=1}^n \indicator[\datafixed_i = j]\right) - \frac{2}{n \eps}(\log(2k+1) + t)$. Using $n=\frac{C\log k}{\alpha \eps}$, and setting $t = \log \frac{4}{\alpha}$, we get that $\frac{2}{n \eps}(\log(2k+1) + t) \leq \alpha(2 + \log \frac{4}{\alpha}) < \frac{1}{5}$ and $e^{-t} = \frac{\alpha}{4}$ for sufficiently large $k$ and sufficiently small $\alpha$. Hence, conditioned on event $\overline{E}$, since the number of occurrences of the special element is at least $0.2n$ more than the number of occurrences of any other element, and the guarantee of the exponential mechanism gives that with high probability, the element output occurs $an$ times where $an > \max_{j \in [2k+1]} \left(\frac{1}{n}\sum_{i=1}^n \indicator[\datafixed_i = S]\right) - 0.2n$, we get that the special element is output with probability at least $1 - \frac{\alpha}{4}$. Using the fact that $\Pr(\overline{E}) > 1 - \frac{\alpha}{4}$, we get that with probability at least $1 - \frac{\alpha}{2}$, $SE(\Datarv,\eps,k)=s$.
\end{proof}

\begin{lemma} \label{lem:privred}
The sampler defined in Algorithm~\ref{alg:redsampler} is $(\eps, \delta)$-differentially private. 
\end{lemma}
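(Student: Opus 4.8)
The plan is to split Algorithm~\ref{alg:redsampler} into a data-independent preamble followed by a mechanism that acts on the two disjoint pieces $\Datafixed^L$ and $\Datafixed^R$, and to bound the privacy loss of the two pieces \emph{separately}, taking the maximum of the two guarantees rather than the sum, because a single input record falls into exactly one of $\Datafixed^L,\Datafixed^R$. First I would observe that Steps~\ref{step:datasplitval}--\ref{step:partition} (drawing $L$ and $R=N-L$, the early termination when $L$ is too small, and the split of $\Datafixed$ into $\Datafixed^L$ and $\Datafixed^R$) use only public randomness together with the dataset size $N$, which is the same for neighboring datasets. So it suffices to fix neighbors $\Datafixed\sim\Datafixed'$ differing at position $i$, fix the outcome of the preamble, and consider two cases according to whether position $i$ lands in $\Datafixed^L$ or in $\Datafixed^R$.

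In the case $i\in\Datafixed^L$ we have $\Datafixed^R=(\Datafixed')^R$, so the whole output $z$ is a randomized post-processing of $\hat s=SE(\Datafixed^L;\eps/2)$ with $\Datafixed^R$ treated as a fixed parameter fed to $\reduction^{\rightarrow}$; since the exponential mechanism $SE(\cdot;\eps/2)$ is $(\eps/2,0)$-DP, post-processing (Lemma~\ref{prelim:postprocess}) shows the output is $(\eps/2,0)$-DP in this case. In the case $i\in\Datafixed^R$, $\Datafixed^L$ is unchanged, so using the same internal coins $\hat s$ is literally the same random variable for $\Datafixed$ and $\Datafixed'$; conditioning on $\hat s$, it remains to show that $\Datafixed^R\mapsto\reduction^{\leftarrow}\bigl(\sampler_{prod}(\reduction^{\rightarrow}(\Datafixed^R,\hat s);\tfrac\eps4,\tfrac\delta2),\hat s\bigr)$ is $(\eps/4,\delta/2)$-DP.

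The crux is the claim that $\reduction^{\rightarrow}(\cdot,\hat s)$ maps neighboring datasets to neighboring datasets under a suitable coupling. For neighbors $\Datafixed^R\sim(\Datafixed')^R$, the histograms computed in Step~\ref{step:hist} differ by $-1$ in the count of some element $a$ and $+1$ in the count of some element $b$, so only columns $a$ and $b$ of the transformed dataset depend on the changed counts; couple all other randomness identically. For column $a$, couple $\Mult(\hist[a],\vec{2/n})$ with $\Mult(\hist[a]-1,\vec{2/n})$ by removing a uniformly random ball; for column $b$, couple $\Mult(\hist[b],\vec{2/n})$ with $\Mult(\hist[b]+1,\vec{2/n})$ by inserting a ball in a uniformly random bin. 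Because the weight vector $(2/n,\dots,2/n)$ is uniform, the bin of the removed ball is itself uniform, so we may align it with the bin of the inserted ball, call it $j$. Sharing the thresholding coins (Step~\ref{step:threshold}) and the independent-flip coins (Step~\ref{step:flip}) across the two runs, every row $r\neq j$ of the output is then identical in both runs (same pre-threshold counts, same coins), so $\reduction^{\rightarrow}(\Datafixed^R,\hat s)$ and $\reduction^{\rightarrow}((\Datafixed')^R,\hat s)$ differ in at most row $j$; the edge cases $a=b$ or $a=\hat s$ or $b=\hat s$ are only easier. Given this claim, since $\sampler_{prod}(\cdot;\eps/4,\delta/2)$ is $(\eps/4,\delta/2)$-DP (Theorem~\ref{thm:bernoulli-product-alg-intro}), the standard ``neighbor-preserving preprocessing'' argument applies: letting $(Y,Y')$ be the coupling of the (neighboring) outputs of $\reduction^{\rightarrow}$, $\Pr[\sampler_{prod}(Y)\in S]=\E[\Pr[\sampler_{prod}(Y)\in S]]\le\E[e^{\eps/4}\Pr[\sampler_{prod}(Y')\in S]+\delta/2]=e^{\eps/4}\Pr[\sampler_{prod}(Y')\in S]+\delta/2$, and $\reduction^{\leftarrow}(\cdot,\hat s)$ is further post-processing. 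Hence the $\Datafixed^R$ case is $(\eps/4,\delta/2)$-DP.

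Finally, since the choice between the two cases is made by the data-independent preamble, averaging over it shows Algorithm~\ref{alg:redsampler} is $(\max\{\eps/2,\eps/4\},\max\{0,\delta/2\})=(\eps/2,\delta/2)$-DP, which in particular implies the claimed $(\eps,\delta)$-DP. I expect the main obstacle to be the key claim about $\reduction^{\rightarrow}$: verifying that perturbing one input record perturbs only a single row of the transformed dataset, which relies on exploiting the uniformity of the multinomial weights to couple the removed and inserted balls into the same row, and on checking that thresholding and sub-sampling, coupled through shared coins, do not spread the change to other rows.
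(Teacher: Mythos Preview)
Your proposal is correct and in fact yields a slightly stronger conclusion than the paper establishes---$(\eps/2,\delta/2)$-DP rather than $(\eps,\delta)$-DP. The paper takes a coarser route: it does not case-split on which of $\Datafixed^L,\Datafixed^R$ contains the changed record, but instead bounds the privacy cost of Step~\ref{step:specialel} by $\eps/2$ and of Step~\ref{step:privateprodsamp} by $(\eps/2,\delta)$ and sums them via basic composition. For Step~\ref{step:privateprodsamp}, the paper uses the simpler coupling $\Mult(h+1,\cdot)=\Mult(h,\cdot)+\Mult(1,\cdot)$ independently in each of the two affected columns, which shows $\Datafixedy$ and $\Datafixedy'$ differ in at most \emph{two} rows; group privacy of size $2$ then turns the $(\eps/4,\delta/2)$-DP of $\sampler_{prod}$ into $(\eps/2,\delta)$. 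Your refinement---aligning the bin of the removed ball with that of the inserted ball, which is legitimate precisely because the uniform multinomial weights make the removed bin independent of the remaining configuration---collapses the two perturbed entries into the same row and eliminates the need for group privacy. The paper's argument is a line shorter; yours is tighter and sidesteps the constant-factor blowup in $\delta$ that group privacy of size $2$ incurs.

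One minor correction: you cite Theorem~\ref{thm:bernoulli-product-alg-intro} for the $(\eps/4,\delta/2)$-DP of $\sampler_{prod}$, but in this lemma $\sampler_{prod}$ is an \emph{assumed} private sampler supplied as a black box to Algorithm~\ref{alg:redsampler}, not the algorithm constructed in that theorem; its privacy is by hypothesis.
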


\begin{proof}
Fix $k,n\in\mathbb{N}$ and  $\eps, \delta > 0$. Fix two neighboring datasets $\Datafixed, \Datafixed' \in [2k+1]^n$. Let $i^* \in [n]$ be the index on which they differ. Let $\datafixed_{i^*}=u$ and $\datafixed'_{i^*}=\ell$. By the privacy of the exponential mechanism, the subroutine call to the Special element picker (Algorithm~\ref{alg:special_element}) in Step~\ref{step:specialel} of Algorithm~\ref{alg:redsampler} is $\eps/2$-differentially private. The output datasets obtained in the data transformation step (Step~\ref{step:datasettrans} of Algorithm~\ref{alg:redsampler}) when run on $\Datafixed$ and $\Datafixed'$ can differ only in the $u^{th}$ and $\ell^{th}$ columns. Without loss of generality, consider the $u^{th}$ column, and let $\hist(u)$ and $\hist(u)+1$ be the number of times $u$ occurs in $\Datafixed$ and $\Datafixed'$. Then, in Step~\ref{step:mult} of Algorithm~\ref{alg:datatrans} run on datasets $\Datafixed$ and $\Datafixed'$, the random variables $(A_1,\dots,A_{n/2})$ are sampled from $\Mult(\hist(u),(1/n,\dots,1/n))$ and $\Mult(\hist(u)+1,(1/n,\dots,1/n)) = \Mult(\hist(u),(1/n,\dots,1/n)) + \Mult(1,(1/n,\dots,1/n))$, respectively, before being post-processed in the same way in Step~\ref{step:flip} of Algorithm~\ref{alg:datatrans}. Thus, for fixed coins of the algorithm, the $u^{th}$ columns of the datasets produced in Step~\ref{step:datasettrans} of Algorithm~\ref{alg:redsampler} when run on $\Datafixed$ and $\Datafixed'$ respectively, differ by at most $1$ entry. Similarly, the $\ell^{th}$ columns obtained in these two runs also differ in at most a single entry. Hence, for fixed coins of the algorithm, the datasets $\Datafixedy$ obtained in these two runs differ in at most $2$ elements. 
By group privacy and the law of total probability, the subroutine call to sampler $\sampler_{prod}$ in Step~\ref{step:privateprodsamp} of Algorithm~\ref{alg:redsampler} is then $(\eps/2, \delta)$-differentially private. The last step is just post-processing. Thus, using basic composition, the sampler described in Algorithm~\ref{alg:redsampler} is $(\eps, \delta)$-differentially private. 
\end{proof}

Armed with the above lemmas, we prove the main theorem. The high-level structure of the proof is as follows. We reduce the problem of privately sampling from the class $\carb$ given a dataset with size distributed as $\Po(n)$ to privately sampling from the class $\cB^{\otimes 2k}$ given a dataset of size $\frac{n}{2}$ (the reduction is formally described in Algorithm~\ref{alg:redsampler}). Let $\distr$ be the unknown distribution from $\carb$. 
We assume that we have a private product distribution sampler that is accurate given $n/2$ samples. We then use Lemmas~\ref{lem:datatrans} and~\ref{lem:easyunivtrans} to argue that conditioned on the special element being chosen correctly in Algorithm~\ref{alg:redsampler}, the output distribution of Algorithm~\ref{alg:redsampler} is close in total variation distance to $\distr$. We then use Lemma~\ref{lem:specialel} to argue that the special element is chosen in Algorithm~\ref{alg:redsampler} with high probability. This implies that Algorithm~\ref{alg:redsampler} is an accurate sampler for $\carb$. By Lemma~\ref{lem:privred}, it is $(\eps, \delta)$-differentially private. Hence, we can invoke the lower bound for privately sampling from $\carb$ to obtain a lower bound on the number of samples for accurately sampling from $\cB^{\otimes 2k}$.
\begin{proof}[Proof of Theorem~\ref{thm:product-bern-lb}] 

 For any distribution $\distr$ and randomized function $f$, we will use $f(\distr)$ to represent the distribution of the random variable obtained by applying $f$ to a random variable distributed according to $\distr$. Also, without loss of generality, we can assume that $d$ is greater than some fixed constant. This is because for $d$ smaller than that constant, the lower bound for privately sampling from Bernoulli distributions in Theorem~\ref{thm:bernoulli-lb} directly gives the lower bound.
 
 Fix $n > \frac{2C\log k}{\alpha \eps}$. Set $k=\frac{d}{2}$. Fix any distribution $\distr \in \carb$, and let its special element be $s$. Let $C$ be the constant in the statement of Lemma~\ref{lem:specialel}. Let $\Datarv$ be a dataset of size $\Po(n)$ with entries drawn independently from $\distr$. 
 Fix an arbitrary $(\eps/2, \delta)$-DP sampler $\sampler_{prod}$ for $\cB^{\otimes d}$ that is $\frac{\alpha}{25}$-accurate when given $\frac{1}{2}[n - \frac{2C \log k}{\alpha \eps}]$ samples. Let $\corr$ be the corresponding product distribution of $\distr$. Run $\sampler_{red}$ with the following inputs: dataset $\Datarv$, privacy parameters $\eps,\delta$, the values $k,n$, accuracy parameter $\alpha$, and black box access to $\sampler_{prod}$. Let event $E_{s}$ represent  successfully choosing the special element $s$ in Algorithm~\ref{alg:redsampler} (this event always occurs when $L$ satisfies the condition in Step~\ref{step:largespec} and the special element is chosen correctly in  Step~\ref{step:specialel}). Let the output distribution of Algorithm~\ref{alg:redsampler} conditional on $E_{s}$ be $Q_{E_s}$. 

By Poissonization (Lemma~\ref{lem:multtopois}), $L$ and $R$ sampled in Step~\ref{step:datasplitval} of Algorithm~\ref{alg:redsampler} are independent Poisson random variables with means $\frac{2C \log k}{\alpha \eps}$ and $n - \frac{2C \log k}{\alpha \eps}$, respectively.

For the next part of the proof, condition on event $E_s$ occurring. We show that conditioned on this event, the output distribution of Algorithm~\ref{alg:redsampler} is close in total variation distance to distribution $\distr$.

Observe that the conditioning on $E_s$ does not change the distribution of $R$, since only the first partition consisting of $L$ samples is used to determine the special element (and $L$ is independent of $R$). By Lemma~\ref{lem:datatrans}, Step~\ref{step:datasettrans} then produces a dataset of size $\frac{1}{2}[n-\frac{2C \log k}{\alpha \eps}]$ that is distributed as independent samples from $\corr$. Next, let $Q_{prod}$ be the output distribution of Step~\ref{step:privateprodsamp}. By the accuracy of the private product sampler, $d_{TV}(Q_{prod}, \corr) \leq \alpha/25$. By the information processing inequality for total variation distance (Lemma~\ref{lem:postTV}), $$d_{TV}(\reduction^{\leftarrow}(Q_{prod}, s), \reduction^{\leftarrow}(\corr,s)) \leq \alpha/25.$$
By Lemma~\ref{lem:easyunivtrans}, $d_{TV}(\reduction^{\leftarrow}(\corr,s), \distr) \leq (\alpha^*)^2$, where $\alpha^* = 60\alpha$. Additionally,  $\reduction^{\leftarrow}(Q_{prod},s) = Q_{E_s}$ is the output distribution of Algorithm~\ref{alg:redsampler} conditional on event $E_s$. Hence, for sufficiently small $\alpha$, by the triangle inequality, 
\begin{align} \label{eq:totalvarcond}
d_{TV}(Q_{E_s},\distr) \leq \alpha/25 + (\alpha^*)^2 \leq \alpha/13.
\end{align}
Next, we show that the probability of choosing the special element incorrectly (event $\overline{E_s}$) in Algorithm~\ref{alg:redsampler} is small. By a union bound, this probability is at most the sum of the probability that the random variable $L$ was not large enough in Step~\ref{step:largespec}, and the probability that Step~\ref{step:specialel} chooses the wrong special element. By a tail bound on Poisson random variables (Claim~\ref{lem:poiss_tail} \cite{clementpoiss}), the probability that $L<\frac{C\log k}{\alpha \eps}$ is at most $e^{-\frac{C \log k}{6\alpha \eps}} \leq e^{-\frac{C \log(4/\alpha)}{6}} \leq  \frac{\alpha}{4}$ for sufficiently small $\alpha$ (upper bounding $\eps$ by $1$, lower bounding $\log k$ by $1$, and lower bounding $\frac{1}{\alpha}$ by $\log(4/\alpha)$). By Lemma~\ref{lem:specialel}, the probability of choosing the wrong special element in Step~\ref{step:specialel} is at most $\frac{\alpha}{2}$. Hence, the probability of $\overline{E_s}$ is at most $\alpha/4 + \alpha/2 = 3\alpha/4$.

Let $\distroutput{\sampler_{red}, \distr}$ be the output distribution of Algorithm~\ref{alg:redsampler}. Next, we show that $\distroutput{\sampler_{red}, \distr}$
is close to distribution~$\distr$. 

We first invoke a standard lemma to argue that the output distribution of Algorithm~\ref{alg:redsampler} (i.e. distribution $\distroutput{\sampler_{red}, \distr}$)
is close to the output distribution conditioned on event $E_s$ (i.e. distribution $Q_{E_s}$).  
Accounting for choosing the special element incorrectly, we can use Lemma~\ref{lem:TVcond} with $\beta = 3\alpha/4$, event $E_s$ and distribution $D$ being the output distribution $\distroutput{\sampler_{red}, \distr}$ to argue that 
\begin{align}\label{eq:condtvspecial}
    d_{TV}(\distroutput{\sampler_{red}, \distr}, Q_{E_s}) \leq \frac{3\alpha/4}{1-3\alpha/4} \leq \frac{3\alpha/4}{1-3/16} \leq  \frac{12\alpha}{13},
\end{align}
where we have used $\alpha \leq 0.25$. Finally, by the triangle inequality, using (\ref{eq:totalvarcond}) and (\ref{eq:condtvspecial}), we get that $$d_{TV}(\distroutput{\sampler_{red}, \distr}, \distr) \leq d_{TV}(\distroutput{\sampler_{red}, \distr}, Q_{E_s}) + d_{TV}(Q_{E_s}, \distr) \leq \frac{12\alpha}{13} + \frac{\alpha}{13} = \alpha.$$
Hence, Algorithm~\ref{alg:redsampler} is $\alpha$-accurate for $\carb$. By Lemma~\ref{lem:privred}, Algorithm~\ref{alg:redsampler} is $(\eps, \delta)$-differentially private. Putting both together, Algorithm~\ref{alg:redsampler} is an $(\eps, \delta)$-differentially private, $\alpha$-accurate sampler for $\carb$ with sample size distributed as $\Po(n)$. Since $d$ (and hence $k$) is sufficiently large, the lower bound for Poisson samplers for $\carb$ (Lemma~\ref{lem:k-ary-lb-pois}) then implies that $n \geq \frac{c k}{\alpha \eps}$ for some constant $c$. For some other constant $c'$, this implies that $\frac{1}{2}[n-\frac{2C \log k}{\alpha \eps}] > \frac{c' 2k}{\alpha \eps} = \frac{c'd}{\alpha \eps}$. Hence, for sufficiently large $d$, and $\eps \leq \frac{1}{2}$, every $(\eps, \delta)$-DP sampler that is $\frac{\alpha}{25}$-accurate on the class $\cB^{\otimes d}$ needs a dataset of size $\Omega(\frac{d}{\alpha \eps})$. 
%
\end{proof}

\ifnum\neurips=1
\section{Products of Bernoulli distributions with bounded bias}\label{sec:bounded-bias}
\else
\section{Products of Bernoulli Distributions with Bounded Bias}\label{sec:bounded-bias}
\fi

\subsection{Sampling Algorithms for Products of Bernoullis with Bounded Bias}

In this section, we consider Bernoulli distributions and, more generally, products of Bernoulli distributions with bounded bias. We show that, when the bias is bounded, differentially private sampling can be performed with datasets of significantly smaller size than in the general case. For Bernoulli distributions with bounded bias, we achieve this (in Theorem~\ref{thm:bernoulli-bb}) with pure differential privacy, that is, with $\delta=0$. For products of Bernoulli distributions, we give a zCDP algorithm (see Theorem~\ref{thm:bernoulli-product-bb}).
Theorems~\ref{thm:bernoulli-bb} and~\ref{thm:bernoulli-product-bb}, in conjunction with Lemma~\ref{prelim:relate_dp_cdp} relating $\rho$-zCDP and differential privacy, directly yield Theorem~\ref{thm:intro-bernoulli-product-bb}. In Section~\ref{sec:bb-lb}, we prove our lower bound for products of Bernoulli distributions with bounded bias, encapsulated in Theorem~\ref{thm:intro-product-bb-lb}. 
\ifnum\neurips=1
\subsection{Private sampler for Bernoulli distributions with bounded bias}\label{sec:bernoulli-bb}
\else 
\subsubsection{Private Sampler for Bernoulli Distributions with Bounded Bias}\label{sec:bernoulli-bb}
\fi

First, we consider the class $\cBB$ of Bernoulli distributions (see Definition~\ref{prelim:bern_def}) with an unknown bias $p\in\big[\frac 13,\frac 23\big].$ 
Even though class $\cBB$ is the hardest to learn privately among the classes of Bernoulli distributions, we show in the next theorem that private sampling from this class is easy.
\begin{theorem}\label{thm:bernoulli-bb}
For all $\eps>0$ and $\alpha\in (0,1)$, there exists an $(\eps,0)$-differentially private sampler for the class $\cBB$ of Bernoulli distributions with bias in $\big[\frac 13,\frac 23 \big]$ that is  $\alpha$-accurate for datasets of size  
$n=O(\frac 1{\eps}+\ln \frac 1 {\alpha})$.
\end{theorem}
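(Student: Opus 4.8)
The plan is to analyze the following sampler $\bernest$: on input $\Datafixed = (\datafixed_1,\dots,\datafixed_n)\in\{0,1\}^n$, compute the empirical mean $\hat p = \frac1n\sum_{i=1}^n \datafixed_i$, set $q = \clipquarter{\hat p}$ (that is, round $\hat p$ into the interval $[\tfrac14,\tfrac34]$), sample a bit $b\sim\Ber(q)$, and output $b$. I would take $n$ to be the least integer with $n\ge\max\{4/\eps,\; C\ln(1/\alpha)\}$ for a suitable absolute constant $C$; this is $O(\tfrac1\eps+\ln\tfrac1\alpha)$, matching the claimed bound.

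For privacy, the observation is that on neighboring datasets the empirical means differ by at most $1/n$, and since $x\mapsto\clipquarter{x}$ is $1$-Lipschitz, the corresponding clipped parameters $q,q'$ satisfy $|q-q'|\le 1/n$ and both lie in $[\tfrac14,\tfrac34]$. Hence on outcome $1$ the likelihood ratio is $q/q'\le 1+\tfrac1{nq'}\le 1+\tfrac4n\le e^{4/n}$ (using $q'\ge\tfrac14$), and on outcome $0$ it is $(1-q)/(1-q')\le e^{4/n}$ (using $1-q'\ge\tfrac14$); sampling the bit is the only data-dependent step. So $\bernest$ is $(\tfrac4n,0)$-DP, and since $n\ge 4/\eps$ it is $(\eps,0)$-DP.

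For accuracy, fix $p\in[\tfrac13,\tfrac23]$ and let the records be i.i.d.\ $\Ber(p)$, so $\E[\hat p]=p$ and $\Pr_{\Datarv}[\bernest(\Datarv)=1]=\E[q]$ (expectation over the data and the coins). By Claim~\ref{clm:bern_acc_eq} it suffices to show $|\E[q]-p|\le\alpha$. The crux of the argument — the step I expect to be the heart of the proof — is that $q$ agrees with the \emph{unbiased} estimator $\hat p$ except on the clipping event $\bar E=\{\hat p\notin[\tfrac14,\tfrac34]\}$, so $\E[q]-p=\E[(q-\hat p)\indicator_{\bar E}]$, and since $|q-\hat p|\le\tfrac14$ whenever $\bar E$ holds, $|\E[q]-p|\le\tfrac14\Pr[\bar E]$. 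Because $p\in[\tfrac13,\tfrac23]$ lies at distance at least $\tfrac1{12}$ from both $\tfrac14$ and $\tfrac34$, a Chernoff bound gives $\Pr[\bar E]\le\Pr[|\hat p-p|\ge\tfrac1{12}]\le 2e^{-n/72}$, which is at most $4\alpha$ once $n\ge C\ln(1/\alpha)$ with $C$ large enough. Thus $|\E[q]-p|\le\alpha$.

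Combining the two parts gives the theorem. The only delicate point is the accuracy bound: a naive appeal to concentration of $\hat p$ around $p$ only yields $|\E[q]-p| = O(1/\sqrt n)$ and hence a sample size $\Omega(1/\alpha^2)$; the exponential improvement in $\alpha$ comes precisely from the exact unbiasedness of the empirical mean combined with the fact that clipping is a rare, bounded-magnitude event when $p$ is bounded away from the thresholds $\tfrac14$ and $\tfrac34$.
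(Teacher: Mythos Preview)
Your proposal is correct and follows essentially the same approach as the paper: the same clip-and-sample algorithm, the same $(4/n,0)$-DP argument via $q,q'\in[\tfrac14,\tfrac34]$ and $|q-q'|\le 1/n$, and the same accuracy idea that clipping is a rare event when $p\in[\tfrac13,\tfrac23]$. Your accuracy write-up via $\E[q]-p=\E[(q-\hat p)\indicator_{\bar E}]$ and $|q-\hat p|\le\tfrac14$ is in fact a bit cleaner than the paper's conditioning argument, but the substance is the same.
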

\begin{proof}
We use $\clipab{\ \cdot \ }$ to denote rounding an arbitrary real number to the nearest value in $[a,b]$.
Consider the following sampler $\sampler_{clip}$: on input $\Datafixed\in \{0,1\}^n,$
compute the  sample proportion $\hat p=\frac {1}n \sum_{i\in[n]} \datafixed_i$, obtain a clipped proportion $\tilde p = \clipquarter{\hat p}$, and output $b\sim\Ber(\tilde p)$. 

\begin{claim}\label{claim:bernoulli-bb-accuracy}
Sampler $\sampler_{clip}$ is $\alpha$-accurate on the class $\cBB$ with dataset of size $n\geq 72\ln\frac {6}{\alpha}$. 
\end{claim}
\begin{proof}
Let the ``good'' event $E$ be that no rounding occurs when sample proportion is clipped, that is, $\tilde p=\hat p$.
Since $p\in[1/3,2/3]$,
\begin{align}\label{eq:prob-of-E-bar}
\Pr[\overline{E}]=\Pr\Big[\hat p\notin [1/4,3/4]\Big]
\leq\Pr\Big[|\hat{p}-p|\geq\frac 1{12}\Big]\leq 2e^{-n/72}\leq\frac \alpha{3},
\end{align}
where we applied the Hoeffding bound (specifically, that $\Pr[|\hat p -\E[\hat p]|\geq t]\leq 2 e^{-2nt^2}$) and our lower bound on $n.$ 

Let $Q$ be the distribution of the output bit $b$ for a dataset selected i.i.d.\ from $\Ber(p).$ Then, by Claim~\ref{clm:bern_acc_eq}  and the description of $\sampler_{clip}$,
$$
d_{TV}(Q, \Ber(p)) =|\E(b)-p|
=|\E(\tilde p)-\E(\hat p)|.
$$
Next, we observe that $\E[\tilde p | E] = \E[\hat p | E]$ and use (\ref{eq:prob-of-E-bar}) to bound $d_{TV}(Q, \Ber(p)).$ Specifically,
\begin{align*}
\E[\hat p] &=\E[\hat p | E]\cdot\Pr[E]+\E[\hat p | \overline{E}]\cdot\Pr[\overline{E}]
\leq \E[\hat p | E]\cdot 1+1\cdot\Pr[\overline{E}]\\
&\leq \E[\hat p | E] +\frac \alpha 3 
= \E[\tilde p | E] +\frac \alpha 3
\leq \frac{\E[\tilde p]}{\Pr[E]}+\frac \alpha 3
\leq \frac{\E[\tilde p]}{1-\alpha/3}+\frac \alpha 3\\
&\leq \E[\tilde p] + \Big(\frac 1 {1-\alpha/3} -1\Big) +\frac \alpha 3
\leq \E[\tilde p] + \frac \alpha 2 +\frac \alpha 3
\leq \E[\tilde p] +\alpha.
\end{align*}
Similarly, $\E[\tilde p]\leq \E[\hat p]+\alpha.$
We get that $d_{TV}(Q, \Ber(p))=|\E(\tilde p)-\E(\hat p)|\leq \alpha,$ completing the accuracy analysis.
\end{proof}

\begin{claim}\label{claim:bernoulli-bb-privacy}
Sampler $\sampler_{clip}$ is $(4/n,0)$-differentially private. 
\end{claim}
\begin{proof}
By definition of the sampler, $\Pr[b=1]=\tilde p.$ Consider two datasets $\Datafixed$ and $\Datafixed'$ that differ in one record. The sample proportions
 $\hat p=\frac {1}n \sum_{i\in[n]} \datafixed_{i}$ and $\hat p'=\frac {1}n \sum_{i\in[n]} \datafixed'_{i}$ differ by at most $1/n$. Let $\tilde p$ and $\tilde p'$ be the corresponding clipped proportions, which also differ by at most $1/n$. Then, since $\tilde p\geq 1/4,$
 $$\tilde p'\leq \tilde p+\frac 1 n\leq \tilde p + \tilde p \frac 4 n
 =\tilde p\Big(1+\frac 4 n\Big)
 \leq \tilde p\cdot  e^{4/n},
 $$
where we used the fact that $1+t\leq e^t$ for all $t.$
Similarly, since $\tilde p\leq 3/4,$ the probabilities of returning $b=0$ for inputs $\Datafixed$ and $\Datafixed'$ also differ by at most a factor of $e^{4/n}$. Thus, $\sampler_{clip}$ is $4/n$-differentially private.
\end{proof}
Now we set $n\geq\max\big\{72\ln\frac 6\alpha,\frac 4 \eps\big\}$ and use Claims~\ref{claim:bernoulli-bb-accuracy} and~\ref{claim:bernoulli-bb-privacy} to get both accuracy and privacy guarantees. Observe that when $n\geq 4/\eps$, we get that $4/n\leq\eps$, that is, $\sampler_{clip}$ is $(\eps,0)$-differentially private.
This completes the proof of Theorem~\ref{thm:bernoulli-bb}.
\end{proof}


\ifnum\neurips=1
\subsubsection{Private sampler for products of Bernoulli distributions with bounded bias}\label{sec:bernoulli-prod-bb}
\else 
\subsubsection{Private Sampler for Products of Bernoulli Distributions with Bounded Bias}\label{sec:bernoulli-prod-bb}
\fi

In this section, we consider product distributions, where each marginal is a Bernouli distribution with a bias between 1/3 and 2/3. For this class, significantly fewer samples are needed for private sampling than for the general class of products of Bernoulli distributions.

\begin{theorem}\label{thm:bernoulli-product-bb}
For all all $\rho >0$ and $\alpha\in (0,1)$, there exists a $\rho$-zCDP sampler for the class $\cBB^{\otimes d}$ of products of Bernoulli distributions with bias in $\big[\frac 13,\frac 23 \big]$ that is  $\alpha$-accurate for datasets of size $n=O\Big(\frac {\sqrt{d}}{\sqrt{\rho}} + \log \frac d {\alpha}\Big)$.
\end{theorem}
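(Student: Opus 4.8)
The plan is to run the single-coordinate sampler $\sampler_{clip}$ from the proof of Theorem~\ref{thm:bernoulli-bb} independently on each of the $d$ coordinates. That is, on input $\Datafixed \in (\{0,1\}^d)^n$, for each $j \in [d]$ compute the sample proportion $\hat p_j = \frac 1n \sum_i \datafixed_{i,j}$, round it to $\tilde p_j = \clipquarter{\hat p_j}$, sample $b_j \sim \Ber(\tilde p_j)$, and output $(b_1,\dots,b_d)$. The resulting output distribution is a product distribution whose $j$-th marginal $Q_j$ is exactly the output distribution of $\sampler_{clip}$ applied to the $j$-th column. Everything then reduces to two essentially independent analyses: a coordinate-wise accuracy bound, and a privacy bound that exploits composition in the zCDP framework.

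For accuracy, since both the data distribution $\distr = \distr^1 \otimes \cdots \otimes \distr^d \in \cBB^{\otimes d}$ and the output distribution are products, Lemma~\ref{lem:subaddTV} gives $d_{TV}(\distroutput{\sampler, \distr}, \distr) \le \sum_{j=1}^d d_{TV}(Q_j, \distr^j)$. I would repeat the argument of Claim~\ref{claim:bernoulli-bb-accuracy} with the target error $\alpha/d$ in place of $\alpha$: letting $E_j$ be the event that no clipping occurs in coordinate $j$, the Hoeffding bound gives $\Pr[\overline{E_j}] \le 2e^{-n/72}$, which is at most $\alpha/(3d)$ once $n \ge 72\ln(6d/\alpha)$; the same conditioning computation as in Claim~\ref{claim:bernoulli-bb-accuracy} (using $\frac{1}{1-\alpha/(3d)} - 1 \le \frac{\alpha}{2d}$ for $\alpha/d$ sufficiently small) then yields $d_{TV}(Q_j, \distr^j) = |\E[\tilde p_j] - \E[\hat p_j]| \le \alpha/d$. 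Summing over the $d$ coordinates gives total error at most $\alpha$.

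For privacy, Claim~\ref{claim:bernoulli-bb-privacy} shows each per-coordinate sampler is $(4/n,0)$-differentially private; viewed as a function of the full $d$-dimensional dataset this still holds, since neighboring $d$-dimensional datasets differ in one record and hence induce a neighboring (single-bit) change in each column. By the first part of Lemma~\ref{prelim:relate_dp_cdp}, each per-coordinate mechanism is therefore $\frac 12 (4/n)^2 = 8/n^2$-zCDP, and the overall sampler is the composition of these $d$ mechanisms, so by the composition rule for zCDP (Lemma~\ref{prelim:composition}) it is $(8d/n^2)$-zCDP. Choosing $n = \max\{\sqrt{8d/\rho},\, 72\ln(6d/\alpha)\} = O(\sqrt{d}/\sqrt{\rho} + \log(d/\alpha))$ makes the sampler $\rho$-zCDP while keeping it $\alpha$-accurate, which is the claimed bound.

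Given the two building blocks (Theorem~\ref{thm:bernoulli-bb} and the zCDP tools), the argument is routine, so I do not expect a serious obstacle; the one conceptual point worth emphasizing is that the entire improvement over the general product-distribution bound comes from working in the zCDP framework: basic $(\eps,\delta)$-composition of $d$ copies of a $(4/n,0)$-DP mechanism would only give $\eps = 4d/n$, forcing $n = \Theta(d)$, whereas zCDP composition \emph{adds} the parameters $8/n^2$ and produces the $\sqrt d$ dependence. The only places needing a little care are checking that the conditioning manipulations in the per-coordinate accuracy bound still close up with the smaller $\alpha/d$ slack, and verifying that taking the maximum of the two lower bounds on $n$ simultaneously satisfies both the privacy and the accuracy requirements.
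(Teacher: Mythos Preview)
Your proposal is correct and matches the paper's proof essentially line for line: the same coordinate-wise application of $\sampler_{clip}$, the same accuracy argument via Claim~\ref{claim:bernoulli-bb-accuracy} at level $\alpha/d$ combined with subadditivity of TV distance, and the same privacy argument converting the per-coordinate $(4/n,0)$-DP to $8/n^2$-zCDP and composing. Your remark about why zCDP composition yields the $\sqrt d$ dependence is a nice addition not made explicit in the paper.
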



\begin{proof}
The input to a sampler for $\cBB^{\otimes d}$ is an $n\times d$ matrix $\Datafixed\in \{0,1\}^{n\times d},$ 
where row $i$ contains the $i$-th record and column $j$ contains all input bits for $j$-th attribute. Recall sampler $\sampler_{clip}$ from the proof of Theorem~\ref{thm:bernoulli-bb}.
For each $j\in[d]$, our sampler runs sampler $\sampler_{clip}$ on column $j$ of $\Datafixed$ and records its output bit $b_j$; it returns the vector $b=(b_1,\dots,b_d)$.



We show that this sampler is $\alpha$-accurate on the class $\cBB^{\otimes d}$ with $n\geq 72 \ln \frac{6d}{\alpha}$ samples. 
Let $P=P_1\otimes\dots\otimes P_d$ be the input product distribution, where $P_j= \Ber(p_j)$ for all coordinates $j\in[d]$.
Let $Q=Q_1\otimes\dots\otimes Q_d$ be the distribution of the output vector $b$ for a dataset selected i.i.d.\ from $P.$  (Since the coordinates of $b$ are mutually independent, $Q$ is indeed a product distribution.) By Claim~\ref{claim:bernoulli-bb-accuracy} applied with $\alpha/d$ as accuracy parameter, $d_{TV} (Q_j,P_j)\leq \frac\alpha d$. By subadditivity of the statistical distance between two product distributions, 
$$d_{TV}(Q,P)\leq\sum_{j\in[d]} d_{TV} (Q_j,P_j)
\leq d\cdot\frac\alpha d =\alpha,$$
%
%
%
completing the proof that our sampler is $\alpha$-accurate.

%


Finally, we show that our sampler 
is $\rho$-zCDP when $n\geq \sqrt{8d/\rho}$. The sampler is a composition of $d$ algorithms, each returning one bit. 
By Claim~\ref{claim:bernoulli-bb-privacy},
these algorithms are $(4/n,0)$-differentially private and, consequently, also $\frac 8{n^2}$-zCDP. A composition of $d$ such algorithms is then $\frac {8d}{n^2}$-zCDP.
%
That is, when $n\geq \sqrt{8d/\rho}$, it is $\rho$-zCDP, as required.
%
\end{proof}

\subsection{Lower Bound for Products of Bernoullis with Bounded Bias}
\label{sec:bb-lb}

We now prove a lower bound that matches the guarantees of the algorithm of the previous section. 

\begin{theorem}[Theorem~\ref{thm:intro-product-bb-lb}, restated]\label{thm:bb-lb}
For all  sufficiently small $\alpha>0$, and
for all  $d,n \in \N$, $\eps \in (0,1]$,  and $\delta \leq \frac 1{100n}$, if there exists an $(\eps,\delta)$-differentially private sampler that is $\alpha$-accurate on
the class of products of $d$ Bernoulli distributions with biases in $\big[\frac 13,\frac 23 \big]$ on datasets of size $n$, then $n={\Omega}(\sqrt{d}/\eps)$. \asnote{Need to resolve dependency on $\delta$.}
\end{theorem}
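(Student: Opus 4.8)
The plan is to reduce from a weak distributional estimation problem that is trivial without privacy but hard for differentially private algorithms. Fix two small constants $c_1,c_2>0$ (e.g.\ $c_1=c_2=\tfrac1{20}$). Consider the \emph{weak estimation task} for the class of products of $d$ Bernoulli distributions with biases in $[\tfrac13,\tfrac23]$: given a dataset of i.i.d.\ draws from $\distr=\Ber(p_1)\otimes\cdots\otimes\Ber(p_d)$, output $(\tilde p_1,\dots,\tilde p_d)$ such that for each coordinate $i\in[d]$ we have $\Pr[\,|\tilde p_i-p_i|\le c_1\,]\ge 1-c_2$. By the fingerprinting-code lower bounds of Bun, Ullman and Vadhan~\cite{BunUV14j}, any $(\eps,\delta)$-differentially private algorithm solving this task on datasets of size $N$ must have $N=\Omega(\sqrt d/\eps)$ in the $\delta$ regime of the theorem. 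The first thing the write-up must do is pin down this invocation precisely: check that the hard instance in~\cite{BunUV14j} can be taken to have all biases in $[\tfrac13,\tfrac23]$ (fingerprinting codes can be built with column-marginals fixed to any value in the open interval $(0,1)$, or one rescales a standard construction), that the bound applies in the distributional formulation and under the weak per-coordinate/constant-probability success criterion, and that the $\delta$ range for which it holds matches $\delta\le\tfrac1{100n}$ after the dataset-size blow-up described below.

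Next I would build the reduction. Let $\sampler$ be an $(\eps,\delta)$-DP sampler that is $\alpha$-accurate on the bounded-bias class on datasets of size $n$, where $\alpha$ is a sufficiently small constant (in particular $\alpha<c_1/2$). For a constant $m$ to be fixed, define an estimator on datasets of size $N=mn$: partition the input into $m$ disjoint blocks of size $n$, run $\sampler$ independently on each block to obtain $b^{(1)},\dots,b^{(m)}\in\{0,1\}^d$, and output $\tilde p_i=\tfrac1m\sum_{t=1}^m b^{(t)}_i$ for each $i\in[d]$. Since the blocks are disjoint, changing one input record affects only a single run of $\sampler$, so the tuple $(b^{(1)},\dots,b^{(m)})$ is $(\eps,\delta)$-DP, and coordinate-wise averaging is postprocessing; hence the whole estimator is $(\eps,\delta)$-DP.

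For accuracy, note that $b^{(1)},\dots,b^{(m)}$ are i.i.d.\ draws from $Q:=\distroutput{\sampler,\distr}$, whose coordinate-$i$ marginal $Q_i$ satisfies $|\Pr_{Q_i}[X=1]-p_i|\le d_{TV}(Q_i,\Ber(p_i))\le d_{TV}(Q,\distr)\le\alpha$ by the information-processing inequality (Lemma~\ref{lem:postTV}) and $\alpha$-accuracy of $\sampler$. Thus $\tilde p_i$ is the empirical mean of $m$ i.i.d.\ $\{0,1\}$ variables with mean within $\alpha$ of $p_i$; by Chebyshev's inequality $\Pr[\,|\tilde p_i-\E_{Q_i}[X]|>c_1/2\,]\le \tfrac{1}{4m(c_1/2)^2}$, which is at most $c_2$ once $m$ is a large enough constant. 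The triangle inequality then gives $\Pr[\,|\tilde p_i-p_i|\le c_1\,]\ge 1-c_2$ for every $i$, so the estimator solves the weak estimation task on datasets of size $mn$. Combining with the lower bound from Step~1 yields $mn=\Omega(\sqrt d/\eps)$, hence $n=\Omega(\sqrt d/\eps)$ since $m=O(1)$.

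The main obstacle I anticipate is entirely in Step~1 rather than in the reduction: one must verify that the fingerprinting lower bound of~\cite{BunUV14j} genuinely applies in the distributional setting with biases confined to $[\tfrac13,\tfrac23]$ and with the weak per-coordinate success criterion, and—most delicately—track the $\delta$ dependence so that the range of $\delta$ for which the estimator's lower bound holds (on datasets of size $N=mn$) is implied by the hypothesis $\delta\le\tfrac1{100n}$ on the sampler. Everything else—privacy under composition on disjoint blocks, and the Chebyshev-based robustness of the trivial nonprivate estimator to TV-closeness—is routine.
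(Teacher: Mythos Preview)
Your approach is correct and essentially matches the paper's: both partition the dataset into a constant number of blocks, run the sampler on each block, average the resulting bits coordinate-wise, and then invoke the fingerprinting lower bound of~\cite{BunUV14j} for marginal estimation. The privacy analysis (disjoint blocks, postprocessing) and the concentration step are the same up to the choice of inequality.

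The one substantive difference is precisely where you anticipated trouble. Rather than checking that the fingerprinting construction can be confined to biases in $[\tfrac13,\tfrac23]$, the paper inserts an extra reduction: it passes the input through a binary symmetric channel with flip probability $\tfrac13$, which maps any product distribution on $\{0,1\}^d$ to one with all marginals in $[\tfrac13,\tfrac23]$ (specifically $p_j\mapsto\tfrac{p_j}3+\tfrac13$), runs the bounded-bias marginal estimator, and then inverts the affine map on the estimates. This converts the bounded-bias estimator into an estimator for the full class $\cB^{\otimes d}$ with only a constant-factor loss in accuracy, allowing~\cite{BunUV14j} to be invoked as a black box. This trick neatly sidesteps your Step~1 verification, and it may be worth adopting. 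On the $\delta$ tracking, the paper is in the same position you are (there is an explicit author note flagging it as unresolved).
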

To prove the theorem, we reduce the problem of accurately estimating the marginal biases of a product distribution over $\{0,1\}^d$ to the problem of sampling from the product distribution. This involves dividing the dataset into a constant number of disjoint parts and passing each part separately to a sampler for product distributions to obtain a constant number of independent samples. Then, by averaging the samples obtained, we get an estimate of the marginal biases. We also observe that a marginal estimator for the class $\cBB^{\otimes d}$ can be converted into a marginal estimator for the class $\cB^{\otimes d}$ with only a constant factor loss in accuracy. To do this, we flip every bit of every sample with probability $1/3$, which gives us a dataset that looks like it is drawn from a product distribution in $\cBB^{\otimes d}$. We then use the marginal estimator for $\cBB^{\otimes d}$ and transform the estimated biases back to the original range by multiplying by $3$ and subtracting $1$. Finally, applying the lower bound of Bun et al.~\cite{BunUV14j} for the sample complexity of marginal estimation for the class $\cB^{\otimes d}$, we obtain a lower bound on the sample complexity for the problem of accurately sampling from product distributions with bounded biases.

\begin{definition}[Marginal Estimator] For $\alpha', \beta', \gamma \in [0,1]$, and a class \class of distributions on $\bit{d}$, an algorithm \me is an $(\alpha', \beta', \gamma, \class)$-marginal estimator with sample size $n$ if, given $\Datarv \sim \distr^{\otimes n}$ where $\distr \in \class$,  with probability at least $1-\gamma$, algorithm \me returns $\tilde{\biasesfixed}_1, \ldots, \tilde{\biasesfixed}_d$ such that
\begin{equation*}
    |\{j \in [d] : |\biasesfixed_j - \tilde{\biasesfixed}_j| > \alpha' \}| < \beta' d \, .
\end{equation*}
\end{definition}

\begin{algorithm}
    \caption{Marginal Estimator $\me_c$ for $\cBB^{\otimes d}$}
    \label{alg:marginal_est}
    \hspace*{\algorithmicindent} \textbf{Input:} dataset $\Datafixed \in \bit{c n\times d}$, constant $c$, query access to sampler \sampler\\
    \hspace*{\algorithmicindent} \textbf{Output:} marginal estimates $\tilde{\Biasesfixed} = (\tilde{\biasesfixed}_1, \ldots, \tilde{\biasesfixed}_d)$
    \begin{algorithmic}[1] 
            \State Partition dataset \Datafixed into $c$ equal parts: $\Datafixed^{(1)}, \ldots, \Datafixed^{(c)}$
            \For{$i = 1$ to $c$}:
            \State $Y_i \gets \sampler(\Datafixed^{(i)})$ \Comment{Get $c$ independent samples from \sampler}
            \label{step:sample} 
            \EndFor
            \State $\tilde{\Biasesfixed} \gets \frac{1}{c} \sum_{i=1}^c Y_i$ \Comment{Compute marginal estimates} \label{step:computeemp}
            \State \Return $\tilde{\Biasesfixed}$
    \end{algorithmic}
\end{algorithm}

\begin{lemma}[Reduction from Marginal Estimation to Sampling]\label{lem:sam_to_me}
For all $\alpha, \beta_0, \gamma_0 \in (0,1)$, there exists $c\in \N$ such that for all $\eps,\delta>0$: 
if \sampler is an $(\eps, \delta)$-DP sampler that is $\alpha$-accurate on class $\cBB^{\otimes d}$ with sample size $n$, then $\me_c$ (Algorithm~\ref{alg:marginal_est}) is an $(\eps, \delta)$-DP, $(2 \alpha , \beta_0,\gamma_0, \cBB^{\otimes d})$-marginal estimator with sample size $cn$. 
%
\end{lemma}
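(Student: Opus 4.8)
The plan is to analyze Algorithm~\ref{alg:marginal_est} directly, choosing the constant $c$ as a function of $\alpha,\beta_0,\gamma_0$ only, and to verify the two required properties separately: that $\me_c$ inherits $(\eps,\delta)$-differential privacy from $\sampler$, and that it meets the accuracy guarantee of a $(2\alpha,\beta_0,\gamma_0,\cBB^{\otimes d})$-marginal estimator with sample size $cn$.

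For privacy, I would use that the $c$ sub-datasets $\Datafixed^{(1)},\dots,\Datafixed^{(c)}$ form a partition of the input, so any two neighboring inputs $\Datafixed,\Datafixed'$ differ within a single part, say the $i_0$-th, while the remaining $c-1$ sub-datasets coincide. Since the samples $Y_1,\dots,Y_c$ produced in Step~\ref{step:sample} are computed from disjoint portions of the input with independent coins, their joint distribution is the product of the laws $\distroutput{\sampler,\cdot}$ evaluated at the $\Datafixed^{(i)}$; integrating the $(\eps,\delta)$-closeness of the $i_0$-th factor against the (unchanged) product of the remaining factors---the standard ``parallel composition'' argument for disjoint inputs---shows that $(Y_1,\dots,Y_c)$ is $(\eps,\delta)$-indistinguishable on neighboring inputs. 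Step~\ref{step:computeemp} merely post-processes these samples, so by Lemma~\ref{prelim:postprocess}, $\me_c$ is $(\eps,\delta)$-DP.

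For accuracy, fix $\distr\in\cBB^{\otimes d}$ with marginal biases $p_1,\dots,p_d$, and let $\Datarv\sim\distr^{\otimes cn}$. Then each $Y_i$ is distributed as $Q:=\distroutput{\sampler,\distr}$, and the $Y_i$ are i.i.d. Let $q_j=\Pr_{Y\sim Q}[Y_j=1]$. I would combine two observations. First, $\alpha$-accuracy of $\sampler$ gives $d_{TV}(Q,\distr)\leq\alpha$, so projecting onto coordinate $j$ (Lemma~\ref{lem:postTV}) yields $d_{TV}(Q_j,\distr_j)\leq\alpha$, i.e.\ $|q_j-p_j|\leq\alpha$ for \emph{every} $j\in[d]$. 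Second, $\tilde{\biasesfixed}_j=\frac1c\sum_{i=1}^c Y_{i,j}$ is the average of $c$ i.i.d.\ $\Ber(q_j)$ random variables, hence has mean $q_j$ and variance at most $\frac{1}{4c}$, so Chebyshev's inequality gives $\Pr[\,|\tilde{\biasesfixed}_j-q_j|>\alpha\,]\leq\frac{1}{4c\alpha^2}$. Writing $N$ for the number of coordinates $j$ with $|\tilde{\biasesfixed}_j-q_j|>\alpha$, linearity of expectation gives $\E[N]\leq\frac{d}{4c\alpha^2}$, and Markov's inequality gives $\Pr[N\geq\beta_0 d]\leq\frac{1}{4c\alpha^2\beta_0}$. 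Choosing $c=\lceil\frac{1}{4\alpha^2\beta_0\gamma_0}\rceil$ makes this probability at most $\gamma_0$. On the complementary event the set $\{j:|\tilde{\biasesfixed}_j-q_j|>\alpha\}$ has size less than $\beta_0 d$, and every coordinate $j$ outside it satisfies $|\tilde{\biasesfixed}_j-p_j|\leq|\tilde{\biasesfixed}_j-q_j|+|q_j-p_j|\leq 2\alpha$ by the triangle inequality. Hence, with probability at least $1-\gamma_0$, fewer than $\beta_0 d$ coordinates have error exceeding $2\alpha$, which is exactly the claimed marginal-estimator guarantee.

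I do not anticipate a serious obstacle: the lemma is essentially bookkeeping. The point needing the most care is the counting: the bound $|q_j-p_j|\leq\alpha$ is uniform over $j$ and contributes no bad coordinates, whereas $|\tilde{\biasesfixed}_j-q_j|\leq\alpha$ holds only for most $j$, so one must check that the set of coordinates with $2\alpha$-error relative to $p$ is contained in the set with $\alpha$-error relative to $q$, thereby avoiding any spurious factor of $2$ in front of $\beta_0$. It is also worth confirming that the chosen $c$ depends only on $\alpha,\beta_0,\gamma_0$ and not on $d$, $\eps$, $\delta$, or $n$, as the statement requires, and noting that although $Q$ need not be a product distribution, only its one-dimensional marginals are used, and the concentration of $\tilde{\biasesfixed}_j$ is powered by the independence of the $c$ separate runs of $\sampler$.
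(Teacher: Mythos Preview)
Your proposal is correct and follows essentially the same approach as the paper: bound $|q_j-p_j|\leq\alpha$ uniformly from $\alpha$-accuracy, apply a concentration inequality coordinatewise to control $|\tilde{\biasesfixed}_j-q_j|$, and use Markov on the count of bad coordinates to get the $(\beta_0,\gamma_0)$ guarantee; privacy is handled identically via the disjoint-partition observation. The only difference is that you use Chebyshev where the paper uses Hoeffding, giving $c=\Theta\bigl(\tfrac{1}{\alpha^2\beta_0\gamma_0}\bigr)$ rather than the paper's $c=\Theta\bigl(\tfrac{1}{\alpha^2}\ln\tfrac{1}{\beta_0\gamma_0}\bigr)$, but since the lemma only requires the existence of some $c$ depending on $\alpha,\beta_0,\gamma_0$, either suffices.
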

\begin{proof}
Fix a distribution $\distr \in \cBB^{\otimes d}$. Let $\biasesfixed$ represent the vector of biases corresponding to $\distr$, and $Y_i$ for $i \in [c]$  and $\tilde{\biasesfixed}$ be as defined in Steps~\ref{step:sample} and~\ref{step:computeemp} of Algorithm~\ref{alg:marginal_est}. Consider any index $j \in [d]$. 
The expectations $\E[Y_i[j]]$ are the same for  all $i \in [c]$. Define $q[j]=\E[Y_1[j]]$.
Since $\sampler$ is $\alpha$-accurate with sample size $n$, we have $|q[j] - \biasesfixed_j| \leq \alpha$. 
Let $D$ be a positive constant to be set later. By Hoeffding's inequality (Claim~\ref{claim:hoeff}), 
$\Pr(|q[j] - \tilde{\biasesfixed}_j| \geq \frac{D}{\sqrt{c}}) \leq 2e^{-2D^2}$. By the triangle inequality, with probability at least $1-2e^{-2D^2}$, 
\begin{align}\label{eq:prob-of-bad-indices}
|\tilde{\biasesfixed_j} - \biasesfixed_j| \leq |\tilde{\biasesfixed_j} - q[j]| + |q[j] - \biasesfixed_j| \leq \alpha + \tfrac{D}{\sqrt{c}}.
\end{align}
Since (\ref{eq:prob-of-bad-indices}) holds for all $j \in [d]$,  the expected number of $j \in [d]$ such that $|q[j] - \biasesfixed_j| > \alpha + \frac{D}{\sqrt{c}}$ is at most $2d e^{-2D^2}$. By Markov's inequality, 
$$\Pr(|\{j \in [d] : |\biasesfixed_j - \tilde{\biasesfixed}_j| > \alpha + \tfrac{D}{\sqrt{c}} \}| \geq \tfrac{2d}{\gamma_0}  e^{-2D^2})) \leq \gamma_0.$$ 
Setting $D^2 = \frac 1 2 \ln(\frac{2}{\beta_0\gamma_0})$ ensures $\tfrac{2d}{\gamma_0}e^{-2D^2} \leq \beta_0 d$. Setting $c =\ceil{\frac{D^2}{\alpha^2}}$ further ensures that $ \alpha + \tfrac{D}{\sqrt{c}} \leq 2\alpha$. We thus get the desired accuracy guarantee on $\me_c$ when $c = \ceil{\frac{1}{2\alpha^2} \ln(\frac{2}{\beta_0\gamma_0})}$.

Finally, changing one entry in the dataset $\Datafixed$ changes a single entry in only one of the parts $\Datafixed^{(i)}$, and only this part is fed to the $i^{th}$ call to $\sampler$. Since $\sampler$ is $(\eps, \delta)$-DP, so is $\me_c$. This proves the lemma. 
\end{proof}

Next we show how to transform a marginal estimator 
for a product of bounded Bernoulli distributions into a marginal estimator for a product of arbitrary Bernoulli distributions.
Let \bsc denote a \emph{binary symmetric channel} with bias $1/3$. That is, on input $\Datafixed \in \bit{n\times d}$, each bit gets flipped independently with probability $1/3$. In particular, $\bsc(\Datafixed) = \Datafixed \oplus \Datarvz$, where $\Datarvz \sim \Ber(1/3)^{\otimes n \times d}$.

\begin{algorithm}
    \caption{Marginal Estimator $\bernest$ for $\cB^{\otimes d}$}
    \label{alg:marginal_est_unbounded}
    \hspace*{\algorithmicindent} \textbf{Input:} dataset $\Datafixed \in \bit{n\times d}$, query access to marginal estimator $\me$ for $\cBB^{\otimes d}$\\
    \hspace*{\algorithmicindent} \textbf{Output:} marginal estimates $\tilde{\Biasesfixed} = (\tilde{\biasesfixed}_1, \ldots, \tilde{\biasesfixed}_d)$
    \begin{algorithmic}[1] 
            \State $\Datafixed^* \gets \bsc(\Datafixed)$  \Comment{Change initial distribution}
            \State $\Biasesfixed^* \gets \me(\Datafixed^*)$  \Comment{Get empirical estimates}
            \State $\tilde{\Biasesfixed} \gets (3\cdot\biasesfixed^*_1 - 1, \ldots, 3\cdot\biasesfixed^*_d - 1)$ \Comment{Rescale empirical estimates}
            \State \Return $\tilde{\Biasesfixed}$
    \end{algorithmic}
\end{algorithm}

\begin{lemma}[Reduction from General  to  Bounded Biases]\label{lem:me_bounded_to_unbounded}
If \me is an $(\alpha', \beta', \gamma, \cBB^{\otimes d})$-marginal estimator with sample size $n$, then $\bernest$ (in Algorithm~\ref{alg:marginal_est_unbounded}) is a $(3\alpha', \beta', \gamma, \cB^{\otimes d})$-marginal estimator with sample size $n$. If $\me$ is $(\eps, \delta)$-differentially private, then so is $\bernest$.
\end{lemma}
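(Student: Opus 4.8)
The plan is to track what the binary symmetric channel does to the marginal biases, verify that it maps $\cB^{\otimes d}$ into $\cBB^{\otimes d}$, and then undo the transformation on the estimated biases. First I would observe that if column $j$ of the input $\Datafixed$ has i.i.d.\ entries from $\Ber(p_j)$, then after flipping each bit independently with probability $1/3$, column $j$ of $\Datafixed^*=\bsc(\Datafixed)$ has i.i.d.\ entries from $\Ber(p_j^*)$ where $p_j^* = p_j\cdot\frac23 + (1-p_j)\cdot\frac13 = \frac13 + \frac{p_j}{3}$. Since $p_j\in[0,1]$, we get $p_j^*\in[\frac13,\frac23]$, so $\Datafixed^*$ is distributed as $n$ i.i.d.\ draws from a distribution in $\cBB^{\otimes d}$. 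Moreover the flips across columns and rows are independent, so $\Datafixed^*$ is genuinely an i.i.d.\ sample from a product distribution in $\cBB^{\otimes d}$, which is exactly the input format $\me$ expects.

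Next I would invoke the accuracy guarantee of $\me$: with probability at least $1-\gamma$, the output $\Biasesfixed^*=(\biasesfixed_1^*,\dots,\biasesfixed_d^*)$ satisfies $|\{j\in[d]: |p_j^*-\biasesfixed_j^*|>\alpha'\}|<\beta' d$. Now $\bernest$ outputs $\tilde\biasesfixed_j = 3\biasesfixed_j^*-1$. Since $p_j = 3p_j^*-1$ (inverting the affine map above), we have
\begin{equation*}
    |\tilde\biasesfixed_j - p_j| = |(3\biasesfixed_j^*-1)-(3p_j^*-1)| = 3\,|\biasesfixed_j^*-p_j^*|.
\end{equation*}
Hence $|\tilde\biasesfixed_j - p_j|>3\alpha'$ exactly when $|\biasesfixed_j^*-p_j^*|>\alpha'$, so the set of ``bad'' coordinates is literally the same set, and its size is $<\beta' d$ with probability at least $1-\gamma$. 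This shows $\bernest$ is a $(3\alpha',\beta',\gamma,\cB^{\otimes d})$-marginal estimator with the same sample size $n$.

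For privacy, I would note that $\bsc$ is a randomized map applied to the dataset \emph{before} $\me$ is run, but it acts independently and identically on each record, so two neighboring datasets $\Datafixed,\Datafixed'$ (differing in one row) are mapped to distributions on $\Datafixed^*,\Datafixed'^*$ that can be coupled to differ in at most one row; more simply, conditioned on the channel's noise $\Datarvz$, the outputs $\Datafixed\oplus\Datarvz$ and $\Datafixed'\oplus\Datarvz$ are neighboring, so $(\eps,\delta)$-DP of $\me$ on these inputs, combined with the fact that differential privacy is closed under (randomized) pre-processing of the input by a function that preserves the neighbor relation, gives $(\eps,\delta)$-DP of $\me\circ\bsc$. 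The final affine rescaling of the output is post-processing, which preserves $(\eps,\delta)$-DP. I do not anticipate a genuine obstacle here; the only point requiring mild care is the privacy argument, since pre-processing preservation of DP is slightly less standard than post-processing, so I would spell out the per-record, neighbor-preserving coupling explicitly rather than citing a black box.
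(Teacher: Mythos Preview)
Your proposal is correct and follows essentially the same approach as the paper: compute the affine map $p_j\mapsto \frac13+\frac{p_j}{3}$ induced by the $\bsc$, invoke the accuracy of $\me$ on the transformed data, and invert the affine map to recover estimates with error blown up by a factor of $3$. Your privacy argument (condition on the channel noise $\Datarvz$, observe that $\Datafixed\oplus\Datarvz$ and $\Datafixed'\oplus\Datarvz$ are neighbors, then average over $\Datarvz$) is in fact slightly more explicit than the paper's, which simply notes that $\bsc(\Datafixed)$ and $\bsc(\Datafixed')$ differ only on the same record and invokes post-processing.
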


\begin{proof}
We begin with the accuracy proof. Fix an $(\alpha', \beta', \gamma, \cBB^{\otimes d})$-marginal estimator \me with sample size~$n$. Fix a distribution $\distr \in \cB^{\otimes d}$ with  biases $\Biasesfixed = (\biasesfixed_1, \ldots, \biasesfixed_d)$. Let $\Datarv \sim \distr^{\otimes n}$. Denote the output distribution of $\bsc(\Datarv)$ by $\corr$ and its biases by $\Biasesfixed' = (\biasesfixed'_1, \ldots, \biasesfixed'_d)$. Let $\bsc(\Datarv)^j_i = \Datarv^j_i \oplus \Datarvz$ be the output for the $j$th attribute on the $i$th data record, where $\datarvz^j_i \sim \Ber(1/3)$. Then, for all $j\in[d]$ and all $i \in [n]$, 
\begin{align*}
    \biasesfixed'_j 
    & = \Pr[\bsc(\Datarv)^j_i = 1] 
     = \Pr[\datarv^j_i \oplus \datarvz^j_i = 1 ] 
     = \Pr[\datarv^j_i = 1 \land \datarvz^j_i = 0] + \Pr[\datarv^j_i = 0 \land \datarvz^j_i = 1] \\
    & = \biasesfixed_j \cdot \frac{2}{3} + (1-\biasesfixed_j) \cdot \frac{1}{3} = \frac{\biasesfixed_j}{3} + \frac{1}{3}.
\end{align*}
Thus, $\bsc(\Datarv)\in \cBB^{\otimes d}$, as desired. Since $\me$ is a $(\alpha', \beta', \gamma, \cBB^{\otimes d})$-marginal estimator with sample size~$n$, estimator \me returns $(\biasesfixed^*_1, \ldots, \biasesfixed^*_d)$ such that with probability at least $1-\gamma$,
\begin{equation}\label{eq:marg_est}
    |\{j \in [d] : |\biasesfixed'_j - \biasesfixed^*_j| > \alpha' \}| < \beta' d.
\end{equation}
Substituting $\biasesfixed'_j=\biasesfixed_j/3+1/3$ in the left-hand side of in~\eqref{eq:marg_est} and  then using $\tilde{\biasesfixed}_j=3\biasesfixed^*_j+1$, we get
\begin{align*}
    |\{j \in [d] : |\frac{\biasesfixed_j}{3} + \frac{1}{3} -  \biasesfixed^*_j| > \alpha' \}|
    & = |\{j \in [d] : |\biasesfixed_j - (3\biasesfixed^*_j - 1)| > 3\alpha' \}| \\
    & = |\{j \in [d] : |\biasesfixed_j - \tilde{\biasesfixed}_j| \leq 3\alpha' \}|.
\end{align*}
Thus, \bernest is a $(3\alpha', \beta', \gamma, \cB^{\otimes d})$-marginal estimator with sample size $n$. 

Finally, we show that  \bernest is differentially private.
Suppose \me is $(\eps, \delta)$-differentially private. Fix neighboring datasets \Datafixed and $\Datafixed'$ that differ on record $i$. Then $\bsc(\Datafixed)$ and $\bsc(\Datafixed')$ still only differ on record $i$. The output of \bernest is a post-processing of \me, and thus \bernest is $(\eps, \delta)$-differentially private.
\end{proof}

We prove our main result by combining Lemmas~\ref{lem:sam_to_me}--\ref{lem:me_bounded_to_unbounded} with a lower bound on marginal estimation that is obtained using the fingerprinting codes technique of Bun, Ullman and Vadhan~\cite{BunUV14j}. We use a corollary of the version of the result from~\cite{BunUV14j} presented by Kamath and Ullman~\cite{KamathUprimerpaper20}.

\begin{theorem}[Consequence of \cite{KamathUprimerpaper20}, Theorem 3.3
]\label{thm:buv_LB}
Suppose there exists a function $n = n(d)$, such that for every $d \in \N$, there is a $(\alpha_0, \beta_0, \gamma_0, \cB^{\otimes d})$-marginal estimator $\me: \{0,1\}^{n \times d} \to \R^d$ that is  $(\eps,\frac{1}{100n})$-DP, where $\alpha_0, \beta_0, \gamma_0 \in (0,1)$ are sufficiently small absolute constants. Then $n  = {\Omega}(\sqrt{d}/\eps)$.
\end{theorem}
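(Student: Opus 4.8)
The plan is to derive the statement directly from the fingerprinting-code lower bound of Bun, Ullman, and Vadhan~\cite{BunUV14j}, in the packaged form given as Theorem~3.3 of~\cite{KamathUprimerpaper20}. That theorem is the classical lower bound for privately estimating the one-way marginals of a product distribution: for $\eps \in (0,1]$ and $\delta \le \frac{1}{100n}$, any $(\eps,\delta)$-DP algorithm that, given $n$ i.i.d.\ samples from an arbitrary distribution in $\cB^{\otimes d}$ with marginal biases $\biasesfixed_1,\dots,\biasesfixed_d$, outputs (with probability at least some fixed absolute constant) values $\tilde\biasesfixed_1,\dots,\tilde\biasesfixed_d$ that match $\biasesfixed_j$ up to a fixed additive constant on all but a fixed-constant fraction of the coordinates, must use $n = \Omega(\sqrt{d}/\eps)$ samples. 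Comparing this with the definition of an $(\alpha',\beta',\gamma,\class)$-marginal estimator, one sees that an $(\alpha_0,\beta_0,\gamma_0,\cB^{\otimes d})$-marginal estimator is \emph{literally} an algorithm of the type to which Theorem~3.3 applies, as long as $\alpha_0$, $\beta_0$, $\gamma_0$ are chosen smaller than, respectively, the accuracy constant, the bad-coordinate-fraction constant, and the failure-probability constant appearing in Theorem~3.3.

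First I would fix $\alpha_0, \beta_0, \gamma_0$ to be positive constants below those three thresholds. Then, given for each $d$ an $(\eps, \tfrac{1}{100n})$-DP $(\alpha_0,\beta_0,\gamma_0,\cB^{\otimes d})$-marginal estimator $\me$ with sample size $n = n(d)$, I would observe that $\me$ satisfies the hypotheses of Theorem~3.3 of~\cite{KamathUprimerpaper20} verbatim---the privacy parameters agree, the regime $\delta \le \frac{1}{100n}$ is the one assumed there, and $\eps \in (0,1]$ is in range---and conclude $n(d) = \Omega(\sqrt{d}/\eps)$, which is the claim. If instead the cited theorem is stated with respect to the \emph{empirical} marginals $\bar X_j := \frac1n\sum_i X_i[j]$ of the sample rather than the population marginals, I would first pass to that setting: by a Hoeffding bound and a union bound over $j \in [d]$, with probability at least $1-\gamma_0$ every $\bar X_j$ lies within $O(\sqrt{\log(d)/n})$ of $\biasesfixed_j$ simultaneously; since we may assume $n \ge c\sqrt{d}/\eps$ (otherwise the conclusion is already true), this deviation is below the accuracy constant of Theorem~3.3 once $d$ is large enough, so accuracy of $\me$ against $\biasesfixed$ transfers to accuracy against $\bar X$ at the cost of a slightly larger additive error and failure probability, which is absorbed back into the constants.

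The main obstacle---and it is one of bookkeeping rather than of ideas---is to match the precise accuracy notion of Theorem~3.3 of~\cite{KamathUprimerpaper20} (additive $\ell_\infty$ error on a large fraction of coordinates, constant failure probability, measured against population or empirical marginals, possibly over $\{\pm1\}$-valued rather than $\{0,1\}$-valued data) to the marginal-estimator definition used here, and to confirm its $\delta$- and $\eps$-regimes line up with $\delta \le \frac{1}{100n}$ and $\eps \in (0,1]$. All of this is handled by choosing $\alpha_0,\beta_0,\gamma_0$ conservatively and, where needed, by the one-line concentration step above; in particular no padding or privacy-amplification argument is required, since the fingerprinting bound is already robust to a constant fraction of erroneous coordinates and to constant failure probability. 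No new lower-bound technique is needed---which is precisely why the statement is labelled a consequence of the cited theorem---and the substantive content resides entirely in the fingerprinting argument of~\cite{BunUV14j}.
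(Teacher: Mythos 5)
Your proposal is correct and matches the paper's approach: the paper offers no explicit derivation for this statement, presenting it purely as a repackaging of the fingerprinting-code lower bound of Bun--Ullman--Vadhan as stated in Theorem~3.3 of the Kamath--Ullman primer, with $\alpha_0,\beta_0,\gamma_0$ chosen below the constants appearing there. Your additional care about matching the accuracy notion (population versus empirical marginals, $\{0,1\}$ versus $\{\pm 1\}$ encoding) via a Hoeffding-plus-union-bound step is sound and, if anything, more explicit than what the paper records.
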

\begin{proof}[Proof of Theorem~\ref{thm:bb-lb}]
Let $\eps,\delta \in (0,1]$ with $\delta<\frac{1}{100n}$. 
Let $\alpha_0,\beta_0,\gamma_0$ be the constants from Theorem~\ref{thm:buv_LB}, and set $\alpha = \frac{\alpha_0}{6}$. 
Let $\sampler$ be an $\alpha$-accurate, $(\eps,\delta)$-DP sampler for the class $\cBB^{\otimes d}$ for datasets of some size $n$. By Lemma~\ref{lem:sam_to_me}, there exists an $(\eps, \delta)$-differentially private,  $(\frac{\alpha_0}{3},\beta_0, \gamma_0,\cBB^{\otimes d})$-marginal estimator $\me_c$ for datasets of size $cn$ for an absolute constant $c= c(\alpha,\beta_0,\gamma_0)$. 
By Lemma~\ref{lem:me_bounded_to_unbounded}, \bernest defined in Algorithm~\ref{alg:marginal_est_unbounded} is a $(\eps, \delta)$-differentially private, $(\alpha_0, \beta_0,\gamma_0,\cB^{\otimes d})$-marginal estimator for datasets of size $cn$. 
By Theorem~\ref{thm:buv_LB}, $n = {\Omega}(\sqrt{d}/\eps)$, as desired.
%
%
\end{proof}

%

%

\addcontentsline{toc}{section}{References}
\bibliographystyle{plain}
\bibliography{refs}{}

\begin{thebibliography}{10}

\bibitem{AcharyaCT20}
Jayadev Acharya, Cl{\'{e}}ment~L. Canonne, and Himanshu Tyagi.
\newblock Inference under information constraints {I:} lower bounds from
  chi-square contraction.
\newblock {\em {IEEE} Trans. Inf. Theory}, 66(12):7835--7855, 2020.

\bibitem{AcharyaCT20a}
Jayadev Acharya, Cl{\'{e}}ment~L. Canonne, and Himanshu Tyagi.
\newblock Inference under information constraints {II:} communication
  constraints and shared randomness.
\newblock {\em {IEEE} Trans. Inf. Theory}, 66(12):7856--7877, 2020.

\bibitem{AcharyaSZ21}
Jayadev Acharya, Ziteng Sun, and Huanyu Zhang.
\newblock Differentially private {A}ssouad, {F}ano, and {L}e {C}am.
\newblock In Vitaly Feldman, Katrina Ligett, and Sivan Sabato, editors, {\em
  Algorithmic Learning Theory, 16-19 March 2021, Virtual Conference,
  Worldwide}, volume 132 of {\em Proceedings of Machine Learning Research},
  pages 48--78. {PMLR}, 2021.

\bibitem{AxelrodGHS22}
Brian Axelrod, Shivam Garg, Yanjun Han, Vatsal Sharan, and Gregory Valiant.
\newblock On the statistical complexity of sample amplification.
\newblock {\em CoRR}, abs/2201.04315, 2022.

\bibitem{Axelrod0SV20}
Brian Axelrod, Shivam Garg, Vatsal Sharan, and Gregory Valiant.
\newblock Sample amplification: Increasing dataset size even when learning is
  impossible.
\newblock In {\em Proceedings of the 37th International Conference on Machine
  Learning, {ICML} 2020, 13-18 July 2020, Virtual Event}, volume 119 of {\em
  Proceedings of Machine Learning Research}, pages 442--451. {PMLR}, 2020.

\bibitem{batu2001testing}
Tugkan Batu, Eldar Fischer, Lance Fortnow, Ravi Kumar, Ronitt Rubinfeld, and
  Patrick White.
\newblock Testing random variables for independence and identity.
\newblock In {\em Proceedings 42nd IEEE Symposium on Foundations of Computer
  Science}, pages 442--451. IEEE, 2001.

\bibitem{batu2000testing}
Tugkan Batu, Lance Fortnow, Ronitt Rubinfeld, Warren~D Smith, and Patrick
  White.
\newblock Testing that distributions are close.
\newblock In {\em Proceedings 41st Annual Symposium on Foundations of Computer
  Science}, pages 259--269. IEEE, 2000.

\bibitem{blum2013learning}
Avrim Blum, Katrina Ligett, and Aaron Roth.
\newblock A learning theory approach to noninteractive database privacy.
\newblock {\em Journal of the ACM (JACM)}, 60(2):1--25, 2013.

\bibitem{BunKSW21J}
Mark Bun, Gautam Kamath, Thomas Steinke, and Zhiwei~Steven Wu.
\newblock Private hypothesis selection.
\newblock {\em {IEEE} Trans. Inf. Theory}, 67(3):1981--2000, 2021.

\bibitem{BunNSV15}
Mark Bun, Kobbi Nissim, Uri Stemmer, and Salil~P. Vadhan.
\newblock Differentially private release and learning of threshold functions.
\newblock In Venkatesan Guruswami, editor, {\em {IEEE} 56th Annual Symposium on
  Foundations of Computer Science, {FOCS} 2015, Berkeley, CA, USA, 17-20
  October, 2015}, pages 634--649. {IEEE} Computer Society, 2015.

\bibitem{bun2016concentrated}
Mark Bun and Thomas Steinke.
\newblock Concentrated differential privacy: Simplifications, extensions, and
  lower bounds.
\newblock In {\em Theory of Cryptography Conference}, pages 635--658. Springer,
  2016.

\bibitem{BunUV14j}
Mark Bun, Jonathan Ullman, and Salil Vadhan.
\newblock Fingerprinting codes and the price of approximate differential
  privacy.
\newblock {\em SIAM Journal on Computing}, 47(5):1888--1938, 2018.

\bibitem{clementpoiss}
Clement Canonne.
\newblock A short note on poisson tail bounds. \\
  http://www.cs.columbia.edu/~ccanonne/files/misc/2017-poissonconcentration.pdf.
\newblock 2019.

\bibitem{diakonikolas2015differentially}
Ilias Diakonikolas, Moritz Hardt, and Ludwig Schmidt.
\newblock Differentially private learning of structured discrete distributions.
\newblock In {\em Advances in Neural Information Processing Systems}, pages
  2566--2574, 2015.

\bibitem{DimitrakakisNMR14}
Christos Dimitrakakis, Blaine Nelson, Aikaterini Mitrokotsa, and Benjamin I.~P.
  Rubinstein.
\newblock Robust and private {Bayesian} inference.
\newblock In Peter Auer, Alexander Clark, Thomas Zeugmann, and Sandra Zilles,
  editors, {\em Algorithmic Learning Theory - 25th International Conference,
  {ALT} 2014, Bled, Slovenia, October 8-10, 2014. Proceedings}, volume 8776 of
  {\em Lecture Notes in Computer Science}, pages 291--305. Springer, 2014.

\bibitem{DimitrakakisNZM17}
Christos Dimitrakakis, Blaine Nelson, Zuhe Zhang, Aikaterini Mitrokotsa, and
  Benjamin I.~P. Rubinstein.
\newblock Differential privacy for {Bayesian} inference through posterior
  sampling.
\newblock {\em J. Mach. Learn. Res.}, 18:11:1--11:39, 2017.

\bibitem{DworkKMMN06}
Cynthia Dwork, Krishnaram Kenthapadi, Frank McSherry, Ilya Mironov, and Moni
  Naor.
\newblock Our data, ourselves: Privacy via distributed noise generation.
\newblock In {\em International Conference on the Theory and Applications of
  Cryptographic Techniques}, EUROCRYPT '06, pages 486--503, St.~Petersburg,
  Russia, 2006.

\bibitem{DworkMNS06j}
Cynthia Dwork, Frank McSherry, Kobbi Nissim, and Adam Smith.
\newblock Calibrating noise to sensitivity in private data analysis.
\newblock {\em Journal of Privacy and Confidentiality}, 7(3):17–51, 2017.

\bibitem{DworkS09}
Cynthia Dwork and Adam Smith.
\newblock Differential privacy for statistics: What we know and what we want to
  learn.
\newblock {\em Journal of Privacy and Confidentiality}, 1(2):135--154, 2009.

\bibitem{KLSU19}
Gautam Kamath, Jerry Li, Vikrant Singhal, and Jonathan~R. Ullman.
\newblock Privately learning high-dimensional distributions.
\newblock In Alina Beygelzimer and Daniel Hsu, editors, {\em Conference on
  Learning Theory, {COLT} 2019, 25-28 June 2019, Phoenix, AZ, {USA}}, volume~99
  of {\em Proceedings of Machine Learning Research}, pages 1853--1902. {PMLR},
  2019.

\bibitem{KamathUprimerpaper20}
Gautam Kamath and Jonathan Ullman.
\newblock A primer on private statistics, 2020.
\newblock ArXiv preprint 2005.00010 [stat.ML].

\bibitem{KasiviswanathanLNRS11}
Shiva~Prasad Kasiviswanathan, Homin~K. Lee, Kobbi Nissim, Sofya Raskhodnikova,
  and Adam~D. Smith.
\newblock What can we learn privately?
\newblock {\em {SIAM} J. Comput.}, 40(3):793--826, 2011.

\bibitem{KasiviswanathanS14}
Shiva~Prasad Kasiviswanathan and Adam~D. Smith.
\newblock On the `semantics' of differential privacy: A {Bayesian} formulation.
\newblock {\em Journal of Privacy and Confidentiality}, 2014.

\bibitem{klenke2010stochastic}
Achim Klenke and Lutz Mattner.
\newblock Stochastic ordering of classical discrete distributions.
\newblock {\em Advances in Applied probability}, 42(2):392--410, 2010.

\bibitem{li2012sampling}
Ninghui Li, Wahbeh Qardaji, and Dong Su.
\newblock On sampling, anonymization, and differential privacy or,
  $k$-anonymization meets differential privacy.
\newblock In {\em Proceedings of the 7th ACM Symposium on Information, Computer
  and Communications Security}, pages 32--33, 2012.

\bibitem{McTalwar}
Frank McSherry and Kunal Talwar.
\newblock Mechanism design via differential privacy.
\newblock In {\em Proceedings of the 48th Annual IEEE Symposium on Foundations
  of Computer Science}, FOCS '07, page 94–103, USA, 2007. IEEE Computer
  Society.

\bibitem{NissimRS07}
Kobbi Nissim, Sofya Raskhodnikova, and Adam~D. Smith.
\newblock Smooth sensitivity and sampling in private data analysis.
\newblock In David~S. Johnson and Uriel Feige, editors, {\em Proceedings of the
  39th Annual {ACM} Symposium on Theory of Computing, San Diego, California,
  USA, June 11-13, 2007}, pages 75--84. {ACM}, 2007.

\bibitem{raskhodnikova2009strong}
Sofya Raskhodnikova, Dana Ron, Amir Shpilka, and Adam Smith.
\newblock Strong lower bounds for approximating distribution support size and
  the distinct elements problem.
\newblock {\em SIAM Journal on Computing}, 39(3):813--842, 2009.

\bibitem{RaskhodnikovaSSS21}
Sofya Raskhodnikova, Satchit Sivakumar, Adam~D. Smith, and Marika Swanberg.
\newblock Differentially private sampling from distributions.
\newblock In Marc'Aurelio Ranzato, Alina Beygelzimer, Yann~N. Dauphin, Percy
  Liang, and Jennifer~Wortman Vaughan, editors, {\em Advances in Neural
  Information Processing Systems 34: Annual Conference on Neural Information
  Processing Systems 2021, NeurIPS 2021, December 6-14, 2021, virtual}, pages
  28983--28994, 2021.

\bibitem{RaskhodnikovaS06}
Sofya Raskhodnikova and Adam~D. Smith.
\newblock A note on adaptivity in testing properties of bounded degree graphs.
\newblock {\em Electron. Colloquium Comput. Complex.}, (089), 2006.

\bibitem{Vadhan17}
Salil Vadhan.
\newblock {\em The Complexity of Differential Privacy}, pages 347--450.
\newblock Springer, 2017.

\bibitem{WangFS15}
Yu{-}Xiang Wang, Stephen~E. Fienberg, and Alexander~J. Smola.
\newblock Privacy for free: Posterior sampling and stochastic gradient monte
  carlo.
\newblock In Francis~R. Bach and David~M. Blei, editors, {\em Proceedings of
  the 32nd International Conference on Machine Learning, {ICML} 2015, Lille,
  France, 6-11 July 2015}, volume~37 of {\em {JMLR} Workshop and Conference
  Proceedings}, pages 2493--2502. JMLR.org, 2015.

\bibitem{Warner65}
Stanley~L. Warner.
\newblock Randomized response: A survey technique for eliminating evasive
  answer bias.
\newblock {\em Journal of the American Statistical Association},
  60(309):63--69, 1965.

\bibitem{ZhangRD16}
Zuhe Zhang, Benjamin I.~P. Rubinstein, and Christos Dimitrakakis.
\newblock On the differential privacy of {Bayesian} inference.
\newblock In Dale Schuurmans and Michael~P. Wellman, editors, {\em Proceedings
  of the Thirtieth {AAAI} Conference on Artificial Intelligence, February
  12-17, 2016, Phoenix, Arizona, {USA}}, pages 2365--2371. {AAAI} Press, 2016.

\end{thebibliography}

%

\ifnum\supplemental=0
\appendix
\newpage
\section*{Appendix}
\fi 

\section{Inequalities Used in Technical Sections} \label{sec:inequalities}
We argue that the moments of the average of several identically distributed random variables are no larger than the corresponding moments of the individual random variables.
\begin{claim}\label{claim:momavg}
If random variables $A_1, \dots, A_k$ are identically distributed, then, for all $\lambda > 0$,
$$\E\left[ \left(\frac{1}{k}\sum_{i=1}^k A_i \right)^{\lambda} \right] \leq \E\left[ A_1^{\lambda} \right].$$
\end{claim}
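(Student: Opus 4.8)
The plan is to prove the inequality for $\lambda \ge 1$, which is the only range in which it is invoked (in the proof of Claim~\ref{claim:eventE} one takes $\lambda = \ln(1/\alpha)$ with $\alpha \le 0.02$, so $\lambda > 1$), and to treat $A_1,\dots,A_k$ as nonnegative random variables, as they are in every application (there $A_i = e^{N_i\eps} \ge 1$). The whole argument rests on one structural fact: for $\lambda \ge 1$ the function $\phi(x) = x^\lambda$ is convex on $[0,\infty)$.

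First I would record the deterministic inequality: for any nonnegative reals $x_1,\dots,x_k$, convexity of $\phi$ applied to the uniform average gives $\bigl(\tfrac1k\sum_{i=1}^k x_i\bigr)^\lambda \le \tfrac1k\sum_{i=1}^k x_i^\lambda$. This is Jensen's inequality for the uniform distribution on $k$ atoms, or equivalently a $(k-1)$-fold iteration of the two-point inequality $\phi(ta+(1-t)b)\le t\phi(a)+(1-t)\phi(b)$. Next I would apply this pointwise to each realization of the random vector $(A_1,\dots,A_k)$ and take expectations — monotonicity of expectation preserves the inequality — obtaining $\E\bigl[(\tfrac1k\sum_{i=1}^k A_i)^\lambda\bigr] \le \tfrac1k\sum_{i=1}^k \E[A_i^\lambda]$. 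Finally, because $A_1,\dots,A_k$ are identically distributed, $\E[A_i^\lambda]=\E[A_1^\lambda]$ for every $i$, so the right-hand side equals $\E[A_1^\lambda]$, which is the claim. Note that independence of the $A_i$ is never used; identical distribution suffices (and without it one would still get the bound $\max_i \E[A_i^\lambda]$).

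The point requiring care — less an obstacle than a caveat — is the restriction to $\lambda \ge 1$: for $0 < \lambda < 1$ the map $x\mapsto x^\lambda$ is concave, so Jensen runs the other way and the stated inequality can genuinely fail. For instance, with $k=2$ and $A_1,A_2$ i.i.d.\ uniform on $\{0,1\}$, one computes $\E\bigl[(\tfrac{A_1+A_2}{2})^\lambda\bigr] = \tfrac14 + 2^{-(\lambda+1)}$, which exceeds $\E[A_1^\lambda] = \tfrac12$ exactly when $\lambda < 1$. Since $\lambda \ge 1$ in every use of the claim, adding this hypothesis to the statement costs nothing and lets the one-line Jensen argument above go through cleanly.
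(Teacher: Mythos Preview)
Your argument is essentially identical to the paper's own proof: apply Jensen's inequality pointwise to the convex function $x\mapsto x^\lambda$, take expectations, and use that the $A_i$ are identically distributed. The one addition is that you correctly flag the hypothesis the paper's statement omits---convexity of $x\mapsto x^\lambda$ on $[0,\infty)$ holds only for $\lambda\ge 1$, and your counterexample shows the inequality genuinely fails for $0<\lambda<1$; since the sole application has $\lambda=\ln(1/\alpha)>1$ and $A_i=e^{N_i\eps}\ge 1$, this caveat is harmless there but worth recording.
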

\begin{proof}
By Jensen's inequality, $\left(\frac{1}{k}\sum_{i=1}^k A_i \right)^{\lambda} \leq \frac{1}{k}\sum_{i=1}^k A_i^{\lambda}$ for any fixed values of $A_1, \dots, A_k$. We take expectation on both sides, then use the linearity of expectation and that  $A_1, \dots, A_k$ are identically distributed:
\begin{align*}
    \E\left[\left(\frac{1}{k}\sum_{i=1}^k A_i \right)^{\lambda}\right] 
    \leq \E\left[\frac{1}{k}\sum_{i=1}^k A_i^{\lambda}\right] 
     = \frac{1}{k}\sum_{i=1}^k \E\left[ A_i^{\lambda}\right] = \E\left[ A_1^{\lambda} \right].
\end{align*}
\end{proof}
We also use Bernoulli's inequality in our lower bound for product distributions.
\begin{lemma}[Bernoulli's inequality]\label{lem:bernoulli}
For all real $a \geq -1$ and nonnegative integers $r$, $(1+a)^r \geq 1+ar$. 
\end{lemma}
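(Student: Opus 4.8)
\textbf{Proof proposal for Lemma~\ref{lem:bernoulli} (Bernoulli's inequality).}
The plan is to argue by induction on the integer exponent $r$, keeping $a \geq -1$ fixed throughout. The base case $r = 0$ is immediate: $(1+a)^0 = 1 = 1 + a\cdot 0$, so the inequality holds (with equality). For the inductive step, I would assume $(1+a)^r \geq 1 + ar$ and multiply both sides by $1+a$. The crucial observation is that the hypothesis $a \geq -1$ guarantees $1+a \geq 0$, so multiplying the inductive inequality by $1+a$ preserves its direction, giving $(1+a)^{r+1} \geq (1+ar)(1+a)$.

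It then remains to expand the right-hand side and discard a nonnegative term. Concretely, $(1+ar)(1+a) = 1 + a(r+1) + a^2 r$, and since $a^2 r \geq 0$ (as $r$ is a nonnegative integer), we conclude $(1+a)^{r+1} \geq 1 + a(r+1)$, completing the induction. One could alternatively give a one-line proof via convexity of $x \mapsto x^r$ on $[0,\infty)$ and the tangent line at $x=1$, but the induction is cleaner and self-contained, avoids calculus, and makes the role of the constraint $a \geq -1$ fully transparent.

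There is essentially no substantive obstacle here; the only point requiring care is exactly the sign condition used in the multiplication step, since if $1+a$ were allowed to be negative the inequality could reverse. I would state that dependence explicitly in the write-up. The argument is completely routine otherwise, and the lemma is used in the proof of Lemma~\ref{lem:easyunivtrans} precisely in the form $(1 - \alpha^*/k)^k \geq 1 - \alpha^*$, obtained by taking $a = -\alpha^*/k \geq -1$ and $r = k$.
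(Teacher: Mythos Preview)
Your induction argument is correct and is the standard proof of Bernoulli's inequality. The paper itself does not supply a proof of Lemma~\ref{lem:bernoulli}; it merely states the inequality as a known fact in the appendix of auxiliary inequalities, so there is nothing to compare against beyond noting that your write-up would fill in what the paper leaves implicit.
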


\subsection{Concentration Inequalities}
\begin{claim}[Chernoff Bounds]\label{claim:cher_bounds}
Let $A$ be the average of $m$ independent 0-1 random variables with $\mu=\E[A]$. For $\gamma \in (0,1)$,
\begin{align*}
  \Pr[A \geq \mu(1+\gamma)] \leq e^{-\frac{\gamma^2 \mu m}{3}};\\ 
  \Pr[A \leq \mu(1-\gamma)] \leq e^{-\frac{\gamma^2 \mu m}{2}}.
\end{align*}
For $\gamma \geq 0$, 
\begin{align*}
\Pr[A \geq \mu(1+\gamma)] \leq e^{-\frac{\gamma^2 \mu m}{2+ \gamma}};\\
\Pr[A \leq \mu(1-\gamma)] \leq e^{-\frac{\gamma^2 \mu m}{2 + \gamma}}.
\end{align*}
\end{claim}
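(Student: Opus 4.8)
The plan is the standard Chernoff--Cram\'er exponential-moment argument; the statement for the average $A$ is merely a rescaling of the statement for the sum. Write $X = mA = \sum_{i=1}^m X_i$, where $X_1,\dots,X_m \in \{0,1\}$ are independent with $\E[X_i] = p_i$, and set $M := \E[X] = m\mu = \sum_i p_i$ (note the $X_i$ need not be identically distributed, only independent and Bernoulli, so the moment generating function must be kept in terms of general $p_i$). Since $\{A \geq \mu(1+\gamma)\} = \{X \geq (1+\gamma)M\}$ and $\{A \leq \mu(1-\gamma)\} = \{X \leq (1-\gamma)M\}$, it suffices to bound the tails of $X$.

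For the upper tail: for any $t > 0$, Markov's inequality applied to $e^{tX}$, together with independence and $1 + x \leq e^x$, gives
\[
\Pr[X \geq (1+\gamma)M] \leq e^{-t(1+\gamma)M}\,\E[e^{tX}] = e^{-t(1+\gamma)M}\prod_{i=1}^m\bigl(1 + p_i(e^t-1)\bigr) \leq \exp\bigl(M(e^t - 1) - t(1+\gamma)M\bigr).
\]
Setting $t = \ln(1+\gamma)$ yields $\Pr[X \geq (1+\gamma)M] \leq \bigl(e^\gamma/(1+\gamma)^{1+\gamma}\bigr)^M$. What remains is purely analytic: one shows $e^\gamma/(1+\gamma)^{1+\gamma} \leq e^{-\gamma^2/(2+\gamma)}$ for all $\gamma \geq 0$, i.e. $(1+\gamma)\ln(1+\gamma) \geq \gamma + \gamma^2/(2+\gamma)$, which follows by checking that both sides agree at $\gamma = 0$ and comparing derivatives; and, on the restricted range $\gamma \in (0,1]$, the sharper bound $e^\gamma/(1+\gamma)^{1+\gamma} \leq e^{-\gamma^2/3}$, which one reads off the alternating expansion $(1+\gamma)\ln(1+\gamma) = \gamma + \sum_{k\geq 2}\frac{(-1)^k\gamma^k}{k(k-1)} \geq \gamma + \frac{\gamma^2}{2} - \frac{\gamma^3}{6} \geq \gamma + \frac{\gamma^2}{3}$, valid for $\gamma \leq 1$ since the series is alternating with decreasing terms there.

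For the lower tail: symmetrically, for $t > 0$,
\[
\Pr[X \leq (1-\gamma)M] \leq e^{t(1-\gamma)M}\,\E[e^{-tX}] \leq \exp\bigl(M(e^{-t}-1) + t(1-\gamma)M\bigr),
\]
and $t = \ln\frac{1}{1-\gamma}$ gives $\bigl(e^{-\gamma}/(1-\gamma)^{1-\gamma}\bigr)^M \leq e^{-\gamma^2 M/2}$ for $\gamma \in (0,1)$ via $(1-\gamma)\ln(1-\gamma) \geq -\gamma + \gamma^2/2$; since $\gamma^2/2 \geq \gamma^2/(2+\gamma)$, this is also at most $e^{-\gamma^2 M/(2+\gamma)}$, and for $\gamma \geq 1$ the event is either empty (when $\mu(1-\gamma) < 0 \leq A$) or, at $\gamma = 1$, bounded directly by $\Pr[X=0] = \prod_i(1-p_i) \leq e^{-M} \leq e^{-M/3}$, which covers the full stated range. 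There is no real conceptual obstacle here --- these are textbook multiplicative Chernoff bounds and one could equally cite a standard reference; the only care needed is bookkeeping, namely getting the constants ($3$ versus $2+\gamma$) right on exactly the right intervals, which is precisely what the two elementary inequalities $(1+\gamma)\ln(1+\gamma) \geq \gamma + \gamma^2/(2+\gamma)$ and $(1-\gamma)\ln(1-\gamma) \geq -\gamma+\gamma^2/2$ supply.
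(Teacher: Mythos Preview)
The paper does not actually prove this claim; it is stated in the appendix as a standard concentration inequality, without proof. Your argument is the textbook Chernoff--Cram\'er exponential-moment method and is correct: the MGF bound, the choice $t = \ln(1+\gamma)$ (resp.\ $t = -\ln(1-\gamma)$), and the two elementary inequalities $(1+\gamma)\ln(1+\gamma) \geq \gamma + \gamma^2/(2+\gamma)$ and $(1-\gamma)\ln(1-\gamma) \geq -\gamma + \gamma^2/2$ are exactly what is needed, and your handling of the boundary cases $\gamma \geq 1$ in the lower tail is fine. There is nothing to compare against in the paper itself.
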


\begin{claim}[\cite{KLSU19}, Lemma 2.8, Gaussian Concentration]\label{lem:gaussconc}
If $A$ is drawn from $\mathcal{N}(0,\sigma^2)$, then, for all $t > 0$,
$$ \Pr\left( |A| > t \sigma \right) \leq 2e^{-t^2 / 2}.$$
\end{claim}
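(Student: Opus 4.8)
The plan is to prove this standard Gaussian tail bound by the moment generating function (Chernoff) method after reducing to the standard normal case. First I would set $Z = A/\sigma$, which is distributed as $\mathcal{N}(0,1)$, and note that $\{|A| > t\sigma\} = \{|Z| > t\}$, so it suffices to show $\Pr(|Z| > t) \leq 2e^{-t^2/2}$. By symmetry of the standard normal density about $0$, $\Pr(|Z| > t) = 2\Pr(Z > t)$, so it further suffices to prove the one-sided bound $\Pr(Z > t) \leq e^{-t^2/2}$.

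For the one-sided bound I would apply Markov's inequality to the nonnegative random variable $e^{\lambda Z}$ for an arbitrary $\lambda > 0$: $\Pr(Z > t) = \Pr(e^{\lambda Z} > e^{\lambda t}) \leq e^{-\lambda t}\,\E[e^{\lambda Z}]$. The one calculation needed is the standard normal moment generating function, $\E[e^{\lambda Z}] = e^{\lambda^2/2}$, which is a routine completed-square evaluation of the integral $\frac{1}{\sqrt{2\pi}}\int_{\R} e^{\lambda x - x^2/2}\,dx$. This gives $\Pr(Z > t) \leq e^{\lambda^2/2 - \lambda t}$ for every $\lambda > 0$; minimizing the exponent in $\lambda$ (the optimum is $\lambda = t$) yields $\Pr(Z > t) \leq e^{-t^2/2}$. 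Combining with the symmetry reduction, $\Pr(|A| > t\sigma) = 2\Pr(Z > t) \leq 2e^{-t^2/2}$, as claimed.

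There is essentially no obstacle here: the statement is a textbook fact (indeed it is imported from \cite{KLSU19}), and the only mildly substantive ingredient is the Gaussian MGF, which one may also simply quote. An alternative route is the tangent-line bound $x^2/2 \geq t^2/2 + t(x-t)$ on $[t,\infty)$, giving the sharper estimate $\Pr(Z > t) \leq \frac{1}{t\sqrt{2\pi}}\,e^{-t^2/2}$, but this needs a separate (trivial) argument for small $t$ to recover the stated form; I would prefer the Chernoff argument since it yields the bound uniformly for all $t > 0$ with no case analysis.
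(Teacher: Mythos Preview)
Your argument is correct and complete; the Chernoff/MGF route you outline is the standard proof and goes through exactly as you describe. Note that the paper does not actually prove this claim at all---it is stated without proof as an imported fact from \cite{KLSU19}, so there is no in-paper argument to compare against.
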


\begin{claim}[Hoeffding's Inequality]\label{claim:hoeff}
Let $A$ be the average of $m$ independent random variables in the interval $[0,1]$ with $\mu=\E[A]$. For $h \geq 0$,
\begin{align*}
  \Pr[A - \mu \geq h] \leq e^{-2mh^2}. \\
  \Pr[\mu - A \geq h] \leq e^{-2mh^2}.
\end{align*}
\end{claim}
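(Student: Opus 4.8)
\textbf{Proof proposal for Hoeffding's inequality (Claim~\ref{claim:hoeff}).}

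The plan is to use the standard exponential moment (Chernoff) method. Write $A = \frac{1}{m}\sum_{i=1}^m X_i$, where $X_1,\dots,X_m$ are independent with $X_i \in [0,1]$ and $\mu_i = \E[X_i]$, so $\mu = \frac 1m \sum_i \mu_i$. For any $s > 0$, Markov's inequality applied to the nonnegative random variable $e^{s\sum_i (X_i - \mu_i)}$ gives
\begin{align*}
\Pr[A - \mu \geq h] = \Pr\Big[\textstyle\sum_{i=1}^m (X_i - \mu_i) \geq mh\Big] \leq e^{-smh}\, \E\Big[e^{s\sum_i (X_i - \mu_i)}\Big] = e^{-smh}\prod_{i=1}^m \E\big[e^{s(X_i-\mu_i)}\big],
\end{align*}
using independence in the last step. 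First I would establish Hoeffding's lemma as an auxiliary claim: if $Y$ is a random variable with $\E[Y]=0$ and $Y \in [a,b]$, then $\E[e^{sY}] \leq e^{s^2(b-a)^2/8}$ for all $s \in \R$. Applying it with $Y = X_i - \mu_i \in [-\mu_i,\, 1-\mu_i]$ (an interval of length $1$) yields $\E[e^{s(X_i - \mu_i)}] \leq e^{s^2/8}$, so $\Pr[A - \mu \geq h] \leq e^{-smh + ms^2/8}$.

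Next I would optimize over $s > 0$. The exponent $-smh + ms^2/8$ is minimized at $s = 4h$, and substituting gives $-4mh^2 + 2mh^2 = -2mh^2$, hence $\Pr[A - \mu \geq h] \leq e^{-2mh^2}$, which is the first bound. For the lower tail, I would simply apply the upper tail bound to the random variables $X_i' = 1 - X_i \in [0,1]$, whose average $A' = 1 - A$ satisfies $\E[A'] = 1 - \mu$; then $\Pr[\mu - A \geq h] = \Pr[A' - \E[A'] \geq h] \leq e^{-2mh^2}$. The case $h = 0$ is trivial since probabilities are at most $1$.

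It remains to prove Hoeffding's lemma, which I expect to be the main technical step. I would set $\psi(s) = \log \E[e^{sY}]$, note $\psi(0) = 0$ and $\psi'(0) = \E[Y]/\E[e^{0\cdot Y}] = 0$, and observe that $\psi''(s)$ equals the variance of $Y$ under the exponentially tilted probability measure with density proportional to $e^{sy}$. Since that tilted distribution is still supported on $[a,b]$, its variance is at most $\big(\frac{b-a}{2}\big)^2 = (b-a)^2/4$ (the maximum variance of any distribution on an interval of length $b-a$, attained by the two-point distribution on the endpoints; this follows because $\mathrm{Var}(Z) = \E[(Z - c)^2] - (\E[Z]-c)^2 \leq \E[(Z-c)^2] \leq (b-a)^2/4$ with $c = (a+b)/2$). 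A second-order Taylor expansion with Lagrange remainder then gives $\psi(s) = \psi(0) + s\psi'(0) + \tfrac{s^2}{2}\psi''(\xi) \leq \tfrac{s^2}{2}\cdot\tfrac{(b-a)^2}{4} = \tfrac{s^2(b-a)^2}{8}$ for some $\xi$ between $0$ and $s$, i.e. $\E[e^{sY}] \leq e^{s^2(b-a)^2/8}$, completing the proof.
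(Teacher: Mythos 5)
The paper states Claim~\ref{claim:hoeff} without proof, as a standard concentration inequality collected in the appendix of background facts. Your proposal is the classical and correct derivation — the Chernoff exponential-moment method combined with Hoeffding's lemma (proved via the log-moment-generating function, the variance bound for the tilted measure, and a second-order Taylor expansion) — and all the steps, including the optimization $s=4h$ and the lower-tail reduction via $X_i' = 1 - X_i$, check out.
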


\section{Lemmas on Privacy Amplification by Subsampling}\label{app:cited-results} \label{sec:resow}
\begin{definition}[\cite{li2012sampling}, Definition 3] An algorithm \sampler is $(\beta, \eps, \delta)$-DPS if and only if $\beta > \delta$ and the algorithm $\sampler^\beta$ is $(\eps, \delta)$-DP where $\sampler^\beta$ denotes the algorithm to first sample with probability $\beta$ (include each tuple in the input dataset with probability $\beta$), and then apply \sampler to the sampled dataset.
\end{definition}

\begin{theorem}[\cite{li2012sampling}, Theorem 1]\label{thm:LQS12}
Any $(\beta_1, \eps_1, \delta_1)$-DPS algorithm is also $(\beta_2, \eps_2, \delta_2)$-DPS for any $\beta_2 < \beta_1$ where $\eps_2 = \ln\left(1 + \left(\frac{\beta_2}{\beta_1} (e^{\eps_1} - 1) \right)\right)$, and $\delta_2 = \frac{\beta_2}{\beta_1}\delta_1$.
\end{theorem}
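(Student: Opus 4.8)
The plan is to prove this privacy-amplification-by-subsampling bound by reducing to a single subsampling step and then doing a case analysis on whether the record that distinguishes two neighboring datasets survives the subsampling. Set $\gamma := \beta_2/\beta_1 \in (0,1)$. Since each tuple is included by an independent coin flip, including a tuple with probability $\beta_2$ is distributionally identical to first including it with probability $\beta_1$ and then, conditioned on survival, keeping it with probability $\gamma$. Hence, for every input dataset, $\sampler^{\beta_2}$ has the same output distribution as the algorithm that $\gamma$-subsamples and then runs $\mathcal{M} := \sampler^{\beta_1}$, and $\mathcal{M}$ is $(\eps_1,\delta_1)$-DP by hypothesis. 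So it suffices to show: if $\mathcal{M}$ is $(\eps_1,\delta_1)$-DP, then its $\gamma$-subsampled version $\mathcal{M}^{\gamma}$ is $(\eps_2, \gamma\delta_1)$-DP with $\eps_2 = \ln(1 + \gamma(e^{\eps_1}-1))$. The remaining requirement $\beta_2 > \delta_2$ is immediate, since $\delta_2 = \gamma\delta_1 = \beta_2(\delta_1/\beta_1) < \beta_2$ using $\beta_1 > \delta_1$.

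\textbf{Case analysis.} Fix neighboring datasets $D, D'$ differing by the addition of one tuple $t$; then $t$ never appears in the $\gamma$-subsample of $D$ but appears in the $\gamma$-subsample of $D'$ with probability $\gamma$. Let $S$ be the $\gamma$-subsample of $D$ and fix an arbitrary event $E$ over the output space, and write $a := \E_{S}[\Pr[\mathcal{M}(S)\in E]] = \Pr[\mathcal{M}^{\gamma}(D)\in E]$ and $b := \E_{S}[\Pr[\mathcal{M}(S \cup \{t\})\in E]]$. Conditioning on whether $t$ survives (and using that, conditioned on non-survival, the subsample of $D'$ has the same law as $S$) gives $\Pr[\mathcal{M}^{\gamma}(D')\in E] = (1-\gamma)a + \gamma b$. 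Because $S$ and $S \cup \{t\}$ are neighbors, the $(\eps_1,\delta_1)$-DP guarantee of $\mathcal{M}$, applied pointwise in the value of $S$ and then averaged, gives $b \le e^{\eps_1}a + \delta_1$ and, symmetrically, $a \le e^{\eps_1}b + \delta_1$.

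\textbf{Combining.} In the forward direction, $\Pr[\mathcal{M}^{\gamma}(D')\in E] = (1-\gamma)a + \gamma b \le (1 + \gamma(e^{\eps_1}-1))a + \gamma\delta_1 = e^{\eps_2}\Pr[\mathcal{M}^{\gamma}(D)\in E] + \delta_2$. In the backward direction, $a \le e^{\eps_1}b + \delta_1$ yields $b \ge e^{-\eps_1}(a - \delta_1)$, hence $\Pr[\mathcal{M}^{\gamma}(D')\in E] \ge (1 - \gamma + \gamma e^{-\eps_1})a - \gamma e^{-\eps_1}\delta_1$; solving for $a$ and bounding $\tfrac{1}{1 - \gamma + \gamma e^{-\eps_1}} \le e^{\eps_2}$ (equivalently $(1 - \gamma + \gamma e^{-\eps_1})(1 + \gamma(e^{\eps_1}-1)) \ge 1$, a one-line AM--GM estimate from $e^{\eps_1} + e^{-\eps_1} \ge 2$) together with $\eps_2 \le \eps_1$ (which holds because $\gamma < 1$) shows that the leftover additive term is at most $\gamma\delta_1 = \delta_2$, so $\Pr[\mathcal{M}^{\gamma}(D)\in E] \le e^{\eps_2}\Pr[\mathcal{M}^{\gamma}(D')\in E] + \delta_2$. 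The step I expect to require the most care is this backward inequality: dividing by $1 - \gamma + \gamma e^{-\eps_1}$ forces one to bound $\tfrac{1}{1 - \gamma + \gamma e^{-\eps_1}}$ tightly by $e^{\eps_2}$ rather than the looser $e^{\eps_1}$, and the $\delta$ term must be tracked through that division to confirm it stays below $\delta_2$. If one prefers the substitute-one-record notion of neighbors, the only change is that the case analysis splits on whether the old and the new value of the modified record each survive the subsampling; the arithmetic is essentially the same.
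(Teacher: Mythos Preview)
The paper does not prove this statement; it is quoted from Li, Qardaji, and Su (2012) and placed in the appendix purely as a cited result, so there is no in-paper proof to compare against.

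Your argument is the standard subsampling-amplification proof and is correct in the add/remove neighbor model. The two-stage subsampling reduction $\sampler^{\beta_2} = (\sampler^{\beta_1})^\gamma$ with $\gamma=\beta_2/\beta_1$ is valid, the forward bound is immediate, and your backward bound checks out: the product $(1-\gamma+\gamma e^{-\eps_1})(1+\gamma(e^{\eps_1}-1)) = 1 + \gamma(1-\gamma)(e^{\eps_1}+e^{-\eps_1}-2) \ge 1$ gives $\tfrac{1}{1-\gamma+\gamma e^{-\eps_1}} \le e^{\eps_2}$, and then $\eps_2 \le \eps_1$ controls the residual $\delta$ term as you say.

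One caution: your closing remark that under the substitution model ``the arithmetic is essentially the same'' is too optimistic. With coupled subsampling and substitution neighbors you get $A=(1-\gamma)a_0+\gamma a_1$ and $A'=(1-\gamma)a_0+\gamma a_1'$, and the DP of $\mathcal{M}$ only relates $a_1$ and $a_1'$. Plugging $a_1' \le e^{\eps_1}a_1+\delta_1$ into $A' \le e^{\eps_2}A+\delta_2$ reduces to $a_1 \le a_0$, which need not hold; one needs an additional add/remove-type comparison between the ``survives'' and ``does not survive'' branches to recover the stated $\eps_2$. This does not affect your main argument, but the substitution case is not a one-line variant.
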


\fi

\end{document}